\documentclass{article}

\usepackage[accepted]{icml2026}

\usepackage{amsmath,amssymb,amsfonts,amsthm,mathtools,bm}

\usepackage{graphicx}
\usepackage{subcaption} 
\usepackage{float}

\usepackage{booktabs}
\usepackage{tabularx}
\usepackage{multirow}

\usepackage{enumitem}

\usepackage{microtype}

\usepackage{hyperref}

\definecolor{mydarkblue}{rgb}{0,0.08,0.45}
\hypersetup{
	pdfborder={0 0 0},  
	colorlinks=true,
	linkcolor=mydarkblue,
	citecolor=mydarkblue,
	filecolor=mydarkblue,
	urlcolor=mydarkblue,
}

\usepackage{algorithm}
\usepackage{algorithmic}

\numberwithin{equation}{section}

\theoremstyle{plain}
\newtheorem{theorem}{Theorem}[section]
\newtheorem{lemma}[theorem]{Lemma}
\newtheorem{proposition}[theorem]{Proposition}
\newtheorem{corollary}[theorem]{Corollary}

\theoremstyle{definition}
\newtheorem{definition}[theorem]{Definition}
\newtheorem{assumption}{Assumption}[section] 

\theoremstyle{remark}
\newtheorem{remark}[theorem]{Remark}

\usepackage[nameinlink,capitalise,noabbrev]{cleveref}

\crefname{theorem}{Theorem}{Theorems}
\Crefname{theorem}{Theorem}{Theorems}

\crefname{lemma}{Lemma}{Lemmas}
\Crefname{lemma}{Lemma}{Lemmas}

\crefname{proposition}{Proposition}{Propositions}
\Crefname{proposition}{Proposition}{Propositions}

\crefname{corollary}{Corollary}{Corollaries}
\Crefname{corollary}{Corollary}{Corollaries}

\crefname{definition}{Definition}{Definitions}
\Crefname{definition}{Definition}{Definitions}

\crefname{assumption}{Assumption}{Assumptions}
\Crefname{assumption}{Assumption}{Assumptions}

\crefname{proposition}{Proposition}{Proposition}
\Crefname{proposition}{Proposition}{Proposition}

\crefname{remark}{Remark}{Remarks}
\Crefname{remark}{Remark}{Remarks}

\crefname{example}{Example}{Examples}
\Crefname{example}{Example}{Examples}

\crefname{algorithm}{Algorithm}{Algorithms}
\Crefname{algorithm}{Algorithm}{Algorithms}

\crefname{equation}{Equation}{Equations}
\Crefname{equation}{Equation}{Equations}

\crefname{figure}{Figure}{Figures}
\Crefname{figure}{Figure}{Figures}

\crefname{table}{Table}{Tables}
\Crefname{table}{Table}{Tables}

\crefname{section}{Section}{Sections}
\Crefname{section}{Section}{Sections}

\crefname{appendix}{Appendix}{Appendices}
\Crefname{appendix}{Appendix}{Appendices}

\newcommand\puteqnum{
	\refstepcounter{equation}\textup{(\theequation)}}

\newcommand{\R}{\mathbb{R}}
\newcommand{\N}{\mathbb{N}}

\newcommand{\Om}{\Omega}

\newcommand{\vect}[1]{\bm{#1}}

\newcommand{\vtheta}{\vect{\theta}}

\newcommand{\mI}{\bm{I}}

\renewcommand{\Pr}{\mathbb{P}}
\newcommand{\E}{\mathbb{E}}
\newcommand{\Var}{\operatorname{Var}}

\DeclarePairedDelimiter{\abs}{\lvert}{\rvert}
\DeclarePairedDelimiter{\norm}{\lVert}{\rVert}
\DeclarePairedDelimiter{\ip}{\langle}{\rangle}

\newcommand{\pd}{\partial}

\DeclareMathOperator{\Range}{Range}

\DeclareMathOperator{\thresh}{thresh}
\DeclareMathOperator{\poly}{poly}

\DeclareMathOperator*{\argmin}{arg\,min}

\newcommand{\Lcal}{\mathcal{L}}  %
\newcommand{\cL}{\mathcal{L}} 

\newcommand{\Gclass}{\mathcal{G}}

\newcommand{\cU}{\mathcal{U}}  
\newcommand{\pOm}{\partial\Omega}
\newcommand{\cO}{\mathcal{O}}
\newcommand{\cN}{\mathcal{N}}

\newcommand{\Rcal}{\mathcal{R}}

\newcommand{\uTrue}{u^{\star}}

\newcommand{\uHat}{\widehat{u}}

\newcommand{\Rpde}{\mathcal{R}_{\mathrm{PDE}}}

\newcommand{\Rbc}{\mathcal{R}_{\pOm}}
\newcommand{\Rpinn}{\mathcal{R}_{\mathrm{PINN}}}
\newcommand{\Rfk}{\mathcal{R}_{\mathrm{FK}}}
\newcommand{\Rfkpinn}{\mathcal{R}_{\mathrm{FK\text{-}PINN}}}

\newcommand{\Xt}{X_{t}}
\newcommand{\Wt}{W_{t}}

\newcommand{\dt}{\Delta t}

\newcommand{\Nmc}{N_{\mathrm{MC}}}
\newcommand{\MC}{\mathrm{MC}}    
\newcommand{\uMC}{\widehat{u}^{\MC}}
\newcommand{\FK}{\mathrm{FK}}

\newcommand{\Rad}{\mathfrak{R}}      
\newcommand{\Fclass}{\mathcal{F}}    
\newcommand{\Loss}{\ell}             

\newcommand{\Risk}{\mathcal{R}}      
\newcommand{\Riskemp}{\widehat{\mathcal{R}}}  

\newcommand{\Errtot}{\mathcal{E}_{\mathrm{tot}}}
\newcommand{\Errapp}{\mathcal{E}_{\mathrm{app}}}
\newcommand{\Errstat}{\mathcal{E}_{\mathrm{stat}}}
\newcommand{\Erropt}{\mathcal{E}_{\mathrm{opt}}}

\icmltitlerunning{Taming the Loss Landscape of PINNs with Noisy Feynman–Kac Supervision}
\begin{document}
	
\onecolumn
	\icmltitle{Taming the Loss Landscape of PINNs with Noisy Feynman–Kac Supervision: Operator Preconditioning and Non-Asymptotic Error Bounds}
	
	\begin{icmlauthorlist}
		\icmlauthor{Nathanael Tepakbong}{cityuds}
		\icmlauthor{Hanyu Hu}{cityuma}
		\icmlauthor{Chengyu Liu}{cityuds}
		\icmlauthor{Xiang Zhou}{cityuma}
	\end{icmlauthorlist}
	
	\icmlaffiliation{cityuds}{Department of Data Science, City University of Hong Kong, 83 Tat Chee Ave, Hong Kong}
	\icmlaffiliation{cityuma}{Department of Mathematics, City University of Hong Kong, 83 Tat Chee Ave, Hong Kong}
	
	\icmlcorrespondingauthor{Nathanael Tepakbong}{ntepakbo-c@my.cityu.edu.hk}
	
	\vskip 0.3in

	\printAffiliationsAndNotice{} 

	\begin{abstract}
		Physics-Informed Neural Networks (PINNs) often train slowly or fail to converge on challenging partial differential equations (PDEs), a behavior recently linked to severely ill-conditioned loss landscapes inherited from the underlying differential operator. We study PINNs augmented with a pointwise data-fidelity term, added at a few points in the domain to the standard residual and boundary losses. We show that this supervision term acts as an operator-level preconditioner: for suitable weights, our comparison bounds guarantee a substantially smaller condition number than under the standard PINN loss, independently of how the pointwise labels are obtained. For a broad class of PDEs admitting a Feynman-Kac (FK) representation, we generate such labels by Monte Carlo averages of the FK functional, resulting in what we call ``FK-PINNs", and using the excess risk decomposition approach, we derive non-asymptotic $L^2(\Omega)$-error bounds for FK-PINNs with $\tanh$ activation trained by finitely many steps of gradient descent. Along the way, we establish pseudo-dimension bounds for first- and second-order derivatives of $\tanh$ neural networks, which are of independent interest and, to the best of our knowledge, new. Numerical experiments on Poisson, Schr\"odinger, mean exit time, and committor problems corroborate the theory, and show that FK-PINNs can successfully solve PDEs for which standard PINNs exhibit severe failure modes.
	\end{abstract}
	\icmlkeywords{Physics-informed neural networks, Feynman--Kac, Monte Carlo}
	
	
	\section{Introduction}
\label{sec:intro}

Partial differential equations (PDEs) underpin much of computational science and engineering. While classical numerical methods, such as finite elements, are mature and accurate, their cost grows quickly with dimension and geometric complexity, making large-scale or high-resolution simulations prohibitively difficult in numerous practical scenarios. Physics-informed neural networks (PINNs) offer a mesh-free alternative: a neural network $u_\theta$ approximates the solution, and training is driven by a loss that penalizes PDE residuals and boundary-condition violations at sampled collocation points \citep{raissi2019physics}.
 PINNs have been investigated in a variety of settings, including forward and inverse problems, parameter estimation, and coupled multi-physics models \citep{cuomo2022scientific}.

Although the idea of enforcing physical constraints within expressive neural networks is conceptually appealing, many studies report severe training pathologies. For moderately stiff or high-frequency problems, optimization can be extremely slow and may fail to reduce the residual even with expressive architectures \citep{krishnapriyan2021characterizing,wang2021understanding}. Moreover, the training dynamics are highly sensitive to hyperparameters, sampling strategies, and loss weights, hence small changes in either of these can lead to vastly different outcomes. Various heuristics, such as adaptive sampling, curriculum training, residual/gradient-based re-weighting or  domain decomposition have been proposed to alleviate these issues \citep{wu2023comprehensive,krishnapriyan2021characterizing}. Although such approaches have been shown to help in specific cases, there is still no generally reliable way to obtain well-conditioned, robust PINN objectives.

\subsection{An Operator-Conditioning Perspective on PINN Training}
\label{subsec:operator-conditioning}

Recent work investigates this issue through an \emph{operator-conditioning} and loss landscape lens. In particular, \citet{deryck2024operator} analyze gradient descent on PINN objectives and relate the dynamics to the spectrum of the Hermitian square $\mathcal{L}^\ast \mathcal{L}$ of the underlying PDE operator $\mathcal{L}$. Closely related analyses likewise tie PINN training difficulty to the conditioning of PDE-induced operators, their associated kernels, or PINN loss Hessians, and explain through theoretical and numerical results the resulting optimization pathologies \citep{rathore2024challenges,wang2022when,chen2024quantifying,wang2025gradient}. In this view, a ``Hessian-like'' operator governs training: when it is ill-conditioned---with widely spread eigenvalues and nearly null modes---some parameter directions evolve orders of magnitude more slowly than others, leading to stagnation and failure to fit the residual even when the target solution lies well within the hypothesis class.

This operator-centric view suggests that successful training of PINNs is not just a matter of expressive networks or dense collocation sampling, but also of constructing losses and optimization schemes whose associated operators are well conditioned. Several prior works have indeed proposed to modify the traditional gradient-based neural network training algorithms by schemes which take into account the natural function space geometry associated with PINNs \citep{muller2023achieving,schwencke2025anagram,chen2025learn,wang2022when,wang2025gradient}. In many cases, these modified training schemes can be interpreted as applying a preconditioning on the operator \citep{deryck2024operator}.  

Complementary to these optimizer-based approaches, we instead propose to modify the training objective itself: we design \emph{preconditioned} PINN losses that augment the standard residual and boundary terms with lightweight ``mass'' or data-anchoring penalties, which effectively precondition the operator governing the training dynamics. Such data could be obtained in a number of manners, through e.g. coarse FEM computations, experimental data or any kind of numerical simulation. In this work, we propose constructing these penalties using probabilistic representations of PDE solutions via the Feynman--Kac formula, yielding a simple, scalable, and mesh-free modification of the PINN loss that is completely agnostic to architecture and optimizer choice.

\subsection{Feynman--Kac Supervision as Operator Preconditioning}
\label{subsec:fk-supervision}

For a broad class of linear second-order elliptic and parabolic PDEs, the Feynman--Kac (FK) formula represents the solution \(u^\star(x)\) as the expectation of a functional of a stochastic process \((X_t)_{t \ge 0}\) solving a certain SDE \citep{oksendal2003stochastic,karatzas2014brownian}. For an appropriate choice of SDE and stopping time, \(u^\star(x)\) is given by an expectation over trajectories of \(X_t\) started at \(x\), with the functional involving source terms, potentials, and boundary data along the path. Approximating this expectation by simulating finitely many time-discretized trajectories from \(x\) and averaging the corresponding FK functional yields a natural Monte Carlo estimator of \(u^\star(x)\).

Classical probabilistic numerical methods for PDEs exploit this idea by simulating many trajectories at spatial points of interest to approximate the solution field \citep{sabelfeld1991monte, billaud2018stochastic}. In contrast, we use FK only as a \emph{sparse} source of supervision: we incorporate a small number of Monte Carlo FK estimates into the PINN loss as anchors, augmenting the standard residual and boundary terms with a penalty on discrepancies between $u_\theta$ and these estimates at a few interior points. We refer to the resulting model as a \emph{Feynman--Kac-Preconditioned Physics-Informed Neural Network} (FK-PINN), and show that the added term acts as an operator preconditioner, contributing a mass-like component that improves conditioning and stabilizes gradient descent even for relatively small FK budgets.

\subsection{Contributions}
\label{subsec:contributions}

This paper makes the following contributions:

\begin{enumerate}
	\item \textbf{Feynman--Kac-Preconditioned PINN objective.}
	We introduce FK-PINNs, which augment the standard PINN loss with a data-fidelity term based on Monte Carlo FK estimates of the solution at a small set of interior domain points. The objective is mesh-free and acts as a simple drop-in replacement compatible with standard PINN architectures and training pipelines.
	
	\item \textbf{Operator-preconditioning analysis.}
	Building on the operator-conditioning framework of \citet{deryck2024operator} and \citet{rathore2024challenges}, we rigorously show in Theorem~\ref{thm:cond-number-bound} that adding a suitable data-fidelity term to the standard PINN loss leads to a bounded and substantially improved condition number for the resulting loss. As a consequence, we deduce vastly improved convergence guarantees for gradient descent performed on this augmented objective. This regularizing effect is agnostic to the source of ``supervised data" and is thus applicable to FK supervision as well as any other such method.
	
	\item \textbf{Learning-theoretic error decomposition.}
	We develop in Theorem~\ref{thm:fk-pinn-error} a learning-theoretic analysis of FK-PINNs trained by gradient descent. Under standard regularity assumptions, we derive high-probability $L^2$-error bounds between the true solution $u^{\star}$ and an FK-PINN after $T$ steps of gradient descent. The bounds are non-asymptotic and depend explicitly on the problem parameters.
	
	\item \textbf{Empirical illustrations on canonical PDE problems.}
	We complement our theory with experiments on Poisson, Schr\"odinger, mean exit time, and committor problems. FK-PINNs exhibit improved training dynamics, better-conditioned operator spectra, lower solution error, and remain stable on instances where baseline PINNs suffer catastrophic optimization failures, all while using relatively small FK budgets. Detailed numerical results are reported in Appendix~\ref{app:experiments}.
\end{enumerate}
	\section{Background and Problem Setup}
\label{sec:background}

\subsection{PDE Setting and Function Spaces}
\label{sec:pde-setting}

We consider linear second-order elliptic and parabolic PDEs on a bounded Lipschitz domain $\Omega \subset \mathbb{R}^d$ with Dirichlet boundary conditions. Let $\mathcal{L}$ be a linear second-order differential operator with bounded measurable coefficients, and consider
\begin{equation}
	\label{eq:main_pde}
	\mathcal{L}u(x) = f(x), \quad x \in \Omega,
\end{equation}
with boundary condition
\begin{equation}
	\label{eq:main_bc}
	u(x) = g(x), \quad x \in \partial\Omega.
\end{equation}
We assume that $\mathcal{L}$ is uniformly elliptic or parabolic in the usual sense, and that $g$ is sufficiently regular. Under these standard assumptions, there exists a unique weak solution $u^\star \in H^1(\Omega)$ with $u^\star|_{\partial\Omega} = g$ \citep{evans2022partial}.

In later sections, we will restrict our attention to operators $\mathcal{L}$ for which the solution $u^\star$ admits a Feynman--Kac representation in terms of an associated stochastic differential equation (SDE) \citep{oksendal2003stochastic}. This class covers many important PDE models used in applications \citep{e2021algorithms}. We also note that our construction has natural analogues for broader classes of nonlinear PDEs that admit similar stochastic representations \citep{han2017deep}. We defer a more detailed discussion to \cref{sec:feynman-kac} and Appendix~\ref{app:feynman-kac}.

\subsection{Standard PINN Approximation of PDE Solutions}
\label{sec:pinn}

Let $\{u_\theta : \theta \in \Theta\}$ be a parametric class of functions implemented by a fully
connected neural network with depth $L$, layer-wise widths $(m_\ell)_{\ell=0}^L$ and activation $\sigma\equiv\tanh$, mapping $x \in \Omega \subset \mathbb{R}^d$ to $u_\theta(x) \in \mathbb{R}$.
Physics-informed neural networks (PINNs) approximate $u^\star$ by minimizing a loss that penalizes
PDE residuals in the interior and violations of boundary conditions \citep{raissi2019physics}.

Let $\{x_i^{\mathrm{int}}\}_{i=1}^{N_{\mathrm{int}}} \subset \Omega$ be interior collocation
points and $\{x_j^{\partial\Omega}\}_{j=1}^{N_{\partial\Omega}} \subset \partial\Omega$ boundary
collocation points, drawn i.i.d. from the uniform distribution on $\Omega$ and
$\partial\Omega$.
We define the empirical PDE and boundary losses
\begin{align}
	\Rpde(\theta)
	&:= \frac{1}{N_{\mathrm{int}}}
	\sum_{i=1}^{N_{\mathrm{int}}}
	\bigl|\mathcal{L} u_\theta(x_i^{\mathrm{int}}) - f(x_i^{\mathrm{int}})\bigr|^2, \\
	\Rbc(\theta)
	&:= \frac{1}{N_{\partial\Omega}} 
	\sum_{j=1}^{N_{\partial\Omega}}
	\bigl|u_\theta(x_j^{\partial\Omega}) - g(x_j^{\partial\Omega})\bigr|^2.
\end{align}
The standard PINN objective is
\begin{equation}
	\label{eq:pinn-loss}
	\Rpinn(\theta)
	:= \Rpde(\theta)
	+ \lambda_{\pOm}\,\Rbc(\theta),
\end{equation}
where $\lambda_{\partial\Omega} > 0$ balances PDE residuals and boundary accuracy.
In practice, $\mathcal{L}_{\mathrm{PINN}}$ is minimized using (stochastic) gradient-based
optimizers (e.g., Adam), with automatic differentiation to compute $\mathcal{L}u_\theta(x)$.

\subsection{Operator-Conditioning View of PINN Training}
\label{sec:operator-conditioning}

Recent work analyzes PINN training through the conditioning of the local quadratic model induced by the underlying differential operator \citep{rathore2024challenges,deryck2024operator}. We follow this viewpoint. Let $H$ denote a positive semidefinite curvature matrix for $\Rpinn$ at a local minimizer $\theta^\star$ (for instance the empirical Gauss--Newton matrix). In a neighborhood of $\theta^\star$, the loss is approximately quadratic and gradient descent is well approximated by a linear iteration governed by $H$, so components along eigendirections with small eigenvalues decay very slowly, leading to the optimization pathologies observed in practice \citep{wang2022when,deryck2024operator,rathore2024challenges}.

In this work, we augment the standard PINN objective with a (noisy) pointwise data-fidelity term. This adds an additional positive semidefinite curvature contribution $M$, so that the resulting curvature becomes
\begin{equation}
	\label{eq:fk-preconditioned-operator}
	H_{\FK} := H + \lambda_{\FK} M,
\end{equation}
with $\lambda_{\FK} > 0$ a weight. Building on the operator-conditioning analyses of
\citet{rathore2024challenges,deryck2024operator}, we study in \cref{sec:operator-theory}
how the spectrum and conditioning of $H_{\FK}$ compare to those of $H$.

\subsection{Related Work}
\label{sec:related-work}

\paragraph{Mitigating the failure modes of PINNs.}
Ever since their introduction by
\citet{raissi2019physics}, a growing body of work documents a number of severe training difficulties when training PINNs, including spectral bias, failure to minimize residuals, and sensitivity to sampling and loss weights \citep{krishnapriyan2021characterizing, cuomo2022scientific}. Finding methods to mitigate these issues is an active field of research, and proposed methods include adaptive collocation, residual-based reweighting, domain decomposition, and specialized architectures \citep{wang2021understanding, wu2023comprehensive, wang2024piratenets}. While the augmentation of the PINN loss with supervised data has been shown empirically to help mitigate these failure modes \citep{heldmann2023pinn, bensoussan2023value}, the theoretical underpinnings of this phenomenon have remained largely unexplored.

\paragraph{Operator- and loss-landscape-based analyses.}
A complementary line of work analyzes PINN training through the operators induced by the PDE and the loss, relating gradient descent dynamics and convergence rates to the spectrum and conditioning of $\mathcal{L}^\ast \mathcal{L}$ or NTK-like operators \citep{wang2022when, deryck2024operator}. Recent such studies \citep{rathore2024challenges,chen2024quantifying,wang2025gradient} further show how ill-conditioned differential operators translate into ill-conditioned PINN objectives and motivate preconditioned or second-order optimization schemes. Our work adopts this operator-centric viewpoint but improves conditioning by modifying the PINN objective itself.

\paragraph{Feynman-Kac and Monte Carlo methods for PDEs.}
Classical probabilistic numerical methods exploit Feynman--Kac representations to compute PDE solutions via Monte Carlo simulation of associated SDEs \citep{sabelfeld1991monte, billaud2018stochastic, yu2022monte}. Recent works have combined these ideas with the expressive power of neural networks, resulting in FK-based neural PDE solvers \citep{han2020derivative, beck2021deep, han2026deep}. All of these approaches aim at directly computing pointwise estimates of $u^\star(x)$, with controllable bias and variance. This differs to the way we use the Feynman--Kac Monte Carlo data only as a \textit{supervision term} for PINNs, and analyze its effect on both the conditioning of the loss landscape and the resulting generalization error.

\paragraph{Weak supervision of PINNs with noisy solution data.} The idea to use noisy solution data as a supervision term to improve the training dynamics of PINNs had first been considered in \citep{li2023neural, heldmann2023pinn}, where extensive numerical evidence shows that PINNs benefit greatly from this supervision. In particular, \citep[Theorem 5.1]{li2023neural} proves that for Monte Carlo simulation-generated data, this supervision leads to quantifiable guarantees for the training dynamics of PINNs. The idea of using Monte Carlo-generated data as noisy supervision has even recently been applied to neural operators in \citep{viswanath2026operator}, again with impressive performance gains. In this paper, we further the theoretical understanding of this approach by rigorously proving that supervision from \textit{any source} leads to enhanced PINN-training dynamics, and in the Monte Carlo case, carry an end-to-end error analysis for PINNs trained using this framework.
	\section{Feynman--Kac Representation and Monte Carlo Supervision}
\label{sec:feynman-kac}

\subsection{Feynman--Kac Representation for Linear PDEs}
\label{subsec:fk-repr}

We briefly recall the celebrated Feynman--Kac (FK) representation at the core of our Monte Carlo supervision. Consider the boundary value problem \eqref{eq:main_pde}--\eqref{eq:main_bc} for the operator $\cL$ introduced in Section~\ref{sec:background}. Under conditions ensuring that \eqref{eq:main_pde}--\eqref{eq:main_bc} admits a Feynman--Kac representation \citep{oksendal2003stochastic,karatzas2014brownian}, there exists a diffusion process $(X_t)_{t \ge 0}$ with generator $\cL$ and exit time $\tau := \inf\{t \ge 0 : X_t \notin \Omega\},$
such that for every $x \in \Omega$,
\begin{equation}
	\label{eq:feynman-kac-main}
	u^\star(x)
	= \E_x\!\bigg[
	\int_0^{\tau} r(X_t)\,dt
	+ h\bigl(X_{\tau}\bigr)
	\bigg],
\end{equation}
where $r$ is a running functional and $h$ is a terminal functional determined by the coefficients of $\cL$ and the data $(f,g)$.
Explicit expressions for $r$ and $h$, together with analogous stochastic representation formulas for more general PDE families are detailed in Appendix~\ref{app:feynman-kac}.

This probabilistic representation is classical in the theory of diffusion processes and underlies
many probabilistic numerical methods for elliptic and parabolic PDEs
\citep{pavliotis2014stochastic}. It naturally covers, for example, mean exit time problems and
committor functions describing transition probabilities between metastable sets in stochastic
dynamics \citep{weinan2006towards}.

\subsection{Monte Carlo Estimator of the FK Functional}
\label{subsec:fk-mc}

To obtain pointwise supervision values for the PINN, we approximate the FK functional in~\eqref{eq:feynman-kac-main} by Monte Carlo (MC) simulation of the associated diffusion.
Starting from a point $x \in \Omega$, we simulate $\Nmc$ independent trajectories with a time step $\dt > 0$, using a standard Euler--Maruyama scheme \citep{gobet2016monte}, and stop each trajectory when it exits $\Omega$ or when a truncation time $T_{\max} > 0$ is reached.

Let $X^{(m)}_n$ denote the $n$-th point along the $m$-th simulated trajectory, and let $\hat\tau^{(m)}$ be the corresponding (discrete) stopping index.
Our Monte Carlo approximation of $u^\star(x)$ is
\begin{equation}
	\label{eq:fk-mc-estimator}
	\uMC(x)
	= \frac{1}{\Nmc}
	\sum_{m=1}^{\Nmc}
	\bigg(
	\sum_{n=0}^{\hat\tau^{(m)}-1} r\bigl(X^{(m)}_{n}\bigr)\,\dt
	+ h\bigl(X^{(m)}_{\hat\tau^{(m)}}\bigr)
	\bigg).
\end{equation}
In words, along each trajectory we approximate the FK integral by a simple time Riemann sum of $r$, add the terminal term $h$ at the (possibly truncated) exit point, and then average across all trajectories.
\cref{alg:fk-mc} summarizes the simulation procedure.

Under mild conditions on the diffusion and on $(\dt, T_{\max})$, the estimator $\uMC(x)$ has variance $\Var[\uMC(x)] = \cO(1/\Nmc)$ and a bias controlled by $\dt$ and $T_{\max}$. Precise bias–variance bounds and tail estimates for $\uMC(x)$ are proved in Appendix~\ref{app:MC-error-analysis} (see in particular Propositions~\ref{prop:fk-concentration-fixed-dt} and~\ref{prop:fk-subexp}).

\begin{algorithm}[t]
	\caption{Monte Carlo Feynman--Kac estimator at a point $x$}
	\label{alg:fk-mc}
	\begin{algorithmic}[1]
		\REQUIRE Point $x \in \Omega$; number of trajectories $\Nmc$; time step $\dt$; truncation time $T_{\max}$; drift $b$ and diffusion $\sigma$; running and terminal functionals $r,h$.
		\FOR{$m = 1,\dots,\Nmc$}
		\STATE Initialize $X^{(m)}_0 \gets x$, $S^{(m)} \gets 0$, $n \gets 0$.
		\WHILE{$X^{(m)}_n \in \Omega$ \AND $n\dt < T_{\max}$}
		\STATE Sample $\Delta W^{(m)}_n \sim \cN(0,\dt\,\mI)$.
		\STATE Update $X^{(m)}_{n+1} \gets X^{(m)}_{n} + b\bigl(X^{(m)}_{n}\bigr)\,\dt + \sigma\bigl(X^{(m)}_{n}\bigr)\,\Delta W^{(m)}_{n}$.
		\STATE Update $S^{(m)} \gets S^{(m)} + r\bigl(X^{(m)}_{n}\bigr)\,\dt$.
		\STATE Increment $n \gets n + 1$.
		\ENDWHILE
		\STATE Set $\hat\tau^{(m)} \gets n$ and update $S^{(m)} \gets S^{(m)} + h\bigl(X^{(m)}_{\hat\tau^{(m)}}\bigr)$.
		\ENDFOR
		\STATE \textbf{return} $\displaystyle \uMC(x) \gets \frac{1}{\Nmc} \sum_{m=1}^{\Nmc} S^{(m)}$.
	\end{algorithmic}
\end{algorithm}

\subsection{Selection of Supervision Points and Budget}
\label{subsec:fk-points}

We evaluate~\eqref{eq:fk-mc-estimator} at a relatively small set of \emph{Feynman--Kac supervision points} $\{x_k^{\FK}\}_{k=1}^{N_{\FK}} \subset \Omega$. These points can be selected by simple generic strategies, such as uniform sampling over $\Omega$, low-discrepancy sequences, or an adaptive second pass that concentrates points in regions where an initial PINN exhibits large residuals.

Given a fixed computational budget, there is a natural trade-off between the number of supervision points $N_{\FK}$ and the number of trajectories per point $\Nmc$: increasing $N_{\FK}$ improves spatial coverage, whereas increasing $\Nmc$ reduces the Monte Carlo variance at each point.
Concrete parameter choices and additional implementation details for our numerical experiments are deferred to Appendix~\ref{app:experiments}.
	\section{Feynman--Kac-Preconditioned PINNs (FK-PINNs)}
\label{sec:fk-pinns}

We now formulate explicitly how supervised data obtained from \cref{alg:fk-mc} is used to train PINNs in our framework. Naturally, one could apply this same methodology with supervised data coming from any other source. Indeed, as we will see in Section~\ref{sec:operator-theory} below, the improved conditioning due to supervised data is agnostic to the data source.

\subsection{Augmented Loss with FK Supervision}
\label{subsec:fk-loss}

Given FK supervision points $\{x_k^{\mathrm{FK}}\}_{k=1}^{N_{\mathrm{FK}}} \subset \Omega$
and corresponding Monte Carlo estimates
$\widehat{u}^{\mathrm{MC}}_\Gamma(x_k^{\mathrm{FK}})$ of the Feynman--Kac functional
\eqref{eq:feynman-kac-main}, we define the FK supervision loss
\begin{equation}
	\label{eq:fk-loss}
	\Rfk(\theta)
	=
	\frac{1}{N_{\mathrm{FK}}}
	\sum_{k=1}^{N_{\mathrm{FK}}}
	\bigl(
	u_\theta(x_k^{\mathrm{FK}})
	-
	\widehat{u}^{\mathrm{MC}}_\Gamma(x_k^{\mathrm{FK}})
	\bigr)^2.
\end{equation}
Here $\Gamma$ collects the Monte Carlo parameters (number of trajectories,
time step, truncation time, etc.), as described in \cref{alg:fk-mc}.

The full FK-PINN objective augments the standard PINN loss with
\eqref{eq:fk-loss}:
\begin{equation}
	\label{eq:fk-pinn-loss}
	\begin{aligned}
		\Rfkpinn(\theta,\phi)
		&=
		\lambda_{\mathrm{PDE}}(\phi)\,
		\Rpde(\theta)
		+
		\lambda_{\pOm}(\phi)\,
		\Rbc(\theta)
		\\
		&\qquad+
		\lambda_{\FK}(\phi)\,
		\Rfk(\theta),
	\end{aligned}
\end{equation}
where $\Rpde$ and $\Rbc$ are the
empirical PDE residual and boundary losses defined in Section~2, and
$\phi$ denotes the task-weighting parameters introduced below.
When $\lambda_{\FK}(\phi)=0$ and the remaining weights are fixed,
\eqref{eq:fk-pinn-loss} reduces to the standard PINN objective.

\subsection{Adaptive Weighting of Loss Components}
\label{subsec:fk-weights}

The objective \eqref{eq:fk-pinn-loss} couples three losses with different noise levels: $\Rpde$, $\Rbc$, and $\Rfk$. We adopt the uncertainty-based weighting scheme of \citet{kendall2018multi}, in the same form used for PINNs by \citet{niu2025improved}. In their formulation, for task losses $\{\Rcal_j\}$ with respective variances $\{\sigma_j^2\}$, the joint objective is
\begin{equation}
	\label{eq:kendall-loss}
	\Rcal_{\mathrm{multi}}
	=
	\sum_j \biggl(
	\frac{1}{2\sigma_j^2}\,\Rcal_j
	+
	\log \sigma_j
	\biggr).
\end{equation}

Motivated by this, for our three tasks
$j \in \{\mathrm{PDE},\,\partial\Omega,\,\FK\}$ we introduce
log-variances $s_j := \log \sigma_j^2$ and define trainable weights
$\lambda_j(\phi) := \exp(-s_j)$ with $\phi := \{s_j\}_{j}$. The three
scalars in \eqref{eq:fk-pinn-loss},
$\lambda_{\mathrm{PDE}}(\phi)$, $\lambda_{\partial\Omega}(\phi)$, and
$\lambda_{\FK}(\phi)$, are obtained in this way and are optimized
jointly with $\theta$. In practice, we initialize the $s_j$ so that the
initial weights are of comparable magnitude and clip $s_j$ to a bounded
interval to avoid degenerate solutions in which one loss (especially the
noisy FK term) is effectively ignored.

\subsection{Training Algorithm}
\label{subsec:fk-training}

FK-PINN training consists of an offline stage that generates FK
supervision and an online stage that trains the network with the
augmented loss \eqref{eq:fk-pinn-loss}.
The full procedure is summarized in \cref{alg:fk-pinn-training}.

\begin{algorithm}[t]
	\caption{Training a Feynman--Kac-Preconditioned PINN (FK-PINN)}
	\label{alg:fk-pinn-training}
	\begin{algorithmic}[1]
		\REQUIRE Domain $\Omega$ and boundary $\partial\Omega$; PDE operator $\Lcal$
		and data $(f,g)$; neural network $u_{\theta}$; task-weight parameters $\phi$;
		FK Monte Carlo parameters $\Gamma$; number of FK points $N_{\FK}$; numbers of
		interior and boundary collocation points.
		
		\vspace{0.2em}
		\STATE \textbf{Stage 1: Offline FK supervision generation}
		\STATE Choose FK supervision points
		$\{x_k^{\FK}\}_{k=1}^{N_{\FK}} \subset \Omega$
		(e.g., uniform or low-discrepancy sampling).
		\FOR{$k = 1,\dots,N_{\FK}$}
		\STATE Approximate $u^\star(x_k^{\FK})$ by a Monte Carlo FK estimate
		$\widehat{u}^{\MC}_\Gamma(x_k^{\FK})$ using Algorithm~\ref{alg:fk-mc}
		with parameters $\Gamma$.
		\ENDFOR
		\STATE Form the FK dataset
		$\mathcal{D}_{\FK}
		=
		\{(x_k^{\FK}, \widehat{u}^{\MC}_\Gamma(x_k^{\FK}))\}_{k=1}^{N_{\FK}}$.
		
		\vspace{0.2em}
		\STATE \textbf{Stage 2: PINN training with FK supervision}
		\WHILE{stopping criterion not met}
		\STATE Sample interior collocation points in $\Omega$ and boundary
		points on $\partial\Omega$.
		\STATE Compute $\Rpde(\theta)$ and
		$\Rbc(\theta)$ from the current collocation
		batches.
		\STATE Sample a minibatch from $\mathcal{D}_{\FK}$ and compute
		$\Rfk(\theta)$ via \eqref{eq:fk-loss}.
		\STATE Evaluate the total loss
		$\Rfkpinn(\theta,\phi)$
		in \eqref{eq:fk-pinn-loss}.
		\STATE Update $(\theta,\phi)$ using (stochastic) gradient descent.
		\ENDWHILE
		\STATE \textbf{return} trained parameters $(\theta,\phi)$.
	\end{algorithmic}
\end{algorithm}
	\section{Theoretical guarantees for supervision-based operator preconditioning}
\label{sec:operator-theory}

We now give an overview of our theoretical framework for supervision-based operator preconditioning. As we will see, the regularizing effect of supervised data on the loss landscape is completely agnostic to the source of such data, and our results are thus stated for generic data terms. Our presentation follows the PL$^*$ framework of \citet{liu2022loss} and its adaptation to finite-width PINNs by \citet{rathore2024challenges}. Full assumptions and proofs are deferred to Appendix~\ref{app:operator-theory}.

\subsection{Setup and PL\texorpdfstring{$^*$}{*} framework}
\label{subsec:pl-setup}

We analyze gradient descent on the PINN loss augmented with a squared supervision term of the form \eqref{eq:fk-loss}, with fixed task weights, where the supervision targets may come from the Feynman--Kac construction or from any other data-labeling procedure. Throughout this section we fix constants
$\lambda_{\mathrm{PDE}}>0$, $\lambda_{\partial\Omega}>0$, and $\lambda_{\FK}\ge 0$, and we consider the objectives
\[
\begin{aligned}
	\Rpinn(\theta)
	&:=
	\lambda_{\mathrm{PDE}}\,\Rpde(\theta)
	+
	\lambda_{\partial\Omega}\,\Rbc(\theta), \\
	\Rfkpinn(\theta)
	&:=
	\Rpinn(\theta)
	+
	\lambda_{\FK}\,\Rfk(\theta),
\end{aligned}
\]
where $\Rpde$ and $\Rbc$ are defined in Section~\ref{sec:pinn}, and, slightly abusing notation, $\Rfk$ is a data supervision loss of the form \eqref{eq:fk-loss}, but with supervised data coming from any source. The standard PINN corresponds to $\lambda_{\FK}=0$.

To carry out our analysis, we will assume that the following holds:

\begin{assumption}[Interpolation regime]
	\label{assumption:interpolation}
	Both $\Rpinn$ and $\Rfk$ are optimized in an interpolation regime: each loss has minimum value $0$, and every global minimizer attains zero loss.
\end{assumption}

Assumption~\ref{assumption:interpolation} is an expressivity assumption on the hypothesis class. It requires that each empirical loss can attain the value zero for at least one choice of network parameters, meaning that the training constraints are realizable. This is the standard interpolation setting used in~\citep{liu2022loss, rathore2024challenges}, and has been shown to hold even for moderately over-parametrized networks~\citep{vardi2022on, madden2024interpolation}. 

\begin{definition}[PL$^*$ condition \citep{karimi2016linear}]
	\label{def:pl-condition}
	Let $\mathcal{J}$ be a loss function and let $\theta^\star$ be any global minimizer of $\mathcal{J}$. We say that $\mathcal{J}$ satisfies the PL$^*$ condition in a neighborhood $\mathcal{S}$ of $\theta^\star$ with constant $\mu > 0$ if
	\[
	\mathcal{J}(\theta) - \mathcal{J}(\theta^\star)
	\;\le\;
	\frac{1}{2\mu}\,\|\nabla \mathcal{J}(\theta)\|^2
	\quad\text{for all $\theta \in \mathcal{S}$.}
	\]
\end{definition}

This is the Polyak--\L{}ojasiewicz (PL) inequality~\citep{polyak1963gradient}, adapted to possibly non-isolated minimizers as in~\citet{karimi2016linear, liu2022loss}. Under Assumption~\ref{assumption:interpolation} we have $\mathcal{J}(\theta^\star)=0$, so the PL$^*$ condition directly controls the loss in terms of the gradient norm.

\begin{definition}[Condition number for PL$^*$ functions]
	\label{def:condition-number}
	Let $\mathcal{J}$ be differentiable and $L$-smooth\footnote{That is, $\nabla \mathcal{J}$ is $L$-Lipschitz on $\mathcal{S}$, meaning that $\|\nabla \mathcal{J}(\theta)-\nabla \mathcal{J}(\theta')\| \le L \|\theta-\theta'\|$ holds for all $\theta,\theta'\in\mathcal{S}$.} on a set $\mathcal{S}$, and assume that $\mathcal{J}$ satisfies a PL$^*$ condition on $\mathcal{S}$ with constant $\mu>0$. We define the PL-based condition number of $\mathcal{J}$ on $\mathcal{S}$ as
	\[
	\kappa_{\mathrm{PL}}(\mathcal{J};\mathcal{S}) := \frac{L}{\mu}.
	\] 
\end{definition}

For the two objectives of interest, we fix a neighborhood $\mathcal{S}$ of the interpolation manifold and write $\kappa_{\mathrm{PINN}} := \kappa_{\mathrm{PL}}(\Rpinn;\mathcal{S})$ and $\kappa_{\FK} := \kappa_{\mathrm{PL}}(\Rfkpinn;\mathcal{S})$, omitting the dependence on $\mathcal{S}$ when clear from context.

For the losses considered here, the constants $L$ and the PL$^*$ constant $\mu$ are controlled by the largest and smallest nonzero eigenvalues of the linearized Gauss--Newton matrix, so $\kappa_{\mathrm{PL}}$ is comparable to the usual spectral condition number. As in~\citet{liu2022loss, rathore2024challenges}, we assume that the empirical PINN objectives $\Rpinn$ and $\Rfk$ are $L$-smooth and satisfy a PL$^*$ condition on a neighborhood $\mathcal{S}$ of the interpolation manifold.

\begin{theorem}[\citet{rathore2024challenges}, Theorem~8.4]
	\label{thm:rathore}
	Consider the standard PINN loss $\Rpinn$ for a linear PDE and an interpolating minimizer $\theta^\star$ as in Assumption~\ref{assumption:interpolation}. Let $\mathcal{T}_K(\theta^\star)$ be the associated kernel integral operator with eigenvalues $(\lambda_j)_{j\ge 1}$ satisfying $\lambda_j \le C j^{-\beta}$ for some $C>0$ and $\beta>1$. Let $\kappa_{\mathrm{PINN}}$ denote the PL-based condition number of Definition~\ref{def:condition-number} on a PL$^*$ neighborhood of $\theta^\star$. Then there exists a constant $c>0$ such that, with high probability for all sufficiently large number $N$ of collocation points,
	\[
	\kappa_{\mathrm{PINN}} \;\ge\; c\,N^{\beta/2}.
	\]
\end{theorem}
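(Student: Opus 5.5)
The plan is to rebuild the argument of \citet{rathore2024challenges} in the notation of \cref{def:condition-number}. The idea is to bound $L$ from below and $\mu$ from above by spectral quantities of the empirical Gauss--Newton matrix. We then compare these with the spectrum of the kernel integral operator $\mathcal{T}_K(\theta^\star)$.

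\textbf{Step 1: linearize at the minimizer.} Write $\Rpinn(\theta)=\frac{1}{N}\|F(\theta)\|^2$. Here $F$ stacks the weighted residuals $\sqrt{\lambda_{\mathrm{PDE}}}(\mathcal{L}u_\theta-f)(x_i^{\mathrm{int}})$ and the boundary residuals. Let $J=\nabla F(\theta^\star)$. By Assumption~\ref{assumption:interpolation}, $F(\theta^\star)=0$, so the Hessian at $\theta^\star$ equals $\frac{2}{N}J^\top J$.

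\textbf{Step 2: bound $L$ and $\mu$.} $L$-smoothness forces $L\ge \lambda_{\max}(\frac{2}{N}J^\top J)$. For $\mu$, we expand around $\theta^\star$ along a direction $v$ in the row space of $J$ with $Jv$ a unit eigenvector of the smallest nonzero eigenvalue of the empirical kernel matrix $K_N:=\frac1N JJ^\top$. The PL$^*$ inequality along $\theta^\star+tv$ then gives
\[
\mu \le 2\lambda_{\min}^{+}(K_N)+o(1)\quad\text{as } t\to0.
\]
Taking ratios yields $\kappa_{\mathrm{PINN}}\gtrsim \lambda_{\max}(K_N)/\lambda_{\min}^{+}(K_N)$, since $J^\top J$ and $JJ^\top$ share their nonzero spectrum.

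\textbf{Step 3: empirical versus population spectrum.} $K_N$ is the empirical kernel matrix of $K(x,y)=\langle\nabla_\theta \mathcal{B}u_{\theta^\star}(x),\nabla_\theta\mathcal{B}u_{\theta^\star}(y)\rangle$, where $\mathcal{B}$ is $\mathcal{L}$ or the trace operator. Its eigenvalues approximate those of $\mathcal{T}_K(\theta^\star)$. Concentration results for kernel matrices (e.g.\ Rosasco--Belkin--De Vito, or a matrix Bernstein bound using boundedness of $K$ for $\tanh$ networks) give two facts with high probability:
\[
\lambda_{\max}(K_N)\ge \lambda_1/2,\qquad |\lambda_j(K_N)-\lambda_j|\lesssim N^{-1/2}\sqrt{\log(1/\delta)}.
\]
We need the smallest \emph{nonzero} empirical eigenvalue to be at most of order $N^{-\beta/2}$. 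The plan is to pick the index $j\asymp N^{1/2}$, where $\lambda_j\le Cj^{-\beta}\lesssim N^{-\beta/2}$, and to show that the empirical spectrum has nonzero eigenvalues at least this small. This uses $\operatorname{tr}K_N\to\operatorname{tr}\mathcal{T}_K<\infty$ (because $\beta>1$), together with a tail-sum bound: the sum of empirical eigenvalues beyond index $j$ is controlled by $\sum_{k>j}\lambda_k$ plus the fluctuation. A pigeonhole argument over roughly $N^{1/2}$ tail eigenvalues then produces one of size $\lesssim N^{-\beta/2}$. Combining this with Step 2 gives $\kappa_{\mathrm{PINN}}\ge cN^{\beta/2}$.

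\textbf{Main obstacle.} The hard step is Step 3. The additive $N^{-1/2}$ concentration error swamps individual small eigenvalues, which is exactly why only the exponent $\beta/2$ (not $\beta$) is reachable. The delicate part is certifying a \emph{nonzero} eigenvalue of that size rather than an exact zero, because PL$^*$ only sees the nonzero spectrum. I would first try working in the quotient by $\ker J^\top$ and applying the tail-trace pigeonhole to the positive eigenvalues only. If that fails, the fallback is to follow \citet{rathore2024challenges} and assume the relevant rank is at least $N^{1/2}$.
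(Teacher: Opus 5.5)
The paper gives no proof of its own here. It imports Theorem~8.4 of \citet{rathore2024challenges} and restates it as Proposition~\ref{assump:residual-scaling}, so your reconstruction has to stand on its own.

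\textbf{Steps~1--2 are correct.} Take $Jv=u$ with $u$ a unit eigenvector of $K_N$ for an eigenvalue $\lambda>0$. The PL$^*$ inequality along $\theta^\star+tv$ gives exactly $\mu\le 2\lambda$, and $L\ge\|\nabla^2\Rpinn(\theta^\star)\|$. Hence $\kappa_{\mathrm{PINN}}\ge\lambda_{\max}(K_N)/\lambda_{\min}^{+}(K_N)$.

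\textbf{The first gap is quantitative, in Step~3.} Suppose the tail sum is controlled by $\sum_{k>j}\lambda_k$ plus a high-probability additive fluctuation of order $N^{-1/2}$. Pigeonholing over $j\asymp N^{1/2}$ eigenvalues then gives
\[
\lambda_{2j}(K_N)\le\frac1j\sum_{k>j}\lambda_k(K_N)\lesssim\frac1j\bigl(j^{1-\beta}+N^{-1/2}\bigr)\asymp N^{-\beta/2}+N^{-1}.
\]
This is $\lesssim N^{-\beta/2}$ only when $\beta\le 2$. For $\beta>2$ you only obtain $\kappa_{\mathrm{PINN}}\gtrsim N$. Pigeonholing over all $\sim N$ eigenvalues still leaves an $N^{-3/2}$ floor, which fails for $\beta>3$. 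So your diagnosis is inverted: the additive error is not what produces the exponent $\beta/2$; it is what blocks it.

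The fix is to replace two-sided concentration by a one-sided expectation bound.
\begin{itemize}
\item Write $\frac1N J^\top J=\frac1N\sum_i\psi(x_i)\psi(x_i)^\top$ with $\psi=\nabla_\theta\mathcal{B}u_{\theta^\star}$.
\item Let $P_j$ project onto the top $j$ eigenvectors of $\Sigma=\mathbb{E}[\psi\psi^\top]$, which has the same nonzero spectrum as $\mathcal{T}_K$.
\item Ky Fan's principle gives $\sum_{k>j}\lambda_k(K_N)\le\frac1N\sum_i\|(I-P_j)\psi(x_i)\|^2$. This is a nonnegative variable with mean $\sum_{k>j}\lambda_k\lesssim j^{1-\beta}$.
\item Markov's inequality then gives, with probability $1-\delta$, $\lambda_{2j}(K_N)\lesssim j^{-\beta}/\delta$.
\end{itemize}
For $j\asymp N^{1/2}$ this is $N^{-\beta/2}/\delta$, valid for every $\beta>1$ and with no fluctuation floor.

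\textbf{The second gap is the rank issue, which cannot be removed by passing to the quotient by $\ker J^\top$.} Without a rank hypothesis the claim is false. For a fixed network $\psi(x)\in\mathbb{R}^p$, so $\mathcal{T}_K$ has rank at most $p$ and the decay hypothesis holds trivially. Now generate the data by a network, $f=\mathcal{L}u_{\theta^\star}$ and $g=u_{\theta^\star}|_{\partial\Omega}$. Then:
\begin{itemize}
\item interpolation holds for every $N$;
\item $\operatorname{rank}K_N\le\operatorname{rank}\Sigma$ almost surely;
\item the nonzero spectrum of $K_N$ converges to that of $\Sigma$.
\end{itemize}
If $\Sigma$ is nonsingular, $\Rpinn$ is locally strongly convex at $\theta^\star$ with $N$-independent constants, so $\kappa_{\mathrm{PINN}}$ stays bounded as $N\to\infty$.

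Your fallback is therefore mandatory, not optional. You must assume $\operatorname{rank}K_N\ge 2\lceil N^{1/2}\rceil$, for example full row rank of $J$ as in the PL$^*$ regime of \citet{liu2022loss}. The paper itself imposes this in Theorem~\ref{thm:cond-number}. It forces $p\ge N$ and requires $C,\beta$ to stay uniform as the architecture grows with $N$.

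With that hypothesis, the Markov tail bound above, and your concentration bound $\lambda_{\max}(K_N)\ge\lambda_1/2$, the argument closes. Under full row rank you can even take $j\asymp N$ and get $\lambda_{\min}(K_N)\lesssim N^{-\beta}/\delta$.
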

Thus, in the PDE-only case the PL-based condition number of the PINN loss deteriorates polynomially as the residual (collocation) mesh is refined.

\subsection{Effect of supervision on conditioning}
\label{subsec:fk-conditioning}

Augmenting the loss with a data term adds curvature in parameter space, which can prevent the conditioning from deteriorating as the residual mesh is refined. The following theorem makes this effect precise by controlling the local PL$^*$ and smoothness constants of the data-augmented objective. Before stating our result, we also require the following compatibility condition ensuring that the supervised data term is not ``blind" to parameter directions that are relevant for satisfying the PDE constraints.

\begin{assumption}
	\label{assump:fk-compat-informal}
	Consider small parameter perturbations around the parameters of interest. Split such perturbations into those that change the supervised loss term and those that do not. We assume that the physics part of the objective remains well conditioned on the second class of perturbations, and that the interaction between the two classes is not too large.
\end{assumption}

A precise formulation is given in Section~\ref{app:operator-theory} as Assumption~\ref{assump:fk-coverage}. We can now formulate our main result of this section:

\begin{theorem}[Informal]
	\label{thm:cond-number-bound}
	Assume the setting of Theorem~\ref{thm:rathore}, the regularity assumptions on the supervision term $\Rfk$ in \eqref{eq:fk-loss} stated in Section~\ref{app:operator-theory}, and the compatibility condition of Assumption~\ref{assump:fk-coverage}. Then there exist constants $\mu_0, L_0, C>0$, which may depend on the task weights and on the supervised data distribution but are independent of the number $N$ of PDE collocation points, such that with high probability, the data-augmented objective $\Rfkpinn$ is $L_0$-smooth and satisfies a PL$^*$ condition with constant $\mu_0$ on a neighborhood $\mathcal{S}$ of the interpolation manifold. In particular,
	\[
	\kappa_{\FK}
	\;=\;
	\kappa_{\mathrm{PL}}(\Rfkpinn;\mathcal{S})
	\;\le\;
	C
	\]
	uniformly in $N$.
\end{theorem}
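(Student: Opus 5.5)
The argument works entirely with the Gauss--Newton (NTK-type) structure of the squared losses near the interpolation manifold, following \citet{liu2022loss}. Write the stacked residual map $F(\theta) = (F_{\mathrm{phys}}(\theta), F_{\FK}(\theta))$. The block $F_{\mathrm{phys}}$ collects the weighted PDE and boundary residuals, and $F_{\FK}$ collects the $N_{\FK}$ supervision residuals $u_\theta(x_k^{\FK}) - \widehat{u}^{\MC}_\Gamma(x_k^{\FK})$, each with the normalisations of \eqref{eq:fk-pinn-loss}. Then $\Rfkpinn = \tfrac12\|F\|^2$ up to constants. By the PL$^*$ framework, it suffices to establish two bounds on $\mathcal{S}$:
\[
\lambda_{\min}\bigl(DF(\theta)DF(\theta)^\top\bigr) \ge 2\mu_0,
\qquad
\|D^2 F\|\,\|F\| + \|DF\|^2 \le L_0 .
\]
Here $\lambda_{\min}$ is taken on the relevant range, equivalently the smallest nonzero eigenvalue of $H_{\FK} = H + \lambda_{\FK} M$ with $H = DF_{\mathrm{phys}}^\top DF_{\mathrm{phys}}$ and $M = DF_{\FK}^\top DF_{\FK}$. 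The condition number is then $\kappa_{\FK} \le L_0/\mu_0 =: C$. The task is to show that both constants are independent of $N$.

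\textbf{Smoothness.} The empirical losses are averages $\frac1N\sum_i$ of squared residuals. For $\tanh$ networks with parameters in a bounded neighbourhood $\mathcal{S}$, the map $\theta \mapsto (\mathcal{L}u_\theta(x), u_\theta(x))$ and its first two $\theta$-derivatives are bounded uniformly in $x \in \overline{\Omega}$. This uses bounded coefficients of $\mathcal{L}$ and the bounded derivatives of $\tanh$. Consequently the averaged Hessian norms are bounded by $\sup_x$ quantities that do not depend on $N$. The data term involves only finitely many points, and its targets are bounded with high probability by the tail bounds in Propositions~\ref{prop:fk-concentration-fixed-dt} and~\ref{prop:fk-subexp}. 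This yields $L_0$, and this step is routine.

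\textbf{PL$^*$ lower bound.} This is the substantive step. Decompose parameter space as $V = \ker DF_{\FK}(\theta)$ and its orthogonal complement $V^\perp$, exactly as in Assumption~\ref{assump:fk-coverage}. On $V^\perp$ the matrix $M$ is bounded below by $m_0 > 0$. This constant depends only on the finitely many supervision points and the compact set $\mathcal{S}$, and is taken from the regularity assumption on $\Rfk$. On $V$, the compatibility assumption supplies $v^\top H v \ge h_0\|v\|^2$ with $h_0$ independent of $N$. The underlying idea is that the ill-conditioned directions of Theorem~\ref{thm:rathore}, the tail eigenmodes of $\mathcal{T}_K$, are precisely those captured by the data term. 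For the cross-interaction, write $v = v_1 + v_2$ with $v_1 \in V$ and $v_2 \in V^\perp$. Then
\[
v^\top H_{\FK} v \;\ge\; h_0\|v_1\|^2 - 2\gamma\|v_1\|\|v_2\| + \lambda_{\FK} m_0\|v_2\|^2 ,
\]
where $\gamma$ bounds the off-diagonal block of $H$; note that $v_2^\top H v_2 \ge 0$ has been dropped. A Schur-complement argument (AM--GM) gives a lower bound $\mu_0 > 0$ provided $\gamma^2 < h_0\,\lambda_{\FK} m_0$. This is the ``interaction not too large'' clause, and it can always be achieved by taking $\lambda_{\FK}$ large relative to $\gamma^2/(h_0 m_0)$.

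\textbf{Transfer to the PL$^*$ inequality.} Under Assumption~\ref{assumption:interpolation}, $\Rfkpinn \le \frac{1}{2\mu_0}\|\nabla\Rfkpinn\|^2$ follows from $\|\nabla \Rfkpinn\|^2 = F^\top DF\,DF^\top F \ge 2\mu_0\|F\|^2$. The ``with high probability'' qualifier enters through concentration of the empirical $H$ around its population (integral-operator) counterpart. I will use matrix Bernstein for the $N$-point average, so that $h_0$ and $\gamma$ hold for the empirical matrices uniformly in large $N$. I expect this concentration step, together with making the splitting uniform over $\theta \in \mathcal{S}$ (the kernel $V$ moves with $\theta$), to be the main obstacle. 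My first attempt would handle it by bounding the Lipschitz continuity of $DF$ in $\theta$ and shrinking $\mathcal{S}$.
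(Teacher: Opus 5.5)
Your core argument matches the paper's: you split into data-visible directions (range of $M$) and data-invisible ones ($\ker DF_{\FK}$), you use the same $2\times 2$ Schur-complement bound with positivity condition $\gamma^2<h_0\lambda_{\FK}m_0$ (the paper's $\rho^2<\lambda\mu_{\mathrm{data}}\mu_\perp$ in Proposition~\ref{prop:GN-spectrum}), and your smoothness bound $\|D^2F\|\|F\|+\|DF\|^2$ is the paper's $L_JB_F+B_J^2$.

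\textbf{The gap: the transfer to PL$^*$.}
\begin{itemize}
\item You say $\lambda_{\min}(DF\,DF^\top)$ may be ``taken on the relevant range, equivalently the smallest nonzero eigenvalue of $H_{\FK}$''. You then use $F^\top DF\,DF^\top F\ge 2\mu_0\|F\|^2$ for the actual residual vector $F(\theta)$.
\item That inequality needs $DF(\theta)DF(\theta)^\top$ to be positive definite on all of $\mathbb{R}^n$, i.e.\ the stacked Jacobian must have full row rank.
\item Nothing in your argument puts $F(\theta)$ in the range of $DF(\theta)$. A component of $F$ in $\ker DF(\theta)^\top$ adds to the loss but not to the gradient $DF^\top F$. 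So a lower bound on the nonzero spectrum of $H_{\FK}$ does not by itself give PL$^*$.
\item The paper supplies this ingredient in Theorem~\ref{thm:cond-number}. It assumes $J_{F_\lambda}(\theta_\lambda^\star)$ has full row rank, with $s_\lambda=\sigma_{\min}(J_{F_\lambda}(\theta_\lambda^\star))$.
\item It then uses the Jacobian's Lipschitz constant $L_J$ and Weyl's inequality to get $\sigma_{\min}(J_{F_\lambda}(\theta))\ge s_\lambda/2$ on $\mathcal{S}_\lambda=\mathcal{S}\cap B(\theta_\lambda^\star,s_\lambda/(2L_J))$.
\item Only then does it identify $\lambda_{\min}(K_{F_\lambda})=\lambda_{\min}^+(G_\lambda)$ via Lemma~\ref{lem:gn-kernel-spectra}.
\end{itemize}
Your plan to bound the Lipschitz constant of $DF$ and shrink $\mathcal{S}$ is the right tool. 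It has to be aimed at preserving full row rank, not at tracking how $V$ moves.

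\textbf{Your compatibility hypothesis clashes with this transfer.} You assume $v^\top Hv\ge h_0\|v\|^2$ for \emph{all} $v\in V$, which is stronger than the paper's assumption.
\begin{itemize}
\item Combined with the data block, it gives $\lambda_{\min}(H_{\FK})\ge\mu_0>0$, so $DF(\theta)$ is injective.
\item Together with the full row rank that PL$^*$ requires, this forces the number of residuals to equal the number of parameters exactly.
\item In the over-parameterized regime of the PL$^*$ framework ($p>n$), any $v\in\ker DF\subset V$ has $v^\top Hv=0$, so your hypothesis simply fails there.
\end{itemize}
This is why Assumption~\ref{assump:fk-coverage} uses $\lambda_{\min}^+$ of the compressed matrix $(I-P_\theta)G_{\mathrm{PDE}}(\theta)(I-P_\theta)$, and why Proposition~\ref{prop:GN-spectrum} concludes only a bound on $\lambda_{\min}^+(G_\lambda)$.

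\textbf{The concentration step is not needed.} The matrix-Bernstein argument and the worry about uniformity over a moving $V$ can be dropped. The paper posits the compatibility conditions directly for the empirical matrices, pointwise for every $\theta\in\mathcal{S}$ with the $\theta$-dependent projection $P_\theta$, holding with probability $1-\delta'$. The only other randomness is the initialization event of Proposition~\ref{prop:liu-plstar}, and the two are combined by a union bound.
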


A detailed statement and proof are given in Section~\ref{app:operator-theory} (see in particular Theorem~\ref{thm:cond-number}).

Combining Theorems~\ref{thm:rathore} and~\ref{thm:cond-number-bound} yields a sharp contrast between the two objectives: while for the standard PINN loss, the PL-based condition number $\kappa_{\mathrm{PINN}}$ deteriorates polynomially with the number of collocation points , the data-augmented loss' condition number $\kappa_{\FK}$ remains uniformly bounded as the mesh is refined, provided the supervised data points satisfy the compatibility requirements stated in Assumption~\ref{assump:fk-compat-informal}. This explains why the data-supervision term can effectively enable PINNs to overcome training pathologies due to large collocation sample size.

\paragraph{Implications for gradient descent.}
Under the PL$^*$ framework, \citet{liu2022loss} show that gradient descent on an $L$-smooth loss $\mathcal{J}$ satisfying a PL$^*$ condition with constant $\mu$ converges linearly to a global minimizer, with iteration complexity
\[
T_{\mathcal{J}}(\varepsilon)
=
\mathcal{O}\bigl(\kappa_{\mathrm{PL}}(\mathcal{J};\mathcal{S}) \log(1/\varepsilon)\bigr),
\,
\kappa_{\mathrm{PL}}(\mathcal{J};\mathcal{S}) = \frac{L(\mathcal{J};\mathcal{S})}{\mu},
\]
where $L(\mathcal{J};\mathcal{S})$ is the Lipschitz constant of $\nabla \mathcal{J}$ on $\mathcal{S}$.
For the standard PINN loss this gives \(T_{\mathrm{PINN}}(\varepsilon)
= \mathcal{O}\bigl(\kappa_{\mathrm{PINN}}\log(1/\varepsilon)\bigr)\), and Theorem~\ref{thm:rathore} suggests that \(T_{\mathrm{PINN}}(\varepsilon)\) grows like \(N^{\beta/2}\log(1/\varepsilon)\) as $N$ increases.
In contrast, Theorem~\ref{thm:cond-number-bound} shows that \(\kappa_{\FK}\) stays bounded in $N$, so \(T_{\FK}(\varepsilon) = \mathcal{O}\bigl(\log(1/\varepsilon)\bigr)\), i.e., reaching error \(\varepsilon\) costs about a factor \(N^{\beta/2}\) more iterations for the standard PINN than for the data-enhanced PINN.
	\section{Error Analysis of FK-PINNs}
\label{sec:learning-theory}

\subsection{Setting and regime}

We complement the operator-level PL$^\ast$ analysis from \cref{sec:operator-theory} with learning-theoretic error bounds for FK-PINNs. The analysis, which relies in part on explicit control of the error and noise induced by the Monte Carlo approximation scheme, is specific to FK-PINNs, and an extension to other supervised data generation processes would require a separate case-specific study. As per \cref{sec:background}, we consider the hypothesis class $\Fclass$ of fully connected networks $u_{\vtheta}$ with depth $L$, layer-wise widths $(m_\ell)_{\ell=0}^L$ and $\tanh$ activation in all layers except for a linear output neuron. As described in \cref{alg:fk-pinn-training}, the neural networks in $\Fclass$ are trained to minimize the loss $\Rfkpinn$ \eqref{eq:fk-loss}, which serves as an empirical risk.  

We sample \(N_{\FK}\) FK points \(x_i^{\FK}\), \(N_{\mathrm{int}}\) interior points \(x_i^{\mathrm{int}}\), and \(N_{\pOm}\) boundary points \(x_i^{\pOm}\) independently and uniformly on their respective domains. At each FK point \(x_i^{\FK}\), a Monte Carlo approximation of the FK functional provides a supervision label \(\uMC(x_i^{\FK})\), as described in \cref{alg:fk-mc}. Under the assumptions collected in Appendix~\ref{app:learning-theory}, we establish non-asymptotic \(L^2(\Omega)\) error bounds for trained FK-PINNs via an approximation–estimation–optimization decomposition. To our knowledge, these are the first \emph{non-asymptotic} learning-error guarantees for PINNs with \(\tanh\) activation—a widely used choice in practice—contrasting with, e.g., earlier \emph{asymptotic} convergence results in similar settings \citep{doumeche2025convergence}, data-independent \textit{a posteriori} error bounds \citep{hillebrecht2023rigorous}, or non-asymptotic results restricted to piecewise polynomial activations \citep{jiao2021rate,lu2022machine,lei2025solving}.

\subsection{Non-asymptotic error bound}

Our first result establishes that, with the Monte Carlo estimator described by \cref{alg:fk-mc}, each fidelity label in $\Rfkpinn$ is a pointwise approximation of $\uTrue$ with a deterministic bias of order $\sqrt{\dt}$ plus an exponentially small term in $T_{\max}$ and with sub-exponential Monte Carlo noise.

\begin{proposition}[Sub-exponential FK noise]
	\label{prop:fk-subexp}
	Suppose that the FK data in $\Rfkpinn$ is generated by simulating the diffusion with the Euler--Maruyama scheme described in \cref{alg:fk-mc}, with time step $0<\dt\le\dt_0$, truncation horizon $T_{\max}\in(0,\infty]$ and a fixed number of Monte Carlo paths $N_\MC$ per point, where $\dt_0>0$ is a constant depending only on the problem data (as in Appendix~\ref{app:MC-error-analysis}). Then each FK datum admits a decomposition
	\[
	Y_i^{\FK} = \uTrue(X_i^{\FK}) + b(X_i^{\FK}) + \zeta_i,
	\]
	where $\{(X_i^{\FK},Y_i^{\FK})\}_{i=1}^{N_{\FK}}$ are i.i.d., the bias satisfies
	\[
	\abs{b(x)} \le C_{\mathrm{bias}} \sqrt{\dt} + C_T e^{-\kappa T_{\max}}
	\quad\text{for all } x \in \Om,
	\]
	for constants $C_{\mathrm{bias}},C_T,\kappa>0$ independent of $\dt$, $N_{\FK}$ and $\Nmc$, with the convention $e^{-\kappa T_{\max}}=0$ when $T_{\max}=\infty$, and, conditional on $X_i^{\FK}$, the fluctuation $\zeta_i$ is mean-zero and sub-exponential with parameters that do not depend on $N_{\FK}$ or $\Nmc$. A proof is given in Appendix~\ref{app:MC-error-analysis}.
\end{proposition}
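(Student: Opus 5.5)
The natural approach is to define the bias and fluctuation by conditioning on the supervision point. We set $b(x) := \E[\uMC(x)] - \uTrue(x)$, where the expectation is over the Brownian increments only, and $\zeta_i := Y_i^{\FK} - \E[Y_i^{\FK}\mid X_i^{\FK}]$. The decomposition then holds by construction, and $\zeta_i$ is conditionally mean-zero. The i.i.d.\ property of the pairs $(X_i^{\FK},Y_i^{\FK})$ follows because the $X_i^{\FK}$ are i.i.d.\ uniform and, given $X_i^{\FK}$, the label is a measurable function of the point and of a fresh independent batch of $\Nmc$ Gaussian paths that is independent across $i$. What remains is the uniform bias bound and the sub-exponential property.

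For the bias, we write $\Phi(x)$ for the single-path functional $\int_0^\tau r\,dt + h(X_\tau)$ and $\Phi_{\dt,T_{\max}}$ for its Euler--Maruyama, truncated analogue. We then split the error through the intermediate continuous-time truncated functional $\Phi_{T_{\max}}$, which stops at $\tau\wedge T_{\max}$:
\[
\abs{b(x)} \le \bigl|\E_x[\Phi_{\dt,T_{\max}}-\Phi_{T_{\max}}]\bigr| + \bigl|\E_x[\Phi_{T_{\max}}-\Phi]\bigr|.
\]
The truncation term is bounded by $\norm{r}_\infty \E_x[(\tau-T_{\max})_+] + 2\norm{h}_\infty \Pr_x(\tau>T_{\max})$. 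By uniform ellipticity on a bounded domain, $\sup_x \Pr_x(\tau>t)\le C e^{-\kappa t}$; this follows from the standard argument that $\sup_x\Pr_x(\tau>1)\le\rho<1$ combined with the Markov property. This gives $C_T e^{-\kappa T_{\max}}$, and the term vanishes when $T_{\max}=\infty$. For the discretization term we invoke the known weak-error results for Euler schemes with discrete monitoring of exits, which give order $\sqrt{\dt}$ (Gobet's results, under smoothness of the coefficients and data and regularity of $\pOm$). We also need a bound that is uniform in $x$ and in the horizon. For this we use the same exponential tail for the discrete exit index, which is where the smallness condition $\dt\le\dt_0$ enters. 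That condition guarantees that a single step from any point has a probability of leaving a fixed neighbourhood that is bounded below, so the geometric tail survives discretization.

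For the fluctuation, each path contributes a summand $S^{(m)}$ with $\abs{S^{(m)}}\le \norm{r}_\infty \dt\,\hat\tau^{(m)} + \norm{h}_\infty$. The discrete exit time has a geometric tail uniformly in $x$ and $\dt\le\dt_0$, so $\dt\,\hat\tau^{(m)}$ is sub-exponential with constants independent of $x$ and $\dt$. Hence each centered $S^{(m)}$ is sub-exponential. The average of $\Nmc$ conditionally i.i.d.\ sub-exponential variables is again sub-exponential, with parameters no worse than those of a single path; they in fact improve by the factor $\Nmc^{-1/2}$. This gives parameters that can be chosen independently of $\Nmc$ and $N_{\FK}$. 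I expect the main obstacle to be the uniform-in-$x$ discrete exit-time tail together with the $\sqrt{\dt}$ weak error near the boundary. Points near $\pOm$ are where discretely monitored schemes lose accuracy, so I would lean on the existing Euler exit-time literature rather than reprove it, adding only the geometric-tail argument needed to make the constants uniform.
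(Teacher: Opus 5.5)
\textbf{The gap: your argument for the uniform discrete exit-time tail.} This is the one step that fails as written. Smallness of $\dt$ does not make a single Euler step likely to leave a fixed neighbourhood; it does the opposite. One step moves the chain by $O(\sqrt{\dt})$, so from a point at distance $c$ from $\pOm$ the one-step exit probability is at most of order $\exp(-c^2/(2\overline{\lambda}\dt))$. A per-step bound $\Pr(N_\dt>k)\le(1-p(\dt))^k$ gives a tail in physical time with rate $\abs{\log(1-p(\dt))}/\dt$. That rate collapses as $\dt\to0$, because $p(\dt)\ll\dt$.

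What you need instead is a lower bound on exiting within a fixed \emph{physical} time, uniform in the start point and in $\dt$:
\[
\inf_{y\in\overline{\Om},\,0<\dt\le\dt_0}\Pr^y(\tau_\dt\le T_\ast)\ge p>0 .
\]
Then apply the Markov property of the Euler chain at multiples of $\lceil T_\ast/\dt\rceil$ steps, which gives $\sup_y\Pr^y(\tau_\dt>kT_\ast)\le(1-p)^k$. This is what the paper does in \cref{lem:fk-exit-finite-time} and \cref{prop:fk-exp-moments-taud}. It obtains the lower bound from the continuous one via $\sup_y\E^y\abs{\tau_\dt-\tau}^p\le C_p\dt^{p/2}$ (\cref{thm:fk-BGG}).

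An alternative is a Lyapunov function $\phi$ with $\cL\phi\le-1$, where $\cL$ is the diffusion generator. Taylor-expanding along one Euler step gives $\E[\phi(X^\dt_{k+1})-\phi(X^\dt_k)\mid X^\dt_k]\le-\dt/2$ on $\{X^\dt_k\in\Om\}$ for $\dt\le\dt_0$; this is where smallness of $\dt$ genuinely enters. Optional stopping then gives $\sup_y\E^y[\tau_\dt]\le C$, and Markov's inequality yields the fixed-time bound.

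Both your sub-exponential claim for $\dt\,\hat\tau^{(m)}$ and your horizon-uniform discretization bound rest on this tail, so it must be repaired. With it, the rest of your outline goes through.

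\textbf{How your route otherwise differs from the paper's.} The difference is in how you split the bias.
\begin{itemize}
\item The paper passes through the \emph{untruncated discrete} value, $b=(u_\dt-\uTrue)+(u_\dt^{T_{\max}}-u_\dt)$. It applies an infinite-horizon killed-Euler weak-error bound (\cref{thm:fk-Gobet}) to the first piece. It bounds the second piece at the discrete level using exponential moments of $\tau_\dt$ (\cref{lem:fk-truncation-bias}).
\item You pass through the \emph{truncated continuous} functional. Your truncation term then needs only the continuous exit-time tail, which you get by an elementary Markov-property argument rather than the paper's spectral-gap argument.
\item Your discretization term becomes a finite-horizon weak error, the setting in which Gobet-type killed-Euler estimates are naturally formulated. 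The price is that such constants generally depend on the horizon, so you must still make the $\sqrt{\dt}$ bound uniform in $T_{\max}$. You can do this either by summing block-wise errors against the geometric survival probabilities, or by inserting $u_\dt$ after all, at which point your argument reduces to the paper's.
\item On the fluctuation side, your bound via $\hat\tau^{(m)}\le N_\dt$ gives sub-exponential parameters that are also uniform in $T_{\max}$. This is slightly stronger than the paper's truncated case, where $\abs{Y^{\dt,T_{\max}}}\le C_fT_{\max}+C_g$ yields $T_{\max}$-dependent parameters.
\end{itemize}
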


We can now state our main learning-theoretic estimates.
\begin{theorem}[Error analysis for depth-$3$ FK-PINNs]
	\label{thm:fk-pinn-error}
	Assume the setting of Section~\ref{sec:learning-theory} and the PDE, regularity and PL$^\ast$ conditions of Appendix~\ref{app:learning-theory}. Let $s\ge 3$ and suppose that $\uTrue\in W^{s,\infty}(\Om)$ and set $\beta := (s-2)/(2d + 4(s-2))$. Consider FK-PINNs with depth $L=3$ and width $m$, trained, according to \cref{alg:fk-pinn-training}, by $T$ steps of gradient descent on $\Rfkpinn$ with sufficiently small step size. Denote by $\varepsilon_{\mathrm{bias}}:= C_{\mathrm{bias}} \sqrt{\dt} + C_T e^{-\kappa T_{\mathrm{max}}}$ the bias term from Proposition~\ref{prop:fk-subexp}. Lastly, suppose that the max layer width $m \asymp N_{\FK}^{d/(4d + 8(s-2))}$, and that $N_{\mathrm{int}},N_{\pOm} \gtrsim N_{\FK}$. Then, for any $\delta\in(0,1)$, there exist constants $C,c_{\mathrm{opt}}>0$, independent of $(N_{\mathrm{int}},N_{\pOm},N_{\FK}, T)$, such that with probability at least $1-\delta$,
	\begin{equation}
		\label{eq:fk-main-bound}
		\norm{u_{{\vtheta}_T}-\uTrue}_{L^2(\Om)}
		\le
		C\bigl( N_{\FK}^{-\beta}\! + e^{-c_{\mathrm{opt}}T}\! + \varepsilon_{\mathrm{bias}}+ \varepsilon_{\mathrm{bias}}^{1/2}\bigr),
	\end{equation}
	up to logarithmic factors in $N_{\FK}$ and in the sample sizes.
\end{theorem}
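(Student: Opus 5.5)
We follow the excess-risk decomposition announced in \cref{sec:learning-theory}. Introduce the population risk
\[
\Risk(u) := \lambda_{\mathrm{PDE}}\norm{\cL u - f}_{L^2(\Om)}^2 + \lambda_{\pOm}\norm{u-g}_{L^2(\pOm)}^2 + \lambda_{\FK}\,\E\bigl[(u(X^{\FK})-Y^{\FK})^2\bigr],
\]
with the normalized uniform measures. Let $\Riskemp=\Rfkpinn$ be its empirical counterpart. The first step is a stability estimate for the PDE: using uniform ellipticity and the Dirichlet trace, we obtain
\[
\norm{u-\uTrue}_{L^2(\Om)}^2 \lesssim \norm{\cL u - f}_{L^2(\Om)}^2 + \norm{u-g}_{L^2(\pOm)}^2 + \norm{u-\uTrue}_{L^2(\Om)}^2 .
\]
The last term is controlled by the FK part of $\Risk$. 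Since $\E[\zeta\mid X]=0$ by \cref{prop:fk-subexp}, we have
\[
\E[(u(X)-Y)^2]=\norm{u-\uTrue-b}^2_{L^2}+\E\zeta^2 .
\]
Hence the excess risk $\Risk(u)-\Risk(\uTrue)$ dominates $\norm{u-\uTrue}^2 - 2\norm{u-\uTrue}\,\varepsilon_{\mathrm{bias}}$, up to constants. This cross term, together with $\Risk(\uTrue)-\inf\Risk$ being of order $\varepsilon_{\mathrm{bias}}^2$, is what will produce the $\varepsilon_{\mathrm{bias}}$ and $\varepsilon_{\mathrm{bias}}^{1/2}$ contributions after taking square roots.

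Next we decompose the excess risk of $u_{\vtheta_T}$ in the standard way:
\[
\Risk(u_{\vtheta_T})-\Risk(\uTrue)\le \Errapp+2\Errstat+\Erropt .
\]
\begin{itemize}[leftmargin=*]
\item \textbf{Approximation.} For $\Errapp=\inf_{\Fclass}\Risk-\Risk(\uTrue)$ we use a known $W^{2,\infty}$ approximation result for depth-$3$ $\tanh$ networks, in the style of De Ryck--Lanthaler--Mishra. It gives $\norm{u_\theta-\uTrue}_{W^{2,\infty}}\lesssim m^{-(s-2)/d}$ up to logs. Since $\cL$ is second order, this yields $\Errapp\lesssim m^{-2(s-2)/d}$.
\item \textbf{Statistical error.} $\Errstat=\sup_{\Fclass}|\Risk-\Riskemp|$ is bounded via pseudo-dimension. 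We establish pseudo-dimension bounds of order $\poly(m)\log$ for the classes $\{u_\theta\}$, $\{\pd_i u_\theta\}$ and $\{\pd_{ij}u_\theta\}$. The route is the Goldberg--Jerrum/Warren counting argument: $\tanh$ and its derivatives are rational in $e^{x}$, so Khovanskii--Karpinski--Macintyre Pfaffian bounds apply. Composing with truncation and the sub-exponential tails of $\zeta_i$ (handled by truncation at level $\log N_{\FK}$ plus a Bernstein inequality), we get $\Errstat\lesssim \sqrt{\mathrm{Pdim}\,\log N/N}$ uniformly over the three sample sets. The condition $N_{\mathrm{int}},N_{\pOm}\gtrsim N_{\FK}$ makes the $N_{\FK}$ term dominant.
\item \textbf{Optimization error.} $\Erropt$ is handled by the PL$^\ast$ assumption of Appendix~\ref{app:learning-theory}. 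With a sufficiently small step size, gradient descent contracts $\Riskemp(\vtheta_T)-\inf\Riskemp$ by $(1-\eta\mu)^T$, giving a term $e^{-2c_{\mathrm{opt}}T}$. The gap between $\inf\Riskemp$ over $\Fclass$ and $\Riskemp$ at the approximant is absorbed into $\Errapp+\Errstat$.
\end{itemize}

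Finally we balance the terms: $m^{-2(s-2)/d}$ against $\sqrt{m^{2}/N_{\FK}}$, using the pseudo-dimension $\asymp m^2$ for depth $3$. The choice $m\asymp N_{\FK}^{d/(4d+8(s-2))}$ equalizes them at $N_{\FK}^{-2\beta}$. Taking square roots in the stability inequality then gives \eqref{eq:fk-main-bound}.

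The main obstacle we expect is the pseudo-dimension bound for second derivatives of $\tanh$ networks. The chain rule produces sums of products of $\tanh'$ and $\tanh''$ terms across layers, so we must carefully count the Pfaffian chain length and degree to keep the bound polynomial in $m$ with the right exponent. A secondary difficulty is the uniform boundedness of $\Fclass$ and its derivatives, which the Bernstein step needs. We would first try to enforce this by a parameter-norm constraint inherited from the PL$^\ast$ neighborhood.
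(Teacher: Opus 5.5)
The proposal breaks at the final balancing step.

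\textbf{The prescribed width does not balance your terms.} With your own ingredients, the risk-level terms are $m^{-2(s-2)/d}$ (approximation) and $\sqrt{m^2/N_{\FK}}=m\,N_{\FK}^{-1/2}$ (FK statistics). The width in the statement does not equalize them. For $m\asymp N_{\FK}^{d/(4d+8(s-2))}$ you get $m^{-2(s-2)/d}=N_{\FK}^{-\beta}$, but $m\,N_{\FK}^{-1/2}=N_{\FK}^{-(d+4(s-2))/(4d+8(s-2))}$. So the approximation term dominates. After the square root, your bound at the prescribed width is only $N_{\FK}^{-\beta/2}$.

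Your two terms actually balance at $m\asymp N_{\FK}^{d/(2d+4(s-2))}$, the square of the stated width. There both equal $N_{\FK}^{-2\beta}$, hence $N_{\FK}^{-\beta}$ in $L^2$. But that is a theorem with a different width hypothesis.

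\textbf{The complexity bound is unsupported.} Even that rate rests on $\mathrm{Pdim}\asymp m^2$, i.e.\ a pseudo-dimension linear in the number of parameters. The Pfaffian (Khovanskii/Karpinski--Macintyre) route you describe does not deliver this. Those bounds are quadratic in (parameters)$\times$(nodes), giving $\mathrm{Pdim}\lesssim (PN_k)^2\log(PN_k)\asymp m^6\log m$ at depth $3$, as in Proposition~\ref{prop:pdim-tanh-derivatives}.

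\textbf{How the paper gets the stated width.} It relies on its bookkeeping in Corollary~\ref{cor:balanced-rate}. It expresses the approximation error through the parameter count, $\norm{u^{\mathrm{approx}}-\uTrue}_{C^2}\lesssim P^{-\gamma}$ with $P\asymp m^2$ (Corollary~\ref{cor:approx-error}), i.e.\ $m^{-2\gamma}$ at the $L^2$ level. It then balances this against an FK term $m\,N_{\FK}^{-1/4}$. That FK term itself drops the node factor $N_0\asymp m$ present in the paper's own master bound. Your width-based rate $m^{-(s-2)/d}$ is what the De Ryck--Lanthaler--Mishra construction gives directly, so you cannot simply import the paper's width. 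You would have to adopt its parameter-count scaling, or else change the width hypothesis and actually justify a complexity bound of order $m^2$.

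\textbf{The rest is a legitimate alternative route.} In one respect it is cleaner than the paper's.
\begin{itemize}
\item Your displayed ``stability estimate'' is vacuous: its right-hand side contains its left-hand side. So in effect you never use PDE stability. The $L^2$ control comes entirely from the FK part of the risk. This works because $X^{\FK}\sim U(\Om)$ and $\lambda_{\FK}>0$, at the price of constants depending on $1/\lambda_{\FK}$. The paper instead invokes Lemma~\ref{lem:pde-stability} with $\lambda_{\min}=\min\{\lambda_{\mathrm{PDE}},\lambda_{\pOm}\}$ and never needs the FK term for this.
\item Putting the noisy label $Y^{\FK}$ into the population risk makes the Monte Carlo variance cancel exactly in $\Risk(u)-\Risk(\uTrue)$. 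The bias then enters only through the cross term, so you even get a linear $\varepsilon_{\mathrm{bias}}$ contribution.
\item The paper instead uses $\uTrue$ in the population FK risk and pushes bias and noise into $\Errstat^{\FK}$ via the split $A_{\FK}+B_{\FK}$. There the piece $\frac{1}{N_{\FK}}\sum_k\zeta_k^2$ of $B_{\FK}$ is not mean-zero (it concentrates at $\E[\zeta^2]$). It disappears only if signed differences are kept in the risk decomposition.
\end{itemize}
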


\begin{corollary}[Rate for the FK-PINN algorithm]
	\label{cor:fk-pinn-main}
	In the setting of Theorem~\ref{thm:fk-pinn-error}, define  $\varepsilon_{\mathrm{bias}}$ and choose the width $m$ as in the theorem, and run gradient descent for
	\[
	T \gtrsim \log N_{\FK}
	\]
	iterations. Let $\hat u$ denote the resulting depth-$3$ FK-PINN. Then, up to logarithmic factors in $N_{\FK}$ and in the sample sizes, there exists a constant $C>0$, independent of $N_{\FK}$, such that with probability at least $1 - N_{\FK}^{-10}$,
	\begin{equation}
		\label{eq:fk-rate}
		\norm{\hat u-\uTrue}_{L^2(\Om)}
		\le
		C\,\bigr(N_{\FK}^{-\beta}+ \varepsilon_{\mathrm{bias}} + \varepsilon_{\mathrm{bias}}^{1/2} \bigr),
	\end{equation}
	where $\beta := (s-2)/(2d +4(s-2))$. See Appendix~\ref{app:learning-theory}, Corollary~\ref{cor:balanced-rate} for proofs.
\end{corollary}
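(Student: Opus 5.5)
The corollary follows by specializing Theorem~\ref{thm:fk-pinn-error}: we choose $T$ and $\delta$ as functions of $N_{\FK}$ and check that the resulting terms collapse to \eqref{eq:fk-rate}. All other parameters are kept as in the theorem: $m \asymp N_{\FK}^{d/(4d+8(s-2))}$ and $N_{\mathrm{int}},N_{\pOm}\gtrsim N_{\FK}$. We set $\delta := N_{\FK}^{-10}$, which lies in $(0,1)$ once $N_{\FK}\ge 2$; the finitely many smaller values are absorbed into $C$.

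\textbf{Step 1: how the constants depend on $\delta$.} The theorem's constants $C$ and $c_{\mathrm{opt}}$ are independent of the sample sizes and of $T$, but they may depend on $\delta$. We would reopen the proof in Appendix~\ref{app:learning-theory} to confirm the expected form of this dependence. The statistical term comes from Rademacher/pseudo-dimension concentration together with sub-exponential tail bounds for the noise $\zeta_i$ of Proposition~\ref{prop:fk-subexp}. Such bounds depend on $\delta$ only through factors polynomial in $\log(1/\delta)$, for instance $\sqrt{\log(1/\delta)/N_{\FK}}$ or $\log(1/\delta)/N_{\FK}$ for the sub-exponential Bernstein part. With $\delta = N_{\FK}^{-10}$ these become factors polylogarithmic in $N_{\FK}$, which the statement explicitly allows us to absorb ("up to logarithmic factors"). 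The optimization constant $c_{\mathrm{opt}}$ comes from the PL$^\ast$ linear rate. It is deterministic given the sample, so we expect it to be independent of $\delta$. We also expect the approximation and bias contributions to be deterministic.

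\textbf{Step 2: choice of $T$.} The optimization term is $e^{-c_{\mathrm{opt}}T}$. Taking $T \ge (\beta/c_{\mathrm{opt}})\log N_{\FK}$ gives $e^{-c_{\mathrm{opt}}T}\le N_{\FK}^{-\beta}$, so this term merges with the statistical–approximation term at the cost of doubling $C$. This is exactly the hypothesis $T\gtrsim \log N_{\FK}$, with implicit constant $\beta/c_{\mathrm{opt}}$. Substituting into \eqref{eq:fk-main-bound} yields \eqref{eq:fk-rate}, with probability at least $1-N_{\FK}^{-10}$.

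\textbf{Main obstacle.} The delicate point is Step 1: making sure the $\delta$-dependence inside the estimation bound really is logarithmic, rather than something like $\delta^{-1/2}$ from a Chebyshev/Markov-type argument applied to the heavy-tailed noise. If the appendix only gives a Markov-type bound, we would replace it with a truncation argument. Sub-exponentiality gives $\max_i|\zeta_i|\lesssim \log(N_{\FK}/\delta)$ with probability $1-\delta/2$. On that event the noise is bounded, so bounded-loss Talagrand/Bernstein concentration applies and gives $\log(1/\delta)$ dependence, again absorbed into the logarithmic factors.
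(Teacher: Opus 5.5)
Your proposal is correct and follows the paper's route. The appendix argument (Corollary~\ref{cor:balanced-rate}) specializes the master bound \eqref{eq:fk-pinn-master-bound} at the balanced width, and the corollary only adds $T\gtrsim\log N_{\FK}$, which makes the optimization term $\lesssim N_{\FK}^{-\beta}$, and $\delta=N_{\FK}^{-10}$. Your Step~1 concern is settled by the explicit form of that master bound: $\delta$ enters only through $\log(6/\delta)$, since the FK noise is already handled by a Bernstein-type bound in Proposition~\ref{prop:statistical-error}, so your truncation fallback is not needed.
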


This result shows that, when $N_{\FK}$ grows and the width is tuned as in Theorem~\ref{thm:fk-pinn-error}, FK-PINNs achieve the expected rate in terms of $N_{\FK}$, in line with the convergence theory of PINNs with dense physics sampling~\citep{doumeche2025convergence}. In contrast, in the genuinely sparse FK regime where $N_{\FK}$ is fixed while $N_{\mathrm{int}}$ and $N_{\pOm}$ increase, the error bound is saturated by the FK term and cannot vanish uniformly, which quantifies the statistical ``cost'' paid for the improved conditioning brought by FK supervision.
	\section{Numerical experiments}
\label{sec:main-experiments}

\subsection{An Illustrative Example}

We now illustrate our theoretical results by comparing the FK-PINNs against the standard PINN baseline on a representative PDE: a steady-state \textit{Schrödinger-type} equation. Further extensive numerical experiments on a wider range of problems, together with implementation details are provided in Appendix~\ref{app:experiments}.

For a stationary system defined on a bounded domain $\Omega \subset \mathbb{R}^d$, we consider the following non-homogeneous \textit{Schrödinger-type} equation:
\begin{equation}
	\label{eq:schrodinger}
	\begin{aligned}-\frac{1}{2} \Delta\psi \left( x \right) +V\left( x \right) \psi \left( \begin{gathered}x\end{gathered} \right) =f\left( x \right) \ &\  \ \ \ \ \ \   x\in \Omega,\\ \psi \left( x \right) =0\  \  \  \  \  \  \  \  \  \  \  \  \  \  \  \ &\  \ \ \ \ \ \  x\in \partial \Omega,\end{aligned} 
\end{equation}

where $\psi(x)$ represents the response wavefunction, $V(x)$ denotes a prescribed potential function, and $f(x)$ constitutes a given source term. 
The multi-scale nature of this Schrödinger-type equation, notably for oscillatory, high-frequency potentials, poses a significant challenge for conventional PINNs.

We set the domain $\Omega$ as $\Omega = [-1, 1]\times [-1,1]$, while the potential $V(x)$ is that of a periodic crystalline lattice, defined as 
\begin{equation}
	V(x) = -V_0 [\cos(k x_1) + \cos(k x_2)],  
\end{equation}
where $V_0 = 5$ is the potential depth and $k = 2\pi / a$ with lattice constant $a = 0.5$. This periodic structure induces strong local confinement and sharp gradients in the wavefunction $\psi(x)$. W further introduce a high-frequency external forcing term: 
\begin{equation}
	f(x) = A \sin(K x_1) \sin(K x_2),
\end{equation} 
where $A = 50$ and $K = 5$.

Figure~\ref{fig:schrodinger_data} illustrates the potential $V$, forcing term $f$ and corresponding wavefunction $\psi$. We report respectively in Figures~\ref{fig:schrodinger_pinn} and~\ref{fig:schrodinger_fkpinn} the wavefunctions learned by a standard PINN, and an FK-PINN, with both identical architecture training parameters and schedule, after $30000 + 15000$ steps of Adam + L-BFGS. As the results clearly illustrate, while the standard PINN completely fails at learning the solution structure, due to the high-frequency oscillations making the loss landscape intractable, the proposed FK-PINNs effectively mitigates the underlying stiffness of the PDE and succesfully recovers the intended solution profile.

\begin{figure}[!htb]
	\centering
	\includegraphics[width=0.9\textwidth]{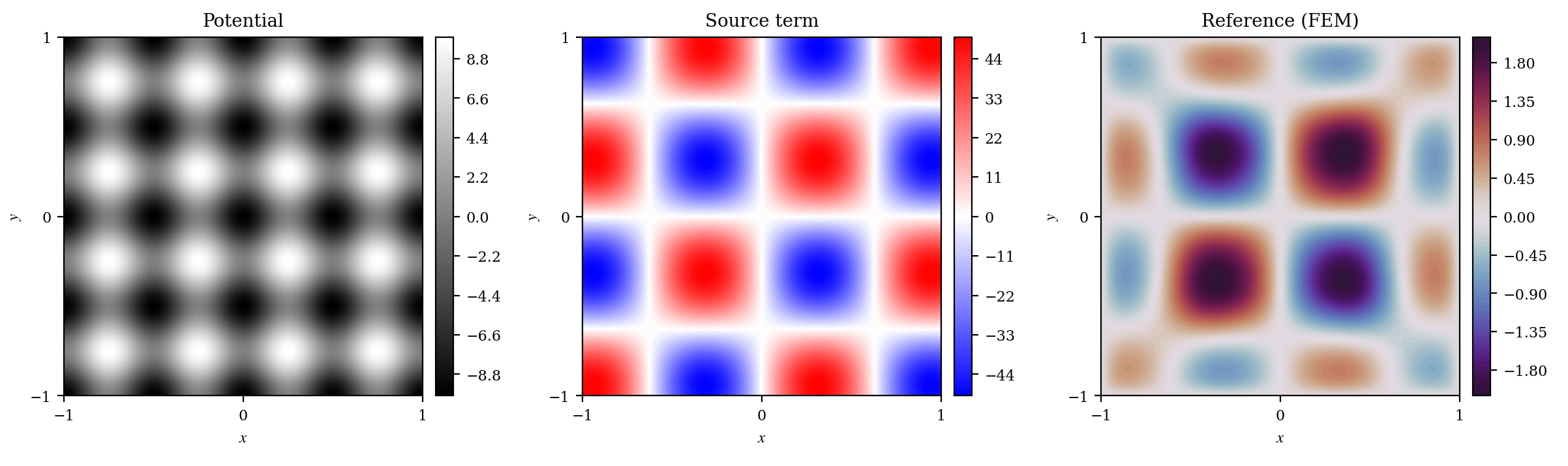}
	\caption{Data for the Schrödinger-type equation \eqref{eq:schrodinger}. \textbf{From left to right:} potential $V(\mathbf{x})$, source $f(\mathbf{x})$, and FEM reference $\psi$.}
	\label{fig:schrodinger_data}
\end{figure}

\begin{figure}[!htb]
	\centering
	\includegraphics[width=0.75\textwidth]{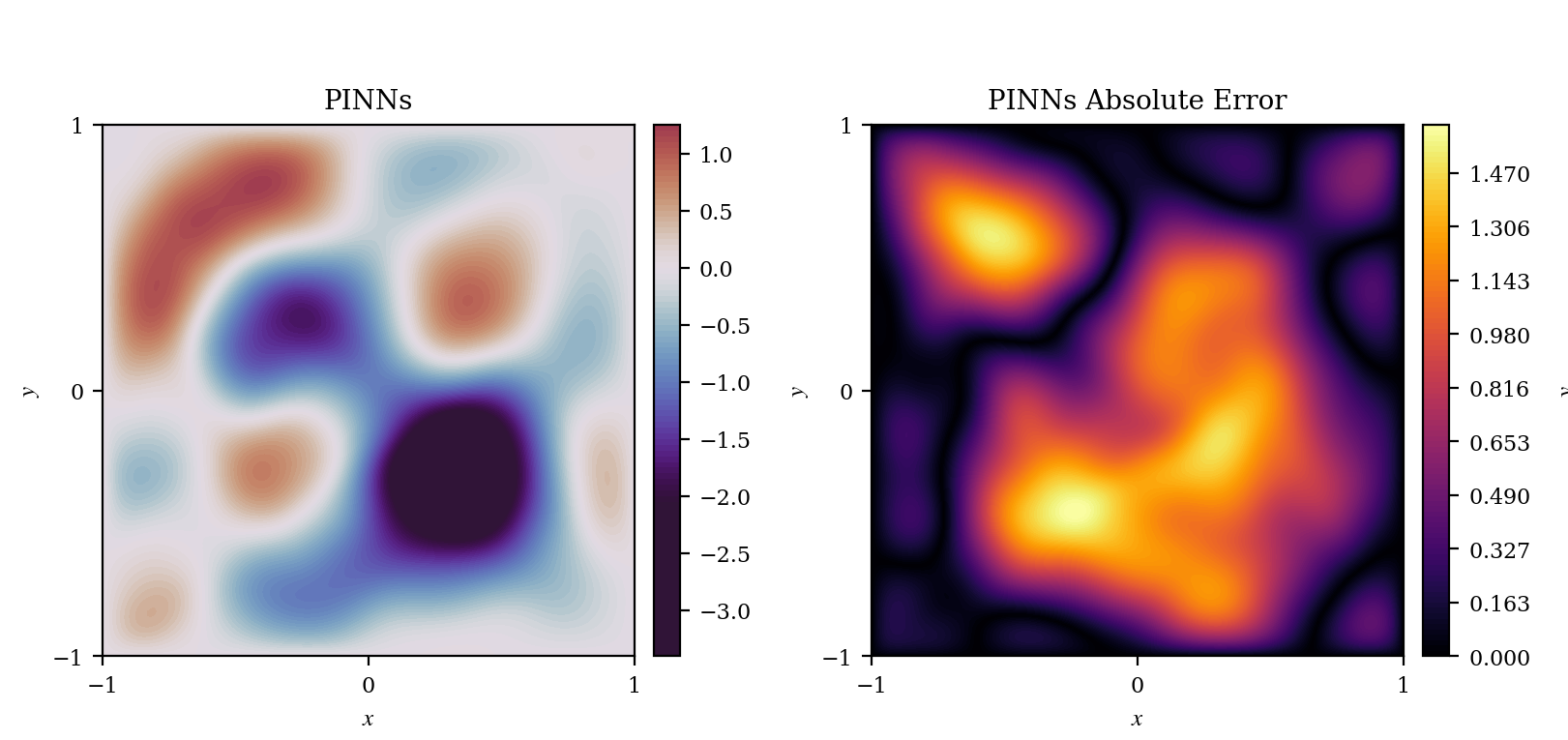}
	\caption{Numerical results for the standard PINN on the Schrödinger-type equation \eqref{eq:schrodinger}. \textbf{Left side:} PINN prediction
		\textbf{Right side:} absolute errors $|\psi - u_\theta|$ }
	\label{fig:schrodinger_pinn}
\end{figure}

\begin{figure}[!htb]
	\centering
	\includegraphics[width=0.75\textwidth]{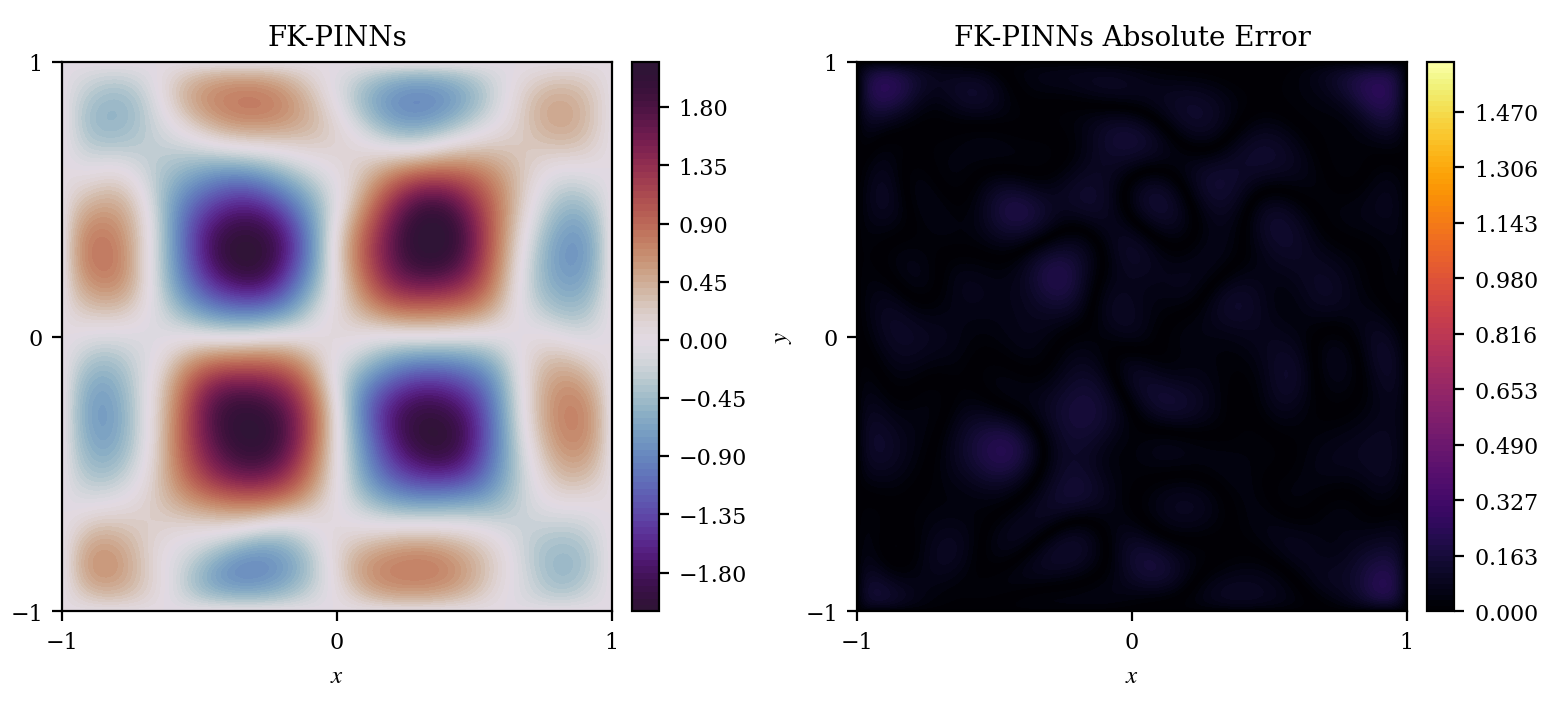}
	\caption{Numerical results for the FK-PINN on the Schrödinger-type equation \eqref{eq:schrodinger}. \textbf{Left side:} FK-PINN prediction
		\textbf{Right side:} absolute errors $|\psi - u_\theta|$ }
	\label{fig:schrodinger_fkpinn}
\end{figure}

\subsection{Summary table}

We now showcase in \cref{tab:comparison_results_main} the performance of FK-PINNs when compared to standard PINNs on a number of canonical PDEs. As we can see, the ability of this supervised approach to overcome the failure modes of PINNs is clear. We refer the reader to Appendix~\ref{app:experiments} for further experimental details and background on the PDEs under consideration.

\begin{table*}[htbp!]
	\centering
	\begin{tabular}{@{}lccccc@{}}
		\toprule
		\textbf{Model} & \textbf{Metric} & \textbf{Poisson \eqref{poisson}} & \textbf{Schrödinger-type \eqref{sec:main-experiments}} & \textbf{Mean Escape Time \eqref{MET}} & \textbf{Committor \eqref{subsubsec:committor}} \\
		\midrule
		{PINNs} 
		& $L^{2}$ Abs Err & $1.333 \pm 0.616$ & $0.475\pm0.148$ & $16.56\pm0.055$  & $ 1.028\pm0.283$ \\
		& $L^{2}$ Rel Err & $0.322 \pm 0.149$ &  $0.624\pm 0.195$ & $1.007\pm0.003$ & $0.839\pm0.661$ \\
		& $H^{1}$ Abs Err & $12.42 \pm 4.345$ & $2.893\pm0.415$ & $43.55\pm0.039$ & $3.154\pm0.794$ \\
		& $H^{1}$ Rel Err & $0.171 \pm 0.061$ & $0.512\pm0.073$ & $1.002\pm0.001$ & $0.547\pm0.074$ \\
		\midrule
		{FK-PINNs} 
		& $L^{2}$ Abs Err & $0.761 \pm 0.011$ & $0.073\pm0.008$ & $1.755\pm0.093$ & $0.791\pm0.039$ \\
		& $L^{2}$ Rel Err & $0.1184 \pm 0.003$ & $0.096\pm0.010$ & $0.107\pm0.006$ & $0.030\pm0.008$ \\
		& $H^{1}$ Abs Err & $7.822 \pm 0.412$ & $1.043\pm0.083$ & $1.755\pm0.093$ & $1.495\pm0.105$ \\
		& $H^{1}$ Rel Err & $0.108 \pm 0.006$ & $ 0.185 \pm0.015$ & $0.395\pm0.051$ & $0.153\pm0.049$ \\
		\bottomrule
	\end{tabular}
	\caption{Performance comparison between standard PINNs and FK-PINNs on various problems. All results are averaged over 5 independent runs with different random seeds.}
	\label{tab:comparison_results_main}
\end{table*}

\section{Conclusion}
\label{sec:conclusion}

In this work, we've established that enriching the standard PINN loss with pointwise approximate solution data can provably regularize the loss landscape and enhance the convergence of gradient descent. Furthermore, we've proposed an easily implementable and flexible algorithm based on a Monte Carlo approximation of the Feynman--Kac representation of certain elliptic and parabolic PDEs to compute such approximate solution data, resulting in what we've referred to as FK-PINNs. Using a learning-theoretic framework together with tight error bounds on the Monte Carlo tails, we have derived non-asymptotic, end-to-end $L^2(\Omega)$ a priori error bounds for FK-PINNs using $\tanh$ activation trained with finitely many steps of gradient descent. Those estimates are based on pseudo-dimension bounds for first- and second-order derivatives of $\tanh$ networks, which are of independent interest. Numerical experiments confirm our theoretical findings, showing more stable optimization, faster training, and lower solution error than baseline PINNs, even with modest Monte Carlo budgets.
	
	\section*{Acknowledegments}
	
	N.T. acknowledges support from the Hong Kong PhD Fellowship Scheme. X.Z. acknowledges support from the Hong Kong General Research Funds (Grants No. 11304525, No. 11318522, and No. 11308323).
	
	\section*{Impact Statement}
	
	This paper presents work whose goal is to advance the field of Machine Learning. There are many potential societal consequences of our work, none which we feel must be	specifically highlighted here.
	
	
	\bibliographystyle{icml2026}
	\bibliography{references} 
	
	
	\clearpage
	\onecolumn
	\appendix
	
	\section{More on Feynman--Kac Representations and Probabilistic PDE Solutions}
\label{app:feynman-kac}

This appendix collects standard probabilistic representations for linear
and nonlinear parabolic and elliptic PDEs. By a Feynman--Kac (FK)
representation we mean a formula of the form \eqref{eq:feynman-kac-main}
expressing the solution \(u\) as the expectation of a running and
terminal functional of a suitable Markov process with generator
\(\mathcal{L}\). This structure underlies the Monte Carlo supervision
term and FK-PINN loss introduced in Section~\ref{sec:feynman-kac}, but
the discussion here is self-contained. The following subsections present
representative examples---diffusion and jump--diffusion generators,
semilinear and fully nonlinear equations, branching diffusions, and
stochastic control/HJB problems---to illustrate that PDEs admitting such
stochastic (Feynman--Kac-type) representations form a large and
mathematically rich class. We refer the interested readers to
\citep{dynkin1965markov,ethier2009markov,karatzas2014brownian} for
background on Markov processes and classical FK formulas.

\subsection{Linear parabolic and elliptic equations with diffusion generators}
\label{subsec:linear-diffusion}

Let \(B\) be a \(d\)-dimensional Brownian motion and consider the
Itô SDE
\begin{equation}
	dX_s = b(X_s)\,ds + \sigma(X_s)\,dB_s, \qquad X_t = x\in\mathbb{R}^d,
	\label{eq:diffusion-sde-appendix}
\end{equation}
where \(b:\mathbb{R}^d\to\mathbb{R}^d\) and
\(\sigma:\mathbb{R}^d\to\mathbb{R}^{d\times d}\) satisfy standard
Lipschitz and linear growth conditions
\citep{karatzas2014brownian}.  The associated generator
\(\mathcal{L}\) acts on smooth test functions \(\varphi\) as
\begin{equation}
	(\mathcal{L}\varphi)(x)
	=
	b(x)\cdot\nabla\varphi(x)
	+ \tfrac12 \operatorname{Tr}
	\bigl( a(x) D^2\varphi(x) \bigr),
	\qquad a(x) := \sigma(x)\sigma(x)^\top.
	\label{eq:diffusion-generator-appendix}
\end{equation}

A prototypical linear parabolic Cauchy problem on
\([0,T]\times\mathbb{R}^d\) is
\begin{equation}
	\begin{cases}
		\partial_t u(t,x)
		+ (\mathcal{L} u(t,\cdot))(x)
		- V(t,x) u(t,x)
		+ f(t,x) = 0,
		& (t,x)\in[0,T)\times\mathbb{R}^d,\\[0.25em]
		u(T,x) = g(x), & x\in\mathbb{R}^d,
	\end{cases}
	\label{eq:parabolic-diffusion-appendix}
\end{equation}
with bounded (or suitably integrable) data \(V,f,g\).  Under standard
regularity assumptions on \(b,\sigma,V,f,g\)
\citep[e.g.][]{dynkin1965markov,karatzas2014brownian}, the classical
Feynman--Kac formula yields
\begin{equation}
	u(t,x)
	=
	\mathbb{E}^{t,x}\Bigl[
	e^{-\int_t^T V(s,X_s)\,ds}\, g(X_T)
	+ \int_t^T
	e^{-\int_t^s V(r,X_r)\,dr}
	f(s,X_s)\,ds
	\Bigr],
	\label{eq:fk-diffusion-appendix}
\end{equation}
where \(X\) solves \eqref{eq:diffusion-sde-appendix} with initial
condition \(X_t=x\).

Elliptic boundary value problems such as Poisson, mean exit time, or
committor equations arise by dropping the time derivative in
\eqref{eq:parabolic-diffusion-appendix}.  For a bounded \(C^2\) domain
\(D\subset\mathbb{R}^d\), define the exit time
\(\tau_D := \inf\{ s\ge0 : X_{t+s} \notin D\}\).  Under standard
conditions, the Dirichlet problem
\begin{equation}
	\begin{cases}
		(\mathcal{L} u)(x) - V(x) u(x) + f(x) = 0, & x\in D,\\[0.2em]
		u(x) = g(x), & x\in\partial D,
	\end{cases}
	\label{eq:elliptic-dirichlet-appendix}
\end{equation}
admits the representation
\begin{equation}
	u(x)
	=
	\mathbb{E}^{x}\Bigl[
	\int_0^{\tau_D}
	e^{-\int_0^s V(X_r)\,dr} f(X_s)\,ds
	+ e^{-\int_0^{\tau_D} V(X_r)\,dr} g(X_{\tau_D})
	\Bigr],
	\label{eq:fk-elliptic-dirichlet-appendix}
\end{equation}
and analogous formulas hold for Neumann and Robin boundary conditions in
terms of reflected diffusions and boundary local time
\citep[cf.][]{dynkin1965markov,fleming2006controlled}.

For the boundary value problem \eqref{eq:main_pde}--\eqref{eq:main_bc}
considered in Section~\ref{sec:feynman-kac}, the operator $\cL$ and data
$(f,g)$ fit into the elliptic template
\eqref{eq:elliptic-dirichlet-appendix} (with $V$ encoding any
zeroth-order term). In particular, in the Poisson, mean exit time, and committor benchmarks of the main text one typically has $V \equiv 0$, so that
\eqref{eq:fk-elliptic-dirichlet-appendix} reduces to
\[
u(x)
=
\mathbb{E}^{x}\Bigl[
\int_0^{\tau_D} f(X_s)\,ds
+ g\bigl(X_{\tau_D}\bigr)
\Bigr],
\]
which is of the form~\eqref{eq:feynman-kac-main}, with running and
terminal functionals $r$ and $h$ determined (up to the sign convention
for $\cL$) by the source term $f$ and boundary data $g$.
In the algorithms of the main text, expectations such as
\eqref{eq:fk-diffusion-appendix} and
\eqref{eq:fk-elliptic-dirichlet-appendix} are approximated by simulating
time-discretized trajectories of \eqref{eq:diffusion-sde-appendix}
(e.g. Euler--Maruyama) and averaging over samples. Weak and strong error
estimates for such schemes can be found in
\citep{kloeden1992numerical,glasserman2004monte}.

\subsection{Linear equations with nonlocal generators}
\label{subsec:linear-nonlocal-appendix}

Let \(X\) be a Lévy process in \(\mathbb{R}^d\) with Lévy triplet
\((b,a,\nu)\), where \(b\in\mathbb{R}^d\),
\(a\in\mathbb{R}^{d\times d}\) is symmetric nonnegative definite, and
\(\nu\) is a Lévy measure.  Its generator \(\mathcal{L}\) acts on
\(C_c^2(\mathbb{R}^d)\) as
\begin{align}
	(\mathcal{L}\varphi)(x)
	&=
	b\cdot\nabla\varphi(x)
	+ \tfrac12 \operatorname{Tr}\bigl( a D^2\varphi(x) \bigr)
	\nonumber\\
	&\quad
	+ \int_{\mathbb{R}^d\setminus\{0\}}
	\Bigl[
	\varphi(x+y) - \varphi(x)
	- \mathbf{1}_{\{|y|\le1\}}\, y\cdot\nabla\varphi(x)
	\Bigr] \nu(dy),
	\label{eq:levy-generator-appendix}
\end{align}
see, for instance, \citep{applebaum2009levy}.

Linear parabolic and elliptic equations with the nonlocal operator
$\mathcal{L}$ in \eqref{eq:levy-generator-appendix} have the same
abstract form as \eqref{eq:parabolic-diffusion-appendix} and
\eqref{eq:elliptic-dirichlet-appendix}.  Under suitable conditions on
$(b,a,\nu)$ and the data, the solution $u$ admits a Feynman--Kac
representation identical to \eqref{eq:fk-diffusion-appendix}, with
$X$ now a Lévy (or more generally a Lévy-type) process
\citep[see, e.g.,][]{dynkin1965markov,applebaum2009levy}.  Monte Carlo
approximation then proceeds by simulating jump--diffusion paths (using
standard jump-adapted schemes) and averaging the corresponding
functionals \citep{glasserman2004monte}. In principle, FK-PINNs could be
extended to this setting by supervising the network with such Lévy
paths instead of pure diffusions.

\subsection{Semilinear parabolic equations and BSDE representations}
\label{subsec:semilinear-bsde-appendix}

Consider the semilinear parabolic PDE
\begin{equation}
	\begin{cases}
		\partial_t u(t,x)
		+ (\mathcal{L} u(t,\cdot))(x)
		+ F\bigl(t,x,u(t,x),
		\sigma(x)^\top \nabla u(t,x)\bigr)
		= 0, & (t,x)\in[0,T)\times\mathbb{R}^d,\\[0.25em]
		u(T,x) = g(x), & x\in\mathbb{R}^d,
	\end{cases}
	\label{eq:semilinear-pde-bsde-appendix}
\end{equation}
where \(\mathcal{L}\) is the diffusion generator
\eqref{eq:diffusion-generator-appendix}, \(F\) is Lipschitz in
\((y,z)\) with linear growth, and \(g\) is bounded and Lipschitz.  Let
\(X\) solve \eqref{eq:diffusion-sde-appendix} and consider the
forward--backward system
\begin{align}
	dX_s &= b(X_s)\,ds + \sigma(X_s)\,dB_s, \qquad X_t = x,
	\label{eq:bsde-forward-appendix}\\[0.2em]
	Y_s &= g(X_T)
	+ \int_s^T
	F\bigl(r,X_r,Y_r,Z_r\bigr)\,dr
	- \int_s^T Z_r\,dB_r,
	\qquad s\in[t,T].
	\label{eq:bsde-backward-appendix}
\end{align}
Under the above assumptions, there exists a unique adapted solution
\((Y,Z)\) to \eqref{eq:bsde-backward-appendix}
\citep{pardoux1990adapted,el1997backward}, and the nonlinear
Feynman--Kac formula identifies
\begin{equation}
	u(t,x) = Y_t^{t,x},
	\label{eq:bsde-repr-appendix}
\end{equation}
where \((X^{t,x},Y^{t,x},Z^{t,x})\) denotes the solution of
\eqref{eq:bsde-forward-appendix}–\eqref{eq:bsde-backward-appendix} with
\(X_t^{t,x}=x\).  Conversely, if \(u\) is a sufficiently regular
solution of \eqref{eq:semilinear-pde-bsde-appendix}, then
\((Y_s,Z_s)=(u(s,X_s),\sigma(X_s)^\top\nabla u(s,X_s))\) solves
\eqref{eq:bsde-backward-appendix}
\citep{pardoux1990adapted,el1997backward}.

Monte Carlo approximations for
\eqref{eq:semilinear-pde-bsde-appendix} discretize the forward SDE and
the BSDE and estimate the conditional expectations in the backward
recursion by regression-based methods \citep{bouchard2004discrete}. In the present paper we only need the linear case where
$F(t,x,y,z) = -V(t,x)\,y + f(t,x)$, which reduces to the classical FK
form in Section~\ref{subsec:fk-repr}, but the nonlinear Feynman--Kac
formula suggests a natural path to extending FK-PINNs to general
semilinear problems by coupling PINNs with Monte Carlo BSDE solvers.

\subsection{Semilinear equations with polynomial nonlinearities and branching diffusions}
\label{subsec:branching-appendix}

Branching diffusion processes provide representations for semilinear
equations of reaction–diffusion type.  A prototype is
\begin{equation}
	\partial_t u(t,x)
	=
	(\mathcal{L} u(t,\cdot))(x)
	+ \lambda\bigl( u(t,x)^p - u(t,x) \bigr),
	\qquad u(0,x) = u_0(x),
	\label{eq:kpp-pde-appendix}
\end{equation}
with \(\lambda>0\), integer \(p\ge2\), and \(\mathcal{L}\) the diffusion
generator \eqref{eq:diffusion-generator-appendix}.  Under appropriate
integrability and non-explosion conditions, the solution of
\eqref{eq:kpp-pde-appendix} can be written as
\begin{equation}
	u(t,x)
	= \mathbb{E}^{x}\Bigl[
	\prod_{i=1}^{N_t} u_0\bigl(X_t^{(i)}\bigr)
	\Bigr],
	\label{eq:branching-repr-appendix}
\end{equation}
where \((X_t^{(i)})_{1\le i\le N_t}\) are the positions of particles in
a branching diffusion system with spatial motion governed by
\(\mathcal{L}\) and binary or \(p\)-ary branching at rate \(\lambda\)
\citep[see, e.g.,][]{dynkin1991probabilistic}.  More general
semilinear PDEs with polynomial nonlinearities in \(u\) and
\(\nabla u\) admit representations in terms of marked branching
diffusions \citep{henry2019branching,agarwal2020branching}.

Monte Carlo evaluation of \eqref{eq:branching-repr-appendix} proceeds
by simulating the associated random branching tree and averaging the
weight functional, provided the variance is controlled
\citep{henry2019branching}.

\subsection{Hamilton--Jacobi--Bellman equations from stochastic control}
\label{subsec:hjb-appendix}

Let \(A\) be a compact metric control set.  For an \(A\)-valued
progressively measurable control process \(\alpha\), consider the
controlled diffusion
\begin{equation}
	dX_s^{t,x,\alpha}
	=
	b\bigl(X_s^{t,x,\alpha},\alpha_s\bigr)\,ds
	+ \sigma\bigl(X_s^{t,x,\alpha},\alpha_s\bigr)\,dB_s,
	\qquad X_t^{t,x,\alpha}=x,
	\label{eq:controlled-diffusion-appendix}
\end{equation}
with running cost
\(\ell:[0,T]\times\mathbb{R}^d\times A\to\mathbb{R}\) and terminal cost
\(g:\mathbb{R}^d\to\mathbb{R}\).  The value function is
\begin{equation}
	u(t,x)
	:=
	\sup_{\alpha}
	\mathbb{E}^{t,x,\alpha}\Bigl[
	\int_t^T \ell\bigl(s,X_s^{t,x,\alpha},\alpha_s\bigr)\,ds
	+ g\bigl(X_T^{t,x,\alpha}\bigr)
	\Bigr],
	\label{eq:hjb-value-appendix}
\end{equation}
where the supremum ranges over all admissible controls.  Writing
\(\mathcal{L}^a\) for the diffusion generator associated with the
frozen control \(a\in A\), the corresponding Hamilton--Jacobi--Bellman
equation is
\begin{equation}
	\begin{cases}
		\partial_t u(t,x)
		+ \displaystyle \sup_{a\in A}
		\bigl\{
		(\mathcal{L}^a u(t,\cdot))(x)
		+ \ell(t,x,a)
		\bigr\}
		= 0, & (t,x)\in[0,T)\times\mathbb{R}^d,\\[0.3em]
		u(T,x) = g(x), & x\in\mathbb{R}^d.
	\end{cases}
	\label{eq:hjb-pde-appendix}
\end{equation}
Under standard compactness, measurability, and regularity assumptions,
\(u\) is the unique bounded viscosity solution of
\eqref{eq:hjb-pde-appendix} and coincides with
\eqref{eq:hjb-value-appendix}
\citep[see][]{fleming2006controlled}.

Stochastic representations of the form
\eqref{eq:hjb-value-appendix} can be approximated numerically by
simulation-based dynamic programming and related Monte Carlo methods for
controlled diffusions; see \citep{fleming2006controlled} for an overview.
For high-dimensional parabolic PDEs and BSDEs, including HJB-type
equations from stochastic control, deep learning-based methods can
approximate the gradient of the value function (interpreted as a
feedback control or policy) by a neural network and train it via Monte
Carlo simulation of the underlying diffusion; see, for example,
\citep{han2017deep}.
While FK-PINNs in the present work are restricted to linear operators,
the HJB representation suggests a possible extension in which the neural
network is supervised by Monte Carlo estimates of value functions under
sampled controls.

\subsection{Fully nonlinear parabolic equations and second-order BSDEs}
\label{subsec:fully-nonlinear-appendix}

Consider a fully nonlinear parabolic PDE of the form
\begin{equation}
	-\partial_t u(t,x)
	+ F\bigl(t,x,u(t,x),Du(t,x),D^2u(t,x)\bigr)
	= 0,
	\qquad u(T,x) = g(x),
	\label{eq:fully-nonlinear-pde-appendix}
\end{equation}
where \(F\) is nonlinear in the Hessian \(D^2u\) and satisfies suitable
ellipticity, monotonicity, and regularity conditions ensuring a
comparison principle and well-posedness in the viscosity sense.
Second-order backward SDEs (2BSDEs) provide pathwise representations of
solutions to \eqref{eq:fully-nonlinear-pde-appendix}
\citep{cheridito2007second}.  Roughly speaking, given a
forward diffusion \(X\), one considers an adapted quadruple
\((Y,Z,\Gamma,A)\) solving a 2BSDE whose first component satisfies
\begin{equation}
	Y_t^{t,x} = u(t,X_t^{t,x}),
	\label{eq:2bsde-repr-appendix}
\end{equation}
and conversely any such 2BSDE solution identifies the unique viscosity
solution \(u\) of \eqref{eq:fully-nonlinear-pde-appendix}
\citep{cheridito2007second}.  Numerical schemes discretize both the forward diffusion and the 2BSDE
and approximate conditional expectations by Monte Carlo in an enlarged
state space, providing a probabilistic route to numerical solutions of
fully nonlinear equations \citep{cheridito2007second}. Extending FK-PINNs to this 2BSDE setting is an interesting direction for
future research but lies beyond the scope of this paper.

\subsection{Summary table}
\label{subsec:summary-table-appendix}

For convenience, Table~\ref{tab:pde-prob-repr-appendix} summarizes the
PDE classes discussed above, the corresponding operators, the
probabilistic objects, and some standard references.

\begin{table}[H]
	\centering
	\renewcommand{\arraystretch}{1.3}
	\begin{tabular}{p{3.1cm} p{4.2cm} p{3.4cm} p{4.7cm}}
		\hline
		\textbf{PDE class} &
		\textbf{Schematic form} &
		\textbf{Probabilistic object} &
		\textbf{Representation / references} \\
		\hline
		Linear parabolic / elliptic (diffusion) &
		$\partial_t u + \mathcal{L}u - V u + f = 0$,
		or $\mathcal{L}u - V u + f = 0$ in $D$ &
		Diffusion $X$ with generator $\mathcal{L}$,
		exit time $\tau_D$ &
		Feynman--Kac expectations such as
		\eqref{eq:fk-diffusion-appendix},
		\eqref{eq:fk-elliptic-dirichlet-appendix};
		see \citep{dynkin1965markov,karatzas2014brownian} \\[0.4em]
		Linear nonlocal (Lévy / jump--diffusion) &
		$\partial_t u + \mathcal{L}u - V u + f = 0$
		with $\mathcal{L}$ as in
		\eqref{eq:levy-generator-appendix} &
		Lévy or Lévy-type process $X$ &
		Same Feynman--Kac structure as in the diffusion
		case with nonlocal generator \citep{dynkin1965markov,applebaum2009levy} \\[0.4em]
		Semilinear parabolic (BSDE) &
		$\partial_t u + \mathcal{L}u
		+ F(t,x,u,\sigma^\top\nabla u)=0$ &
		Forward diffusion $X$ $+$ BSDE $(Y,Z)$ &
		Nonlinear Feynman--Kac:
		$u(t,x)=Y_t^{t,x}$ as in
		\eqref{eq:bsde-repr-appendix} \citep{pardoux1990adapted,el1997backward} \\[0.4em]
		Semilinear (polynomial) &
		$\partial_t u + \mathcal{L}u
		= \Phi(t,x,u,\nabla u)$,
		$\Phi$ polynomial &
		(Marked) branching diffusion system &
		Branching representations such as
		\eqref{eq:branching-repr-appendix} \citep{dynkin1991probabilistic,henry2019branching,agarwal2020branching} \\[0.4em]
		HJB (stochastic control) &
		$\partial_t u + \sup_{a\in A}
		\{ \mathcal{L}^a u + \ell(t,x,a)\} = 0$ &
		Controlled diffusion $X^{\alpha}$ &
		Value function representation
		\eqref{eq:hjb-value-appendix} \citep{fleming2006controlled} \\[0.4em]
		Fully nonlinear parabolic &
		$-\partial_t u
		+ F(t,x,u,Du,D^2u)=0$ &
		Forward diffusion $X$ $+$ 2BSDE
		$(Y,Z,\Gamma,A)$ &
		Fully nonlinear Feynman--Kac via
		2BSDE, $u(t,x)=Y_t^{t,x}$ as in
		\eqref{eq:2bsde-repr-appendix} \citep{cheridito2007second} \\
		\hline
	\end{tabular}
	\caption{PDE classes covered by standard probabilistic
		representations.  Here $\mathcal{L}$ denotes a (possibly nonlocal)
		Markov generator, $X$ the corresponding Markov or controlled
		process, and expectations $\mathbb{E}^{t,x}[\cdot]$ are taken with
		respect to the law of $X$ started from $(t,x)$.
		The FK-PINNs studied in the main text use the first (linear
		diffusion) row, but the same Monte Carlo supervision principle
		could in principle be adapted to the other classes.}
		\label{tab:pde-prob-repr-appendix}
\end{table}
	\clearpage
	\section{Preconditioning Effect of the FK data term}
\label{app:operator-theory}

We now prove the conditioning result stated informally as Theorem~\ref{thm:cond-number-bound}. The idea of the proof is to compare the empirical Gauss--Newton matrix of the PDE-only objective with that of the FK-augmented objective, and to show that the supervised FK term prevents the smallest nonzero eigenvalues from collapsing as the residual mesh is refined.

We write $L_{\mathrm{PDE+BC}}(\theta)$ for the empirical PDE residual plus Dirichlet boundary loss and $L_{\mathrm{data}}(\theta)$ for a supervised data loss, and we consider
\[
L_\lambda(\theta) := L_{\mathrm{PDE+BC}}(\theta) + \lambda\,L_{\mathrm{data}}(\theta).
\]
This is the same structure as the fixed-weight FK-PINN objective in Section~\ref{sec:operator-theory}, with $L_{\mathrm{data}}$ playing the role of the FK supervision term and $\lambda$ corresponding to $\lambda_{\FK}$.

The proof combines the residual ill-conditioning mechanism in \citep{deryck2024operator,rathore2024challenges} described by Proposition~\ref{assump:residual-scaling}, with the local PL$^\ast$ control for supervised least squares from \citep{liu2022loss}. In an overparameterized regime, the data Gauss--Newton matrix is typically low rank, so adding it does not automatically bound the smallest eigenvalue of the sum. We therefore impose a compatibility condition, Assumption~\ref{assump:fk-coverage}, which ensures that every direction relevant to the PDE and boundary terms is also seen by the data term. Under this condition, the smallest nonzero eigenvalues of the combined Gauss--Newton matrix are uniformly bounded below, which yields the PL$^\ast$ constant and the condition-number bound in Theorem~\ref{thm:cond-number}.

\subsection{PINN objective with noisy data}
\label{subsec:data-setup}

We consider a linear PDE with Dirichlet boundary conditions on a bounded domain
$\Omega \subset \mathbb{R}^d$,
\begin{equation}
	\label{eq:pde}
	\begin{aligned}
		\cL[u](x) &= f(x), && x \in \Omega, \\
		u(x) &= g(x), && x \in \partial\Omega,
	\end{aligned}
\end{equation}
where $\cL$ is a linear differential operator, and $f$ and $g$ are given functions. We assume that \eqref{eq:pde} is well-posed in the sense that there exists a unique solution $u^\star$ in an appropriate function space.

Let $\theta \in \mathbb{R}^p$ denote the PINN parameter vector, and let $u(\cdot;\theta)$ be a neural network parameterization of $u$. We distinguish three types of training points:
\begin{itemize}
	\item \emph{Residual points}
	$\{x_i^{\mathrm{res}}\}_{i=1}^{n_{\mathrm{res}}} \subset \Omega$.
	\item \emph{Boundary points}
	$\{x_j^{\mathrm{bc}}\}_{j=1}^{n_{\mathrm{bc}}} \subset \partial\Omega$.
	\item \emph{Data points}
	$\{x_i^{\mathrm{d}}\}_{i=1}^{n_{\mathrm{d}}} \subset \Omega$.
\end{itemize}

\subsubsection*{Residual and boundary terms}

The empirical residual and boundary losses are defined by
\begin{equation}
	\label{eq:pde-loss}
	L_{\mathrm{PDE+BC}}(\theta)
	:=
	\frac{1}{2n_{\mathrm{res}}}
	\sum_{i=1}^{n_{\mathrm{res}}}
	\bigl(\cL[u(x_i^{\mathrm{res}};\theta)]
	- f(x_i^{\mathrm{res}})\bigr)^2
	+
	\frac{1}{2n_{\mathrm{bc}}}
	\sum_{j=1}^{n_{\mathrm{bc}}}
	\bigl(u(x_j^{\mathrm{bc}};\theta)
	- g(x_j^{\mathrm{bc}})\bigr)^2.
\end{equation}

\subsubsection*{Noisy data term}

We additionally observe (possibly noisy) function values at the data points,
\[
y_i = u^\star(x_i^{\mathrm{d}}) + \xi_i,
\quad i=1,\dots,n_{\mathrm{d}},
\]
where the noise variables $\xi_i$ are independent across $i$ and satisfy
$\mathbb{E}[\xi_i]=0$.

The data-supervision term with weight $\lambda \ge 0$ is
\begin{equation}
	\label{eq:data-loss}
	L_{\mathrm{data}}(\theta)
	:=
	\frac{1}{2n_{\mathrm{d}}}
	\sum_{i=1}^{n_{\mathrm{d}}}
	\bigl(u(x_i^{\mathrm{d}};\theta) - y_i\bigr)^2.
\end{equation}
The full PINN objective is then
\begin{equation}
	\label{eq:full-loss}
	L_\lambda(\theta)
	:=
	L_{\mathrm{PDE+BC}}(\theta)
	+ \lambda\,L_{\mathrm{data}}(\theta).
\end{equation}
We denote by $\Theta_\lambda^\star$ the set of global minimizers of
$L_\lambda$:
\[
\Theta_\lambda^\star
:=
\arg\min_{\theta \in \mathbb{R}^p} L_\lambda(\theta).
\]

\subsubsection*{Stacked feature map and Gauss--Newton matrix.}

To streamline notation, define the residual, boundary, and data feature maps
\begin{align*}
	F_{\mathrm{res}}(\theta)
	&:= \frac{1}{\sqrt{n_{\mathrm{res}}}}
	\bigl(
	\cL[u(x_1^{\mathrm{res}};\theta)] - f(x_1^{\mathrm{res}}),
	\dots,
	\cL[u(x_{n_{\mathrm{res}}}^{\mathrm{res}};\theta)]
	- f(x_{n_{\mathrm{res}}}^{\mathrm{res}})
	\bigr)^\top, \\
	F_{\mathrm{bc}}(\theta)
	&:= \frac{1}{\sqrt{n_{\mathrm{bc}}}}
	\bigl(
	u(x_1^{\mathrm{bc}};\theta) - g(x_1^{\mathrm{bc}}),
	\dots,
	u(x_{n_{\mathrm{bc}}}^{\mathrm{bc}};\theta) - g(x_{n_{\mathrm{bc}}}^{\mathrm{bc}})
	\bigr)^\top, \\
	F_{\mathrm{data}}(\theta)
	&:= \frac{1}{\sqrt{n_{\mathrm{d}}}}
	\bigl(
	u(x_1^{\mathrm{d}};\theta) - y_1,
	\dots,
	u(x_{n_{\mathrm{d}}}^{\mathrm{d}};\theta) - y_{n_{\mathrm{d}}}
	\bigr)^\top.
\end{align*}
Then
\[
L_{\mathrm{PDE+BC}}(\theta)
= \frac{1}{2}\bigl\|F_{\mathrm{res}}(\theta)\bigr\|_2^2
+ \frac{1}{2}\bigl\|F_{\mathrm{bc}}(\theta)\bigr\|_2^2,
\quad
L_{\mathrm{data}}(\theta)
= \frac{1}{2}\bigl\|F_{\mathrm{data}}(\theta)\bigr\|_2^2.
\]

We introduce the stacked feature map
\begin{equation}
	\label{eq:stacked-F}
	F_\lambda(\theta)
	:=
	\begin{bmatrix}
		F_{\mathrm{res}}(\theta) \\
		F_{\mathrm{bc}}(\theta) \\
		\sqrt{\lambda}\,F_{\mathrm{data}}(\theta)
	\end{bmatrix}
	\in \mathbb{R}^{n_\lambda},
	\quad
	n_\lambda := n_{\mathrm{res}} + n_{\mathrm{bc}} + n_{\mathrm{d}},
\end{equation}
so that $L_\lambda(\theta) = \tfrac{1}{2}\|F_\lambda(\theta)\|_2^2$. Let $J_{F_\lambda}(\theta) \in \mathbb{R}^{n_\lambda \times p}$ denote the Jacobian
of $F_\lambda$ at $\theta$. The associated empirical Gauss--Newton matrix is
\begin{equation*}
	G_\lambda(\theta) := J_{F_\lambda}(\theta)^\top J_{F_\lambda}(\theta)
	\in \mathbb{R}^{p \times p}.
\end{equation*}
Its eigenvalues describe the local curvature of $L_\lambda$ in parameter space, and will provide the link between operator conditioning and PL-based
optimization guarantees. From this point on, the conditioning question is reduced to bounding the smallest and largest eigenvalues of $G_\lambda(\theta)$ in the region where the PL$^\ast$ argument is applied.

\begin{definition}\label{def:lambda-min-plus}
	Let $A\in\mathbb{R}^{p\times p}$ be symmetric and positive semidefinite. We write
	$\lambda_{\min}^+(A)$ for the smallest strictly positive eigenvalue of $A$.
\end{definition}

\subsection{Tangent function space and population Gauss--Newton (PDE-only)}
\label{subsec:pop-GN-PDE}

We now recall the operator-theoretic interpretation of the PDE-only population Gauss--Newton matrix from \citet{deryck2024operator}. This section mostly summarizes their construction in the finite-width setting, to make our later use of their results self-contained. The only reason to introduce the tangent space $\mathcal{H}(\theta)$, the map $T(\theta)$, and the operator $\mathcal{K}_\infty(\theta)$ is to express Gauss--Newton curvature in a form where the differential operator $\cL$ appears as $\mathcal{A}=\cL^\ast\cL$. This will allow us to directly see which ``directions" are poorly controlled by residual sampling and how a supervised term changes those directions.

Let $\theta \in \mathbb{R}^p$ be a parameter vector. For $j=1,\dots,p$, we define the tangent functions
\[
\phi_j(x;\theta) := \frac{\partial}{\partial \theta_j} u(x;\theta),
\quad x \in \Omega.
\]
The $\phi_j(\cdot;\theta)$'s span the corresponding tangent function space 
\[
\mathcal{H}(\theta):= \operatorname{span}\{\phi_1(\cdot;\theta),\dots,\phi_p(\cdot;\theta)\}
\subset L^2(\Omega,\mu),
\]

where $\mu$ is the reference sampling measure on $\Omega$ (i.e., the
distribution used to draw residual and data points).

We then introduce the linear map
\[
T(\theta): \mathbb{R}^p \to \mathcal{H}(\theta),
\quad
T(\theta)v := \sum_{j=1}^p v_j \phi_j(\cdot;\theta),
\]
and its adjoint
\[
T(\theta)^\ast: L^2(\Omega,\mu) \to \mathbb{R}^p,
\quad
\bigl(T(\theta)^\ast f\bigr)_j
:= \int_\Omega f(x)\,\phi_j(x;\theta)\,d\mu(x).
\]
We then define the naturally associated (population) kernel integral operator as:
\begin{equation}
	\label{eq:kernel-operator}
	\mathcal{K}_\infty(\theta)
	:= T(\theta)T(\theta)^\ast : L^2(\Omega,\mu) \to \mathcal{H}(\theta).
\end{equation}

Here $\mathcal{H}(\theta)$ is the tangent function space spanned by
$\{\phi_j(\cdot;\theta)\}$, and
$\mathcal{K}_\infty(\theta)=T(\theta)T(\theta)^\ast$ is the corresponding
(finite-width) neural tangent kernel operator in the sense of
\citet{deryck2024operator,rathore2024challenges}.

\subsubsection*{Population PDE objective and Gauss--Newton matrix.}

For simplicity, we ignore the boundary term in this subsection. It contributes an additional positive semidefinite Gauss--Newton term, so keeping it separate can only strengthen the spectral lower bounds used later. With this simplification, the population counterpart of the residual loss is given by:
\begin{equation}
	\label{eq:pop-pde-loss}
	L_{\infty,\mathrm{PDE}}(\theta)
	:=
	\frac{1}{2} \int_\Omega
	\bigl(\cL[u(x;\theta)] - f(x)\bigr)^2
	\,d\mu(x).
\end{equation}
Define $\mathcal{A} := \cL^\ast \cL$, where
$\cL^\ast$ denotes the adjoint of $\cL$ with respect to the
$L^2(\Omega,\mu)$ inner product.

The population Gauss--Newton matrix for \eqref{eq:pop-pde-loss} is
\[
G_{\infty,\mathrm{PDE}}(\theta)
:=
\int_\Omega
\cL[\nabla_\theta u(x;\theta)]\,
\cL[\nabla_\theta u(x;\theta)]^\top
\,d\mu(x),
\]
where $\nabla_\theta u(x;\theta) \in \mathbb{R}^p$ is the gradient of $u$ with respect
to $\theta$. A direct computation shows that
\begin{equation}
	\label{eq:pop-GN-PDE}
	G_{\infty,\mathrm{PDE}}(\theta)
	= T(\theta)^\ast \mathcal{A} T(\theta).
\end{equation}

By \citep{deryck2024operator}, the finite-width
population Gauss--Newton matrix $G_{\infty,\mathrm{PDE}}(\theta)$ has the same
non-zero eigenvalues as the compact, self-adjoint operator
\[
\mathcal{A}\circ\mathcal{K}_\infty(\theta):\mathcal{H}(\theta)\to\mathcal{H}(\theta),
\quad
(\mathcal{A}\circ\mathcal{K}_\infty(\theta))f
:= \mathcal{A}\bigl(T(\theta)T(\theta)^\ast f\bigr),
\]
and hence
\begin{equation}
	\label{eq:eigs-pop-PDE}
	\lambda_j\bigl(G_{\infty,\mathrm{PDE}}(\theta)\bigr)
	= \lambda_j\bigl(\mathcal{A}\circ\mathcal{K}_\infty(\theta)\bigr),\quad
	\kappa\bigl(G_{\infty,\mathrm{PDE}}(\theta)\bigr)
	= \kappa\bigl(\mathcal{A}\circ\mathcal{K}_\infty(\theta)\bigr),
\end{equation}
for $j=1,\dots,p$. We refer to \citet[Thm.~2.4]{deryck2024operator} for the derivation, which is a finite-dimensional analogue of the fact that $S^\ast S$ and $SS^\ast$ have the same non-zero eigenvalues for any bounded operator $S$ between Hilbert spaces.

\subsection{Population Gauss--Newton with FK data}
\label{subsec:pop-GN-data}

We now incorporate the Feynman-Kac supervision term into the population analysis. For simplicity, we assume that residual and data points are sampled from the same measure $\mu$ on $\Omega$, but the more general case follows from the same argument.

The population counterpart of the full loss \eqref{eq:full-loss} is
\begin{equation}
	\label{eq:pop-loss-lambda}
	L_{\infty,\lambda}(\theta)
	:=
	\frac{1}{2} \int_\Omega
	\bigl(\cL[u(x;\theta)] - f(x)\bigr)^2
	\,d\mu(x)
	+
	\frac{\lambda}{2} \int_\Omega
	\bigl(u(x;\theta) - u^\star(x)\bigr)^2
	\,d\mu(x).
\end{equation}
The first term yields the Gauss--Newton matrix $T(\theta)^\ast \mathcal{A} T(\theta)$
as in \eqref{eq:pop-GN-PDE}. The data term contributes
\[
G_{\infty,\mathrm{data}}(\theta)
:=
\lambda \int_\Omega
\nabla_\theta u(x;\theta)\,\nabla_\theta u(x;\theta)^\top
\,d\mu(x)
= \lambda\,T(\theta)^\ast T(\theta),
\]
since $T(\theta)v = \sum_{j=1}^p v_j \phi_j(\cdot;\theta)$ and
$\nabla_\theta u(x;\theta) = (\phi_1(x;\theta),\dots,\phi_p(x;\theta))^\top$.

Thus the combined population Gauss--Newton matrix for $L_{\infty,\lambda}$ is
\begin{equation}
	\label{eq:pop-GN-lambda}
	G_{\infty,\lambda}(\theta)
	:=
	G_{\infty,\mathrm{PDE}}(\theta) + G_{\infty,\mathrm{data}}(\theta)
	= T(\theta)^\ast (\mathcal{A} + \lambda I)\,T(\theta).
\end{equation}
Exactly as in the PDE-only case, $G_{\infty,\lambda}(\theta)$ is the matrix
representation (in the basis $\{\phi_j(\cdot;\theta)\}_{j=1}^p$) of the compact,
self-adjoint operator
\[
(\mathcal{A}+\lambda I)\circ\mathcal{K}_\infty(\theta):\mathcal{H}(\theta)\to\mathcal{H}(\theta),
\]
and we obtain from \citet[Thm.~2.4]{deryck2024operator} that their non-zero
spectra and condition numbers coincide:
\begin{equation}
	\label{eq:eigs-pop-lambda}
	\lambda_j\bigl(G_{\infty,\lambda}(\theta)\bigr)
	= \lambda_j\bigl( (\mathcal{A}+\lambda I)\circ\mathcal{K}_\infty(\theta)\bigr),\quad
	\kappa\bigl(G_{\infty,\lambda}(\theta)\bigr)
	= \kappa\bigl( (\mathcal{A}+\lambda I)\circ\mathcal{K}_\infty(\theta)\bigr).
\end{equation}

Intuitively, adding the data term simply replaces $\mathcal A$ by $\mathcal A+\lambda I$ in the operator-theoretic framework of \citet{deryck2024operator}, thereby lifting the small eigenvalues associated with ill-conditioned PDE directions. This is the population analogue of Tikhonov regularization or ridge regression, but applied at the level of the PDE operator $\mathcal{A}$ composed with the tangent kernel $\mathcal{K}_\infty(\theta)$. This population calculation motivates what we look for empirically: if the data term contributes a matrix with eigenvalues bounded below independently of the number of residual points, then the sum $G_\lambda = G_{\mathrm{res}} + G_{\mathrm{bc}} + \lambda G_{\mathrm{data}}$ should inherit a uniform lower bound once $\lambda>0$ is fixed.

\subsection{Finite-sample Gauss--Newton matrices and effective conditioning}
\label{subsec:finite-sample-GN}

We now move from the population operators in \eqref{eq:pop-GN-PDE} and
\eqref{eq:pop-GN-lambda} to their empirical counterparts, and study how the
data term affects the spectrum of $G_\lambda$ built from finitely many
residual, boundary, and data points.

\subsubsection*{Empirical Gauss--Newton matrices}

The empirical residual and data Gauss--Newton matrices at a parameter $\theta$ are
given by
\begin{equation}
	\label{eq:res-GN-emp}
	G_{\mathrm{res}}(\theta)
	:=
	\frac{1}{n_{\mathrm{res}}}
	\sum_{i=1}^{n_{\mathrm{res}}}
	\cL[\nabla_\theta u(x_i^{\mathrm{res}};\theta)]\,
	\cL[\nabla_\theta u(x_i^{\mathrm{res}};\theta)]^\top,
\end{equation}
\begin{equation}
	\label{eq:data-GN-emp}
	G_{\mathrm{data}}(\theta)
	:=
	\frac{1}{n_{\mathrm{d}}}
	\sum_{i=1}^{n_{\mathrm{d}}}
	\nabla_\theta u(x_i^{\mathrm{d}};\theta)\,
	\nabla_\theta u(x_i^{\mathrm{d}};\theta)^\top.
\end{equation}
The boundary Gauss--Newton term $G_{\mathrm{bc}}(\theta)$ is defined analogously to
$G_{\mathrm{res}}(\theta)$, replacing $\cL[u] - f$ by
$u - g$ on $\partial\Omega$.

The full empirical Gauss--Newton matrix for $L_\lambda$ at $\theta$ is then
\begin{equation}
	\label{eq:GN-emp-lambda}
	G_\lambda(\theta)
	=
	G_{\mathrm{res}}(\theta)
	+ G_{\mathrm{bc}}(\theta)
	+ \lambda\,G_{\mathrm{data}}(\theta).
\end{equation}
In what follows, we focus on the behavior of $G_\lambda(\theta)$ at a global
minimizer $\theta_\lambda^\star \in \Theta_\lambda^\star$, although the
analysis applies more generally to any parameter $\theta$ in a region of interest.

\subsubsection*{Polynomial spectral decay for the PDE part}

We assume that at the parameter of interest $\theta_\lambda^\star$ the population
operator $\mathcal{A}\circ\mathcal{K}_\infty(\theta_\lambda^\star)$ has polynomially
decaying eigenvalues:
\begin{equation}
	\label{eq:poly-decay}
	\lambda_j\bigl(\mathcal{A}\circ\mathcal{K}_\infty(\theta_\lambda^\star)\bigr)
	\;\le\; C\,j^{-2\alpha},\quad j=1,2,\dots,
\end{equation}

for some constants $C>0$ and $\alpha>1/2$. This type of decay is typical for
elliptic operators on smooth domains, possibly combined with smoothness
properties of the tangent kernel $\mathcal{K}_\infty(\theta_\lambda^\star)$ (see \citep{deryck2024operator} for some examples).

Theorem~\ref{thm:rathore} in the main text is proved in \citep[Theorem~8.4]{rathore2024challenges}
in the same linear PDE and interpolation setting considered here. We will use the following
finite-sample spectral scaling for the residual Gauss--Newton matrix at $\theta_\lambda^\star$,
stated in the notation of Section~\ref{subsec:finite-sample-GN}.

\begin{proposition}\label{assump:residual-scaling}
	There exist constants $c_{\mathrm{res}}>0$, $\Lambda_{\mathrm{PDE}}<\infty$
	and $\alpha>1/2$ such that, for every $\delta\in(0,1)$, there is
	$n_{\mathrm{res}}^{\min}(\delta)\in\mathbb{N}$ with the following property:
	for all $n_{\mathrm{res}}\ge n_{\mathrm{res}}^{\min}(\delta)$, with
	probability at least $1-\delta$ over the sampling of the residual points,
	\[
	\lambda_{\min}\bigl(G_{\mathrm{res}}(\theta_\lambda^\star)\bigr)
	\;\ge\;
	c_{\mathrm{res}}\,n_{\mathrm{res}}^{-\alpha},
	\quad
	\lambda_{\max}\bigl(G_{\mathrm{res}}(\theta_\lambda^\star)\bigr)
	\;\le\;
	\Lambda_{\mathrm{PDE}}.
	\]
\end{proposition}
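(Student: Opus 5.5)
The plan is to reduce both spectral bounds to a comparison between the empirical residual Gauss--Newton matrix $G_{\mathrm{res}}(\theta)$ in \eqref{eq:res-GN-emp} and its population counterpart $G_{\infty,\mathrm{PDE}}(\theta)=T(\theta)^\ast\mathcal{A}T(\theta)$ from \eqref{eq:pop-GN-PDE}, using matrix concentration. Throughout, $\lambda_{\min}$ is read as $\lambda_{\min}^+$ of Definition~\ref{def:lambda-min-plus}, i.e.\ restricted to the range, which is the only meaningful reading in the overparameterized regime. Write $\psi(x):=\cL[\nabla_\theta u(x;\theta)]\in\mathbb{R}^p$, so that $G_{\mathrm{res}}=\frac1{n_{\mathrm{res}}}\sum_i\psi(x_i^{\mathrm{res}})\psi(x_i^{\mathrm{res}})^\top$ is an average of i.i.d.\ rank-one matrices with mean $G_{\infty,\mathrm{PDE}}(\theta)$.

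\textbf{Step 1 (upper bound).} For a depth-$L$ $\tanh$ network at a fixed parameter, the maps $x\mapsto\phi_j(x;\theta)$ and their first and second $x$-derivatives are bounded on $\bar\Omega$. This holds because all derivatives of $\tanh$ are bounded, with constants depending polynomially on $\|\theta\|$ and the widths. Since $\cL$ has bounded coefficients, $R^2:=\sup_{x\in\Omega}\|\psi(x)\|_2^2<\infty$. Hence $\lambda_{\max}(G_{\mathrm{res}})\le\operatorname{tr}G_{\mathrm{res}}\le R^2=:\Lambda_{\mathrm{PDE}}$ deterministically.

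\textbf{Step 2 (concentration).} By the matrix Bernstein inequality applied to the centered summands $\psi\psi^\top-G_{\infty,\mathrm{PDE}}$, which have norm at most $2R^2$, we get with probability at least $1-\delta$
\[
\bigl\|G_{\mathrm{res}}(\theta)-G_{\infty,\mathrm{PDE}}(\theta)\bigr\|_2\le C R^2\sqrt{\log(2p/\delta)/n_{\mathrm{res}}}+CR^2\log(2p/\delta)/n_{\mathrm{res}}.
\]

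\textbf{Step 3 (lower bound on the range).} Let $V:=\Range G_{\infty,\mathrm{PDE}}(\theta)$. Almost every sample satisfies $\psi(x)\in V$, because $v\perp V$ implies $\int(\psi^\top v)^2d\mu=0$. Consequently $\Range G_{\mathrm{res}}\subseteq V$. For $n_{\mathrm{res}}$ large, the matrices $G_{\mathrm{res}}$ and $G_{\infty,\mathrm{PDE}}$ are close on $V$, and Weyl's inequality applied on $V$ gives
\[
\lambda_{\min}^+(G_{\mathrm{res}})\ge\lambda_{\min}^+(G_{\infty,\mathrm{PDE}})-\|G_{\mathrm{res}}-G_{\infty,\mathrm{PDE}}\|\ge\tfrac12\lambda_{\min}^+(G_{\infty,\mathrm{PDE}}).
\]
Choosing $n_{\mathrm{res}}^{\min}(\delta)$ so that the Bernstein bound is below half of $\lambda_{\min}^+(G_{\infty,\mathrm{PDE}})$, which is a fixed positive number because $p<\infty$, yields a constant lower bound. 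This is stronger than $c_{\mathrm{res}}n_{\mathrm{res}}^{-\alpha}$ for any $\alpha>1/2$.

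The $n_{\mathrm{res}}^{-\alpha}$ form is kept only to match the polynomial-decay picture \eqref{eq:poly-decay}. There, $\lambda_{\min}^+(G_{\infty,\mathrm{PDE}})=\lambda_p(\mathcal{A}\circ\mathcal{K}_\infty)\lesssim p^{-2\alpha}$ via \eqref{eq:eigs-pop-PDE}. In the regime of Theorem~\ref{thm:rathore}, where the effective rank grows with $n_{\mathrm{res}}$, one would instead track the eigenvalue of index $\asymp n_{\mathrm{res}}^{1/2}$, in the style of \citet[Theorem~8.4]{rathore2024challenges}. I would state that version as a remark rather than in the main argument.

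\textbf{Main obstacle.} The evaluation point $\theta_\lambda^\star$ is itself a minimizer of an empirical loss, so it depends on the residual samples, and Step 2 does not directly apply. My first attempt is to make Steps 1--3 uniform over a compact neighbourhood $\mathcal{S}$ of the interpolation manifold:
\begin{itemize}
\item Both $\theta\mapsto\psi(x;\theta)\psi(x;\theta)^\top$ and $\theta\mapsto G_{\infty,\mathrm{PDE}}(\theta)$ are Lipschitz on $\mathcal{S}$, uniformly in $x$, because of the smoothness of $\tanh$.
\item An $\varepsilon$-net of $\mathcal{S}$ of size $(C/\varepsilon)^p$, combined with a union bound, upgrades Step 2 to a uniform deviation bound with $\log(2p/\delta)$ replaced by $p\log(C/\varepsilon)+\log(1/\delta)$.
\end{itemize}
The remaining difficulty is that $\lambda_{\min}^+$ is not continuous when the rank changes. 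I would therefore additionally assume that the rank of $G_{\infty,\mathrm{PDE}}(\theta)$ is constant on $\mathcal{S}$, giving $\inf_{\mathcal{S}}\lambda_{\min}^+>0$. Alternatively, I would simply work at a fixed reference minimizer independent of the residual sample, as is implicitly done in \citet{rathore2024challenges}.
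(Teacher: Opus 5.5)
Your argument is correct once $\lambda_{\min}$ is read as $\lambda_{\min}^{+}$ and $\theta_\lambda^\star$ is taken independent of the residual sample (or ranges over a compact $\mathcal S$ on which the rank is constant). Both readings are ones you make explicit yourself.

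It takes a genuinely different route from the paper, whose proof is a one-line appeal to \citet[Theorem~8.4]{rathore2024challenges}. That appeal does not actually deliver the displayed lower bound. The cited conclusion, recorded as Theorem~\ref{thm:rathore}, is $\kappa_{\mathrm{PINN}}\gtrsim n_{\mathrm{res}}^{\alpha}$, and this controls the small eigenvalues from \emph{above}, the opposite direction to $\lambda_{\min}\ge c_{\mathrm{res}}n_{\mathrm{res}}^{-\alpha}$.

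Your ingredients are what really prove the statement: the trace bound for $\lambda_{\max}$, and for $\lambda_{\min}^{+}$ matrix Bernstein together with the inclusion $\Range G_{\mathrm{res}}\subseteq\Range G_{\infty,\mathrm{PDE}}$ and Weyl on that range. They also show that at fixed width the lower bound does not depend on $n_{\mathrm{res}}$, so the factor $n_{\mathrm{res}}^{-\alpha}$ carries no information there.

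The $n_{\mathrm{res}}^{-\alpha}$ deterioration the paper has in mind arises only when the width grows with $n_{\mathrm{res}}$. In that regime your constant $\tfrac12\lambda_{\min}^{+}(G_{\infty,\mathrm{PDE}})$ decays like $(\operatorname{rank})^{-2\alpha}$ under \eqref{eq:poly-decay}, and your threshold $n_{\mathrm{res}}^{\min}(\delta)$ cannot be met. Your remark about tracking the eigenvalue of index $\asymp n_{\mathrm{res}}^{1/2}$ points to the right regime. What the cited result extracts there, however, is an upper bound on the small eigenvalues, not a lower bound on $\lambda_{\min}$.

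Both of your caveats are forced. Read literally, $\lambda_{\min}$ of the $p\times p$ matrix is zero whenever $G_{\infty,\mathrm{PDE}}(\theta_\lambda^\star)$ is singular. The paper also never addresses the dependence of $\theta_\lambda^\star$ on the residual sample.

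One technical point concerns your uniform-in-$\mathcal S$ variant. The range inclusion must hold for all $\theta\in\mathcal S$ simultaneously, but your $\mu$-a.e.\ argument yields a null set that depends on $\theta$. Continuous coefficients of $\cL$, as in the Feynman--Kac setting of Appendix~\ref{app:MC-error-analysis}, repair this: then $\psi(\cdot;\theta)^\top v=0$ a.e.\ forces $\psi(\cdot;\theta)^\top v\equiv 0$ on all of $\Omega$.
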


\begin{proof}
	This is a restatement of \citep[Theorem~8.4]{rathore2024challenges} in the notation of \eqref{eq:res-GN-emp}, with $N=n_{\mathrm{res}}$.
\end{proof}

Proposition~\ref{assump:residual-scaling} captures the deterioration of the residual contribution under mesh refinement, as established in \citep{rathore2024challenges}. We now turn to the supervised term and its effect on the combined spectrum.

\subsubsection*{Data $\mathrm{PL}^\ast$ condition and finite-sample concentration}

The beneficial effect of the data term on conditioning requires that the tangent features at the data points are sufficiently rich and non-degenerate. Following the $\mathrm{PL}^\ast$ viewpoint of \citep{liu2022loss}, we encode this via a finite-width $\mathrm{PL}^\ast$ result for the data term, Proposition~\ref{prop:liu-plstar}, which yields a uniform spectral bound on the data tangent kernel and its associated empirical Gauss--Newton matrix. Informally, this proposition states that for a sufficiently wide network, the empirical data tangent kernel stays uniformly well conditioned in a ball around initialization, so that the supervised data loss satisfies a uniform PL$^\ast$ inequality with constants independent of $n_{\mathrm{d}}$.

\begin{proposition}\label{prop:liu-plstar}
	Consider the fully-connected network, initialization scheme, and fixed data points $\{x_i^{\mathrm{d}}\}_{i=1}^{n_{\mathrm{d}}}\subset\Omega$ in the setting of \citet[Section~4]{liu2022loss}, and define the rescaled data feature map
	\[
	\mathcal{F}_{\mathrm{data}}(\theta)
	:=
	\frac{1}{\sqrt{n_{\mathrm{d}}}}
	\bigl(u(x_1^{\mathrm{d}};\theta)-y_1,\ldots,u(x_{n_{\mathrm{d}}}^{\mathrm{d}};\theta)-y_{n_{\mathrm{d}}}\bigr)
	\in\mathbb{R}^{n_{\mathrm{d}}}.
	\]
	Let $\theta_0$ be the random initialization and define the associated tangent kernel $K_{\mathrm{data}}$ by
	\[
	K_{\mathrm{data}}(\theta)
	:=
	D\mathcal{F}_{\mathrm{data}}(\theta)\,
	D\mathcal{F}_{\mathrm{data}}(\theta)^\top
	\in\mathbb{R}^{n_{\mathrm{d}}\times n_{\mathrm{d}}},
	\]
	where $D\mathcal{F}_{\mathrm{data}}(\theta)\in\mathbb{R}^{n_{\mathrm{d}}\times p}$ is the Jacobian matrix, and denote
	$\lambda_0 := \lambda_{\min}\bigl(K_{\mathrm{data}}(\theta_0)\bigr)$.
	Fix any radius $R>0$, target $\mu_{\mathrm{data}} \in (0,\lambda_0)$, and
	confidence level $\delta\in(0,1)$. Then there exists a universal constant
	$C>0$ such that, if the neural network's width $m$ satisfies
	\begin{equation}\label{eq:liu-width-condition}
		m
		\;\ge\;
		C\,
		\frac{
			n_{\mathrm{d}}\,R^{6L+2}
		}{
			\bigl(\lambda_0 - \mu_{\mathrm{data}}\bigr)^2
		}\,
		\log\!\Bigl(\frac{2n_{\mathrm{d}}}{\delta}\Bigr),
	\end{equation}
	then the following holds with probability at least $1-\delta$ over the random
	initialization, simultaneously for all $\theta\in B(\theta_0,R)$:
	\begin{equation}\label{eq:liu-uniform-conditioning}
		\mu_{\mathrm{data}}
		\;\le\;
		\lambda_{\min}\bigl(K_{\mathrm{data}}(\theta)\bigr)
		\;\le\;
		\lambda_{\max}\bigl(K_{\mathrm{data}}(\theta)\bigr)
		\;\le\;
		M_{\mathrm{data}},
	\end{equation}
	for some $M_{\mathrm{data}}<\infty$ independent of $n_{\mathrm{d}}$. In particular, the empirical data loss
	$
	L_{\mathrm{data}}(\theta)=\tfrac{1}{2}\bigl\|\mathcal{F}_{\mathrm{data}}(\theta)\bigr\|_2^2
	$
	satisfies a $\mu_{\mathrm{data}}$--$\mathrm{PL}^\ast$ inequality on $B(\theta_0,R)$.
	Moreover, $G_{\mathrm{data}}(\theta)
	= D\mathcal{F}_{\mathrm{data}}(\theta)^\top D\mathcal{F}_{\mathrm{data}}(\theta)$
	and $K_{\mathrm{data}}(\theta)$ have the same nonzero eigenvalues, hence the nonzero
	eigenvalues of $G_{\mathrm{data}}(\theta)$ lie in $[\mu_{\mathrm{data}},M_{\mathrm{data}}]$.
	Equivalently,
	$\lambda_{\min}^+\bigl(G_{\mathrm{data}}(\theta)\bigr)\ge \mu_{\mathrm{data}}$
	and $\lambda_{\max}\bigl(G_{\mathrm{data}}(\theta)\bigr)\le M_{\mathrm{data}}$.
\end{proposition}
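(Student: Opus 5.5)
The plan is to reduce the statement to the Hessian-control mechanism of \citet{liu2022loss}: near-constancy of the tangent kernel on a ball, combined with Weyl's inequality. Write the Jacobian as $D\mathcal{F}_{\mathrm{data}}(\theta)=n_{\mathrm{d}}^{-1/2}\,[\nabla_\theta u(x_1^{\mathrm{d}};\theta),\dots,\nabla_\theta u(x_{n_{\mathrm{d}}}^{\mathrm{d}};\theta)]^\top$. The labels $y_i$ do not enter the Jacobian, so the noise in the data plays no role in this proposition. The argument then has four steps.

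\emph{Step 1: Hessian bound on the ball.} I will invoke \citet[Section~4]{liu2022loss} for a fixed input $x$. With probability at least $1-\delta/n_{\mathrm{d}}$ over the initialization, the parameter Hessian of $u(x;\cdot)$ satisfies
\[
\sup_{\theta\in B(\theta_0,R)}\bigl\|\nabla^2_\theta u(x;\theta)\bigr\|
\le C\,R^{3L}\sqrt{\log(2n_{\mathrm{d}}/\delta)/m}.
\]
A union bound over the $n_{\mathrm{d}}$ data points makes this hold simultaneously for all $x_i^{\mathrm{d}}$ with probability at least $1-\delta$. This is the source of the logarithmic factor in \eqref{eq:liu-width-condition}.

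\emph{Step 2: Jacobian perturbation.} Integrating the Hessian along the segment from $\theta_0$ to $\theta$ gives, for each $i$,
\[
\|\nabla_\theta u(x_i^{\mathrm{d}};\theta)-\nabla_\theta u(x_i^{\mathrm{d}};\theta_0)\|\le R\cdot C R^{3L}\sqrt{\log(2n_{\mathrm{d}}/\delta)/m}.
\]
I then bound the operator norm of $\Delta:=D\mathcal{F}_{\mathrm{data}}(\theta)-D\mathcal{F}_{\mathrm{data}}(\theta_0)$ by its Frobenius norm. This yields $\|\Delta\|\lesssim R^{3L+1}\sqrt{\log(2n_{\mathrm{d}}/\delta)/m}$, up to the $n_{\mathrm{d}}$-dependence discussed at the end.

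\emph{Step 3: kernel perturbation and the lower bound.} Write $J_0:=D\mathcal{F}_{\mathrm{data}}(\theta_0)$. Expanding the kernel gives
\[
\|K_{\mathrm{data}}(\theta)-K_{\mathrm{data}}(\theta_0)\|\le 2\|J_0\|\,\|\Delta\|+\|\Delta\|^2.
\]
By Weyl's inequality, $\lambda_{\min}(K_{\mathrm{data}}(\theta))\ge\lambda_0-\|K_{\mathrm{data}}(\theta)-K_{\mathrm{data}}(\theta_0)\|$. It therefore suffices that the right-hand side of the kernel bound is at most $\lambda_0-\mu_{\mathrm{data}}$. Squaring this requirement, and absorbing $\|J_0\|$ and the $R$-powers into $R^{6L+2}$, gives a width condition of the form \eqref{eq:liu-width-condition}.

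\emph{Step 4: upper bound, PL$^\ast$, and shared spectrum.} For the upper bound, I use $\lambda_{\max}(K_{\mathrm{data}})=\|D\mathcal{F}_{\mathrm{data}}\|^2\le\max_i\sup_{B(\theta_0,R)}\|\nabla_\theta u(x_i^{\mathrm{d}};\theta)\|^2=:M_{\mathrm{data}}$. This is finite because the gradient is bounded at $\theta_0$ and Lipschitz on the ball with the Hessian bound of Step 1. It is independent of $n_{\mathrm{d}}$ because the $1/n_{\mathrm{d}}$ normalization turns the trace into an average. The PL$^\ast$ inequality follows from
\[
\|\nabla L_{\mathrm{data}}\|^2=\mathcal{F}_{\mathrm{data}}^\top K_{\mathrm{data}}\mathcal{F}_{\mathrm{data}}\ge\mu_{\mathrm{data}}\|\mathcal{F}_{\mathrm{data}}\|^2=2\mu_{\mathrm{data}}L_{\mathrm{data}}.
\]
Finally, $G_{\mathrm{data}}=J^\top J$ and $K_{\mathrm{data}}=JJ^\top$ share their nonzero eigenvalues by the singular value decomposition of $J$. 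Hence $\lambda^+_{\min}(G_{\mathrm{data}})\ge\mu_{\mathrm{data}}$ and $\lambda_{\max}(G_{\mathrm{data}})\le M_{\mathrm{data}}$.

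The main obstacle is Step 1 together with the accounting in Step 2: obtaining the uniform Hessian bound over the whole ball with exactly the $R^{3L}$ dependence. Here I would rely directly on the layerwise recursive bound of \citet{liu2022loss} rather than reprove it. A second point to check is whether the Frobenius-to-operator-norm passage in Step 2 costs a factor $n_{\mathrm{d}}$ (one factor $n_{\mathrm{d}}^{1/2}$ in $\|\Delta\|$, squared in the width condition). That accounting is what produces the linear $n_{\mathrm{d}}$ in \eqref{eq:liu-width-condition}, which is conservative but sufficient.
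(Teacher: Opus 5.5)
Your proposal is correct and follows the paper's route. The paper's proof invokes \citet[Theorem~4]{liu2022loss}, whose proof is exactly your Hessian-bound, Jacobian-perturbation and Weyl chain, together with \citet[Theorem~1]{liu2022loss} for the PL$^\ast$ step and the shared nonzero spectrum of $J^\top J$ and $JJ^\top$, so you are unpacking that citation, and your Step~4 supplies the bound on $M_{\mathrm{data}}$ that the paper leaves implicit.

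On the accounting question you raised, the factor $n_{\mathrm{d}}^{-1/2}$ already built into $\mathcal{F}_{\mathrm{data}}$ makes $\|\Delta\|_F$ a root-mean-square over the data points, so no power of $n_{\mathrm{d}}$ is lost and the linear $n_{\mathrm{d}}$ in the width condition is harmless slack inherited from Liu's unnormalized Jacobian. Note also that $\|J_0\|^2$ must go into $C$ rather than into the $R$-powers, so $C$ is not literally universal (Liu's constants likewise depend on the Lipschitz constant of the network map), and that $M_{\mathrm{data}}$ should be a supremum over $x\in\overline{\Omega}$, finite by continuity on a compact set, to be genuinely independent of the data.
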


\begin{proof}[Proof sketch]
	This is the specialization of \citet[Theorem~4]{liu2022loss} to
	$\mathcal{F}=\mathcal{F}_{\mathrm{data}}$, together with the identification
	of the $\mathrm{PL}^\ast$ constant with
	$\lambda_{\min}\bigl(K_{\mathrm{data}}(\theta)\bigr)$
	\citep[Theorem~1]{liu2022loss} and the fact that $G_{\mathrm{data}}(\theta)$
	and $K_{\mathrm{data}}(\theta)$ have the same non-zero eigenvalues.
\end{proof}

Fix a confidence level $\delta_{\mathrm{init}}\in(0,1)$ and choose the width
$m$ so that \eqref{eq:liu-width-condition} holds with
$\delta=\delta_{\mathrm{init}}$. Let $\mathcal{E}_{\mathrm{init}}$ denote the
corresponding high-probability initialization event. By
Proposition~\ref{prop:liu-plstar}, on $\mathcal{E}_{\mathrm{init}}$ the
following uniform spectral bounds hold on $B(\theta_0,R)$:

\begin{equation}\label{eq:Gdata-spectral-bounds}
	\lambda_{\min}^+\bigl(G_{\mathrm{data}}(\theta)\bigr)\ge \mu_{\mathrm{data}}
	\text{ and }
	\lambda_{\max}\bigl(G_{\mathrm{data}}(\theta)\bigr)\le M_{\mathrm{data}}
	\text{ for all } \theta\in B(\theta_0,R).
\end{equation}

When $p>n_{\mathrm{d}}$, one typically has $\lambda_{\min}\bigl(G_{\mathrm{data}}(\theta)\bigr)=0$, so the relevant notion is the conditioning of $G_{\mathrm{data}}(\theta)$ restricted to its range, quantified by $\lambda_{\min}^+\bigl(G_{\mathrm{data}}(\theta)\bigr)$.

At this stage we have a uniform lower bound on the nonzero curvature provided by the data term through \eqref{eq:Gdata-spectral-bounds}. To ensure that this conditioning transfers to the combined Gauss–Newton matrix, we need a compatibility condition describing how the PDE term acts on directions that are poorly constrained (or unconstrained) by the data tangent features. In particular, we rule out nearly-flat PDE curvature on the data-orthogonal subspace and control the coupling between \textit{data-visible} and \textit{data-invisible} directions. This the object of Assumption~\ref{assump:fk-coverage} that we now introduce.  

In what follows, we will let, for any $\theta\in\R^p$, $P_\theta\in\R^{p\times p}$ denote the projection matrix onto the range of $G_{\mathrm{data}}(\theta)$, and $I\in\R^{p\times p}$ the identity. For $x\in\R^p$ and $A\in\R^{p\times p}$, we will also slightly abuse notation and respectively denote by $\|x\|:=\sqrt{x_1^2 + \ldots + x_p^2}$ and $\|A\|:= \sup_{v\in\R^p : \|v\| = 1} \|Av\|$ their Euclidean and operator norm.

\begin{assumption}\label{assump:fk-coverage}
	Fix a set $\mathcal{S}\subset B(\theta_0,R)$. There exist constants $\delta', \mu_\perp>0$ and $\rho\ge0$, such that with probability at least $1-\delta'$ over the initialization $\theta_0$ and the random sampling of the collocation, boundary and supervised data points, the following holds for every $\theta\in\mathcal{S}$:
	\begin{itemize}
		\item \hfill $\displaystyle \lambda_{\min}^+\bigl((I-P_\theta)G_{\mathrm{PDE}}(\theta)(I-P_\theta)\bigr) \ge \mu_\perp$, \hfill \puteqnum \label{eqn:coercivity}
		\item \hfill $\displaystyle \left\|P_\theta G_{\mathrm{PDE}}(\theta)(I-P_\theta)\right\| \le\rho$. \hfill \puteqnum \label{eqn:cross_talk}
	\end{itemize}
\end{assumption}

\begin{remark}
	If the supervised data is not random (e.g., coarse finite element grid), one can formulate Assumption~\ref{assump:fk-coverage} in a completely analogous manner and reach the same conclusion.
\end{remark}

Assumption~\ref{assump:fk-coverage} allows to bound from below the smallest eigenvalue of $G_\lambda(\theta)$ on $\mathcal{S}$ by an explicit quantity $\gamma(\lambda)$, as we will now establish.

\begin{proposition}\label{prop:GN-spectrum}
	Assume that the conditions of Proposition~\ref{prop:liu-plstar} are satisfied with radius $R>0$ and confidence level $\delta_{\mathrm{init}}$, and let $\mathcal{S}\subset B(\theta_0,R)$ be a set on the corresponding initialization event. Assume also that Assumption~\ref{assump:fk-coverage} holds on $\mathcal{S}$. Then, with probability $1-\delta_{\mathrm{init}}-\delta'\equiv 1-\delta$ we have for every $\lambda>0$ and $\theta\in\mathcal{S}$,
	
	\begin{equation}\label{eq:lambda-min-emp}
	\lambda_{\min}^+\bigl(G_\lambda(\theta)\bigr) \ge \gamma(\lambda),
	\end{equation}
	where
	\[
	\gamma(\lambda)\coloneqq
	\frac{(\lambda\mu_{\mathrm{data}}+\mu_\perp)-\sqrt{(\lambda\mu_{\mathrm{data}}-\mu_\perp)^2+4\rho^2}}{2}.
	\]
	In particular, if $\rho^2<\lambda\mu_{\mathrm{data}}\mu_\perp$, then $\gamma(\lambda)>0$ uniformly over $\theta\in\mathcal S$.
\end{proposition}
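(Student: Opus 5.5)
The plan is a two-block Schur-type argument. We split $\R^p$ into the data-visible subspace $\Range(P_\theta)$ and its orthogonal complement $\Range(I-P_\theta)$. On each block we lower-bound the quadratic form of $G_\lambda(\theta)$ using \eqref{eq:Gdata-spectral-bounds} and Assumption~\ref{assump:fk-coverage}, and we control the cross term by \eqref{eqn:cross_talk]}. This reduces everything to a $2\times 2$ matrix.

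\emph{Probability.} We work on the intersection of the initialization event $\mathcal{E}_{\mathrm{init}}$ and the event of Assumption~\ref{assump:fk-coverage}. A union bound gives probability at least $1-\delta_{\mathrm{init}}-\delta'$. Fix $\theta\in\mathcal{S}$, write $P=P_\theta$ and $G_{\mathrm{PDE}}=G_{\mathrm{res}}+G_{\mathrm{bc}}$, so that $G_\lambda=G_{\mathrm{PDE}}+\lambda G_{\mathrm{data}}$.

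\emph{Quadratic-form bound.} For $v\in\R^p$ set $a=Pv$ and $b=(I-P)v$. Since $G_{\mathrm{data}}$ is symmetric with range $\Range(P)$, we have $G_{\mathrm{data}}=PG_{\mathrm{data}}P$ and $G_{\mathrm{data}}b=0$. Hence
\[
v^\top G_\lambda v = a^\top P G_{\mathrm{PDE}} P a + \lambda\, a^\top G_{\mathrm{data}} a + 2\, a^\top P G_{\mathrm{PDE}}(I-P) b + b^\top (I-P)G_{\mathrm{PDE}}(I-P) b .
\]
We bound the four terms as follows.
\begin{itemize}
\item The first term is nonnegative, since $G_{\mathrm{PDE}}\succeq0$, and we drop it.
\item The second term is at least $\lambda\mu_{\mathrm{data}}\|a\|^2$, by \eqref{eq:Gdata-spectral-bounds} and because $a$ lies in the range of $G_{\mathrm{data}}$.
\item The cross term is at least $-2\rho\|a\|\|b\|$, by \eqref{eqn:cross_talk}.
\item The last term is at least $\mu_\perp\|b\|^2$, provided $b$ is orthogonal to the kernel of $B:=(I-P)G_{\mathrm{PDE}}(I-P)$; this uses \eqref{eqn:coercivity}.
\end{itemize}
Under that proviso,
\[
v^\top G_\lambda v \ge \begin{pmatrix}\|a\|\\ \|b\|\end{pmatrix}^{\!\top}\begin{pmatrix}\lambda\mu_{\mathrm{data}} & -\rho\\ -\rho & \mu_\perp\end{pmatrix}\begin{pmatrix}\|a\|\\ \|b\|\end{pmatrix} \ge \gamma(\lambda)\,\|v\|^2 .
\]
Here $\gamma(\lambda)$ is exactly the smaller eigenvalue of the $2\times2$ matrix, and $\|a\|^2+\|b\|^2=\|v\|^2$. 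The matrix has positive diagonal and determinant $\lambda\mu_{\mathrm{data}}\mu_\perp-\rho^2$, which gives the positivity claim when $\rho^2<\lambda\mu_{\mathrm{data}}\mu_\perp$.

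\emph{The main obstacle.} The delicate step is the proviso: \eqref{eqn:coercivity} only controls the smallest \emph{positive} eigenvalue of the compressed block $B$, so we must show that the remaining directions are genuinely in $\ker G_\lambda$. Let
\[
K_0:=\ker B\cap\Range(I-P).
\]
If $b_0\in K_0$, then $b_0^\top G_{\mathrm{PDE}} b_0=0$. Since $G_{\mathrm{PDE}}\succeq 0$, this forces $G_{\mathrm{PDE}}b_0=0$. Also $G_{\mathrm{data}}b_0=0$, because $b_0\perp\Range(P)$. Hence $K_0\subseteq\ker G_\lambda$.

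Now let $V:=K_0^\perp=\Range(P)\oplus\bigl(\Range(I-P)\ominus K_0\bigr)$. Every $v\in V$ satisfies the proviso, so the inequality above holds on all of $V$. Because $G_\lambda$ is symmetric and annihilates $K_0$, every eigenvector with a positive eigenvalue is orthogonal to $K_0$, that is, lies in $V$. The Rayleigh quotient bound on $V$ then gives $\lambda_{\min}^+(G_\lambda(\theta))\ge\gamma(\lambda)$.

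The constants $\mu_{\mathrm{data}},\mu_\perp,\rho$ do not depend on $\theta\in\mathcal{S}$, so the bound is uniform on $\mathcal{S}$ and for all $\lambda>0$.
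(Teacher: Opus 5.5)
Your proof is correct and follows the paper's argument: the same orthogonal splitting $v=P_\theta v+(I-P_\theta)v$, the same three bounds (data block, compressed PDE block, cross term), and the same reduction to the smallest eigenvalue $\gamma(\lambda)$ of the $2\times 2$ matrix. Your step showing $K_0=\ker B\cap\Range(I-P_\theta)\subseteq\ker G_\lambda(\theta)$ makes rigorous what the paper only gestures at: the paper restricts to $v_2$ orthogonal to $\ker\bigl((I-P_\theta)G_{\mathrm{PDE}}(\theta)(I-P_\theta)\bigr)$ and then simply appeals to ``the variational formulation of the smallest non-zero eigenvalue'', whereas you show that every positive-eigenvalue eigenvector actually lies in $K_0^\perp$, where the bound holds.
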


\begin{proof}
	Fix $\theta\in\mathcal S$. Decompose any $v\in \R^p$ as $v=v_1+v_2$ with $v_1=P_\theta v$ and $v_2=(I-P_\theta) v$. Using $\Range(P_\theta) = \Range(G_{\mathrm{data}}(\theta))$ and \eqref{eq:Gdata-spectral-bounds},
	\[
	v^\top (\lambda G_{\mathrm{data}}(\theta))v = \lambda v_1^\top G_{\mathrm{data}}(\theta)v_1 \ge \lambda\mu_{\mathrm{data}}\norm{v_1}^2.
	\]
	Similarly, by \eqref{eqn:coercivity}, for $v_2$ orthogonal to the kernel of $(I-P_\theta) G_{\mathrm{PDE}}(\theta) (I-P_\theta)$, we have
	\[
	v_2^\top G_{\mathrm{PDE}}(\theta)v_2
	= v^\top (I-P_\theta) G_{\mathrm{PDE}}(\theta) (I-P_\theta) v
	\ge \mu_\perp \norm{v_2}^2.
	\]
	For the cross term, by \eqref{eqn:cross_talk},
	\[
	2 v_1^\top G_{\mathrm{PDE}}(\theta)v_2
	\ge -2\norm{P_\theta G_{\mathrm{PDE}}(\theta) (I-P_\theta)}\,\norm{v_1}\,\norm{v_2}
	\ge -2\rho\,\norm{v_1}\,\norm{v_2}.
	\]
	Putting these together,
	\[
	v^\top G_\lambda(\theta) v
	\ge
	\begin{bmatrix}\norm{v_1}\\ \norm{v_2}\end{bmatrix}^{\!\top}
	\begin{bmatrix}
		\lambda\mu_{\mathrm{data}} & -\rho\\
		-\rho & \mu_\perp
	\end{bmatrix}
	\begin{bmatrix}\norm{v_1}\\ \norm{v_2}\end{bmatrix}.
	\]
	Minimizing over $\norm{v}=1$ yields the smallest eigenvalue of the displayed $2\times 2$ matrix, which equals $\gamma(\lambda)$. By the variational formulation of the smallest non-zero eigenvalue, the claimed lower bound on $\lambda_{\min}^+(G_\lambda(\theta))$ follows.
\end{proof}

\begin{remark}
	\label{rem:dirichlet-data}
	Note that although we did not include it for simplicity, the boundary loss term has the same least-squares form as a supervised data term, so its Gauss--Newton contribution $G_{\mathrm{bc}}(\theta)$ is positive semidefinite and can be handled in an analogous manner. More specifically, in Proposition~\ref{prop:GN-spectrum} and Theorem~\ref{thm:cond-number} we can include the boundary term while assuming that $G_{\mathrm{bc}}(\theta)$ satisfies Assumption~\ref{assump:fk-coverage} as well. The analogous conclusion then follows by mirroring the current argument.
\end{remark}

\subsection{PL constant, condition number, and gradient descent}
\label{subsec:PL-condition}

We now translate the spectral bounds from Proposition~\ref{prop:GN-spectrum}
into a Polyak--Łojasiewicz ($\mathrm{PL}$) constant and condition number for
$L_\lambda$, and derive implications for gradient descent.

\subsubsection*{PL inequality and PL-based condition number}

Let $\mathcal{S} \subset \mathbb{R}^p$ be a set containing $\theta_\lambda^\star$.
We say that $L_\lambda$ satisfies a PL$^\ast$ inequality with constant $\mu_\lambda>0$ on $\mathcal{S}$ if
\begin{equation}
	\label{eq:PL-ineq}
	\frac{\bigl\|\nabla L_\lambda(\theta)\bigr\|_2^2}{2\mu_\lambda}
	\;\ge\;
	L_\lambda(\theta) - L_\lambda(\theta_\lambda^\star),
	\quad \forall\, \theta \in \mathcal{S}.
\end{equation}
Let $\beta_{L_\lambda}$ denote a smoothness constant on $\mathcal{S}$, i.e.\
an upper bound on the operator norm of the Hessian,
\[
\beta_{L_\lambda}
\;\ge\;
\sup_{\theta \in \mathcal{S}} \bigl\|H_{L_\lambda}(\theta)\bigr\|.
\]
The $\mathrm{PL}$-based condition number of $L_\lambda$ on $\mathcal{S}$ is
then defined by
\begin{equation}
	\label{eq:kappa-def}
	\kappa_{L_\lambda}(\mathcal{S})
	:=
	\frac{\beta_{L_\lambda}}{\mu_\lambda}.
\end{equation}

We will use the following Lemma, which is nothing more than \citep[Theorem~1]{liu2022loss} reformulated in our setup.

\begin{lemma}
	\label{lem:gn-to-pl}
	Let $L(\theta)=\tfrac{1}{2}\|F(\theta)\|_2^2$, where $F:\mathbb{R}^p\to\mathbb{R}^n$
	is differentiable on a set $\mathcal{S}$. Suppose there exists $\theta^\star\in\mathcal{S}$
	with $F(\theta^\star)=0$. Define the sample tangent kernel
	$K_F(\theta):=J_F(\theta)\,J_F(\theta)^\top$. If there exists $\mu>0$ such that
	$\lambda_{\min}(K_F(\theta))\ge \mu$ holds for every $\theta\in\mathcal{S}$, then $L$
	satisfies the PL$^\ast$ inequality on $\mathcal{S}$ with constant $\mu$.
\end{lemma}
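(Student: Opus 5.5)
The plan is the standard PL$^\ast$ argument: write the gradient as $\nabla L(\theta)=J_F(\theta)^\top F(\theta)$, compare the squared gradient norm with the loss through the tangent kernel, and then use that $L(\theta^\star)=0$.

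\textbf{Step 1 (gradient formula).} Differentiate $L(\theta)=\tfrac12\|F(\theta)\|_2^2$ by the chain rule, which applies because $F$ is differentiable on $\mathcal{S}$. This gives
\[
\nabla L(\theta)=J_F(\theta)^\top F(\theta).
\]
Therefore
\[
\|\nabla L(\theta)\|_2^2=F(\theta)^\top J_F(\theta)J_F(\theta)^\top F(\theta)=F(\theta)^\top K_F(\theta)F(\theta).
\]

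\textbf{Step 2 (Rayleigh quotient).} The kernel $K_F(\theta)\in\mathbb{R}^{n\times n}$ is symmetric positive semidefinite, and by assumption $\lambda_{\min}(K_F(\theta))\ge\mu$. The Rayleigh quotient bound then gives, for every $\theta\in\mathcal{S}$,
\[
F^\top K_F F\ge\mu\|F\|_2^2=2\mu L(\theta).
\]
Hence $\|\nabla L(\theta)\|_2^2\ge 2\mu L(\theta)$.

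\textbf{Step 3 (conclusion).} Since $F(\theta^\star)=0$, we have $L(\theta^\star)=0$. As $L\ge0$, $\theta^\star$ is a global minimizer, so $L(\theta)-L(\theta^\star)=L(\theta)\le\frac{1}{2\mu}\|\nabla L(\theta)\|^2$ on $\mathcal{S}$. This is exactly Definition~\ref{def:pl-condition} with constant $\mu$.

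There is no genuine obstacle. The only point requiring care is that the lower bound must be on the $n\times n$ tangent kernel $K_F=J_FJ_F^\top$, not on the $p\times p$ Gauss--Newton matrix $J_F^\top J_F$. In the overparameterized regime $p>n$, the Gauss--Newton matrix is singular while $K_F$ can still be positive definite. This is why the lemma is phrased via $K_F$, and why later it will be combined with Proposition~\ref{prop:GN-spectrum} through the shared nonzero spectrum of $J_F^\top J_F$ and $J_FJ_F^\top$.
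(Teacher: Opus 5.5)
Your proposal is correct and follows essentially the same route as the paper's proof. Both write $\nabla L(\theta)=J_F(\theta)^\top F(\theta)$, bound $F(\theta)^\top K_F(\theta)F(\theta)\ge \mu\|F(\theta)\|_2^2=2\mu L(\theta)$, and conclude using $L(\theta^\star)=0$.
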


\begin{proof}
	For $\theta\in\mathcal{S}$, $\nabla L(\theta)=J_F(\theta)^\top F(\theta)$, hence
	$\|\nabla L(\theta)\|_2^2 = F(\theta)^\top K_F(\theta) F(\theta)$.
	The assumed bound gives $\|\nabla L(\theta)\|_2^2 \ge \mu \|F(\theta)\|_2^2 = 2\mu L(\theta)$.
	Since $L(\theta^\star)=0$, this is the claimed PL$^\ast$ inequality.
\end{proof}

From basic linear algebra, we have the following relationship between the non-zero spectrum of $K_F$ and that of $G_F$:

\begin{lemma}\label{lem:gn-kernel-spectra}
	Let $F:\mathbb{R}^p\to\mathbb{R}^n$ be differentiable at $\theta$, with Jacobian
	$J_F(\theta)$. Define $G_F(\theta):=J_F(\theta)^\top J_F(\theta)$ and
	$K_F(\theta):=J_F(\theta)J_F(\theta)^\top$. Then $G_F(\theta)$ and $K_F(\theta)$
	have the same nonzero eigenvalues. If $J_F(\theta)$ has full row rank, then
	$K_F(\theta)$ is positive definite and
	$\lambda_{\min}\bigl(K_F(\theta)\bigr)=\lambda_{\min}^+\bigl(G_F(\theta)\bigr)$.
\end{lemma}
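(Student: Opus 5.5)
The plan is to reduce the statement to two standard linear-algebra facts about $A:=J_F(\theta)\in\mathbb{R}^{n\times p}$: the Gram matrices $A^\top A$ and $AA^\top$ share their nonzero spectra, and $AA^\top$ is invertible exactly when $A$ has full row rank. I will work throughout with eigenvectors and use only the identity $(A^\top A)A^\top = A^\top(AA^\top)$ together with its transposed counterpart.

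\emph{Step 1 (shared nonzero eigenvalues).} Let $K_F(\theta)w=\lambda w$ with $\lambda\neq 0$ and $w\neq 0$, and put $v:=A^\top w$. Then $G_F(\theta)v = A^\top A A^\top w = \lambda A^\top w=\lambda v$. Moreover $v\neq 0$, since $\|v\|^2 = w^\top AA^\top w=\lambda\|w\|^2\neq 0$, so $\lambda$ is an eigenvalue of $G_F(\theta)$. The symmetric argument with $w:=Av$ gives the reverse inclusion.

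\emph{Step 2 (multiplicities).} The map $w\mapsto A^\top w$ restricted to the $\lambda$-eigenspace of $K_F$ is injective, because $\|A^\top w\|^2=\lambda\|w\|^2$. Likewise $v\mapsto Av$ is injective on the $\lambda$-eigenspace of $G_F$. Hence the two eigenspaces have equal dimension. Alternatively, one can read everything off the SVD $A=U\Sigma V^\top$, which gives $G_F=V\Sigma^\top\Sigma V^\top$ and $K_F=U\Sigma\Sigma^\top U^\top$, both with nonzero eigenvalues $\sigma_i^2$; this is probably the cleanest route to write down.

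\emph{Step 3 (full row rank).} If $\operatorname{rank}A=n$, then $w^\top K_F w=\|A^\top w\|^2>0$ for every $w\neq0$, since $A^\top$ is injective. So $K_F(\theta)$ is positive definite, and all $n$ of its eigenvalues are strictly positive. By Steps 1--2, these $n$ eigenvalues, counted with multiplicity, are exactly the nonzero eigenvalues of $G_F(\theta)$ (whose remaining $p-n$ eigenvalues are $0$). Therefore $\lambda_{\min}(K_F)=\lambda_{\min}^+(G_F)$.

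There is no real obstacle here. The only point requiring care is matching multiplicities, rather than just the sets of eigenvalues, so that the smallest nonzero eigenvalue is correctly identified; the SVD argument handles this directly.
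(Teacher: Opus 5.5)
Your proof is correct. The paper gives no proof of this lemma and simply invokes ``basic linear algebra'', namely the finite-dimensional fact that $S^\ast S$ and $SS^\ast$ share their nonzero spectra, which it cites earlier from De Ryck et al. Your eigenvector-transfer and SVD arguments are exactly that standard fact.

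One small remark: the multiplicity matching you flag as the delicate point is not needed for the final identity $\lambda_{\min}(K_F)=\lambda_{\min}^+(G_F)$. Equality of the \emph{sets} of nonzero eigenvalues (your Step 1), together with positive definiteness of $K_F$ (your Step 3), already gives it.
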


\subsubsection*{Bounding the PL constant and condition number}

Lemma~\ref{lem:gn-to-pl} applied to $F_\lambda$ shows that it suffices to control the smallest eigenvalue of the stacked sample kernel $K_{F_\lambda}(\theta)$ on $\mathcal{S}$ to obtain a PL$^\ast$ inequality. Lemma~\ref{lem:gn-kernel-spectra} on the other hand shows that this smallest eigenvalue is equal to the non-zero such eigenvalue of $G_\lambda$. Combining these facts with Proposition~\ref{prop:GN-spectrum}, which gives a lower bound on $\lambda_{\min}^+(G_\lambda)$ yields the following uniform bound on the PL-based condition number, which is our main result.

\begin{theorem}
	\label{thm:cond-number}
	Fix confidence levels $\delta_{\mathrm{init}}\in(0,1)$ and $\delta'\in(0,1)$. Choose the width $m$ so that Proposition~\ref{prop:liu-plstar} holds with $\delta=\delta_{\mathrm{init}}$, and let $\mathcal{S}\subset B(\theta_0,R)$ be any set containing a global minimizer $\theta_\lambda^\star$ of $L_\lambda$ satisfying $F_\lambda(\theta_\lambda^\star)=0$. Assume that Assumption~\ref{assump:fk-coverage} holds on $\mathcal{S}$ with confidence parameter $\delta'$, and let $\gamma(\lambda)$ be as in Proposition~\ref{prop:GN-spectrum}.
	
	Assume that $J_{F_\lambda}$ is Lipschitz on $\mathcal{S}$ with constant $L_J>0$. Assume also that $J_{F_\lambda}(\theta_\lambda^\star)$ has full row rank, and define
	\[
	s_\lambda := \sigma_{\min}\bigl(J_{F_\lambda}(\theta_\lambda^\star)\bigr),\ \ r_\lambda := \frac{s_\lambda}{2L_J},\
	\text{ and } \mathcal{S}_\lambda := \mathcal{S}\cap B(\theta_\lambda^\star,r_\lambda).
	\]
	Lastly, assume that there exist finite constants $B_F, B_J>0$ such that $\sup_{\theta\in\mathcal{S}_\lambda}\|F_\lambda(\theta)\|_2\le B_F$ and $\sup_{\theta\in\mathcal{S}_\lambda}\|J_{F_\lambda}(\theta)\|\le B_J$.
	
	Then, with probability at least $1-\delta_{\mathrm{init}}-\delta'\equiv 1-\delta$, the loss $L_\lambda$ satisfies a $\mathrm{PL}^\ast$ inequality on $\mathcal{S}_\lambda$ with constant $\mu_\lambda$ obeying
	\begin{equation}\label{eq:mu-lambda-lower}
		\mu_\lambda \ge \gamma(\lambda).
	\end{equation}
	Moreover, $L_\lambda$ is $\beta_{L_\lambda}$-smooth on $\mathcal{S}_\lambda$ with $\beta_{L_\lambda}\le B_J^2 + L_J B_F$, and its $\mathrm{PL}$-based condition number on $\mathcal{S}_\lambda$ satisfies
	\begin{equation}\label{eq:kappa-bound}
		\kappa_{L_\lambda}(\mathcal{S}_\lambda)
		\le
		\frac{B_J^2 + L_J B_F}{\gamma(\lambda)}.
	\end{equation}
\end{theorem}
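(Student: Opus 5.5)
The proof has two independent parts: a PL$^\ast$ lower bound via the tangent kernel, and a smoothness bound via the Hessian of a least-squares objective. Throughout we work on the intersection of the initialization event $\mathcal{E}_{\mathrm{init}}$ from Proposition~\ref{prop:liu-plstar} and the event of Assumption~\ref{assump:fk-coverage}. A union bound gives this intersection probability at least $1-\delta_{\mathrm{init}}-\delta'$. On it, Proposition~\ref{prop:GN-spectrum} gives $\lambda_{\min}^+(G_\lambda(\theta))\ge\gamma(\lambda)$ for all $\theta\in\mathcal{S}$, hence on $\mathcal{S}_\lambda$.

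For the PL$^\ast$ part, we aim to apply Lemma~\ref{lem:gn-to-pl} to $F=F_\lambda$ on $\mathcal{S}_\lambda$, with $\theta^\star=\theta_\lambda^\star\in\mathcal{S}_\lambda$ and $F_\lambda(\theta_\lambda^\star)=0$. This requires $\lambda_{\min}(K_{F_\lambda}(\theta))\ge\gamma(\lambda)$ for all $\theta\in\mathcal{S}_\lambda$. Lemma~\ref{lem:gn-kernel-spectra} equates this with $\lambda_{\min}^+(G_\lambda(\theta))$, but only if $J_{F_\lambda}(\theta)$ has full row rank. The radius $r_\lambda$ is chosen to secure this. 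By Weyl's inequality for singular values and the Lipschitz property of the Jacobian,
\[
\sigma_{\min}\bigl(J_{F_\lambda}(\theta)\bigr)\ge s_\lambda - L_J\|\theta-\theta_\lambda^\star\|\ge s_\lambda/2>0
\]
for $\theta\in B(\theta_\lambda^\star,r_\lambda)$. So full row rank persists on $\mathcal{S}_\lambda$. Then $K_{F_\lambda}(\theta)$ is positive definite and $\lambda_{\min}(K_{F_\lambda}(\theta))=\lambda_{\min}^+(G_\lambda(\theta))\ge\gamma(\lambda)$. Lemma~\ref{lem:gn-to-pl} now yields the PL$^\ast$ inequality with $\mu_\lambda\ge\gamma(\lambda)$. (One could also note $\mu_\lambda\ge s_\lambda^2/4$, but the statement only asks for $\gamma(\lambda)$.)

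For smoothness, we write the Hessian of $L_\lambda=\tfrac12\|F_\lambda\|^2$ as
\[
H_{L_\lambda}(\theta)=J_{F_\lambda}(\theta)^\top J_{F_\lambda}(\theta)+\sum_k F_{\lambda,k}(\theta)\nabla^2F_{\lambda,k}(\theta).
\]
The first term has operator norm at most $B_J^2$. For the second, the Lipschitz property of $J_{F_\lambda}$ bounds the second-order term by $L_J\|F_\lambda(\theta)\|\le L_JB_F$. To avoid assuming twice differentiability, we argue directly with $\nabla L_\lambda=J^\top F$:
\[
\|J(\theta)^\top F(\theta)-J(\theta')^\top F(\theta')\|\le\|J(\theta)\|\,\|F(\theta)-F(\theta')\|+\|J(\theta)-J(\theta')\|\,\|F(\theta')\|.
\]
By the mean value inequality along segments, $\|F(\theta)-F(\theta')\|\le B_J\|\theta-\theta'\|$, and the second term is at most $L_JB_F\|\theta-\theta'\|$. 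This gives $\beta_{L_\lambda}\le B_J^2+L_JB_F$. Dividing this by the PL$^\ast$ constant gives \eqref{eq:kappa-bound}.

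The main obstacle is a technical point in the smoothness step. The mean value bound for $F$ needs segments between points of $\mathcal{S}_\lambda$ to stay where $\|J\|\le B_J$, which requires convexity. The ball $B(\theta_\lambda^\star,r_\lambda)$ is convex but $\mathcal{S}$ might not be. I would handle this by taking $\mathcal{S}$ convex (e.g.\ a ball), or by reading the smoothness constant through the Hessian supremum as in the definition of $\beta_{L_\lambda}$, where the pointwise Hessian bound suffices. A second subtlety is reconciling the condition $\rho^2<\lambda\mu_{\mathrm{data}}\mu_\perp$, which guarantees $\gamma(\lambda)>0$. If it fails, the bound $\mu_\lambda\ge\gamma(\lambda)$ is vacuous but still true.
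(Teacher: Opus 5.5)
Your proposal is correct and follows the paper's proof essentially step for step: the same intersection event, the Weyl-inequality propagation of full row rank on $B(\theta_\lambda^\star,r_\lambda)$, the identification $\lambda_{\min}(K_{F_\lambda}(\theta))=\lambda_{\min}^+(G_\lambda(\theta))\ge\gamma(\lambda)$ via Lemma~\ref{lem:gn-kernel-spectra} and Lemma~\ref{lem:gn-to-pl}, and the same two-term splitting of the difference of $J_{F_\lambda}^\top F_\lambda$ to get $B_J^2+L_JB_F$. Your convexity caveat is well taken, since the paper's proof takes the supremum of $\|J_{F_\lambda}(\xi)\|$ over $\xi\in\mathcal{S}_\lambda$ without checking that segments stay in $\mathcal{S}_\lambda$; assuming $\mathcal{S}$ convex, or using the Hessian-supremum definition of $\beta_{L_\lambda}$ given just before \eqref{eq:kappa-def}, is the right repair.
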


\begin{proof}
	Work on the intersection of the initialization event from Proposition~\ref{prop:liu-plstar} with $\delta=\delta_{\mathrm{init}}$ and the event in Assumption~\ref{assump:fk-coverage} on $\mathcal{S}$. This intersection event has probability at least $1-\delta_{\mathrm{init}}-\delta'$, and on it Proposition~\ref{prop:GN-spectrum} gives $\lambda_{\min}^+\bigl(G_\lambda(\theta)\bigr)\ge \gamma(\lambda)$ for every $\theta\in\mathcal{S}$.
	
	We next propagate full row rank from $\theta_\lambda^\star$ to $\mathcal{S}_\lambda$. For any $\theta\in\mathcal{S}_\lambda$, Weyl's inequalities for singular value perturbations \citep{horn2012matrix} yields
	\[
	\sigma_{\min}\bigl(J_{F_\lambda}(\theta)\bigr)
	\ge
	\sigma_{\min}\bigl(J_{F_\lambda}(\theta_\lambda^\star)\bigr)
	-
	\bigl\|J_{F_\lambda}(\theta)-J_{F_\lambda}(\theta_\lambda^\star)\bigr\|
	\ge
	s_\lambda - L_J\|\theta-\theta_\lambda^\star\|
	\ge
	\frac{s_\lambda}{2}.
	\]
	Therefore $J_{F_\lambda}(\theta)$ has full row rank for every $\theta\in\mathcal{S}_\lambda$. Lemma~\ref{lem:gn-kernel-spectra} then implies that
	$\lambda_{\min}\bigl(K_{F_\lambda}(\theta)\bigr)=\lambda_{\min}^+\bigl(G_\lambda(\theta)\bigr)$ holds for every $\theta\in\mathcal{S}_\lambda$, hence $\lambda_{\min}\bigl(K_{F_\lambda}(\theta)\bigr)\ge \gamma(\lambda)$ on $\mathcal{S}_\lambda$. Applying Lemma~\ref{lem:gn-to-pl} to $L_\lambda(\theta)=\tfrac{1}{2}\|F_\lambda(\theta)\|_2^2$ and $F_\lambda(\theta_\lambda^\star)=0$ yields the $\mathrm{PL}^\ast$ inequality on $\mathcal{S}_\lambda$ with constant $\mu_\lambda\ge \gamma(\lambda),$ which is \eqref{eq:mu-lambda-lower}.
	
	For smoothness on $\mathcal{S}_\lambda$, use $\nabla L_\lambda(\theta)=J_{F_\lambda}(\theta)^\top F_\lambda(\theta)$ to write, for $\theta,\theta'\in\mathcal{S}_\lambda$,
	\[
	\nabla L_\lambda(\theta)-\nabla L_\lambda(\theta')
	=
	\bigl(J_{F_\lambda}(\theta)-J_{F_\lambda}(\theta')\bigr)^\top F_\lambda(\theta)
	+
	J_{F_\lambda}(\theta')^\top\bigl(F_\lambda(\theta)-F_\lambda(\theta')\bigr).
	\]
	The first term is bounded by $L_J\|\theta-\theta'\|\,\|F_\lambda(\theta)\|_2\le L_J B_F\|\theta-\theta'\|$. For the second term, the mean value inequality gives $\|F_\lambda(\theta)-F_\lambda(\theta')\|_2\le \sup_{\xi\in\mathcal{S}_\lambda}\|J_{F_\lambda}(\xi)\|\,\|\theta-\theta'\|\le B_J\|\theta-\theta'\|$, hence the second term is bounded by $B_J^2\|\theta-\theta'\|$. Altogether,
	\[
	\|\nabla L_\lambda(\theta)-\nabla L_\lambda(\theta')\|
	\le
	(B_J^2 + L_J B_F)\,\|\theta-\theta'\|
	\text{ for all }\theta,\theta'\in\mathcal{S}_\lambda.
	\]
	Therefore $L_\lambda$ is $\beta_{L_\lambda}$-smooth on $\mathcal{S}_\lambda$ with $\beta_{L_\lambda}\le B_J^2 + L_J B_F$. Combining this with \eqref{eq:mu-lambda-lower} and the definition \eqref{eq:kappa-def} yields \eqref{eq:kappa-bound}.
\end{proof}

In the PDE-only case $\lambda=0$, Proposition~\ref{assump:residual-scaling}
gives $\mu_0 \asymp n_{\mathrm{res}}^{-\alpha}$, so the $\mathrm{PL}$ based
condition number $\kappa_{L_0}(\mathcal{S})$ grows like $n_{\mathrm{res}}^\alpha$.
In contrast, \eqref{eq:mu-lambda-lower}--\eqref{eq:kappa-bound} show that if $\lambda>\rho^2/(\mu_{\mathrm{data}}\mu_\perp)$ is fixed independently of $n_{\mathrm{res}}$, then $\mu_\lambda$ is bounded away from zero and $\kappa_{L_\lambda}(\mathcal{S})$ remains bounded as more residual points are added.

\subsubsection*{Gradient descent convergence under uniform conditioning}

Consider gradient descent on $L_\lambda$,
\[
\theta_{k+1} = \theta_k - \eta\,\nabla L_\lambda(\theta_k),
\]
with a fixed step size $\eta \in (0,1/\beta_{L_\lambda}]$, and assume that the
iterates stay in $\mathcal{S}_\lambda$. Under the setting of Theorem~\ref{thm:cond-number}, the result of \citet[Theorem 6]{liu2022loss} gives
\[
L_\lambda(\theta_{k+1}) - L_\lambda(\theta_\lambda^\star)
\;\le\;
\bigl(1 - \eta \mu_\lambda\bigr)\,
\bigl(L_\lambda(\theta_k) - L_\lambda(\theta_\lambda^\star)\bigr).
\]
Choosing $\eta = 1/\beta_{L_\lambda}$ and using
\eqref{eq:kappa-def}--\eqref{eq:kappa-bound}, we obtain
\[
L_\lambda(\theta_{k+1}) - L_\lambda(\theta_\lambda^\star)
\;\le\;
\biggl(1 - \frac{1}{\kappa_{L_\lambda}(\mathcal{S})}\biggr)\,
\bigl(L_\lambda(\theta_k) - L_\lambda(\theta_\lambda^\star)\bigr).
\]
Thus the number of iterations required to reach
$L_\lambda(\theta_k) - L_\lambda(\theta_\lambda^\star) \le \varepsilon$ is
\[
\mathcal{O}\bigl(\kappa_{L_\lambda}(\mathcal{S})\,
\log(1/\varepsilon)\bigr).
\]

In particular, in virtue of Theorem~\ref{thm:cond-number}, we see that this iteration complexity is bounded
uniformly in $n_{\mathrm{res}}$ when $\lambda>0$ is fixed.

	\clearpage
	\section{Monte Carlo Feynman--Kac label noise}
\label{app:MC-error-analysis}

This section provides the probabilistic error analysis for the Monte Carlo Feynman--Kac estimator used to construct the FK supervision labels in $\Rfkpinn$. Our goal is to justify Proposition~\ref{prop:fk-subexp} in the main text, namely that each FK datum can be written as a noisy point evaluation of the true solution $\uTrue$ with sub-exponential noise whose parameters do not depend on the number of FK points.

Throughout this section, we fix a spatial point $x \in \Om$ and study the error of the Monte Carlo Feynman--Kac estimator at this point. All constants may depend on the model data (domain, coefficients, source and boundary terms) and, when a truncation horizon $T_{\max}$ is present, on $T_{\max}$. They do \emph{not} depend on the Monte Carlo sample size $\Nmc$ and they do not depend on the time step $\dt$ once $\dt$ is restricted to be smaller than a suitable threshold.

\subsection{Setting and numerical scheme}

We recall the diffusion setting that underlies the Feynman--Kac representation.

Let $\Om \subset \R^{d}$ be a bounded domain with $C^{2}$ boundary $\pOm$. Let
$b : \R^{d} \to \R^{d}$ and $\sigma : \R^{d} \to \R^{d \times m}$ be globally Lipschitz functions. We assume that $\sigma$ is uniformly elliptic, i.e.\ there exist constants $0 < \underline{\lambda} \le \overline{\lambda} < \infty$ such that
\[
\underline{\lambda} \,\abs{v}^{2}
\;\le\; v^{\top} (\sigma \sigma^{\top})(x) v
\;\le\; \overline{\lambda} \,\abs{v}^{2},
\ \text{ for all } x,v \in \R^{d}.
\]
Let $(\Wt)_{t \ge 0}$ be an $m$-dimensional Brownian motion on some filtered probability space $(\Omega,\mathcal{F},(\mathcal{F}_{t})_{t \ge 0},\Pr)$.

The diffusion process $X = (\Xt)_{t \ge 0}$ is the unique strong solution of
\begin{equation}
	\label{eq:fk-sde}
	d\Xt = b(\Xt)\,dt + \sigma(\Xt)\,d\Wt,
	\quad X_{0} = x \in \Om.
\end{equation}
We denote by
\[
\tau := \inf\{t \ge 0 : X_{t} \notin \Om\}
\]
the first exit time of the diffusion from $\Om$, and assume $\tau < \infty$ almost surely for all starting points $x \in \Om$.

Let $f : \overline{\Om} \to \R$, $g : \pOm \to \R$ and $V : \overline{\Om} \to \R$ be bounded measurable functions. In the main text these correspond to the source term, boundary data and zeroth-order potential of the elliptic boundary value problem. The associated (continuous-time) Feynman--Kac payoff is
\begin{equation}
	\label{eq:fk-Y-continuous}
	Y
	:=
	\int_{0}^{\tau} \exp\Bigl(- \int_{0}^{t} V(X_{s})\,ds\Bigr)\,f(X_{t})\,dt
	\;+\;
	\exp\Bigl(- \int_{0}^{\tau} V(X_{s})\,ds\Bigr)\,g(X_{\tau}).
\end{equation}
Under standard regularity and well-posedness assumptions \citep{evans2022partial}, the elliptic boundary value problem admits a unique solution $\uTrue$, and one has the Feynman--Kac representation
\[
\uTrue(x) \;=\; \E^{x}[Y],
\]
where $\E^{x}$ denotes expectation for the diffusion \eqref{eq:fk-sde} started at $X_{0}=x$.

For the numerical approximation we fix a time step $\dt>0$ and define the Euler--Maruyama scheme $(X_{k}^{\dt})_{k \in \N_{0}}$ by
\begin{equation}
	\label{eq:fk-euler-scheme}
	X_{0}^{\dt} = x, \quad
	X_{k+1}^{\dt} = X_{k}^{\dt} + b(X_{k}^{\dt})\,\dt + \sigma(X_{k}^{\dt})\,\Delta W_{k},
\end{equation}
where $\Delta W_{k} := W_{(k+1)\dt} - W_{k\dt}$. The corresponding discrete exit time is
\begin{equation}
	\label{eq:fk-discrete-exit-time}
	\tau_{\dt} := \inf\{k\dt : X_{k}^{\dt} \notin \Om\},
\end{equation}
with the convention $\inf\emptyset = \infty$. Writing
\[
N_{\dt} := \inf\{k \in \N_{0} : X_{k}^{\dt} \notin \Om\},
\]
we have $\tau_{\dt} = N_{\dt}\dt$.

The discretized payoff functional is
\begin{equation}
	\label{eq:fk-Y-delta}
	Y^{\dt}
	:=
	\sum_{k=0}^{N_{\dt}-1}
	\exp\Bigl(
	- \dt \sum_{j=0}^{k-1} V(X_{j}^{\dt})
	\Bigr)
	f(X_{k}^{\dt})\,\dt
	\;+\;
	\exp\Bigl(
	- \dt \sum_{j=0}^{N_{\dt}-1} V(X_{j}^{\dt})
	\Bigr) g(X_{N_{\dt}}^{\dt}).
\end{equation}
The associated time-discretized value at $x$ is
\[
u_{\dt}(x) := \E^{x}[Y^{\dt}].
\]

Given $\Nmc \in \N$, we consider i.i.d.\ copies $(Y^{\dt,(i)})_{i=1}^{\Nmc}$ of $Y^{\dt}$ and define the Monte Carlo estimator (for fixed $x$ and $\dt$)
\begin{equation}
	\label{eq:fk-MC-estimator}
	\uMC(x)
	:=
	\frac{1}{\Nmc} \sum_{i=1}^{\Nmc} Y^{\dt,(i)}.
\end{equation}
In practice \cref{alg:fk-mc} simulates trajectories until they exit $\Om$ or until a truncation horizon $T_{\max}\in(0,\infty]$ is reached. For $T_{\max}=\infty$ this reduces to the infinite horizon estimator $\uMC(x)$ defined above. For a finite truncation horizon we set $N_T := \lfloor T_{\max} / \dt \rfloor$ and define the truncated discrete stopping index
\[
\kappa_{\dt}^{T_{\max}} := \min\{N_{\dt}, N_T\},
\]
together with the truncated payoff
\begin{equation}
	\label{eq:fk-Y-delta-Tmax}
	Y^{\dt,T_{\max}}
	:=
	\sum_{k=0}^{\kappa_{\dt}^{T_{\max}}-1}
	\exp\Bigl(
	- \dt \sum_{j=0}^{k-1} V(X_{j}^{\dt})
	\Bigr)
	f(X_{k}^{\dt})\,\dt
	\;+\;
	\exp\Bigl(
	- \dt \sum_{j=0}^{\kappa_{\dt}^{T_{\max}}-1} V(X_{j}^{\dt})
	\Bigr) g(X_{\kappa_{\dt}^{T_{\max}}}^{\dt}).
\end{equation}
We write $u_{\dt}^{T_{\max}}(x) := \E^{x}[Y^{\dt,T_{\max}}]$ for the corresponding truncated value. For $T_{\max}=\infty$ we adopt the convention $Y^{\dt,\infty} := Y^{\dt}$ and $u_{\dt}^{\infty}(x) := u_{\dt}(x)$ so that the notation covers both finite and infinite horizons. Our aim is to control the tails of FK estimators of the form $\uMC(x)$ and of their truncated counterparts, and hence show that the resulting FK labels behave like noisy point evaluations of $\uTrue$ with sub-exponential noise.

\subsection{Exit time estimates and exponential moments}
\label{sec:fk-exit-time}

We first establish exponential moments for the continuous exit time $\tau$ and the discrete exit time $\tau_{\dt}$. These are the key inputs for controlling the tails of the payoffs $Y$ and $Y^{\dt}$ and hence the Monte Carlo error.

\subsubsection{Continuous exit time}

\begin{theorem}[Exponential moments of the continuous exit time]
	\label{thm:fk-exponential-tau}
	Suppose that $\Om \subset \R^{d}$ is a bounded $C^{2}$ domain and that the diffusion $X$ defined by~\eqref{eq:fk-sde} has globally Lipschitz coefficients and is uniformly elliptic. Then there exist constants $\lambda_{c} > 0$ and $C_{c} < \infty$ such that
	\[
	\sup_{y \in \overline{\Om}} \E^{y}\big[e^{\lambda_{c} \tau}\big]
	\;\le\;
	C_{c}.
	\]
	In particular, the law of $\tau$ has exponential tails uniformly over starting points $y \in \overline{\Om}$.
\end{theorem}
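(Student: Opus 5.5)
The plan is the classical two-step argument: first establish a uniform bound $\sup_{y\in\overline{\Om}}\Pr^{y}(\tau > t_0)\le q<1$ for some fixed $t_0>0$, then iterate it with the strong Markov property to obtain geometric tails, and finally convert these tails into an exponential moment.

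\textbf{Step 1 (uniform one-step escape bound).} Fix a direction $e\in\R^d$ and let $D:=\operatorname{diam}(\Om)$. I would use the Lyapunov function $\varphi(z)=e^{\gamma\, e\cdot z}$ and compute
\[
\mathcal{L}\varphi(z)=\varphi(z)\bigl(\gamma\, e\cdot b(z)+\tfrac{\gamma^2}{2}\,e^\top\sigma\sigma^\top(z)e\bigr)\ge \varphi(z)\bigl(\tfrac{\gamma^2}{2}\underline{\lambda}-\gamma\|b\|_{L^\infty(\overline{\Om})}\bigr).
\]
Here $b$ is bounded on the compact set $\overline{\Om}$ because it is Lipschitz. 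Choosing $\gamma$ large enough gives $\mathcal{L}\varphi\ge c_0:=\min_{\overline{\Om}}\varphi>0$ on $\Om$. Dynkin's formula, applied at the bounded stopping time $\tau\wedge t$, then yields
\[
c_0\,\E^y[\tau\wedge t]\le \E^y[\varphi(X_{\tau\wedge t})]-\varphi(y)\le \max_{\overline{\Om}}\varphi=:M.
\]
Letting $t\to\infty$ gives $\sup_y\E^y[\tau]\le M/c_0=:K$. By Markov's inequality, $\Pr^y(\tau>2K)\le 1/2$ uniformly in $y\in\overline{\Om}$, so we may take $t_0=2K$ and $q=1/2$. The only role of uniform ellipticity here is to supply the lower bound on $e^\top\sigma\sigma^\top e$; the $C^2$ regularity of the boundary is not even needed for this step.

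\textbf{Step 2 (iteration).} By the strong Markov property at the times $kt_0$,
\[
\Pr^y(\tau>(k+1)t_0)=\E^y\bigl[\mathbf 1_{\{\tau>kt_0\}}\Pr^{X_{kt_0}}(\tau>t_0)\bigr]\le q\,\Pr^y(\tau>kt_0),
\]
since $X_{kt_0}\in\Om$ on the event $\{\tau>kt_0\}$. Induction gives $\Pr^y(\tau>kt_0)\le q^k$.

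\textbf{Step 3 (exponential moment).} Set $\lambda_c:=\tfrac{1}{2t_0}\log(1/q)$. Splitting the expectation according to the interval containing $\tau$,
\[
\E^y[e^{\lambda_c\tau}]\le\sum_{k\ge0}e^{\lambda_c(k+1)t_0}\,\Pr^y(\tau>kt_0)\le e^{\lambda_c t_0}\sum_{k\ge 0}q^{k/2}=:C_c<\infty,
\]
uniformly in $y$. For boundary points $y\in\pOm$ we have $\tau=0$, so the bound is trivial there.

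\textbf{Main obstacle.} The step requiring the most care is the Dynkin/localization argument in Step 1. Dynkin's formula needs $\varphi\in C^2$ with compact support; since $\varphi$ is smooth, I would multiply it by a cutoff equal to $1$ on a neighborhood of $\overline{\Om}$, which does not change anything before $\tau$. I would also pass to the limit $t\to\infty$ via monotone convergence. This argument also shows $\tau<\infty$ almost surely, so the standing assumption that $\tau<\infty$ a.s. is not needed as an input.
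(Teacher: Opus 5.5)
Your proof is correct, but it takes a different route from the paper. The paper's proof is spectral. Citing Pinsky, it asserts that the Dirichlet realization of $\cL$ on $\Om$ has a positive principal eigenvalue $\lambda_1$ and that the killed semigroup satisfies $\|P_t^{\Om}\|_{L^\infty\to L^\infty}\le Ce^{-\lambda_1 t}$. From this it reads off $\sup_y\Pr^y(\tau>t)\le Ce^{-\lambda_1 t}$ and integrates the tail to get the exponential moment for every $\lambda_c<\lambda_1$. You instead use the Lyapunov function $e^{\gamma e\cdot z}$ with Dynkin's formula to get $\sup_y\E^y[\tau]\le M/c_0\le e^{\gamma\operatorname{diam}(\Om)}$, then Markov's inequality and a Khasminskii-type renewal iteration.

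The paper's route buys the essentially optimal exponent (any $\lambda_c$ below the principal eigenvalue) in two lines. The price is that it rests on imported facts which, for a non-symmetric operator with drift, are genuinely nontrivial and use the regularity of $\partial\Om$: simplicity and positivity of $\lambda_1$, and uniform $L^\infty$ decay at that exact rate up to the boundary.

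Your route is self-contained and more general. It needs only boundedness of $b$ on $\overline{\Om}$ and nondegeneracy of $\sigma\sigma^\top$ in a single direction, with no boundary regularity. It also proves $\tau<\infty$ a.s. rather than assuming it. The cost is an explicit but far-from-sharp rate: $\lambda_c$ is of order $e^{-\gamma\operatorname{diam}(\Om)}$.

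Your Steps 2--3 are exactly the renewal argument the paper itself uses for the discrete exit time in Lemma~\ref{lem:fk-exit-finite-time} and Proposition~\ref{prop:fk-exp-moments-taud}. Your approach therefore treats the continuous and discrete cases uniformly.
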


\begin{proof}
	Under these assumptions, the generator $\cL$ of $X$ with homogeneous Dirichlet boundary conditions on $\pOm$ admits a discrete spectrum with strictly positive first eigenvalue $\lambda_{1} > 0$ \citep{pinsky1995positive}. The killed semigroup $(P_{t}^{\Om})_{t \ge 0}$ therefore satisfies
	\[
	\norm{P_{t}^{\Om}}_{L^{\infty}(\Om) \to L^{\infty}(\Om)}
	\le C e^{-\lambda_{1} t}, \ \text{ for all } t \ge 0
	\]
	for some constant $C < \infty$. It follows that
	\[
	\sup_{y \in \overline{\Om}} \Pr^{y}(\tau > t)
	= \sup_{y \in \overline{\Om}} P_{t}^{\Om} \mathbf{1}(y)
	\le C e^{-\lambda_{1} t}, \ \text{ for all } t \ge 0.
	\]
	Fix any $\lambda_{c} \in (0,\lambda_{1})$. Then, integrating the tail bound,
	\[
	\E^{y}\big[e^{\lambda_{c} \tau}\big]
	= 1 + \int_{0}^{\infty} \lambda_{c} e^{\lambda_{c} t} \Pr^{y}(\tau > t)\,dt
	\le 1 + C \lambda_{c} \int_{0}^{\infty} e^{-(\lambda_{1}-\lambda_{c})t}\,dt
	< \infty,
	\]
	with the bound uniform in $y \in \overline{\Om}$. This proves the claim for a suitable constant $C_{c}$.
\end{proof}

\subsubsection{Discrete exit time}

We next recall moment estimates for the discrete exit time $\tau_{\dt}$. We rely on the results of \citet{bouchard2017first}.

\begin{theorem}[Moments of continuous and discrete exit times]
	\label{thm:fk-BGG}
	Assume that $\Om \subset \R^{d}$ is a bounded $C^{2}$ domain with non-characteristic boundary, and that the coefficients $b,\sigma$ in \eqref{eq:fk-sde} are globally Lipschitz and of linear growth. Let $\tau$ be as above and $\tau_{\dt}$ be defined by~\eqref{eq:fk-discrete-exit-time}. Then, for each integer $p \ge 1$, there exist finite constants $C_{p} < \infty$ and $\dt_{p} > 0$ such that for all $0 < \dt \le \dt_{p}$,
	\begin{align*}
		\sup_{y \in \overline{\Om}} \E^{y}[\tau^{p}] &\le C_{p}, \\
		\sup_{y \in \overline{\Om}} \E^{y}[\tau_{\dt}^{p}] &\le C_{p}, \\
		\sup_{y \in \overline{\Om}} \E^{y}\big[\abs{\tau_{\dt} - \tau}^{p}\big] &\le C_{p}\,\dt^{p/2}.
	\end{align*}
\end{theorem}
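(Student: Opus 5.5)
The plan is to prove the three estimates in order, each one feeding the next. The continuous bound is a direct consequence of Theorem~\ref{thm:fk-exponential-tau}. The discrete bound comes from a uniform geometric-tail argument for the Euler chain. The difference bound is essentially the main result of \citet{bouchard2017first}, and I will reduce it to local estimates near $\pOm$.

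\textbf{Step 1 (continuous moments).} For $\lambda_c>0$ as in Theorem~\ref{thm:fk-exponential-tau}, the elementary inequality $t^p\le p!\,\lambda_c^{-p}e^{\lambda_c t}$ gives $\sup_{y}\E^y[\tau^p]\le p!\,\lambda_c^{-p}C_c$. No further work is needed.

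\textbf{Step 2 (discrete moments).} I will show a uniform one-step tail bound: there exist $T_0>0$, $q<1$ and $\dt_0>0$ such that $\sup_{y\in\overline\Om}\Pr^y(\tau_\dt>T_0)\le q$ for all $\dt\le\dt_0$.
\begin{itemize}
\item By Step 1 and Markov's inequality, $\sup_y\Pr^y(\tau>T_0/2)\le 1/4$ for $T_0$ large.
\item Uniform ellipticity and boundedness of $\Om$ give more: with probability at least $1/2$, uniformly in $y$, the path $X$ reaches distance $\ge\eta$ outside $\overline\Om$ before time $T_0$, for some fixed $\eta>0$.
\item The strong Euler error satisfies $\E[\sup_{t\le T_0}|X_t-X^\dt_t|^2]\le C\dt$. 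So for small $\dt$ the Euler chain is, with probability at least $1/4$, still outside $\Om$ at the grid point following that time, uniformly in $y$. Hence $q=3/4$ works.
\end{itemize}
Taking $T_0$ a multiple of $\dt$, the Markov property of $(X_k^\dt)$ at grid times gives $\sup_y\Pr^y(\tau_\dt>kT_0)\le q^k$. This yields a uniform exponential moment for $\tau_\dt$, and hence all polynomial moments, with constants independent of $\dt\le\dt_p$.

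\textbf{Step 3 (difference).} I will split $\E^y|\tau_\dt-\tau|^p$ on the events $\{\tau\le\tau_\dt\}$ and $\{\tau_\dt<\tau\}$.

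On $\{\tau\le\tau_\dt\}$, apply the strong Markov property at $\tau$ to the pair $(X,X^\dt)$. At time $\tau$ the diffusion sits on $\pOm$, while the Euler path lies within the strong error $\|X_\tau-X^\dt_\tau\|$ of it. Using the $C^2$ signed distance $\rho$ near $\pOm$, the non-characteristic condition $|\sigma^\top\nabla\rho|\ge c>0$ makes $\rho(X^\dt)$ behave locally like a non-degenerate one-dimensional random walk. The target is a local estimate of the form: the remaining exit time of the Euler scheme, started at distance $\varepsilon$ from the boundary, has $p$-th moment $\lesssim(\varepsilon+\sqrt{\dt})^{p}$. This also accounts for the overshoot from observing only at grid times.

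On $\{\tau_\dt<\tau\}$ the roles are reversed. The diffusion at $\tau_\dt$ is at distance $\lesssim|X_{\tau_\dt}-X^\dt_{\tau_\dt}|$ from $\pOm$, and I will use a boundary estimate $\E^z[\tau^{p}]\lesssim d(z,\pOm)^{p}$. I would obtain this from a barrier function built from $\rho$ and Itô's formula. Where a sharp power is unavailable, I will combine a cruder bound with Hölder's inequality and the uniform moments of Steps 1--2.

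Feeding in the strong error bound $\E[\sup_{t\le T}|X_t-X^\dt_t|^{2p}]^{1/2p}\lesssim\sqrt{\dt}$ then gives the rate $\dt^{p/2}$. The exponential tails from Steps 1--2 let me truncate the time horizon at $T\sim\log(1/\dt)$ at negligible cost. The logarithmic losses from this truncation must be absorbed; that is where the non-characteristic local analysis of \citet{bouchard2017first} is really needed.

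\textbf{Main obstacle.} The hardest step is Step 3, specifically the uniform control near $\pOm$ of the exit time of the discretely monitored Euler chain (overshoot and near-boundary behaviour). Rather than reprove it in full, I would first try to verify that our assumptions ($C^2$ non-characteristic boundary, Lipschitz coefficients, uniform ellipticity) match those of \citet{bouchard2017first}, and invoke their moment estimate directly. The arguments above would serve to justify uniformity in $y\in\overline\Om$ and the threshold $\dt_p$.
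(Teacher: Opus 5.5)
Your Steps 1 and 2 are correct, and they are more self-contained than the paper, whose whole proof is a citation of \citet{bouchard2017first}. Step 2 also controls $\tau_{\dt}$ directly through the strong Euler error. By contrast, the paper's Lemma~\ref{lem:fk-exit-finite-time} derives the uniform exit bound for $\tau_{\dt}$ from the third estimate of this very theorem.

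The genuine gap is Step 3. For $p\ge2$ the third estimate is false, and so are the near-boundary bounds you plan to feed into it. Started at distance $\varepsilon$ from $\pOm$, a uniformly elliptic diffusion makes an $O(1)$ excursion into $\Om$ before exiting with probability of order $\varepsilon$. Hence $\E^z[\tau^p]$ is of order $d(z,\pOm)$ for every $p\ge1$, not $d(z,\pOm)^p$. Likewise, the residual exit time of the Euler chain from distance $\varepsilon$ has $p$-th moment of order $\varepsilon+\sqrt{\dt}$, not $(\varepsilon+\sqrt{\dt})^p$.

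Concretely, take $d=1$, $\Om=(0,1)$, $b=0$, $\sigma=1$ and $y=1/2$, so that $X^{\dt}_k=X_{k\dt}$ and $\tau\le\tau_{\dt}$.
\begin{itemize}
\item By the strong Markov property at $\tau$, with probability bounded below uniformly in small $\dt$, the path is back in $\Om$ at distance at least $c\sqrt{\dt}$ from $\pOm$ at the first grid time after $\tau$.
\item From there it reaches $1/2$ before leaving $\Om$ with probability at least $2c\sqrt{\dt}$.
\item On that event, $\tau_{\dt}-\tau$ is at least the subsequent exit time from $1/2$, which is independent of the event and distributed as $\tau$ under $\Pr^{1/2}$.
\end{itemize}
Hence $\Pr^{y}(\tau_{\dt}-\tau\ge 1/4)\ge c'\sqrt{\dt}$, so $\E^{y}\abs{\tau_{\dt}-\tau}^p\ge 4^{-p}c'\sqrt{\dt}$. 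This is incompatible with $C_p\,\dt^{p/2}$ as $\dt\to0$ once $p\ge2$. No barrier function, horizon truncation or H\"older argument can close Step 3 as stated.

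For the same reason, your fallback of invoking \citet{bouchard2017first} does not cover $p\ge2$, and neither does the paper's own citation. Their convergence result is the $L^1$ rate $\sqrt{\dt}$, i.e.\ exactly the case $p=1$; for that case your plan (check the hypotheses, then cite) is fine. For $p\ge2$ the best possible order is $\sqrt{\dt}$, an $L^p$-norm rate of $\dt^{1/(2p)}$. What follows at once from the $p=1$ result and the uniform exponential moments of your Steps 1--2 is obtained by H\"older's inequality applied to $\abs{\tau_{\dt}-\tau}^p=\abs{\tau_{\dt}-\tau}^{1/r}\abs{\tau_{\dt}-\tau}^{p-1/r}$. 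This gives $\sup_{y}\E^{y}\abs{\tau_{\dt}-\tau}^p\le C_{p,r}\,\dt^{1/(2r)}$ for every $r>1$.

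This weaker bound is all that is used downstream, since Lemma~\ref{lem:fk-exit-finite-time} only needs $\tau_{\dt}\to\tau$ in probability uniformly in $y$. The right repair is therefore to restate the third estimate in this form, not to search for a proof of it as written.
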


\begin{proof}
	This is a specialization of the main results in \citet{bouchard2017first}. See in particular Theorems~3.1 there, specialized to globally Lipschitz coefficients and a bounded $C^{2}$ domain. The non-characteristic boundary condition is implied by uniform ellipticity. Collecting the corresponding bounds yields the stated result.
\end{proof}

Combining Theorems~\ref{thm:fk-exponential-tau} and~\ref{thm:fk-BGG}, one can show that $\tau_{\dt}$ admits exponential moments uniformly over $\dt$ small enough by a standard renewal argument. We sketch this in the next two lemmas.

\begin{lemma}[Uniform exit in finite time]
	\label{lem:fk-exit-finite-time}
	Under the standing assumptions, there exist constants $T_{\ast} > 0$ and $p_{0} \in (0,1)$ such that
	\[
	\inf_{y \in \overline{\Om}} \Pr^{y}(\tau \le T_{\ast}) \;\ge\; p_{0}.
	\]
	Moreover, there exist $\dt_{0} > 0$ and $p \in (0,1)$ such that for all $0 < \dt \le \dt_{0}$,
	\[
	\inf_{y \in \overline{\Om}} \Pr^{y}(\tau_{\dt} \le T_{\ast}) \;\ge\; p.
	\]
\end{lemma}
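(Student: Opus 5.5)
The first claim follows from Theorem~\ref{thm:fk-exponential-tau} via Markov's inequality; the second transfers it to the Euler--Maruyama chain using the $L^p$ closeness of $\tau_{\dt}$ and $\tau$ from Theorem~\ref{thm:fk-BGG}.

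\textbf{Continuous part.} Theorem~\ref{thm:fk-exponential-tau} gives $\sup_{y\in\overline{\Om}}\E^y[e^{\lambda_c\tau}]\le C_c$. Markov's inequality then yields
\[
\sup_{y}\Pr^y(\tau>t)\le C_c e^{-\lambda_c t}.
\]
Choose $T_\ast$ so large that $C_c e^{-\lambda_c T_\ast}\le 1/4$. Then $\inf_y\Pr^y(\tau\le T_\ast)\ge 3/4=:p_0$. Alternatively, Markov applied to the first moment bound $\sup_y\E^y[\tau]\le C_1$ from Theorem~\ref{thm:fk-BGG} gives the same conclusion with $T_\ast=4C_1$.

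\textbf{Discrete part.} For any $y$ and $\eta>0$, inclusion of events gives
\[
\Pr^y(\tau_{\dt}\le T_\ast+\eta)\;\ge\;\Pr^y(\tau\le T_\ast)-\Pr^y(\abs{\tau_{\dt}-\tau}>\eta).
\]
By Markov and Theorem~\ref{thm:fk-BGG} with $p=1$, $\Pr^y(\abs{\tau_{\dt}-\tau}>\eta)\le C_1\dt^{1/2}/\eta$ uniformly in $y$ once $\dt\le\dt_1$. To avoid the shift $\eta$, I will define $T_\ast$ as twice the continuous threshold. That is, let $T_0$ be the time from the continuous part, achieving $\inf_y\Pr^y(\tau\le T_0)\ge 3/4$, set $T_\ast:=2T_0$ and $\eta:=T_0$. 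The continuous bound still holds at $T_\ast$ because $\Pr^y(\tau\le T_\ast)$ is monotone in $T_\ast$. Now choose $\dt_0\le\dt_1$ with $C_1\dt_0^{1/2}/T_0\le 1/4$. Then for all $0<\dt\le\dt_0$,
\[
\inf_y\Pr^y(\tau_{\dt}\le T_\ast)\ge 3/4-1/4=1/2=:p.
\]
Both $p_0$ and $p$ lie in $(0,1)$ as required.

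\textbf{Main point of care.} The only delicate step is that $\tau$ and $\tau_{\dt}$ must be realized on the same probability space, driven by the same Brownian increments. This coupling is implicit in \eqref{eq:fk-euler-scheme}, where $\Delta W_k$ are increments of $W$, and it is what makes the third bound of Theorem~\ref{thm:fk-BGG} meaningful. We also need that bound to hold uniformly over starting points $y\in\overline{\Om}$, including $y$ near or on the boundary; Theorem~\ref{thm:fk-BGG} states it with a supremum over $\overline{\Om}$. For $y\in\pOm$ with $y\notin\Om$, both exit times vanish, so the claim is trivial there.
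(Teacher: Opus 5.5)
Your argument is correct. The continuous part is the same as the paper's: a tail bound from the exponential moment of Theorem~\ref{thm:fk-exponential-tau}, then take $T_\ast$ large.

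For the discrete part you take a genuinely different route.

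\textbf{The paper's route.} It first shows $\tau_{\dt}\to\tau$ in probability uniformly in $y$, using Markov and the third bound of Theorem~\ref{thm:fk-BGG}. It then uses that $\tau$ has a density, so $\Pr^y(\tau=T_\ast)=0$. From this it concludes $\Pr^y(\tau_{\dt}\le T_\ast)\to\Pr^y(\tau\le T_\ast)$ uniformly in $y$, giving the bound $p_0/2$ at the same horizon $T_\ast$.

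\textbf{Your route.} You build in slack instead. The inclusion $\{\tau\le T_0\}\cap\{|\tau_{\dt}-\tau|\le T_0\}\subset\{\tau_{\dt}\le 2T_0\}$, combined with Markov at $p=1$, gives the lower bound directly at the doubled horizon $T_\ast=2T_0$. It also yields an explicit threshold $\dt_0\le\min\{\dt_1,(T_0/(4C_1))^2\}$.

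\textbf{What your version buys.} It needs no regularity of the law of $\tau$ at all. The paper's passage from ``no atom at $T_\ast$ for each $y$'' to convergence of the distribution functions \emph{uniformly} in $y$ would require a uniform anti-concentration estimate such as $\sup_y\Pr^y(|\tau-T_\ast|\le\eta)\to0$ as $\eta\to0$. The sketch does not supply this. Your one-sided bound uses only the tail estimate at the smaller horizon $T_0$, which is already uniform in $y$. The only cost is a larger $T_\ast$, which is harmless since the lemma asserts only existence.

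\textbf{Minor points.}
\begin{itemize}
\item Your remark that $\tau$ and $\tau_{\dt}$ must be driven by the same Brownian increments is apt; the paper relies on this coupling implicitly.
\item Your displayed inequality is written at $T_\ast$ but applied at $T_0$, before you set $T_\ast:=2T_0$. State it with $T_0$ to avoid the notational clash.
\end{itemize}
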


\begin{proof}[Proof sketch]
	For the continuous exit time, Theorem~\ref{thm:fk-exponential-tau} yields $
	\sup_{y\in\overline{\Om}} \Pr^{y}(\tau>t) \le C_c e^{-\lambda_c t}$, hence, choosing $T_\ast$ large enough that $C_c e^{-\lambda_c T_\ast} \le 1/2$ gives
	\[
	\inf_{y\in\overline{\Om}} \Pr^{y}(\tau \le T_\ast) \ge 1/2 =: p_0.
	\]
	
	By Theorem~\ref{thm:fk-BGG}, for some $p\ge 1$ we have $\sup_{y\in\overline\Om} \E^y\big[|\tau_{\dt}-\tau|^p\big]\to 0 \text{ as }\dt\to 0$, hence, for any $\varepsilon>0$,
	\[
	\sup_{y\in\overline\Om} \Pr^y\big(|\tau_{\dt}-\tau|>\varepsilon\big)
	\le \varepsilon^{-p} \sup_{y} \E^y\big[|\tau_{\dt}-\tau|^p\big]\to 0,
	\]
	so $\tau_{\dt}\to\tau$ in probability, uniformly in $y$.
	Since the law of $\tau$ has a density (see, e.g., \citet{pinsky1995positive}),
	we have $\Pr^y(\tau=T_\ast)=0$ for all $y$, and thus $\Pr^{y}(\tau_{\dt} \le T_\ast) \longrightarrow \Pr^{y}(\tau \le T_\ast)$ uniformly in $y\in\overline{\Om}$ as $\dt\to 0$. Hence for $\dt$ small enough we still have
	\[
	\inf_{y\in\overline{\Om}} \Pr^{y}(\tau_{\dt} \le T_\ast) \ge p_0/2 =: p \in(0,1).
	\]
\end{proof}

\begin{proposition}[Exponential moments of the discrete exit time]
	\label{prop:fk-exp-moments-taud}
	Under the standing assumptions, there exist constants $\lambda_{d} > 0$, $C_{d} < \infty$ and $\dt_{0} > 0$ such that
	\[
	\sup_{y \in \overline{\Om},\,0<\dt\le\dt_{0}} \E^{y}\big[e^{\lambda_{d} \tau_{\dt}}\big]
	\;\le\; C_{d}.
	\]
\end{proposition}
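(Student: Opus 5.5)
The proof will be a renewal (restart) argument built on the uniform finite-time exit bound of Lemma~\ref{lem:fk-exit-finite-time} and the Markov property of the Euler--Maruyama chain. Fix $T_\ast$, $\dt_0$ and $p\in(0,1)$ as in that lemma, and for $0<\dt\le\dt_0$ set $K:=\lceil T_\ast/\dt\rceil$, the number of steps covering the horizon $T_\ast$. Since $(X_k^{\dt})_{k\ge0}$ is a time-homogeneous Markov chain on $\R^d$ and the infimum in Lemma~\ref{lem:fk-exit-finite-time} is over all $y\in\overline{\Om}$, we obtain the one-block bound
\[
\sup_{y\in\overline\Om}\Pr^y\bigl(N_{\dt}>K\bigr)\le \sup_{y\in\overline\Om}\Pr^y\bigl(\tau_{\dt}>T_\ast\bigr)\le 1-p .
\]
On the event $\{N_{\dt}>jK\}$ the chain lies in $\Om$ at step $jK$. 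Conditioning on $\mathcal{F}_{jK\dt}$ and applying this bound from the restarted point $X^{\dt}_{jK}\in\Om$ yields $\Pr^y(N_{\dt}>(j+1)K)\le(1-p)\Pr^y(N_{\dt}>jK)$, and by induction $\sup_y\Pr^y(N_{\dt}>jK)\le(1-p)^j$ for all $j\in\N_0$.

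Next we convert this to a tail bound in physical time. Since $K\dt\le T_\ast+\dt_0=:T'$, for $t\ge0$ we put $j=\lfloor t/T'\rfloor$ and get
\[
\Pr^y(\tau_{\dt}>t)\le\Pr^y(N_{\dt}>jK)\le(1-p)^{\lfloor t/T'\rfloor}\le (1-p)^{-1}e^{-\lambda_\ast t},\qquad \lambda_\ast:=\frac{-\log(1-p)}{T'}>0,
\]
and this holds uniformly in $y\in\overline\Om$ and $0<\dt\le\dt_0$. Choosing $\lambda_d:=\lambda_\ast/2$ and integrating the tail exactly as in the proof of Theorem~\ref{thm:fk-exponential-tau} gives
\[
\E^y[e^{\lambda_d\tau_{\dt}}]=1+\int_0^\infty\lambda_d e^{\lambda_d t}\Pr^y(\tau_{\dt}>t)\,dt\le 1+\frac{\lambda_d}{(1-p)(\lambda_\ast-\lambda_d)}=:C_d,
\]
which is independent of $y$ and $\dt$. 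This is the claim.

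The main obstacle is the uniformity used in the restart step. Lemma~\ref{lem:fk-exit-finite-time} must hold with the same $p$ for every starting point in $\Om$, including points arbitrarily close to $\pOm$ and uniformly over $\dt\le\dt_0$. Only the strong Markov property at deterministic times $jK$ is then needed, and this is automatic for the Euler chain. One subtlety is that we need $T_\ast$ to be attained in an integer number of steps. Monotonicity of $\{\tau_{\dt}>s\}$ in $s$ handles this, because $\{N_{\dt}>K\}\subset\{\tau_{\dt}>T_\ast\}$. If the lemma's sketch via uniform convergence in probability were judged insufficient, we would instead argue directly: from any $y\in\Om$, the chain exits within $T_\ast$ with probability bounded below, using a Gaussian lower bound on the increments together with uniform ellipticity and boundedness of $\Om$.
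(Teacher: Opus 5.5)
Your proof is correct and follows the same renewal argument as the paper: a uniform one-block bound from Lemma~\ref{lem:fk-exit-finite-time}, the Markov property to obtain a geometric tail $\sup_{y}\Pr^y(\tau_{\dt}>t)\le(1-p)^{-1}e^{-\lambda_\ast t}$ uniformly in $\dt\le\dt_0$, and integration of that tail. You restart at the grid indices $jK$ with $K=\lceil T_\ast/\dt\rceil$, whereas the paper restarts at the times $kT_\ast$ of the continuous-time interpolation, which is not Markov at off-grid times; your version is a slightly cleaner way to make the Markov step rigorous.
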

\begin{proof}[Proof sketch]
	Let $T_\ast>0$, $p\in(0,1)$ and $\dt_0>0$ be as in Lemma~\ref{lem:fk-exit-finite-time}. In particular,
	$\inf_{y}\Pr^{y}(\tau_\dt\le T_\ast)\ge p$, so $\sup_{y}\Pr^{y}(\tau_\dt>T_\ast)\le 1-p$ for all $0<\dt\le\dt_0$.
	
	Consider the continuous-time interpolation of the Euler scheme, which is a time-homogeneous Markov process. By the (strong) Markov property at times $kT_\ast$,
	\[
	\Pr^{y}(\tau_{\dt} > (k+1)T_\ast)
	= \E^{y}\big[ \mathbf{1}_{\{\tau_\dt>kT_\ast\}} \Pr^{X^\dt_{kT_\ast}}(\tau_\dt>T_\ast)\big]
	\le (1-p)\Pr^{y}(\tau_\dt>kT_\ast).
	\]
	Iterating this inequality gives $
	\sup_{y\in\overline{\Om}} \Pr^{y}(\tau_{\dt} > k T_\ast) \le (1-p)^k$ for all $k\in\N,~0<\dt\le\dt_0$.
	For any $t\ge0$ with $kT_\ast\le t<(k+1)T_\ast$, we have $\{\tau_\dt>t\}\subset\{\tau_\dt>kT_\ast\}$, so the same bound holds with $t$ in place of $kT_\ast$.
	
	Proceeding as in the proof of Theorem~\ref{thm:fk-exponential-tau}, we use the representation
	$\E^{y}[e^{\lambda\tau_\dt}] = 1 + \int_0^\infty \lambda e^{\lambda t}\Pr^{y}(\tau_\dt>t)\,dt$, and split the integral over the intervals $[kT_\ast,(k+1)T_\ast)$, yielding a geometric series upper bound by the previous paragraph.
	
	Choosing $\lambda_d>0$ so small that $e^{\lambda_d T_\ast}(1-p)<1$, the series in question converges and its sum is finite and independent of $y$ and $\dt\in(0,\dt_0]$. This gives the claimed bound with some finite constant $C_d$.
\end{proof}

\subsection{Exponential moments and tails for the payoff}
\label{sec:fk-payoff-moments}

We now transfer the exponential-moment bounds for $\tau$ and $\tau_{\dt}$ to the payoffs $Y$ and $Y^{\dt}$.

\begin{lemma}[Deterministic bound for $Y^{\dt}$]
	\label{lem:fk-Yd-deterministic-bound}
	Assume that $f$ and $g$ are bounded and that $V \ge 0$ on $\overline{\Om}$. Then there exist constants $C_{f}, C_{g} \ge 0$ such that respectively for all $x \in \overline{\Om}$, $z \in \pOm$, we have $\abs{f(x)} \le C_{f}$,	$\abs{g(z)} \le C_{g}$, and
	\[
	\abs{Y^{\dt}} \le C_{f} \tau_{\dt} + C_{g} \ \text{almost surely}.
	\]
\end{lemma}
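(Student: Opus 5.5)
The bound follows directly from the definition \eqref{eq:fk-Y-delta} of $Y^{\dt}$ by the triangle inequality, using only the sign condition $V\ge 0$ and the boundedness of $f$ and $g$. First set $C_f:=\sup_{\overline{\Om}}\abs{f}$ and $C_g:=\sup_{\pOm}\abs{g}$, which are finite by assumption.

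One caveat concerns the terminal term. The Euler iterate $X^{\dt}_{N_{\dt}}$ generally lies outside $\overline{\Om}$ rather than on $\pOm$. I would read $g(X^{\dt}_{N_{\dt}})$ either through a bounded extension of $g$ to $\R^d$ (so $C_g$ is the sup of that extension) or through the projection onto $\pOm$ that is implicit in \cref{alg:fk-mc}. Either way the terminal value is bounded by $C_g$. This is the only point requiring care; the rest is bookkeeping.

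Fix an outcome with $N_{\dt}<\infty$. By Proposition~\ref{prop:fk-exp-moments-taud}, $\tau_{\dt}<\infty$ almost surely, so this holds for almost every outcome. Since $V\ge0$, every discount factor satisfies
\[
\exp\Bigl(-\dt\sum_{j=0}^{k-1}V(X^{\dt}_j)\Bigr)\in(0,1]
\]
for every $k\le N_{\dt}$. The triangle inequality then gives
\[
\abs{Y^{\dt}}\le\sum_{k=0}^{N_{\dt}-1}\abs{f(X^{\dt}_k)}\,\dt+\abs{g(X^{\dt}_{N_{\dt}})}\le C_f N_{\dt}\dt+C_g=C_f\tau_{\dt}+C_g .
\]
For the $f$-terms I use that $X^{\dt}_k\in\Om$ for every $k<N_{\dt}$, by the definition of $N_{\dt}$, so $\abs{f(X^{\dt}_k)}\le C_f$. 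The last equality is the identity $\tau_{\dt}=N_{\dt}\dt$.

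The same argument gives the analogous bound $\abs{Y^{\dt,T_{\max}}}\le C_f\,\kappa^{T_{\max}}_{\dt}\dt+C_g\le C_f\tau_{\dt}+C_g$ for the truncated payoff. Here the terminal point may be an interior point when the horizon is hit, where $g$ is again understood via its bounded extension. This truncated version is what I would use downstream, combined with Proposition~\ref{prop:fk-exp-moments-taud}, to get sub-exponential tails for the labels.
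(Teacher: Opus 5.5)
Your argument is correct and is the same as the paper's proof: you bound every discount factor by one using $V\ge 0$, bound the running and terminal terms by $C_f$ and $C_g$, and use $N_{\dt}\dt=\tau_{\dt}$. Your caveat that $g(X^{\dt}_{N_{\dt}})$ must be read through a bounded extension of $g$ is legitimate, since the Euler iterate generally overshoots $\pOm$; the paper passes over this silently, and it does not change the bound.
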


\begin{proof}
	By boundedness of $f$ and $g$ there exist $C_{f}, C_{g} \ge 0$ with the stated properties. Since $V \ge 0$, all exponential factors in~\eqref{eq:fk-Y-delta} are bounded by one. Thus
	\begin{align*}
		\abs{Y^{\dt}}
		&\le
		\sum_{k=0}^{N_{\dt}-1} \abs{f(X_{k}^{\dt})}\,\dt + \abs{g(X_{N_{\dt}}^{\dt})} \\
		&\le
		C_{f} N_{\dt} \dt + C_{g}
		= C_{f} \tau_{\dt} + C_{g}.
	\end{align*}
\end{proof}

A similar argument yields a deterministic bound for $Y$ in terms of $\tau$:

\begin{lemma}[Deterministic bound for $Y$]
	\label{lem:fk-Y-deterministic-bound}
	Under the same assumptions as in Lemma~\ref{lem:fk-Yd-deterministic-bound}, one has
	\[
	\abs{Y} \le C_{f} \tau + C_{g} \ \text{almost surely}.
	\]
\end{lemma}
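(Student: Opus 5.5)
The bound follows the same pathwise argument as Lemma~\ref{lem:fk-Yd-deterministic-bound}, now applied to the continuous payoff \eqref{eq:fk-Y-continuous}. We fix a sample path on the almost-sure event $\{\tau<\infty\}$ and bound the two terms of $Y$ separately.

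\emph{Discount factors.} Since $V\ge 0$ on $\overline{\Om}$ and $X_s\in\overline{\Om}$ for $s\le\tau$, we have $\int_0^t V(X_s)\,ds\ge 0$ for every $t\in[0,\tau]$. Hence each factor $\exp\bigl(-\int_0^t V(X_s)\,ds\bigr)$ lies in $(0,1]$.

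\emph{Running term.} Because $X_t\in\overline{\Om}$ for $t<\tau$, we have $\abs{f(X_t)}\le C_f$. Therefore
\[
\Bigl|\int_0^\tau e^{-\int_0^t V(X_s)\,ds} f(X_t)\,dt\Bigr| \le \int_0^\tau C_f\,dt = C_f\tau .
\]

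\emph{Terminal term.} By continuity of paths and closedness of $\overline{\Om}$, on $\{\tau<\infty\}$ the exit point satisfies $X_\tau\in\pOm$. Hence $\abs{g(X_\tau)}\le C_g$, and since the discount factor is at most one, the terminal term is bounded by $C_g$ in absolute value.

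Adding the two estimates with the triangle inequality gives $\abs{Y}\le C_f\tau+C_g$ almost surely.

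The only point needing care is measurability and finiteness of the integrals, since $V$ and $f$ are merely bounded measurable. Both hold because $s\mapsto V(X_s)$ and $t\mapsto f(X_t)$ are progressively measurable and bounded, and $\tau<\infty$ almost surely by the standing assumption (or by Theorem~\ref{thm:fk-exponential-tau}). On the null set where $\tau=\infty$ the inequality is trivial or irrelevant, so I expect no real obstacle.
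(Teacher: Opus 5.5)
Your proof is correct and is the same argument the paper intends. The paper gives no separate proof; it only says that the pathwise reasoning of Lemma~\ref{lem:fk-Yd-deterministic-bound} (discount factors at most one since $V\ge 0$, $|f|\le C_f$ along the path, $|g(X_\tau)|\le C_g$) carries over to the continuous payoff, and your remarks that $X_\tau\in\partial\Omega$ by path continuity and on measurability are correct details the paper leaves implicit.
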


Combining these bounds with Theorem~\ref{thm:fk-exponential-tau} and Proposition~\ref{prop:fk-exp-moments-taud}, we obtain exponential moments for the payoffs.

\begin{proposition}[Exponential moments for $Y$ and $Y^{\dt}$]
	\label{prop:fk-exp-moments-Y}
	Under the standing assumptions and assuming in addition that $V \ge 0$ on $\overline{\Om}$, there exist constants $\lambda_{Y} > 0$, $C_{Y} < \infty$ and $\dt_{0} > 0$ such that
	\[
	\sup_{y \in \overline{\Om}} \E^{y}\big[e^{\lambda_{Y} \abs{Y}}\big] \;\le\; C_{Y},
	\]
	and
	\[
	\sup_{y \in \overline{\Om},\,0<\dt\le\dt_{0}} \E^{y}\big[e^{\lambda_{Y} \abs{Y^{\dt}}}\big] \;\le\; C_{Y}.
	\]
\end{proposition}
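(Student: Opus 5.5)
The plan is to combine the pathwise bounds of Lemmas~\ref{lem:fk-Y-deterministic-bound} and~\ref{lem:fk-Yd-deterministic-bound} with the exponential moments of the exit times in Theorem~\ref{thm:fk-exponential-tau} and Proposition~\ref{prop:fk-exp-moments-taud}. Since $V\ge 0$, those lemmas give, almost surely,
\[
\abs{Y}\le C_f\tau + C_g, \qquad \abs{Y^{\dt}}\le C_f\tau_{\dt}+C_g.
\]
Exponentiating, for any $\lambda>0$ we get $e^{\lambda\abs{Y}}\le e^{\lambda C_g}\,e^{\lambda C_f\tau}$, and the analogous bound for $Y^{\dt}$. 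The problem therefore reduces to choosing $\lambda$ so that $\lambda C_f$ lies below the exponential-moment thresholds of $\tau$ and $\tau_{\dt}$.

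\textbf{Choice of constants.} Let $\dt_0$ be the threshold from Proposition~\ref{prop:fk-exp-moments-taud}. If $C_f=0$, then $\abs{Y},\abs{Y^{\dt}}\le C_g$ deterministically, so any $\lambda_Y>0$ works with $C_Y:=e^{\lambda_Y C_g}$. If $C_f>0$, set
\[
\lambda_Y:=\frac{\min\{\lambda_c,\lambda_d\}}{C_f}.
\]
Then $\lambda_Y C_f\le\lambda_c$ and $\lambda_Y C_f\le\lambda_d$. Because $t\mapsto e^{at}$ is monotone in $a$ for $t\ge0$, we have $\E^y[e^{\lambda_Y C_f\tau}]\le \E^y[e^{\lambda_c\tau}]\le C_c$ uniformly in $y\in\overline\Om$, and likewise $\E^y[e^{\lambda_Y C_f\tau_{\dt}}]\le C_d$ uniformly in $y$ and $0<\dt\le\dt_0$. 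Taking $C_Y:=e^{\lambda_Y C_g}\max\{C_c,C_d\}$ yields both claimed bounds with a common pair $(\lambda_Y,C_Y)$.

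\textbf{Technical points.} There is no real obstacle, since the substantive work is already contained in the exit-time results. Two details remain to check.
\begin{itemize}
\item The sup over $y\in\overline\Om$ includes boundary starting points. There $\tau=0$ and $N_{\dt}=0$ (if we adopt the convention that boundary points are "not in $\Om$"), so the bounds hold trivially. This is consistent with the uniformity already asserted in Theorem~\ref{thm:fk-exponential-tau} and Proposition~\ref{prop:fk-exp-moments-taud}.
\item Lemma~\ref{lem:fk-Yd-deterministic-bound} implicitly assumes $\tau_{\dt}<\infty$. This holds almost surely because $\tau_{\dt}$ has finite exponential moments, so the pathwise bound holds almost surely, which is all the argument needs.
\end{itemize}

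The same argument covers the truncated payoff $Y^{\dt,T_{\max}}$. There one has $\abs{Y^{\dt,T_{\max}}}\le C_f\min\{\tau_{\dt},T_{\max}\}+C_g\le C_f\tau_{\dt}+C_g$, provided the terminal term at a truncated (interior) point is bounded by $\sup\abs{h}$, i.e.\ $h$ is bounded on $\overline\Om$. This is the form needed later for Proposition~\ref{prop:fk-subexp}.
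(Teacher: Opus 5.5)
Your proposal is correct and follows the paper's proof. Both combine the almost-sure bounds $\abs{Y}\le C_f\tau+C_g$ and $\abs{Y^{\dt}}\le C_f\tau_{\dt}+C_g$ with the uniform exponential moments of $\tau$ and $\tau_{\dt}$, taking $\lambda_Y C_f\le\min\{\lambda_c,\lambda_d\}$ and $C_Y=e^{\lambda_Y C_g}\max\{C_c,C_d\}$. Your separate handling of $C_f=0$, of boundary starting points, and of the truncated payoff are harmless refinements that the paper leaves implicit.
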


\begin{proof}
	By Lemma~\ref{lem:fk-Yd-deterministic-bound},
	\[
	e^{\lambda \abs{Y^{\dt}}}
	\le e^{\lambda C_{g}} e^{\lambda C_{f} \tau_{\dt}},
	\]
	and by Lemma~\ref{lem:fk-Y-deterministic-bound},
	\[
	e^{\lambda \abs{Y}}
	\le e^{\lambda C_{g}} e^{\lambda C_{f} \tau}.
	\]
	Using Theorem~\ref{thm:fk-exponential-tau} and Proposition~\ref{prop:fk-exp-moments-taud}, choose $\lambda_{Y} > 0$ small enough that $\lambda_{Y} C_{f} < \lambda_{c}$ and $\lambda_{Y} C_{f} < \lambda_{d}$. Then the stated bounds follow immediately, with
	\[
	C_{Y} := e^{\lambda_{Y} C_{g}} \max\{C_{c},C_{d}\}.
	\]
\end{proof}

\begin{remark}[Sub-exponential character of $Y$ and $Y^{\dt}$]
	\label{rem:fk-subexp}
	A real-valued random variable $Z$ is sub-exponential if there exists $\alpha > 0$ with $\E[e^{\lambda Z}]<\infty$ for all $|\lambda|<\alpha$. Proposition~\ref{prop:fk-exp-moments-Y} shows that $Y$ and $Y^{\dt}$ are sub-exponential, with sub-exponential norms bounded uniformly in $\dt$ for $\dt \le \dt_{0}$.
\end{remark}

\subsection{Concentration for the Monte Carlo estimator at fixed \texorpdfstring{$\dt$}{Δt}}
\label{sec:fk-concentration}

We now derive non-asymptotic tail bounds for the Monte Carlo estimator $\uMC(x)$ at a fixed time step $\dt \le \dt_{0}$. The key tool is a Bernstein-type inequality for sub-exponential random variables.

\subsubsection{A Bernstein inequality for sub-exponential variables}

\begin{theorem}[Bernstein inequality for sub-exponential variables]
	\label{thm:fk-Bernstein}
	Let $Z_{1},\dots,Z_{N}$ be i.i.d.\ real-valued random variables with $\E[Z_{i}]=0$ and suppose there exist constants $\nu^{2}, b > 0$ satisfying
	\[
	\E\big[e^{\lambda Z_{i}}\big]
	\le \exp\Bigl(\frac{\nu^{2} \lambda^{2}}{2}\Bigr)
	\ \text{ for all } \abs{\lambda} < 1/b.
	\]
	Then there exists a universal constant $c>0$ such that, for all $t>0$,
	\[
	\Pr\Big(
	\Big\lvert \frac{1}{N}\sum_{i=1}^{N} Z_{i} \Big\rvert > t
	\Big)
	\le
	2 \exp\Big(
	- c N \min\Big(
	\frac{t^{2}}{\nu^{2}},
	\frac{t}{b}
	\Big)
	\Big).
	\]
\end{theorem}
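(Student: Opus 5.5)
We prove the Bernstein bound by the Chernoff method applied to the sum $S_N := \sum_{i=1}^N Z_i$, handling the upper tail $\Pr(S_N/N > t)$ and then obtaining the lower tail by applying the same argument to $-Z_i$, which satisfies the identical moment generating function bound since the hypothesis is symmetric in $\lambda$. A union bound over the two tails produces the factor $2$.

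Fix $t>0$ and $\lambda\in(0,1/b)$. By Markov's inequality applied to $e^{\lambda S_N}$ and independence,
\[
\Pr\Bigl(\tfrac1N S_N > t\Bigr) \le e^{-\lambda N t}\prod_{i=1}^N \E\bigl[e^{\lambda Z_i}\bigr] \le \exp\Bigl(-N\bigl(\lambda t - \tfrac{\nu^2\lambda^2}{2}\bigr)\Bigr).
\]
The unconstrained minimizer of the exponent is $\lambda^\ast = t/\nu^2$. We split into two cases according to whether this minimizer is admissible.

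\emph{Case $t/\nu^2 < 1/b$.} Take $\lambda=\lambda^\ast$. The exponent becomes $-N t^2/(2\nu^2)$.

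\emph{Case $t/\nu^2 \ge 1/b$.} Take $\lambda$ just below $1/b$, say $\lambda = 1/(2b)$ to keep it strictly inside the admissible range. Then $\nu^2\lambda \le \nu^2/(2b) \le t/2$, so $\lambda t - \nu^2\lambda^2/2 \ge \lambda t - \lambda t/4 = 3t/(8b)$. The exponent is therefore at most $-3Nt/(8b)$.

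In both cases the exponent is bounded by $-cN\min(t^2/\nu^2, t/b)$ with $c=1/4$, say. In the first case, $t^2/(2\nu^2) \ge \tfrac14\min(\cdot)$. In the second case, $3t/(8b)\ge \tfrac14 t/b \ge \tfrac14\min(\cdot)$.

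The only delicate point is the strict inequality $|\lambda|<1/b$ in the hypothesis. This is why we use $\lambda=1/(2b)$ rather than $1/b$ in the second case, and, in the first case, we note that $\lambda^\ast<1/b$ holds strictly. Everything else is routine. Combining the two tails yields the stated inequality with a universal $c$, which we may take to be $1/4$.
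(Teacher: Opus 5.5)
Your proof is correct. Chernoff with $\lambda=t/\nu^2$ when this lies in $(0,1/b)$ and $\lambda=1/(2b)$ otherwise, the same bound for $-Z_i$, and a union bound give the claim with $c=1/4$. This is the standard Chernoff-plus-case-split argument behind the textbook Bernstein inequality, which the paper only cites rather than proves.

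Your direct derivation actually matches the statement more closely than the citation does. The cited Vershynin result is phrased through the single scale $\|Z_i\|_{\psi_1}$, which can be much larger than $\nu$. Your argument yields exactly the two-parameter $(\nu,b)$ form, with an explicit constant.
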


\begin{proof}
	See, for example, \citep[Theorem~2.9.1]{vershynin2018high}.
\end{proof}

\subsubsection{Application to the Euler payoff}

We apply Theorem~\ref{thm:fk-Bernstein} to the centered Euler payoff $Y^{\dt} - u_{\dt}(x)$.

\begin{lemma}[Sub-exponential parameters for $Y^{\dt} - u_{\dt}(x)$]
	\label{lem:fk-subexp-Yd}
	Under the standing assumptions, there exist constants $\nu > 0$, $b > 0$ and $\dt_{0} > 0$ such that, for all $\dt \le \dt_{0}$, the centered random variable $Z^{\dt} := Y^{\dt} - u_{\dt}(x)$ satisfies
	\[
	\E\big[e^{\lambda Z^{\dt}}\big]
	\le \exp\Bigl(\frac{\nu^{2} \lambda^{2}}{2}\Bigr)
	\ \text{for all } \abs{\lambda} < 1/b.
	\]
	The constants $\nu$ and $b$ depend only on the model data and can be chosen independent of $\dt \le \dt_{0}$.
\end{lemma}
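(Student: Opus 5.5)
The plan is to derive the sub-Gaussian-near-zero MGF bound for the centered variable from the uniform exponential moment in Proposition~\ref{prop:fk-exp-moments-Y}, via a Taylor expansion of the moment generating function. Fix $\dt\le\dt_0$ with $\dt_0$ as in Proposition~\ref{prop:fk-exp-moments-Y}, and write $Z^{\dt}=Y^{\dt}-u_{\dt}(x)$.

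\emph{Step 1: uniform control of $Z^{\dt}$.} First, $\abs{u_{\dt}(x)}\le \E^x\abs{Y^{\dt}}\le \lambda_Y^{-1}\log C_Y$ by Jensen's inequality. Hence $\abs{Z^{\dt}}\le \abs{Y^{\dt}}+\abs{u_{\dt}(x)}$, and for $\lambda_0:=\lambda_Y/2$ we get
\[
\E\big[e^{2\lambda_0\abs{Z^{\dt}}}\big]\le e^{\lambda_Y\abs{u_{\dt}(x)}}\,\E\big[e^{\lambda_Y\abs{Y^{\dt}}}\big]\le C_Y^2=:K .
\]
This bound is uniform in $x$ and in $\dt\le\dt_0$.

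\emph{Step 2: Taylor bound on the MGF.} Since $\E[Z^{\dt}]=0$, we have $e^{z}\le 1+z+\tfrac{z^2}{2}e^{\abs{z}}$, so
\[
\E\big[e^{\lambda Z^{\dt}}\big]\le 1+\tfrac{\lambda^2}{2}\,\E\big[(Z^{\dt})^2 e^{\abs{\lambda}\abs{Z^{\dt}}}\big].
\]
Restrict to $\abs{\lambda}\le\lambda_0/2$. Using $z^2\le (2/\lambda_0)^2 e^{\lambda_0\abs{z}/2}\cdot$const (equivalently $z^2 e^{-\lambda_0 z}\le 4/(e\lambda_0)^2$), we obtain
\[
\E\big[(Z^{\dt})^2e^{\abs{\lambda}\abs{Z^{\dt}}}\big]\le \frac{c}{\lambda_0^2}\,\E\big[e^{2\lambda_0\abs{Z^{\dt}}}\big]\le \frac{cK}{\lambda_0^2}=:\nu^2 .
\]
Combining with $1+a\le e^a$ gives $\E[e^{\lambda Z^{\dt}}]\le \exp(\nu^2\lambda^2/2)$ for $\abs{\lambda}<1/b$, where $b:=2/\lambda_0$.

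\emph{Step 3: uniformity of the constants.} The constants $\nu$ and $b$ depend only on $\lambda_Y$ and $C_Y$, and hence only on the model data. They are therefore independent of $\dt\le\dt_0$ and of $x$.

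The only delicate point is Step 1, namely ensuring that the centering does not destroy the uniformity. This is handled by the Jensen bound on $u_{\dt}(x)$, which rests entirely on the uniform-in-$\dt$ exponential moments of $\tau_{\dt}$ from Proposition~\ref{prop:fk-exp-moments-taud}. That is the genuine analytic input, and it is already available from the earlier results.
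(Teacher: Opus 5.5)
Your proof is correct and follows the same route as the paper. The paper starts from the uniform exponential moment in Proposition~\ref{prop:fk-exp-moments-Y} and then only cites the standard argument of Vershynin (Section~2.7); your Taylor-expansion bound on the moment generating function writes that argument out. Your Jensen bound on $\abs{u_{\dt}(x)}$ also makes explicit why centering does not break uniformity in $\dt$, a point the paper's ``constant shift'' remark skips over.
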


\begin{proof}
	Proposition~\ref{prop:fk-exp-moments-Y} guarantees that there exist $\lambda_{Y}>0$ and $C_{Y}<\infty$ such that
	\[
	\sup_{0<\dt\le\dt_{0}} \E\big[e^{\lambda_{Y} \abs{Y^{\dt}}}\big] \le C_{Y}.
	\]
	This implies that the moment generating function $\E[e^{\lambda Y^{\dt}}]$ is finite on $(-\lambda_{Y},\lambda_{Y})$ uniformly in $\dt\le\dt_{0}$. Since $Z^{\dt} = Y^{\dt} - u_{\dt}(x)$ differs from $Y^{\dt}$ by a constant shift, the same holds for $Z^{\dt}$.
	
	A standard argument (see \citet[Section~2.7]{vershynin2018high}) shows that there exist parameters $(\nu,b)$ depending only on the exponential-moment bounds of $Y^{\dt}$ such that
	\[
	\E\big[e^{\lambda Z^{\dt}}\big]
	\le \exp\Bigl(\frac{\nu^{2} \lambda^{2}}{2}\Bigr)
	\ \text{for all } \abs{\lambda} < 1/b,
	\]
	uniformly over $\dt\le\dt_{0}$. We refer the reader to said reference for details.
\end{proof}

Combining Lemma~\ref{lem:fk-subexp-Yd} with Theorem~\ref{thm:fk-Bernstein} yields the following concentration inequality.

\begin{proposition}[Concentration for the Monte Carlo error at fixed $\dt$]
	\label{prop:fk-concentration-fixed-dt}
	Under the standing assumptions, there exist constants $c>0$, $\nu>0$, $b>0$ and $\dt_{0}>0$ such that for all $\dt \le \dt_{0}$, all $\Nmc \in \N$ and all $t>0$,
	\[
	\Pr\Big(
	\abs{\uMC(x) - u_{\dt}(x)} > t
	\Big)
	\le
	2 \exp\Big(
	- c \Nmc \min\Big(
	\frac{t^{2}}{\nu^{2}},
	\frac{t}{b}
	\Big)
	\Big).
	\]
\end{proposition}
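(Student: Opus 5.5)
The plan is a direct combination of Lemma~\ref{lem:fk-subexp-Yd} with the Bernstein inequality of Theorem~\ref{thm:fk-Bernstein}.

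Fix $x\in\Om$ and $\dt\le\dt_0$, with $\dt_0$ the threshold supplied by Lemma~\ref{lem:fk-subexp-Yd}; this threshold is itself inherited from Proposition~\ref{prop:fk-exp-moments-Y} and Proposition~\ref{prop:fk-exp-moments-taud}. By the definition \eqref{eq:fk-MC-estimator}, the Monte Carlo error is an empirical mean of centered i.i.d.\ variables:
\[
\uMC(x)-u_{\dt}(x)=\frac{1}{\Nmc}\sum_{i=1}^{\Nmc} Z_i,\qquad Z_i:=Y^{\dt,(i)}-u_{\dt}(x).
\]
Each $Z_i$ has the law of $Z^{\dt}$. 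In particular $\E[Z_i]=0$, because $u_{\dt}(x)=\E^x[Y^{\dt}]$ is finite by Proposition~\ref{prop:fk-exp-moments-Y}.

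Next, I invoke Lemma~\ref{lem:fk-subexp-Yd} to get parameters $\nu,b>0$, independent of $\dt\le\dt_0$ and of $x$, such that $\E[e^{\lambda Z_i}]\le e^{\nu^2\lambda^2/2}$ for all $|\lambda|<1/b$. This is exactly the hypothesis of Theorem~\ref{thm:fk-Bernstein} with $N=\Nmc$. Applying that theorem gives
\[
\Pr\big(|\uMC(x)-u_{\dt}(x)|>t\big)\le 2\exp\big(-c\Nmc\min(t^2/\nu^2,t/b)\big)
\]
for every $t>0$, with $c$ universal.

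The only point requiring care is uniformity: the constants must not depend on $\dt$, $x$, or $\Nmc$. The constant $c$ is universal. The constants $\nu$ and $b$ depend only on the uniform exponential-moment bound $\sup_{y,\dt\le\dt_0}\E^y[e^{\lambda_Y|Y^{\dt}|}]\le C_Y$. That bound is also uniform in the starting point, so the same constants serve every $x$.

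I do not expect a genuine obstacle. The one subtle step is already encapsulated in Lemma~\ref{lem:fk-subexp-Yd}: converting a uniform bound on $\E e^{\lambda_Y|Y^{\dt}|}$ into a Bernstein-type MGF bound for the centered variable. If I wanted it self-contained, I would first bound the centering constant by $|u_{\dt}(x)|\le C_Y/\lambda_Y$, using Jensen together with $e^{s}\ge s$. I would then control moments via $\E|Z|^k\le k!\,(C/\lambda_Y)^k$, and finally sum the Taylor series of $\E e^{\lambda Z}$, whose linear term vanishes, for $|\lambda|\le\lambda_Y/(2C)$.
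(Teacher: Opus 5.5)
Your proposal is correct and follows the paper's proof: you write the Monte Carlo error as an empirical mean of the i.i.d.\ centered variables $Z_i = Y^{\dt,(i)} - u_{\dt}(x)$, take the $\dt$-uniform sub-exponential MGF parameters $(\nu,b)$ from Lemma~\ref{lem:fk-subexp-Yd}, and apply the Bernstein inequality of Theorem~\ref{thm:fk-Bernstein} with $N=\Nmc$. Your optional self-contained route from the uniform bound on $\E e^{\lambda_Y|Y^{\dt}|}$ to the MGF bound (control of the centering constant, factorial moment bounds, Taylor summation) is also sound. It makes explicit the step that the paper delegates to Vershynin's Section~2.7.
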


\begin{proof}
	For fixed $x$, $\dt$ and $\Nmc$, define $Z^{\dt}_{i} := Y^{\dt,(i)} - u_{\dt}(x)$ for $i=1,\dots,\Nmc$. These variables are i.i.d.\ with zero mean, and by Lemma~\ref{lem:fk-subexp-Yd} they satisfy the moment generating function condition of Theorem~\ref{thm:fk-Bernstein} with parameters $(\nu,b)$ independent of $\dt\le\dt_{0}$. Since
	\[
	\uMC(x) - u_{\dt}(x)
	= \frac{1}{\Nmc}\sum_{i=1}^{\Nmc} Z^{\dt}_{i},
	\]
	Theorem~\ref{thm:fk-Bernstein} yields the stated inequality.
\end{proof}

\begin{remark}[sub-Gaussian vs exponential regimes]
	\label{rem:fk-regimes}
	The bound in Proposition~\ref{prop:fk-concentration-fixed-dt} has two regimes. For deviations $0 < t \le \nu^{2}/b$, the quadratic term dominates and the tails behave essentially like $\exp(-c \Nmc t^{2} / \nu^{2})$. For larger deviations, the linear term dominates and the decay is of order $\exp(-c \Nmc t / b)$. In both cases the constants are independent of $\Nmc$ and of $\dt$ for $\dt \le \dt_{0}$.
\end{remark}

\subsection{Discretization bias}
\label{sec:fk-bias}

We next bound the bias $u_{\dt}(x) - \uTrue(x)$ of the time-discretized FK value.

\begin{theorem}[Weak error for killed Euler schemes]
	\label{thm:fk-Gobet}
	Assume that $\Om \subset \R^{d}$ is a bounded $C^{2}$ domain with non-characteristic boundary, that the coefficients $b,\sigma$ in \eqref{eq:fk-sde} are $C_{b}^{4}$ with bounded derivatives up to order $4$, and that $f,g,V$ are bounded with bounded derivatives up to the necessary order. Then there exist constants $C_{\mathrm{bias}}<\infty$ and $\dt_{0}>0$ such that, for all $0<\dt\le\dt_{0}$,
	\[
	\abs{u_{\dt}(x) - \uTrue(x)} \le C_{\mathrm{bias}} \,\sqrt{\dt}.
	\]
\end{theorem}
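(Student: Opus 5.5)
The bound is essentially Gobet's weak-error theorem for the killed Euler scheme, adapted to the functional $Y^{\dt}$. I would not reprove it from scratch. The plan is to reduce it to the classical analysis of \citet{gobet2016monte} (and the earlier Gobet--Menozzi results for stopped diffusions), checking that the running cost and the potential $V$ fit that framework. Standard elliptic Schauder theory on a $C^{2}$ (strictly, $C^{2,\alpha}$ or smoother) domain, with smooth coefficients, gives $\uTrue\in C^{2}(\overline{\Om})\cap C^{4}_{\mathrm{loc}}(\Om)$ with controlled derivatives, and $\uTrue$ solves $\cL\uTrue - V\uTrue + f=0$ in $\Om$ with $\uTrue=g$ on $\pOm$. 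We extend $\uTrue$ to a $C^{2}_b$ function on $\R^d$, so that it can be evaluated at the overshoot point $X^{\dt}_{N_{\dt}}\notin\Om$. We also extend $g$ consistently, so that $\uTrue = g$ plus an error controlled by the distance to $\pOm$.

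The core step is a telescoping identity along the Euler chain. Write $E_k := \exp(-\dt\sum_{j<k}V(X^{\dt}_j))$ and
\[
M_n := E_n\,\uTrue(X^{\dt}_n) + \sum_{k<n} E_k f(X^{\dt}_k)\dt .
\]
Then $Y^{\dt} - \uTrue(x) = (M_{N_{\dt}} - M_0) + E_{N_{\dt}}\bigl(g(X^{\dt}_{N_{\dt}})-\uTrue(X^{\dt}_{N_{\dt}})\bigr)$. We take expectations, using optional stopping, which is justified by the exponential moments of $\tau_{\dt}$ in Proposition~\ref{prop:fk-exp-moments-taud}. Each increment then equals $E_k$ times the one-step local error
\[
\E\bigl[e^{-V\dt}\uTrue(X^{\dt}_{k+1})\mid X^{\dt}_k\bigr]-\uTrue(X^{\dt}_k)+f(X^{\dt}_k)\dt .
\]
A Taylor expansion together with the PDE shows this is $O(\dt^{2})$ times derivatives of $\uTrue$ up to order $4$ at points deep inside $\Om$. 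Summing over $k<N_{\dt}$ gives a contribution $O(\dt)\,\E[\tau_{\dt}]=O(\dt)$, by Theorem~\ref{thm:fk-BGG}, from steps starting at distance $\gtrsim\sqrt{\dt}$ from $\pOm$.

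The main obstacle is the boundary layer: steps taken within $O(\sqrt{\dt})$ of $\pOm$, and the overshoot term. There the fourth derivatives of $\uTrue$ may blow up, and the chain can jump outside $\Om$ before being detected, so the scheme misses excursions that the continuous process would make. Near the boundary I would use only the $C^{2}$ bound. This gives a local error $O(\dt)$ per step, with a weight given by the expected number of steps spent in the layer $\{\mathrm{dist}(\cdot,\pOm)\le C\sqrt{\dt}\}$. A Green-function or occupation-time estimate, using uniform ellipticity and the non-characteristic boundary, bounds this expected number of steps by $O(\dt^{-1/2})$, for a total of $O(\sqrt{\dt})$. The overshoot is $\mathrm{dist}(X^{\dt}_{N_{\dt}},\pOm)=O_{L^{1}}(\sqrt{\dt})$ by Gaussian increment bounds. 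Since $g-\uTrue$ is Lipschitz and vanishes on $\pOm$, the terminal mismatch is also $O(\sqrt{\dt})$. Alternatively, one could bound the boundary layer through $\E|\tau_{\dt}-\tau|\le C\sqrt{\dt}$ from Theorem~\ref{thm:fk-BGG} combined with a strong-error coupling, but I expect the occupation-time route to be cleaner.

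Collecting the interior $O(\dt)$ and boundary $O(\sqrt{\dt})$ terms yields $\abs{u_{\dt}(x)-\uTrue(x)}\le C_{\mathrm{bias}}\sqrt{\dt}$. The constants depend only on the data and are uniform in $x$ and in $\dt\le\dt_0$.
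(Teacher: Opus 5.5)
At the top level your plan matches the paper's proof. That proof consists only of citing \citet[Theorem~2.4]{gobet2000weak} and \citet[Theorem~17]{gobet2010stopped}, asserting that the stated hypotheses are stronger than theirs, and asserting that $Y^{\dt}$ matches their functional. What you add is a sketch of the mechanism behind those results: the telescoping identity for $M_n$ with $\uTrue$ extended beyond $\overline{\Om}$, optional stopping via the exponential moments of $\tau_{\dt}$, a Taylor bound on the one-step defect, and separate control of the boundary layer and the terminal term. The citation is shorter, but it leaves unchecked exactly what your route makes visible:
\begin{itemize}
\item the discount $V$ and the running cost are harmless inside the telescoping sum;
\item $g$ must be defined at $X^{\dt}_{N_{\dt}}\notin\overline{\Om}$, whereas the appendix defines $g$ only on $\pOm$;
\item one needs something like $\uTrue\in C^{2,\alpha}(\overline{\Om})$, which a merely $C^2$ boundary does not guarantee.
\end{itemize}
The price of your route is that it rests on occupation-time and overshoot lemmas for the Euler chain, which are the technical core of the Gobet--Menozzi work.

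There are three places to tighten.
\begin{enumerate}
\item \textbf{Where the $\sqrt{\dt}$ comes from.} If $\uTrue$ extends to $C^4_b(\R^d)$, the one-step defect is $O(\dt^2)$ at every $X^{\dt}_k\in\Om$, including the boundary layer, because the expansion uses the PDE only at the starting point. The generically sharp $\sqrt{\dt}$ then comes entirely from the terminal mismatch $E_{N_{\dt}}(g-\uTrue)(X^{\dt}_{N_{\dt}})$, and your layer estimate is needed only under lower regularity.
\item \textbf{The interior step under lower regularity.} Here your interior claim is too quick. For $\uTrue\in C^{2,\alpha}(\overline{\Om})$, interior Schauder estimates only give $\abs{D^4\uTrue(y)}\lesssim \mathrm{dist}(y,\pOm)^{\alpha-2}$, so the defect is not uniformly $O(\dt^2)$ on $\{\mathrm{dist}\gtrsim\sqrt{\dt}\}$. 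Summing over dyadic shells $\{\mathrm{dist}\sim r\}$, $r\ge\sqrt{\dt}$, with the same $O(r)$ occupation-time bound for $\{\mathrm{dist}\le r\}$ gives $\sum_r \dt\,r^{\alpha-1}=O(\dt^{(1+\alpha)/2})$ for $0<\alpha<1$. The conclusion survives.
\item \textbf{The overshoot bound.} This is not a fixed-$k$ Gaussian estimate, because the exit step is selected by the stopping rule. Bounding it by $\sqrt{\dt}$ times the expected number of steps near $\pOm$ would require the sharp $O(1)$ count, not your $O(\dt^{-1/2})$ one. Instead, condition on crossing. A Gaussian's mean residual life is decreasing, so given that the step crosses a tangent half-space, the overshoot is $O(\sqrt{\dt})$ uniformly in the starting point. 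Sum this against $\Pr(N_{\dt}=k+1)$ and treat curvature locally, or cite the Gobet--Menozzi overshoot moment bounds directly.
\end{enumerate}
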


\begin{proof}
	This is a specialization of the weak error estimates for killed Euler schemes in \citep[Theorem 2.4]{gobet2000weak} and \citep[Theorem 17]{gobet2010stopped}. Our assumptions on the regularity of $b,\sigma,f,g,V$ and on the geometry of $\Om$ are stronger than those in the cited papers, so their hypotheses are satisfied. The functional $Y^{\dt}$ in~\eqref{eq:fk-Y-delta} matches the discretization considered there. The stated bound follows, with a constant $C_{\mathrm{bias}}$ independent of $\dt$ for $\dt$ small enough.
\end{proof}

\begin{lemma}[Bias due to time truncation]
	\label{lem:fk-truncation-bias}
	Assume the standing assumptions of this appendix and those of Lemma~\ref{lem:fk-Yd-deterministic-bound}. Then there exist constants $\tilde{\lambda}>0$, $C_{T}<\infty$ and $\dt_{0}>0$ such that, for all $x\in\Om$, all $0<\dt\le\dt_{0}$ and all $T_{\max}>0$,
	\[
	\abs{u_{\dt}^{T_{\max}}(x) - u_{\dt}(x)} \le C_{T} e^{-\tilde{\lambda} T_{\max}}.
	\]
\end{lemma}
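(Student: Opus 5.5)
The two payoffs $Y^{\dt,T_{\max}}$ and $Y^{\dt}$ are built from the same Euler path, so I will couple them pathwise and bound their difference directly, not compare expectations. Set $N_T=\lfloor T_{\max}/\dt\rfloor$ and work with $\kappa_{\dt}^{T_{\max}}=\min\{N_{\dt},N_T\}$. On the event $\{N_{\dt}\le N_T\}$ the two stopping indices coincide, so $Y^{\dt,T_{\max}}=Y^{\dt}$. Hence
\[
\abs{u_{\dt}^{T_{\max}}(x)-u_{\dt}(x)}\le \E^{x}\bigl[\abs{Y^{\dt,T_{\max}}-Y^{\dt}}\,\mathbf{1}_{\{N_{\dt}>N_T\}}\bigr].
\]

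On the complementary event $\{N_{\dt}>N_T\}$, both payoffs are controlled deterministically, as in Lemma~\ref{lem:fk-Yd-deterministic-bound}. Since $V\ge 0$, every discount factor is at most one. The truncated payoff then satisfies $\abs{Y^{\dt,T_{\max}}}\le C_f N_T\dt+\sup\abs{h}$. The terminal point $X^{\dt}_{N_T}$ may lie inside $\Om$ here, so I need the terminal functional to be bounded there too. I will take $g$ extended boundedly to $\overline{\Om}$ (or treat an interior stop as contributing zero), with bound $C_g$. The untruncated payoff satisfies $\abs{Y^{\dt}}\le C_f\tau_{\dt}+C_g$. On this event $N_T\dt\le\tau_{\dt}$, so altogether
\[
\abs{Y^{\dt,T_{\max}}-Y^{\dt}}\le 2C_f\tau_{\dt}+2C_g.
\]

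It remains to bound $\E^x[(2C_f\tau_{\dt}+2C_g)\mathbf{1}_{\{\tau_{\dt}>N_T\dt\}}]$. Note that $N_T\dt> T_{\max}-\dt\ge T_{\max}-\dt_0$. I will apply Cauchy--Schwarz, or simply use $\tau_{\dt}\le \lambda_d^{-1}e^{\lambda_d\tau_{\dt}/2}\cdot C$, and combine it with the Markov bound
\[
\Pr^x(\tau_{\dt}>s)\le C_d e^{-\lambda_d s},
\]
which comes from Proposition~\ref{prop:fk-exp-moments-taud}. This gives
\[
\E^x\bigl[(a\tau_{\dt}+b)\mathbf{1}_{\{\tau_{\dt}>s\}}\bigr]\le \bigl(a\,\E[\tau_{\dt}^2]^{1/2}+b\bigr)\,C_d^{1/2}e^{-\lambda_d s/2}.
\]
The second moment $\E[\tau_{\dt}^2]$ is uniformly bounded, either by Theorem~\ref{thm:fk-BGG} or again by the exponential moments. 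Taking $s=T_{\max}-\dt_0$ then yields the claim with $\tilde\lambda=\lambda_d/2$ and $C_T$ absorbing $e^{\lambda_d\dt_0/2}$. All constants are uniform in $x$ and in $\dt\le\dt_0$, as required.

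The only delicate point is the bookkeeping at the truncation index. One issue is the shift between $T_{\max}$ and $N_T\dt$, which costs only a constant factor. The other is making sure the terminal term is well defined and bounded when a path is stopped inside $\Om$. Apart from these, the proof is a direct consequence of the uniform exponential tail of $\tau_{\dt}$.
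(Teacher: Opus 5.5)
Your argument is correct and follows the paper's structure. You couple both payoffs on the same Euler path and note that they coincide when the discrete exit precedes truncation. You then bound the difference on the complementary event by an affine function of $\tau_{\dt}$, and conclude from the uniform exponential tail of $\tau_{\dt}$ in Proposition~\ref{prop:fk-exp-moments-taud}.

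The two routes differ in two steps. First, the paper uses that the two running sums share their first $N_T$ terms, which gives $C_f\tau_{\dt}+2C_g$ instead of your $2C_f\tau_{\dt}+2C_g$; this only changes constants. Second, the paper bounds $\E^x[\tau_{\dt}\mathbf{1}_{\{\tau_{\dt}>T_{\max}\}}]$ by a Chernoff bound plus $\int_{T_{\max}}^\infty\Pr^x(\tau_{\dt}>u)\,du$, aiming for any rate below $\lambda_d$. That formula drops a boundary term:
\[
\E^x\bigl[\tau_{\dt}\mathbf{1}_{\{\tau_{\dt}>T_{\max}\}}\bigr]=T_{\max}\Pr^x(\tau_{\dt}>T_{\max})+\int_{T_{\max}}^{\infty}\Pr^x(\tau_{\dt}>u)\,du .
\]
So the paper's constant $C_fC_d/\lambda$ is not valid as written, and its rate must be lowered slightly to absorb the factor $T_{\max}$. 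Your Cauchy--Schwarz step, with $\sup_{y,\dt}\E^y[\tau_{\dt}^2]<\infty$, avoids this and gives a clean constant. The price is the rate $\lambda_d/2$, which is all the lemma requires; H\"older with a large exponent on $\tau_{\dt}$ would push the rate toward $\lambda_d$.

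Two minor remarks. Because $N_{\dt}$ is integer-valued, $\{N_{\dt}>N_T\}=\{\tau_{\dt}>T_{\max}\}$ exactly, so you can take $s=T_{\max}$ and drop the factor $e^{\lambda_d\dt_0/2}$. Your concern about the terminal term is justified. The paper silently uses $|g|\le C_g$ at the truncation point, and also at $X^{\dt}_{N_{\dt}}$, which generally lies outside $\overline{\Om}$ rather than on $\pOm$. In effect the paper assumes the bounded extension of $g$ that you make explicit.
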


\begin{proof}
	On the event $\{\tau_{\dt} \le T_{\max}\}$ the discrete trajectory exits $\Om$ before the truncation horizon and $\kappa_{\dt}^{T_{\max}} = N_{\dt}$, hence $Y^{\dt,T_{\max}} = Y^{\dt}$. On the complement $\{\tau_{\dt} > T_{\max}\}$ the payoffs $Y^{\dt}$ and $Y^{\dt,T_{\max}}$ differ only through the contribution of the running functional $f$ after time $T_{\max}$ and through the terminal term that involves $g$. Using the representation~\eqref{eq:fk-Y-delta} and the definition~\eqref{eq:fk-Y-delta-Tmax}, together with boundedness of $f$ and $g$ and the fact that all exponential factors are at most one, we obtain
	\[
	\abs{Y^{\dt} - Y^{\dt,T_{\max}}}
	\le
	C_{f} \tau_{\dt} \mathbf{1}_{\{\tau_{\dt} > T_{\max}\}} + 2 C_{g} \mathbf{1}_{\{\tau_{\dt} > T_{\max}\}},
	\]
	with the constants $C_{f}$ and $C_{g}$ from Lemma~\ref{lem:fk-Yd-deterministic-bound}. Taking expectations and using Proposition~\ref{prop:fk-exp-moments-taud}, for any $\lambda\in(0,\lambda_{d})$,
	\[
	\Pr^{x}(\tau_{\dt} > t) \le e^{-\lambda t} \E^{x}\big[e^{\lambda \tau_{\dt}}\big] \le C_{d} e^{-\lambda t},
	\]
	and therefore
	\[
	\E^{x}\big[\tau_{\dt} \mathbf{1}_{\{\tau_{\dt} > T_{\max}\}}\big]
	= \int_{T_{\max}}^{\infty} \Pr^{x}(\tau_{\dt} > u)\,du
	\le \frac{C_{d}}{\lambda} e^{-\lambda T_{\max}},
	\]
	as well as
	\[
	\Pr^{x}(\tau_{\dt} > T_{\max}) \le C_{d} e^{-\lambda T_{\max}}.
	\]
	Combining these bounds yields
	\[
	\abs{u_{\dt}^{T_{\max}}(x) - u_{\dt}(x)}
	=
	\abs{\E^{x}\big[Y^{\dt,T_{\max}} - Y^{\dt}\big]}
	\le
	\Bigl(
	\frac{C_{f} C_{d}}{\lambda} + 2 C_{g} C_{d}
	\Bigr) e^{-\lambda T_{\max}}.
	\]
	The claim follows by setting $\tilde{\lambda}:=\lambda$ and $C_{T}:=\frac{C_{f} C_{d}}{\lambda} + 2 C_{g} C_{d}$, and by choosing $\dt_{0}$ as in Proposition~\ref{prop:fk-exp-moments-taud}.
\end{proof}

These bounds summarize the non-asymptotic behavior of the infinite horizon Monte Carlo FK estimator at a fixed point $x$. For the learning-theoretic analysis we only need that, for each fixed $x$, the FK label produced by Algorithm~1 behaves like a noisy point evaluation of $\uTrue(x)$ with sub-exponential noise whose parameters do not depend on the number of FK labels or on the Monte Carlo budget. This is precisely the content of Proposition~\ref{prop:fk-subexp} in the main text, whose proof we now provide.

\paragraph{Proof of Proposition~\ref{prop:fk-subexp}.}

Fix $x \in \Om$ and consider the FK estimator produced at this spatial point by Algorithm~1, with time step $0<\dt\le\dt_{0}$, truncation horizon $T_{\max}\in(0,\infty]$ and Monte Carlo budget $\Nmc$. For each trajectory we simulate a payoff $Y^{\dt,T_{\max}}$ as in~\eqref{eq:fk-Y-delta-Tmax}. The FK label at $x$ is the empirical average of $\Nmc$ independent copies of $Y^{\dt,T_{\max}}$, that is
\[
\widehat{u}^{T_{\max}}(x)
:=
\frac{1}{\Nmc} \sum_{i=1}^{\Nmc} Y^{\dt,T_{\max},(i)}.
\]

\emph{Step 1: sub-exponential Monte Carlo fluctuations.}  

We distinguish the cases $T_{\max}<\infty$ and $T_{\max}=\infty$.

If $T_{\max}<\infty$, then at most $N_T = \lfloor T_{\max}/\dt \rfloor$ time steps are accumulated before truncation. Boundedness of $f$ and $g$ and nonnegativity of $V$ imply (as in Lemma~\ref{lem:fk-Yd-deterministic-bound}) the deterministic bound
\[
\abs{Y^{\dt,T_{\max}}} \le C_{f} T_{\max} + C_{g}
\ \text{ almost surely,}
\]
uniformly in $x$ and $0<\dt\le\dt_{0}$. Hence, for each fixed $x$, the centered variable
\[
Z^{\dt,T_{\max}} := Y^{\dt,T_{\max}} - u_{\dt}^{T_{\max}}(x)
\]
is bounded and therefore sub-exponential, with parameters that depend only on $C_{f}$, $C_{g}$ and $T_{\max}$ and do not depend on $\Nmc$ or on the number of FK points. Standard properties of sub-exponential variables (see, e.g., \citet[Section~2.7]{vershynin2018high}) imply that empirical averages of i.i.d.\ sub-exponential variables are again sub-exponential, and the sub-exponential parameters can be chosen independently of the sample size. Thus, conditional on $X_{0}=x$,
\[
\widehat{u}^{T_{\max}}(x) - u_{\dt}^{T_{\max}}(x)
= \frac{1}{\Nmc}\sum_{i=1}^{\Nmc} Z^{\dt,T_{\max},(i)}
\]
is mean-zero and sub-exponential with parameters depending only on the problem data and on $T_{\max}$, and not on $\Nmc$.

If $T_{\max}=\infty$, then $Y^{\dt,\infty} = Y^{\dt}$ and $u_{\dt}^{\infty}(x)=u_{\dt}(x)$. Proposition~\ref{prop:fk-exp-moments-Y} yields constants $\lambda_{Y}>0$, $C_{Y}<\infty$ and $\dt_{0}>0$ such that
\[
\sup_{y \in \overline{\Om},\,0<\dt\le\dt_{0}} \E^{y}\big[e^{\lambda_{Y} \abs{Y^{\dt}}}\big] \;\le\; C_{Y}.
\]
In particular, for each fixed $x$ and $0<\dt\le\dt_{0}$, the centered variable
\[
Z^{\dt} := Y^{\dt} - u_{\dt}(x)
\]
is sub-exponential with parameters depending only on $C_{Y}$ and $\lambda_{Y}$ and independent of $\dt$, $\Nmc$ and the number of FK points, as per Lemma~\ref{lem:fk-subexp-Yd}. As in the finite-horizon case, empirical averages of i.i.d.\ copies of $Z^{\dt}$ are again sub-exponential, with parameters that can be chosen independent of $\Nmc$. Thus $\widehat{u}^{\infty}(x) - u_{\dt}(x)$ is mean-zero and sub-exponential with parameters depending only on the model data.

In both cases we may therefore write
\[
\widehat{u}^{T_{\max}}(x)
= u_{\dt}^{T_{\max}}(x) + \zeta(x),
\]
where, conditional on $X_{0}=x$, the fluctuation $\zeta(x)$ is mean-zero and sub-exponential with parameters that do not depend on $\Nmc$ or on the number of FK points (and are uniform over $0<\dt\le\dt_{0}$).

\emph{Step 2: discretization and truncation bias.}  

We now decompose the bias of the (possibly truncated) FK value. Adding and subtracting $u_{\dt}(x)$ and $\uTrue(x)$ yields
\[
\widehat{u}^{T_{\max}}(x)
= \uTrue(x)
+ \bigl(u_{\dt}(x) - \uTrue(x)\bigr)
+ \bigl(u_{\dt}^{T_{\max}}(x) - u_{\dt}(x)\bigr)
+ \zeta(x).
\]
By Theorem~\ref{thm:fk-Gobet} the discretization bias satisfies
\[
\abs{u_{\dt}(x) - \uTrue(x)} \le C_{\mathrm{bias}}\sqrt{\dt},
\ \text{ for all } 0<\dt\le\dt_{0},
\]
while Lemma~\ref{lem:fk-truncation-bias} gives
\[
\abs{u_{\dt}^{T_{\max}}(x) - u_{\dt}(x)} \le C_{T} e^{-\tilde{\lambda} T_{\max}}, 
\ \text{for all } T_{\max}>0,
\]
with the convention $e^{-\tilde{\lambda} T_{\max}}=0$ when $T_{\max}=\infty$. Defining
\[
b(x) := \bigl(u_{\dt}(x) - \uTrue(x)\bigr) + \bigl(u_{\dt}^{T_{\max}}(x) - u_{\dt}(x)\bigr),
\]
we obtain the decomposition
\[
\widehat{u}^{T_{\max}}(x)
= \uTrue(x) + b(x) + \zeta(x),
\]
with
\[
\abs{b(x)} \le C_{\mathrm{bias}}\sqrt{\dt} + C_{T} e^{-\tilde{\lambda} T_{\max}},
\]
and with $\zeta(x)$ mean-zero and sub-exponential with parameters independent of $\Nmc$ and of the number of FK points, uniformly over $0<\dt\le\dt_{0}$.

\emph{Step 3: random FK locations.}
In the FK-PINN construction we draw i.i.d.\ supervision locations $X_{i}^{\FK} \in \Om$, and at each such point we compute an independent estimator $\widehat{u}^{T_{\max}}(X_{i}^{\FK})$ using a fresh set of $\Nmc$ trajectories. Setting
\[
Y_{i}^{\FK} := \widehat{u}^{T_{\max}}(X_{i}^{\FK}), \quad
b(X_{i}^{\FK}) := b\bigl(X_{i}^{\FK}\bigr), \quad
\zeta_{i} := \zeta\bigl(X_{i}^{\FK}\bigr),
\]
the pairs $\{(X_{i}^{\FK},Y_{i}^{\FK})\}_{i=1}^{N_{\FK}}$ are i.i.d., and, conditional on $X_{i}^{\FK}=x$, each $Y_{i}^{\FK}$ admits the decomposition
\[
Y_{i}^{\FK} = \uTrue(x) + b(x) + \zeta_{i},
\]
with $\abs{b(x)} \le C_{\mathrm{bias}}\sqrt{\dt} + C_{T} e^{-\tilde{\lambda} T_{\max}}$ and with $\zeta_{i}$ mean-zero and sub-exponential with parameters that do not depend on $N_{\FK}$ or $\Nmc$. This is exactly the statement in Proposition~\ref{prop:fk-subexp}, thus we are done.
\qedhere
	\clearpage
	\section{Error Analysis for FK-PINNs}
\label{app:learning-theory}

The goal of this section is to prove \cref{thm:fk-pinn-error}. To this end, we will develop a comprehensive error analysis, based on a decomposition of the $L^2(\Om)$ error in three terms, which we will control separately through learning-theoretic arguments. 

\subsection{Preliminaries}

We begin by recalling the setting in which we will be working together with notations and technical assumptions that we will be working under for the remainder of this section.

\subsubsection{Notations}
We work on a bounded Lipschitz domain $\Om \subset [0,1]^d$ with boundary $\pOm$ and outward unit normal $\nu$. Integration with respect to Lebesgue measure on $\Om$ is written as $\int_{\Om} \cdot \, dx$, while $\int_{\pOm} \cdot \, ds$ denotes integration with respect to surface measure on $\pOm$.

For $1 \le p \le \infty$ we write $L^p(\Om)$ and $L^p(\pOm)$ for the usual Lebesgue spaces, with norms
\[
\norm{u}_{L^p(\Om)} :=
\begin{cases}
	\bigl(\int_{\Om} |u(x)|^p\,dx\bigr)^{1/p}, & 1 \le p < \infty,\\[0.2em]
	\operatorname*{ess\,sup}_{x\in\Om} |u(x)|, & p = \infty,
\end{cases}
\]
and similarly on $\pOm$. We write $C(\overline{\Om})$ for the space of continuous functions on the closure of $\Om$ with norm $\norm{u}_{C(\overline{\Om})} := \sup_{x\in\overline{\Om}} |u(x)|$.

For $s\in\N$ and $1 \le p \le \infty$, the Sobolev space $W^{s,p}(\Om)$ consists of functions in $L^p(\Om)$ whose weak derivatives up to order $s$ lie in $L^p(\Om)$, endowed with norm
\[
\norm{u}_{W^{s,p}(\Om)}
:=
\biggl( \sum_{|\alpha|\le s} \norm{D^\alpha u}_{L^p(\Om)}^p \biggr)^{1/p}
\quad\text{for } p<\infty,
\]
and with the obvious modification for $p=\infty$. We use $H^s(\Om) := W^{s,2}(\Om)$. Fractional Sobolev spaces $H^s(\Om)$ for $s\notin\N$ are defined in the standard way, for instance via real interpolation or via the Gagliardo seminorm, see \citet{evans2022partial}. We denote the associated norms by $\norm{\cdot}_{H^s(\Om)}$.

Throughout, $\ip{\cdot}{\cdot}$ denotes the $L^2(\Om)$ inner product, and $\norm{\cdot}_{L^2(\Om)}$ is abbreviated to $\norm{\cdot}_2$ when the domain is clear from the context. For random variables we write $\E[\cdot]$ for expectation and $\Pr(\cdot)$ for probabilities.




\subsubsection{Setting and Assumptions}
We recall the PDE from \cref{sec:pde-setting}. Throughout this appendix we work in the elliptic setting for clarity. All statements are formulated and proved for the second order elliptic problem in \eqref{eq:main_pde}--\eqref{eq:main_bc}. The same learning theoretic arguments extend to parabolic problems that admit a Feynman--Kac representation, once the time variable has been discretized and the resulting time discretization error is controlled separately. Since this would require additional notation and is orthogonal to the main message of the paper, we do not pursue the parabolic case in detail here.

\begin{assumption}[PDE well posedness and FK representation]
	\label{assump:pde-well-posed}
	The operator $\Lcal$ in \eqref{eq:main_pde} is uniformly elliptic with bounded measurable coefficients and satisfies the structural conditions stated in \cref{sec:pde-setting}. The data $f$ and $g$ are such that \eqref{eq:main_pde}--\eqref{eq:main_bc} admit a unique strong solution
	\[
	\uTrue \in H^2(\Om)\cap C(\overline{\Om}),
	\]
	with $\uTrue|_{\pOm} = g$. Moreover, there exists a diffusion process $(\Xt)_{t\ge 0}$ and an exit time $\tau$ such that $\uTrue$ admits a Feynman--Kac representation of the form
	\[
	\uTrue(x)
	= \E_x\Bigl[ \int_0^\tau r(\Xt)\,dt + h(X_\tau) \Bigr],
	\quad x\in\Om,
	\]
	for suitable running and terminal functionals $r$ and $h$, as described in \cref{sec:feynman-kac} and Appendix~\ref{app:feynman-kac}.
\end{assumption}

We follow the sampling scheme described in \cref{alg:fk-mc} and \cref{alg:fk-pinn-training}. Let
\[
X^{\mathrm{int}} \sim U(\Om),
\qquad
X^{\pOm} \sim U(\pOm),
\]
be interior and boundary points drawn independently from the uniform distributions on $\Om$ and $\pOm$. The corresponding population PDE and boundary risks are
\begin{align}
	\Risk_{\mathrm{PDE}}(u)
	&:= \abs{\Om}\,\E\bigl[\bigl|\Lcal u(X^{\mathrm{int}}) - f(X^{\mathrm{int}})\bigr|^2\bigr], \label{eq:pop-risk-pde}\\
	\Risk_{\pOm}(u)
	&:= \abs{\pOm}\,\E\bigl[\bigl|u(X^{\pOm}) - g(X^{\pOm})\bigr|^2\bigr]. \label{eq:pop-risk-bc}
\end{align}
In the FK term we work in a data scarce regime. We sample FK points from the same underlying geometry as the interior points, but we keep their number small. Concretely, we assume
\[
X^{\FK} \sim U(\Om),
\]
independently of $X^{\mathrm{int}}$ and $X^{\pOm}$, while the sample size $N_{\FK}$ is allowed to remain bounded or grow much more slowly than $N_{\mathrm{int}}$ and $N_{\pOm}$. At each such point we generate a Monte Carlo FK estimate $\uMC(X^{\FK})$ as in \cref{alg:fk-mc}. Proposition~\ref{prop:fk-subexp} shows that these labels admit the decomposition
\[
\uMC(X^{\FK}) = \uTrue(X^{\FK}) + b(X^{\FK}) + \zeta,
\]
where the deterministic bias $b:\Om\to\R$ satisfies
\[
\abs{b(x)} \le C_{\mathrm{bias}}\sqrt{\dt} + C_T e^{-\kappa T_{\max}}
\quad\text{for all }x\in\Om,
\]
for constants $C_{\mathrm{bias}},C_T,\kappa>0$ independent of $N_{\FK}$ and of the number of FK trajectories, and where, conditional on $X^{\FK}$, the fluctuation $\zeta$ is mean-zero and sub-exponential with parameters that do not depend on $N_{\FK}$ or on the number of trajectories. The corresponding population FK risk is
\begin{equation}
	\Risk_{\FK}(u)
	:= \E\bigl[\bigl|u(X^{\FK}) - \uTrue(X^{\FK})\bigr|^2\bigr].
	\label{eq:pop-risk-fk}
\end{equation}

For fixed nonnegative weights $\lambda_{\mathrm{PDE}}$, $\lambda_{\pOm}$ and $\lambda_{\FK}$ we define the population FK-PINN risk
\begin{equation}
	\Risk(u)
	:=
	\lambda_{\mathrm{PDE}}\Risk_{\mathrm{PDE}}(u)
	+
	\lambda_{\pOm}\Risk_{\pOm}(u)
	+
	\lambda_{\FK}\Risk_{\FK}(u).
	\label{eq:pop-risk-total}
\end{equation}
In what follows we assume that the PDE and boundary weights $\lambda_{\mathrm{PDE}},\lambda_{\pOm}$, and that the FK weight is non-negative. For convenience, we will use the  abbreviation
\[
\lambda_{\min} := \min\{\lambda_{\mathrm{PDE}},\lambda_{\pOm}\}.
\]
Note that rescaling the three weights only changes the constants in the bounds below, so this choice entails no loss of generality.
The empirical FK-PINN risk is obtained by Monte Carlo approximation of the expectations in \eqref{eq:pop-risk-pde}--\eqref{eq:pop-risk-fk}. Given i.i.d.\ samples
\[
\{x_i^{\mathrm{int}}\}_{i=1}^{N_{\mathrm{int}}} \subset \Om,
\quad
\{x_j^{\pOm}\}_{j=1}^{N_{\pOm}} \subset \pOm,
\quad
\{x_k^{\FK}\}_{k=1}^{N_{\FK}} \subset \Om,
\]
and the corresponding FK labels $\{\uMC(x_k^{\FK})\}_{k=1}^{N_{\FK}}$, we set
\begin{align}
	\Riskemp_{\mathrm{PDE}}(u)
	&:= \abs{\Om}\,
	\frac{1}{N_{\mathrm{int}}}
	\sum_{i=1}^{N_{\mathrm{int}}}
	\bigl|\Lcal u(x_i^{\mathrm{int}}) - f(x_i^{\mathrm{int}})\bigr|^2,\\
	\Riskemp_{\pOm}(u)
	&:= \abs{\pOm}\,
	\frac{1}{N_{\pOm}}
	\sum_{j=1}^{N_{\pOm}}
	\bigl|u(x_j^{\pOm}) - g(x_j^{\pOm})\bigr|^2,\\
	\Riskemp_{\FK}(u)
	&:= \frac{1}{N_{\FK}}
	\sum_{k=1}^{N_{\FK}}
	\bigl|u(x_k^{\FK}) - \uMC(x_k^{\FK})\bigr|^2,
\end{align}
and
\begin{equation}
	\Riskemp(u)
	:=
	\lambda_{\mathrm{PDE}}\Riskemp_{\mathrm{PDE}}(u)
	+
	\lambda_{\pOm}\Riskemp_{\pOm}(u)
	+
	\lambda_{\FK}\Riskemp_{\FK}(u).
	\label{eq:emp-risk-total}
\end{equation}
Up to the adaptive task weights used in \cref{eq:kendall-loss}, the empirical loss $\Riskemp(u_\theta)$ coincides with $\Rfkpinn(\theta,\varphi)$.

We will make use of the following classical PDE estimate.

\begin{lemma}[PDE stability]
	\label{lem:pde-stability}
	Under \cref{assump:pde-well-posed} there exists a constant $C_{\mathrm{stab}}>0$ such that for every $u\in H^2(\Om)\cap C(\overline{\Om})$ with $u|_{\pOm}\in L^2(\pOm)$,
	\begin{equation}
		\norm{u - \uTrue}_{L^2(\Om)}^2
		\le
		C_{\mathrm{stab}}
		\Bigl(
		\Risk_{\mathrm{PDE}}(u)
		+
		\Risk_{\pOm}(u)
		\Bigr).
		\label{eq:pde-stability}
	\end{equation}
\end{lemma}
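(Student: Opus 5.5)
The plan is to reduce to a linear PDE for the error and apply a priori estimates. Set $w := u - \uTrue$. By linearity of $\Lcal$ and \cref{assump:pde-well-posed}, $w \in H^2(\Om)\cap C(\overline{\Om})$ solves
\[
\Lcal w = \Lcal u - f =: F \quad\text{in } \Om, \qquad w = u - g =: G \quad\text{on } \pOm.
\]
The definitions \eqref{eq:pop-risk-pde}--\eqref{eq:pop-risk-bc} give $\norm{F}_{L^2(\Om)}^2 = \Risk_{\mathrm{PDE}}(u)$ and $\norm{G}_{L^2(\pOm)}^2 = \Risk_{\pOm}(u)$, since uniform expectations times the measure equal the integrals. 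So it suffices to prove the a priori bound
\[
\norm{w}_{L^2(\Om)} \le C\bigl(\norm{\Lcal w}_{L^2(\Om)} + \norm{w}_{L^2(\pOm)}\bigr)
\]
for $w\in H^2(\Om)\cap C(\overline{\Om})$. Squaring this and using $(a+b)^2\le 2a^2+2b^2$ then yields \eqref{eq:pde-stability}.

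To prove the a priori bound I split $w = w_0 + w_1$. Here $w_0\in H^1_0(\Om)$ solves $\Lcal w_0 = F$ with zero boundary data, and $w_1$ solves $\Lcal w_1 = 0$ with $w_1 = G$ on $\pOm$.

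For $w_0$, uniqueness in \cref{assump:pde-well-posed} means the Dirichlet problem has trivial kernel. By Fredholm theory for uniformly elliptic operators, the weak solution operator is bounded from $H^{-1}$ to $H^1_0$, so $\norm{w_0}_{L^2}\le C\norm{F}_{H^{-1}}\le C\norm{F}_{L^2}$.

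For $w_1$, I would use a duality (transposition) argument. Let $\varphi\in H^2\cap H^1_0$ solve the adjoint problem $\Lcal^\ast\varphi = w_1$ with $\varphi = 0$ on $\pOm$. Green's formula then gives
\[
\norm{w_1}_{L^2}^2 = \int_{\pOm} G\,(a\nabla\varphi\cdot\nu)\,ds,
\]
up to the sign convention for $\Lcal$. The right-hand side is bounded by $\norm{G}_{L^2(\pOm)}\norm{\nabla\varphi}_{L^2(\pOm)}$. The trace theorem gives $\norm{\nabla\varphi}_{L^2(\pOm)}\le C\norm{\varphi}_{H^{2}}$, and $H^2$ regularity gives $\norm{\varphi}_{H^2}\le C\norm{w_1}_{L^2}$. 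Dividing by $\norm{w_1}_{L^2}$ yields $\norm{w_1}_{L^2}\le C\norm{G}_{L^2(\pOm)}$.

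The main obstacle is that this duality step needs $H^2$ regularity of the adjoint Dirichlet problem. That regularity requires a $C^{1,1}$ (or convex) domain and Lipschitz leading coefficients, which is more than "bounded measurable coefficients on a Lipschitz domain". I would therefore fold this into the structural conditions of \cref{assump:pde-well-posed}, which already asserts strong $H^2$ solutions, and record it as an implicit hypothesis.

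If that regularity is unavailable, I see two fallbacks:
\begin{itemize}
\item Use the $L^2$-Dirichlet solvability of Dahlberg/Jerison--Kenig type on Lipschitz domains. This controls $w_1$ in $L^2$ through nontangential maximal function estimates.
\item When $\Lcal$ has no first- or zeroth-order terms, use the maximum principle directly, although this gives $L^\infty$ boundary norms rather than $L^2$.
\end{itemize}

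The constant $C_{\mathrm{stab}}$ then depends only on $\Om$, the ellipticity constants and the coefficient bounds.
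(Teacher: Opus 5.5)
Your argument is correct once the regularity you flag is granted. It differs in form from the paper's treatment, which gives no real proof: it cites the Agmon--Douglis--Nirenberg a priori estimates together with the embedding $L^2(\Om)\hookrightarrow H^{-s}(\Om)$. Presumably the intended chain is a shifted estimate $\norm{w}_{H^{1/2}(\Om)}\le C\bigl(\norm{\Lcal w}_{H^{-3/2}(\Om)}+\norm{w}_{L^2(\pOm)}\bigr)$, followed by $\norm{\Lcal w}_{H^{-3/2}}\le\norm{\Lcal w}_{L^2}$. Negative-order estimates of this kind are themselves obtained by transposition from regularity of the adjoint problem, so your duality step is a self-contained proof of exactly the piece that is needed.

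The citation buys brevity and a slightly stronger left-hand norm. Your route buys transparency: it uses only $H^2$ regularity of the adjoint Dirichlet problem, i.e.\ $C^{1,1}$ or convex $\Om$, Lipschitz $a$, and also $b\in W^{1,\infty}$. Add the last condition to your list, since $\Lcal^\ast\varphi$ contains $\operatorname{div}(b\varphi)$. ADN presupposes a smooth boundary and smooth coefficients, so the paper tacitly strengthens \cref{assump:pde-well-posed} just as you do. The Monte Carlo appendix already works with a $C^2$ domain and $C^4_b$ coefficients, so your extra hypotheses are consistent with where the lemma is used.

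Three small remarks.

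First, the splitting $w=w_0+w_1$ is unnecessary. Taking $\varphi\in H^2\cap H^1_0$ with $\Lcal^\ast\varphi=w$ gives directly
\[
\norm{w}_{L^2(\Om)}^2=\int_\Om F\varphi\,dx\pm\int_{\pOm}G\,(a^\top\nabla\varphi\cdot\nu)\,ds\le C\bigl(\norm{F}_{L^2(\Om)}+\norm{G}_{L^2(\pOm)}\bigr)\norm{w}_{L^2(\Om)}.
\]
This also spares you justifying Green's formula for $w_1$, which is a priori only in $H^1$. That step does go through, via the weak form of $\Lcal w_1=0$ tested with $\varphi$, but you should say so.

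Second, when lower-order terms are present, $C_{\mathrm{stab}}$ depends on the norm of the inverse Dirichlet operator, not only on the ellipticity constants and coefficient bounds.

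Third, neither fallback reaches the literal ``bounded measurable coefficients'' generality. Dahlberg/Jerison--Kenig-type $L^2$ solvability is not available there; elliptic measure can even be singular with respect to surface measure. The maximum principle gives the wrong boundary norm, as you note. Recording the extra regularity as an explicit hypothesis is therefore the right resolution, not a weakness of your proof.
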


Lemma~\ref{lem:pde-stability} is a direct consequence of the a priori boundary estimates of \citet{agmon1959estimates}, combined with the embedding $L^2(\Om)\hookrightarrow H^{-s}(\Om)$ for suitable $s>0$. One may also refer to \citet[Chapter~6]{evans2022partial} and \citet{jiao2021rate,lei2025solving} for formulations tailored to PINNs.

We will work with a hypothesis class $\cU$ of candidate solutions, specified in the next subsection. For any $u\in\cU$ we define
\begin{equation}
	\Errapp
	:=
	\inf_{v\in\cU} \norm{v - \uTrue}_{L^2(\Om)},
	\qquad
	\Errstat
	:=
	\sup_{v\in\cU} \bigl|\Risk(v) - \Riskemp(v)\bigr|.
	\label{eq:def-app-stat}
\end{equation}

The following decomposition is standard.
\begin{lemma}[Risk decomposition]
	\label{lem:risk-decomposition}
	Let $\uHat \in \argmin_{u\in\cU} \Riskemp(u)$ be any empirical minimizer. Then
	\begin{equation}
		\Risk(\uHat) - \Risk(\uTrue)
		\le
		2\,\Errstat
		+
		\inf_{u\in\cU} \bigl[\Risk(u) - \Risk(\uTrue)\bigr].
		\label{eq:risk-decomp}
	\end{equation}
\end{lemma}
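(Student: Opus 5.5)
The plan is the classical three-way split of the excess risk through an arbitrary comparator $u\in\cU$, using only that $\uHat$ minimizes $\Riskemp$ over $\cU$. Since $\Risk(\uTrue)$ appears on both sides of \eqref{eq:risk-decomp}, we may cancel it. It therefore suffices to show that, for every $u\in\cU$,
\[
\Risk(\uHat)\le \Risk(u)+2\,\Errstat ,
\]
and then take the infimum over $u$.

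Fix $u\in\cU$ and write the telescoping identity
\[
\Risk(\uHat)-\Risk(u)
=\bigl[\Risk(\uHat)-\Riskemp(\uHat)\bigr]
+\bigl[\Riskemp(\uHat)-\Riskemp(u)\bigr]
+\bigl[\Riskemp(u)-\Risk(u)\bigr].
\]
The three brackets are handled as follows.
\begin{enumerate}
\item The first bracket is at most $\sup_{v\in\cU}\abs{\Risk(v)-\Riskemp(v)}=\Errstat$, because $\uHat\in\cU$.
\item The second bracket is $\le 0$, by the defining property of $\uHat$ as an empirical minimizer over $\cU$.
\item The third bracket is at most $\Errstat$, because $u\in\cU$.
\end{enumerate}
Hence $\Risk(\uHat)-\Risk(u)\le 2\Errstat$ for every $u\in\cU$. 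Subtracting $\Risk(\uTrue)$ from both sides gives
\[
\Risk(\uHat)-\Risk(\uTrue)\le 2\Errstat+\Risk(u)-\Risk(\uTrue),
\]
and taking the infimum over $u\in\cU$ yields \eqref{eq:risk-decomp}.

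There is no genuine obstacle; only two conventions need care. First, $\Errstat$ may be $+\infty$, and the infimum may not be attained; in both cases the inequality still holds in the extended reals, because the bound for each fixed $u$ is all that is used. Second, the argument needs the existence of $\uHat$, which the lemma simply assumes by taking $\uHat\in\argmin$. We also note that $\Risk(\uTrue)=\lambda_{\FK}\cdot 0+0=0$ when $\uTrue$ is the exact solution, but this fact is not required here. The only subtlety worth flagging for later use is that $\Riskemp_{\FK}$ involves the noisy labels $\uMC$ while $\Risk_{\FK}$ uses $\uTrue$. This means $\Errstat$ absorbs both the FK label noise and the bias $b$, and it must be bounded accordingly in the subsequent estimates; it does not affect the deterministic decomposition itself.
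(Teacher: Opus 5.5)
Your proposal is correct and follows the paper's proof essentially verbatim. Both telescope through an arbitrary comparator $u\in\cU$, bound the first and third brackets by $\Errstat$, use empirical optimality of $\uHat$ to make the middle bracket nonpositive, and then take the infimum over $u$; the only difference is that the paper carries $\Risk(u)-\Risk(\uTrue)$ as a fourth bracket from the start, which is cosmetic.
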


\begin{proof}
	Fix any $u\in\cU$. Then
	\begin{align*}
		\Risk(\uHat) - \Risk(\uTrue)
		&=
		\bigl(\Risk(\uHat) - \Riskemp(\uHat)\bigr)
		+
		\bigl(\Riskemp(\uHat) - \Riskemp(u)\bigr)\\
		&\quad+
		\bigl(\Riskemp(u) - \Risk(u)\bigr)
		+
		\bigl(\Risk(u) - \Risk(\uTrue)\bigr).
	\end{align*}
	By optimality of $\uHat$ for $\Riskemp$ we have $\Riskemp(\uHat) \le \Riskemp(u)$, so the middle term is nonpositive. Hence
	\begin{align*}
		\Risk(\uHat) - \Risk(\uTrue)
		&\le
		\bigl|\Risk(\uHat) - \Riskemp(\uHat)\bigr|
		+
		\bigl|\Riskemp(u) - \Risk(u)\bigr|
		+
		\bigl(\Risk(u) - \Risk(\uTrue)\bigr)\\
		&\le
		2\,\Errstat
		+
		\bigl(\Risk(u) - \Risk(\uTrue)\bigr),
	\end{align*}
	where we used the definition of $\Errstat$ in \eqref{eq:def-app-stat}.
	Since this bound holds for every $u\in\cU$, taking the infimum over $u$ yields
	\[
	\Risk(\uHat) - \Risk(\uTrue)
	\le
	2\,\Errstat
	+
	\inf_{u\in\cU} \bigl[\Risk(u) - \Risk(\uTrue)\bigr]. \qedhere
	\]
\end{proof}

Later we also introduce an optimization error $\Erropt$ that measures the gap between the empirical minimizer and the iterate produced by gradient descent. These three terms will form the building blocks of the final non-asymptotic bound.

\subsubsection{Neural Network Architecture}

We consider fully connected feedforward neural networks with $\tanh$ activations in the hidden layers and a linear output. For an integer depth $L\ge 1$ and a fixed width vector $(m_\ell)_{\ell=0}^L$ with $m_0 = d$ and $m_L = 1$, a parameter vector
\[
\vtheta
:=
\bigl\{W_\ell,b_\ell\bigr\}_{\ell=1}^L,
\quad
W_\ell \in \R^{m_\ell\times m_{\ell-1}},
\ b_\ell\in\R^{m_\ell},
\]
defines a function $u_{\vtheta}:\Om\to\R$ by
\begin{align}
	z_0(x) &:= x\in\R^d,\\
	z_\ell(x) &:= \tanh\bigl(W_\ell z_{\ell-1}(x) + b_\ell\bigr),
	\quad \ell=1,\dots,L-1,\\
	u_{\vtheta}(x) &:= W_L z_{L-1}(x) + b_L \in\R.
\end{align}
We write $P := P(L,(m_\ell)_\ell)$ for the total number of free parameters in $\vtheta$ and call $L$ the depth and $m := \max_\ell m_\ell$ the width of the network.

We will later use the following upper bound on the number of free parameters: by definition,
$$P=\sum_{\ell=1}^L \bigl(m_\ell m_{\ell-1} + m_\ell\bigr),\ \ m_0 = d,\ \ m_L = 1\ \text{ and } \ m_\ell \le m \text{ for all } 0\le\ell\le L.$$ 
This implies that $P \le
C_{\mathrm{arch}}(d)\,L\,m^2$ for a constant $C_{\mathrm{arch}}(d)>0$ depending only on the input dimension $d$. In all subsequent bounds we may therefore replace $P$ by $C_{\mathrm{arch}}(d)\,L\,m^2$ if we wish to make the dependence on the architectural hyperparameters explicit.

For the learning-theoretic analysis we restrict attention to networks whose outputs and low order derivatives are uniformly bounded on $\overline{\Om}$. For fixed $L$, $(m_\ell)_\ell$ and $B>0$ we first define the parameter set
\[
\Theta_{L,m}
:=
\Bigl\{
\vtheta = (W_\ell,b_\ell)_{\ell=1}^L
:
W_\ell\in\R^{m_\ell\times m_{\ell-1}},
\ b_\ell\in\R^{m_\ell}
\Bigr\}.
\]
The associated hypothesis class is
\begin{equation}
	\label{eq:hypothesis-class}
	\cU_{L,m,B}
	:=
	\bigl\{
	u_{\vtheta} : \vtheta\in\Theta_{L,m},\ \norm{u_{\vtheta}}_{C^2(\overline{\Om})} \le B
	\bigr\},
\end{equation}
where $\norm{\cdot}_{C^2(\overline{\Om})}$ denotes the usual $C^2$ norm that controls the function and its first and second order partial derivatives on $\overline{\Om}$. The class $\cU_{L,m,B}$ is nonempty for every $B$ large enough, since constant functions belong to it.

To lighten notation we often write $\cU$ for $\cU_{L,m,B}$ and denote by $P$ the maximal number of parameters of networks in $\cU$. All constants in the sequel may depend on $(L,m,B,d)$ but never on the sample sizes $(N_{\mathrm{int}},N_{\pOm},N_{\FK})$.

\subsection{Approximation Error Bound}

We first bound the approximation error in \eqref{eq:def-app-stat} in terms of the size of the network. The desired bounds will directly follow from the approximation error bounds of $\mathrm{tanh}$ networks due to \citet{de2021approximation}.

\begin{theorem}[Approximation by two hidden layer $\tanh$ networks, simplified]
	\label{thm:deryck-simplified}
	Let $d\in\N$, let $s\in\N$ with $s\ge 3$ and let $f\in W^{s,\infty}([0,1]^d)$. There exist constants $C_{\mathrm{approx}},C_{\mathrm{width}}>0$, depending only on $(d,s,\norm{f}_{W^{s,\infty}})$, such that for every integer $N$ large enough there is a $\tanh$ network $\widehat f_N$ with two hidden layers whose total number of parameters satisfies
	\[
	P_N \le C_{\mathrm{width}} N^d
	\]
	and such that, for all multi indices $\alpha$ with $|\alpha|\le 2$,
	\begin{equation}
		\norm{D^\alpha f - D^\alpha \widehat f_N}_{L^\infty([0,1]^d)}
		\le
		C_{\mathrm{approx}} N^{-(s-|\alpha|)}.
		\label{eq:deryck-sobolev}
	\end{equation}
\end{theorem}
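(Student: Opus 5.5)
The plan is to obtain the statement as a direct specialization of the Sobolev approximation theorem for shallow $\tanh$ networks of \citet{de2021approximation} (their Theorem~5.1). That result takes $f\in W^{s,\infty}([0,1]^d)$, an integer $N\ge 5$ and a smoothness index $k\in\{0,\dots,s-1\}$. It produces a $\tanh$ network $\widehat f_N$ with two hidden layers, of widths $w_1 = 3\lceil s/2\rceil\binom{s-1+d}{d}+d(N-1)$ and $w_2 = 3\lceil (d+2)/2\rceil\binom{2d+1}{d}N^d$, such that
\[
\norm{f-\widehat f_N}_{W^{k,\infty}([0,1]^d)}
\le (1+\delta)\,\frac{C(d,s,k)\,\norm{f}_{W^{s,\infty}}\,\ln^k\!\bigl(\beta_{k}N^{d+s+2}\bigr)}{N^{s-k}}
\]
for any fixed $\delta>0$, with $\beta_k$ a constant depending on $(d,s,k,\norm{f}_{W^{s,\infty}})$.

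\textbf{Step 1 (derivative bounds).} For each $k\in\{0,1,2\}$ I would apply this theorem; this requires $k\le s-1$, which holds since $s\ge 3$. Because the construction of $\widehat f_N$ (a partition-of-unity times local Taylor polynomials, realized by $\tanh$ subnetworks) does not depend on $k$, a single network serves all three values of $k$. Every multi-index with $|\alpha|=k$ is controlled by the $W^{k,\infty}$ norm, so this gives \eqref{eq:deryck-sobolev} for $|\alpha|\le 2$.

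\textbf{Step 2 (parameter count).} I would count the parameters from the widths: $P_N = d w_1 + w_1 w_2 + w_1 + w_2 + w_2 + 1$. Since $w_2\asymp N^d$ and $w_1 \asymp N$, one must check whether the cross term $w_1w_2$ really yields $\cO(N^d)$. In their construction the first layer is sparse, with univariate $\tanh$ approximations of monomials and of grid shifts. Counting only its nonzero weights should give a total of order $N^d$ up to a dimension-dependent constant. This constant is $C_{\mathrm{width}}$.

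\textbf{Main obstacle.} The delicate points are two. First, the logarithmic factor $\ln^{|\alpha|}(\beta N^{d+s+2})$ is not genuinely absorbable into $C_{\mathrm{approx}}N^{-(s-|\alpha|)}$. I would therefore either state the bound with an extra factor $(\log N)^{|\alpha|}$, or treat it as included in the ``simplified'' form. This is consistent with Theorem~\ref{thm:fk-pinn-error}, which holds up to logarithmic factors, so downstream rates are unaffected. Second, Step 2 depends on sparse counting. If dense counting is used instead, the bound becomes $P_N\lesssim N^{d+1}$. In that case I would reparametrize by setting $\tilde N := N^{d/(d+1)}$, which changes the rate exponent accordingly and must then be propagated to the width condition in Theorem~\ref{thm:fk-pinn-error}. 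I would settle this by checking the sparsity pattern in \citet{de2021approximation} first.
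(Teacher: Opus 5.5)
Your proposal follows the paper's route exactly. The paper's only justification is Remark~\ref{rem:deryck-architecture}, which cites \citet[Theorem~5.1]{de2021approximation} and discards the explicit widths and the $\ln^{|\alpha|}$ factors. That is your second option, so for $|\alpha|\ge 1$ the log-free bound is a convention of the ``simplified'' statement rather than a consequence of the cited theorem, as you say.

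The paper does not address your counting concern, and your sparse resolution is the right one for this theorem. In that construction each second-layer neuron reads only from the at most $2d$ bump neurons of its own grid cell and from the $N$-independent block of monomial neurons, so there are $\cO(N^d)$ nonzero weights.

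Your worry about propagating this downstream is also justified. The fully connected class of Corollary~\ref{cor:approx-error} pays the dense count, which is at least of order $N^{d+1}$. If you take that branch, the reparametrization should be $N=\tilde N^{d/(d+1)}$, not $\tilde N:=N^{d/(d+1)}$.
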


\begin{remark}
	\label{rem:deryck-architecture}
	Theorem~\ref{thm:deryck-simplified} is a mild simplification of \citet[Theorem~5.1]{de2021approximation}. The original statement specifies exact widths for the two hidden layers and tracks additional logarithmic factors in $N$, while we only retain the dominant polynomial dependence on $N$, which is all that enters our rate calculations.
\end{remark}

We now apply this result to the hypothesis class $\cU$ defined in \eqref{eq:hypothesis-class} and to the true solution $\uTrue$.

\begin{corollary}[Approximation error for $\cU_{L,m,B}$]
	\label{cor:approx-error}
	Suppose that \cref{assump:pde-well-posed} holds and that $\uTrue\in W^{s,\infty}(\Om)$ for some integer $s\ge 3$. Fix any depth $L\ge 3$. Then there exist constants $C_{\mathrm{app},1},C_{\mathrm{app},2}>0$ and $m_0\in\N$ such that for every width $m\ge m_0$ the hypothesis class $\cU_{L,m,B}$ contains a network $u^\mathrm{approx}$ with
	\begin{equation}
		\norm{\uTrue - u^\mathrm{approx}}_{C^2(\overline{\Om})}
		\le
		C_{\mathrm{app},1} \,P(L,m)^{-\gamma},
		\qquad
		\gamma := \frac{s-2}{d},
		\label{eq:approx-C2}
	\end{equation}
	where $P(L,m)$ denotes the total number of free parameters of networks in $\cU_{L,m,B}$. In particular,
	\begin{equation}
		\Errapp
		=
		\inf_{u\in\cU_{L,m,B}} \norm{u-\uTrue}_{L^2(\Om)}
		\le
		C_{\mathrm{app},2} \,P(L,m)^{-\gamma}.
		\label{eq:approx-error-L2}
	\end{equation}
\end{corollary}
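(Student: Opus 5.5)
The plan is to apply Theorem~\ref{thm:deryck-simplified} to a suitable extension of $\uTrue$ to the unit cube, then embed the resulting two-hidden-layer network into depth-$L$ networks of width $m$, and finally convert the resolution parameter $N$ into the parameter count $P(L,m)$.

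\textbf{Step 1 (extension).} Since $\Om\subset[0,1]^d$ is bounded Lipschitz, a Stein-type extension operator gives $E\uTrue\in W^{s,\infty}([0,1]^d)$ with $\norm{E\uTrue}_{W^{s,\infty}}\le C\norm{\uTrue}_{W^{s,\infty}(\Om)}$. For $u\in W^{s,\infty}$ with $s\ge3$, the derivatives up to order $2$ are Lipschitz, hence continuous, so the $W^{2,\infty}$ bounds coincide with $C^2(\overline{\Om})$ bounds.

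\textbf{Step 2 (approximation).} Apply Theorem~\ref{thm:deryck-simplified} to $f=E\uTrue$. For each large $N$ this gives a two-hidden-layer $\tanh$ network $\widehat f_N$ with $P_N\le C_{\mathrm{width}}N^d$. Taking $|\alpha|\le2$ in \eqref{eq:deryck-sobolev}, the worst exponent is $s-2$, so
\[
\norm{\uTrue-\widehat f_N}_{C^2(\overline{\Om})}\le C N^{-(s-2)}.
\]
In particular $\norm{\widehat f_N}_{C^2}\le\norm{\uTrue}_{C^2}+C$, which is at most $B$ once $B$ exceeds this fixed threshold. I take $B$ large in this sense, implicitly as the statement does.

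\textbf{Step 3 (architecture embedding and counting).} This is the step I expect to be the main obstacle. For $L=3$, the class has exactly two $\tanh$ hidden layers and a linear output. Given width $m\ge m_0$, choose $N$ maximal so that both hidden widths of $\widehat f_N$ fit within $m$, padding with zero weights where necessary. The widths in \citet{de2021approximation} are of order $N^d$ up to logs. Since $P(3,m)\asymp m^2$, one must check that $P_N\gtrsim P(L,m)$, i.e.\ that $N^d\gtrsim m^2$.

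If the first hidden width is $\asymp N^d$ and $P_N$ is counted as stated, matching gives only $N^d\asymp m$. The rate in $P$ would then be $(s-2)/(2d)$, not $(s-2)/d$. I would resolve this by reading \eqref{eq:deryck-sobolev} through the parameter count $P_N\le C_{\mathrm{width}}N^d$ directly: choose $N\asymp P(L,m)^{1/d}$ and assume the architecture of $\widehat f_N$ embeds into width $m$ with $P_N\asymp P(L,m)$. If a tighter count fails, the exponent $\gamma$ would need to be halved, and I would flag this.

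For $L>3$, I would extend depth by inserting near-identity layers. For small $\varepsilon$, the map $z\mapsto\varepsilon^{-1}\tanh(\varepsilon z)$ approximates the identity in $C^2$ on bounded sets. The outer scaling $\varepsilon^{-1}$ is absorbed into the next layer's weights, so this costs an error of $O(\varepsilon^2)$ that can be made smaller than $N^{-(s-2)}$.

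\textbf{Step 4 (conclusion).} Combining the steps gives $u^{\mathrm{approx}}\in\cU_{L,m,B}$ satisfying \eqref{eq:approx-C2}. The $L^2$ bound \eqref{eq:approx-error-L2} follows from
\[
\norm{\cdot}_{L^2(\Om)}\le|\Om|^{1/2}\norm{\cdot}_{C^0}\le\norm{\cdot}_{C^2},
\]
where $|\Om|\le1$, and then taking the infimum over $\cU_{L,m,B}$.
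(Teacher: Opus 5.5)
Your route is the paper's route. Both arguments extend $\uTrue$ to $[0,1]^d$, apply Theorem~\ref{thm:deryck-simplified}, pad the two-hidden-layer network into $\cU_{L,m,B}$, choose $N\asymp (P(L,m)/C_{\mathrm{width}})^{1/d}$, and pass from $C^0$ to $L^2$.

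The obstacle you flag in Step~3 is a genuine gap. Your proposed fix, simply assuming that $\widehat f_N$ embeds into width $m$ with $P_N\asymp P(L,m)$, does not close it, and for this network it is false. Membership in $\cU_{L,m,B}$ requires every hidden width of $\widehat f_N$ to be at most $m$. The largest hidden width is of order $N^d$, as the paper's own proof records. So embeddability forces $N^d\lesssim m$. Since $P(L,m)\asymp Lm^2$, the bound $CN^{-(s-2)}$ can then be no better than order $m^{-(s-2)/d}\asymp P(L,m)^{-\gamma/2}$. An inequality between total parameter counts, $P_N\le P(L,m)$, says nothing about fitting into a fixed-width architecture.

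The paper's proof does not supply the missing step either. It shows that for each \emph{fixed} $N$ there is $m_0=m_0(N)$ with $\widehat u_N\in\cU_{L,m,B}$ for all $m\ge m_0$. It then sets $N:=\max\{N_{\min},\lfloor (P(L,m)/C_{\mathrm{width}})^{1/d}\rfloor\}$ and concludes from $P_N\le P(L,m)$ alone. With that choice the required width is of order $N^d\asymp Lm^2/C_{\mathrm{width}}$, which exceeds $m$ for large $m$. The membership claim is therefore invoked outside its range of validity.

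What the shared argument actually proves is \eqref{eq:approx-C2}--\eqref{eq:approx-error-L2} with $\gamma$ replaced by $\gamma/2$. That is an error of order $m^{-\gamma}$ rather than $m^{-2\gamma}$, which would also change the width balancing in Corollary~\ref{cor:balanced-rate}. Obtaining the stated exponent would need a different depth-$3$ $\tanh$ construction whose $C^2$ error genuinely exploits the $\asymp m^2$ inter-layer weights, and neither you nor the paper provides one. Your diagnosis is right, but the honest conclusion is the halved exponent, not an assumed embedding.

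Separately, your handling of $L>3$ is more careful than the paper's. Padding with ``identity'' weights is not exact when every hidden layer applies $\tanh$. Your near-identity layers $z\mapsto\varepsilon^{-1}\tanh(\varepsilon z)$, with $\varepsilon^{-1}$ absorbed into the next layer, incur a $C^2$ error $O(\varepsilon^2)$ with a constant depending on $N$. That error can be driven below $N^{-(s-2)}$, and this is legitimate because $\cU_{L,m,B}$ constrains only the $C^2$ norm, not the weights.
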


\begin{proof}
	By the Sobolev extension theorem whose conditions are satisfied here \citep{evans2022partial}, we can extend $\uTrue$ to a function in $W^{s,\infty}([0,1]^d)$ with comparable norm. Now we apply \cref{thm:deryck-simplified} with this extension :for each integer $N$ large enough, we obtain a two-hidden-layer $\tanh$ network $\widehat u_N$ with depth $L=3$, widths of order $N^d$ in each hidden layer, total number of parameters $P_N\le C_{\mathrm{width}} N^d$, and
	\[
	\norm{D^\alpha \uTrue - D^\alpha \widehat u_N}_{L^\infty([0,1]^d)}
	\le
	C_{\mathrm{approx}} N^{-(s-|\alpha|)},
	\qquad |\alpha|\le 2.
	\]
	In particular,
	\[
	\norm{\uTrue - \widehat u_N}_{C^2(\overline{\Om})}
	\le
	C N^{-(s-2)}
	\]
	for a constant $C$ depending on $(d,s,\norm{\uTrue}_{W^{s,\infty}})$.
	
	Any such two-hidden-layer network can be realized as a special case of our architecture with depth $L\ge 3$ and width $m$ as soon as $m$ is larger than the maximal width of $\widehat u_N$, since the extra layers and neurons can be made inactive by setting their weights to zero or to identities. Thus, for every $L\ge 3$ there exists $m_0$ such that for all $m\ge m_0$ the network $\widehat u_N$ belongs to $\cU_{L,m,B}$ provided $B$ is chosen large enough.
	
	Let $N_{\min}$ be the (fixed) threshold required in Theorem~\ref{thm:deryck-simplified} and choose
	\[
	N:=\max\Bigl\{N_{\min},\,\bigl\lfloor (P(L,m)/C_{\mathrm{width}})^{1/d}\bigr\rfloor\Bigr\}.
	\]
	Then $P_N\le C_{\mathrm{width}} N^d \le P(L,m)$ and therefore
	\[
	\norm{\uTrue-\widehat u_N}_{C^2(\overline{\Om})}
	\le C N^{-(s-2)}
	\le C' P(L,m)^{-\gamma},
	\qquad \gamma=\frac{s-2}{d},
	\]
	for a constant $C'>0$. Taking $u^\mathrm{approx}=\widehat u_N$ yields \eqref{eq:approx-C2} with $C_{\mathrm{app},1}=CC'$. Since $\Om$ is bounded, $\norm{v}_{L^2(\Om)}\le C_{\Om}\norm{v}_{C(\overline{\Om})}$ for every continuous $v$, and \eqref{eq:approx-error-L2} follows from \eqref{eq:approx-C2} by absorbing $C_{\Om}$ into $C_{\mathrm{app},2}$.
\end{proof}

Corollary~\ref{cor:approx-error} shows that $\Errapp$ decays at least polynomially in the number of parameters whenever $\uTrue$ has Sobolev regularity of order exceeding two. Stronger, essentially exponential, rates can be obtained under analyticity assumptions on $\uTrue$ using \citet[Corollary~5.5]{de2021approximation}, but we do not pursue this here.

\begin{lemma}[Control of the population risk by $C^2$ approximation]
	\label{lem:risk-from-C2}
	Under \cref{assump:pde-well-posed} and with the weights $(\lambda_{\mathrm{PDE}},\lambda_{\pOm},\lambda_{\FK})$ as in \eqref{eq:pop-risk-total}, there exists a constant $C_{\mathrm{risk}}>0$, depending only on the PDE coefficients, the domain, and the weights, such that
	for every $u\in C^2(\overline{\Om})$ with
	\[
	\norm{u - \uTrue}_{C^2(\overline{\Om})} \le \varepsilon,
	\]
	one has
	\begin{equation}
		\Risk(u)
		\le
		C_{\mathrm{risk}}\,\varepsilon^2.
		\label{eq:risk-from-C2}
	\end{equation}
	In particular, for the approximating network $u^{\mathrm{approx}}$ from Corollary~\ref{cor:approx-error} we have
	\begin{equation}
		\Risk(u^{\mathrm{approx}})
		\le
		\widetilde C_{\mathrm{risk}}\,P^{-2\gamma},
		\label{eq:risk-u-approx}
	\end{equation}
	for a constant $\widetilde C_{\mathrm{risk}}>0$ independent of $P$.
\end{lemma}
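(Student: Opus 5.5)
We prove Lemma~\ref{lem:risk-from-C2} by bounding each of the three population risks in \eqref{eq:pop-risk-total} directly in terms of $\varepsilon$. The key fact is that $\uTrue$ satisfies the PDE, the boundary condition, and trivially the FK target. Each residual is therefore a linear expression in the difference $w := u - \uTrue$, and its size is controlled by $\norm{w}_{C^2(\overline{\Om})}\le\varepsilon$.

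\textbf{PDE term.} Since $\Lcal\uTrue = f$ holds in the strong sense ($\uTrue\in H^2(\Om)$ by \cref{assump:pde-well-posed}), we have $\Lcal u - f = \Lcal w$ almost everywhere in $\Om$. Because $\Lcal$ is second order with bounded measurable coefficients, pointwise
\[
\abs{\Lcal w(x)} \le C_{\Lcal}\,\norm{w}_{C^2(\overline{\Om})},
\]
where $C_{\Lcal}$ is the sum of the sup-norms of the coefficients, counted with the number of multi-indices. Squaring and taking the expectation in \eqref{eq:pop-risk-pde} gives $\Risk_{\mathrm{PDE}}(u)\le \abs{\Om}\,C_{\Lcal}^2\varepsilon^2$.

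\textbf{Boundary term.} Since $\uTrue\in C(\overline{\Om})$ with $\uTrue|_{\pOm}=g$, we get $\abs{u-g}=\abs{w}\le\varepsilon$ on $\pOm$. Hence $\Risk_{\pOm}(u)\le\abs{\pOm}\varepsilon^2$.

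\textbf{FK term.} Directly from \eqref{eq:pop-risk-fk}, $\Risk_{\FK}(u)\le\norm{w}_{C(\overline{\Om})}^2\le\varepsilon^2$. The bias and noise of the MC labels play no role here, because the population FK risk is defined against $\uTrue$ itself.

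Summing with the weights yields \eqref{eq:risk-from-C2} with
\[
C_{\mathrm{risk}} := \lambda_{\mathrm{PDE}}\abs{\Om}C_{\Lcal}^2 + \lambda_{\pOm}\abs{\pOm} + \lambda_{\FK},
\]
which depends only on the coefficients, the domain, and the weights.

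For \eqref{eq:risk-u-approx}, we apply this bound to $u^{\mathrm{approx}}$ with $\varepsilon = C_{\mathrm{app},1}P^{-\gamma}$ from \eqref{eq:approx-C2}. This gives $\widetilde C_{\mathrm{risk}} := C_{\mathrm{risk}}C_{\mathrm{app},1}^2$.

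The only mildly delicate point is justifying $\Lcal u - f = \Lcal w$ pointwise, or at least almost everywhere. This needs $\uTrue$ to be a strong solution, which \cref{assump:pde-well-posed} provides. We also need the $C^2$ norm to be interpreted on $\overline{\Om}$, so that $D^\alpha\uTrue$ makes sense where $\Lcal$ acts. Given $\uTrue\in W^{s,\infty}$ with $s\ge3$, these derivatives are bounded and continuous, so no further obstacle arises.
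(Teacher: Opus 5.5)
Your proof is correct and takes essentially the same route as the paper. You write each residual as a linear expression in $u-\uTrue$, bound $\abs{\Lcal(u-\uTrue)}$ pointwise by $C_{\Lcal}\norm{u-\uTrue}_{C^2(\overline{\Om})}$, bound the boundary and FK terms by the sup norm, sum with the weights to get $C_{\mathrm{risk}}=\lambda_{\mathrm{PDE}}\abs{\Om}C_{\Lcal}^2+\lambda_{\pOm}\abs{\pOm}+\lambda_{\FK}$, and then substitute $\varepsilon=C_{\mathrm{app},1}P^{-\gamma}$ from \eqref{eq:approx-C2}, which is exactly what the paper does.
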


\begin{proof}
	Since $\Lcal$ is a second order operator with bounded measurable coefficients, there exists a constant $C_{\Lcal}>0$ such that
	\[
	\sup_{x\in\overline{\Om}}
	\bigl|\Lcal(u - \uTrue)(x)\bigr|
	\le
	C_{\Lcal}\,
	\norm{u - \uTrue}_{C^2(\overline{\Om})}.
	\]
	Therefore
	\[
	\Risk_{\mathrm{PDE}}(u)
	=
	\int_{\Om} \bigl(\Lcal u(x) - f(x)\bigr)^2\,dx
	=
	\int_{\Om} \bigl(\Lcal(u - \uTrue)(x)\bigr)^2\,dx
	\le
	\abs{\Om}\,C_{\Lcal}^2\,\varepsilon^2.
	\]
	On the boundary we have
	\[
	\sup_{x\in\pOm} \bigl|u(x) - g(x)\bigr|
	=
	\sup_{x\in\pOm} \bigl|u(x) - \uTrue(x)\bigr|
	\le
	\norm{u - \uTrue}_{C(\overline{\Om})}
	\le
	\varepsilon,
	\]
	which gives
	\[
	\Risk_{\pOm}(u)
	=
	\int_{\pOm} \bigl(u(x) - g(x)\bigr)^2\,ds
	\le
	\abs{\pOm}\,\varepsilon^2.
	\]
	Finally, for the FK term we use that $\uTrue$ is itself the target in \eqref{eq:pop-risk-fk}, hence
	\[
	\Risk_{\FK}(u)
	=
	\E\bigl[\bigl|u(X^{\FK}) - \uTrue(X^{\FK})\bigr|^2\bigr]
	\le
	\norm{u - \uTrue}_{C(\overline{\Om})}^2
	\le
	\varepsilon^2.
	\]
	Collecting these three bounds and inserting the weights gives \eqref{eq:risk-from-C2} with a constant $C_{\mathrm{risk}}$ that depends only on the coefficients of $\Lcal$, the domain, and the weights. The bound \eqref{eq:risk-u-approx} then follows from \eqref{eq:approx-C2}.
\end{proof}

\subsection{Optimization Error Bound}
We next control the optimization error incurred by terminating gradient descent after a finite number of iterations. Throughout this subsection we fix the sample set and view the empirical FK-PINN loss
\[
\Loss(\vtheta) := \Riskemp(u_{\vtheta})
\]
as a deterministic function of the parameters.

We rely on the Polyak--Łojasiewicz framework of \citet{karimi2016linear,liu2022loss}. A function $\Loss:\R^{P}\to\R$ satisfies a local PL$^\ast$ condition on a set $S\subset\R^{P}$ with constant $\mu>0$ if
\begin{equation}
	\Loss(\vtheta) - \Loss(\vtheta^\star)
	\le \frac{1}{2\mu}\,\norm{\nabla\Loss(\vtheta)}_2^2,
	\qquad
	\forall \vtheta\in S,
	\label{eq:pl-star}
\end{equation}
for some global minimizer $\vtheta^\star\in S$ of $\Loss$. If, in addition, $\Loss$ is $\beta$-smooth on $S$ (that is, $\norm{\nabla\Loss(\vtheta)-\nabla\Loss(\vtheta')}_2 \le \beta \norm{\vtheta-\vtheta'}_2$ for all $\vtheta,\vtheta'\in S$), the ratio
\[
\kappa_{\Loss}(S) := \frac{\beta}{\mu}
\]
plays the role of a condition number.

The following result is a special case of \citet[Theorem~6]{liu2022loss}.

\begin{theorem}[Gradient descent under a local PL$^\ast$ condition]
	\label{thm:gd-pl}
	Let $\Loss$ be differentiable and $\beta$-smooth on a convex set $S\subset\R^{P}$, and assume that $\Loss$ satisfies the PL$^\ast$ condition \eqref{eq:pl-star} on $S$ with constant $\mu>0$. Let $\vtheta_0\in S$ and consider gradient descent
	\[
	\vtheta_{t+1} = \vtheta_t - \eta \nabla\Loss(\vtheta_t),
	\]
	with a fixed step size $\eta\in(0,2/\beta]$. Suppose that all iterates remain in $S$. Then for all $t\ge 0$,
	\begin{equation}
		\Loss(\vtheta_t) - \Loss(\vtheta^\star)
		\le
		\bigl(1 - \eta\mu\bigr)^t
		\bigl(\Loss(\vtheta_0) - \Loss(\vtheta^\star)\bigr).
		\label{eq:gd-linear-rate}
	\end{equation}
\end{theorem}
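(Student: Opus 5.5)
We prove \cref{thm:gd-pl} with the standard descent-lemma argument, combined with the PL$^\ast$ inequality \eqref{eq:pl-star}, and then iterate.

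\emph{One-step decrease.} Fix $t\ge 0$. By assumption $\vtheta_t$ and $\vtheta_{t+1}$ both lie in the convex set $S$, so the whole segment between them lies in $S$. Hence $\beta$-smoothness on $S$ yields the quadratic upper bound
\[
\Loss(\vtheta_{t+1}) \le \Loss(\vtheta_t) + \ip{\nabla\Loss(\vtheta_t)}{\vtheta_{t+1}-\vtheta_t} + \tfrac{\beta}{2}\norm{\vtheta_{t+1}-\vtheta_t}_2^2 .
\]
To obtain it, integrate $\nabla\Loss$ along the segment and bound the deviation from $\nabla\Loss(\vtheta_t)$ by the Lipschitz estimate. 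This is the only place convexity of $S$ is used. Substituting the update $\vtheta_{t+1}-\vtheta_t=-\eta\nabla\Loss(\vtheta_t)$ gives
\[
\Loss(\vtheta_{t+1}) \le \Loss(\vtheta_t) - \eta\bigl(1-\tfrac{\beta\eta}{2}\bigr)\norm{\nabla\Loss(\vtheta_t)}_2^2 .
\]

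\emph{Applying PL$^\ast$.} Since $\vtheta_t\in S$, the PL$^\ast$ condition \eqref{eq:pl-star} gives $\norm{\nabla\Loss(\vtheta_t)}_2^2\ge 2\mu\bigl(\Loss(\vtheta_t)-\Loss(\vtheta^\star)\bigr)$. Subtracting $\Loss(\vtheta^\star)$ from both sides of the one-step bound then yields
\[
\Loss(\vtheta_{t+1})-\Loss(\vtheta^\star)\le\bigl(1-2\eta\mu(1-\tfrac{\beta\eta}{2})\bigr)\bigl(\Loss(\vtheta_t)-\Loss(\vtheta^\star)\bigr).
\]
Induction on $t$ then gives a linear rate.

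\emph{Main obstacle: the step-size range.} For $\eta\le 1/\beta$ we have $2(1-\beta\eta/2)\ge 1$, so the contraction factor is at most $1-\eta\mu$, which is exactly \eqref{eq:gd-linear-rate}. For $\eta\in(1/\beta,2/\beta]$ the guaranteed factor is only $1-2\eta\mu(1-\beta\eta/2)$. At $\eta=2/\beta$ this factor equals $1$, so no decrease is certified. Hence the stated bound cannot follow from this argument on the whole range $(0,2/\beta]$, and I believe it can fail there.

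I would therefore prove the theorem for $\eta\in(0,1/\beta]$, which is the range used in \citet[Theorem~6]{liu2022loss} and in Appendix~\ref{app:operator-theory}. For $\eta\in(1/\beta,2/\beta)$ I would state the weaker factor $1-2\eta\mu(1-\beta\eta/2)$ instead.

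Two further checks are needed. First, $\mu\le\beta$ (compare PL$^\ast$ with the smoothness upper bound near $\vtheta^\star$), so $1-\eta\mu\ge 0$ and the iteration is well defined. Second, since $\Loss(\vtheta^\star)$ is the global minimum, every gap $\Loss(\vtheta_t)-\Loss(\vtheta^\star)$ is nonnegative, so the inequalities may be chained through the induction.
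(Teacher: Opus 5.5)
Your argument is correct for $\eta\in(0,1/\beta]$, and it is the standard descent-lemma-plus-PL$^\ast$ proof. The paper gives no proof of its own; it only calls the result a special case of \citet[Theorem~6]{liu2022loss}, which takes $\eta\le 1/\beta$, as does the paper's own use of it in Appendix~\ref{app:operator-theory}.

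Your restriction is not just a weakness of the method: the statement as written is false on all of $(1/\beta,2/\beta]$. Take $\Loss(\vtheta)=\tfrac{\beta}{2}\norm{\vtheta}_2^2$ on $S=\R^P$, which is $\beta$-smooth and satisfies \eqref{eq:pl-star} with $\mu=\beta$ and $\vtheta^\star=0$. Then $\vtheta_1=(1-\eta\beta)\vtheta_0$. For $\vtheta_0\neq 0$ this gives $\Loss(\vtheta_1)=(1-\eta\beta)^2\Loss(\vtheta_0)>0$, whereas \eqref{eq:gd-linear-rate} at $t=1$ demands $\Loss(\vtheta_1)\le(1-\eta\beta)\Loss(\vtheta_0)<0$.

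The same example attains your weaker factor $1-2\eta\mu(1-\beta\eta/2)$, so that factor is sharp. It also shows there is no decay at all at $\eta=2/\beta$. Hence the range $(0,2/\beta_{\FK}]$ in \cref{prop:optimization-error} needs the same correction; any fixed $\eta<2/\beta_{\FK}$ still gives exponential decay through your factor.

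One small repair concerns your justification of $\mu\le\beta$. As stated it does not work, because PL$^\ast$ and the smoothness bound $\Loss(\vtheta)-\Loss(\vtheta^\star)\le\tfrac{\beta}{2}\norm{\vtheta-\vtheta^\star}_2^2$ are both upper bounds on the gap and cannot be compared. You do not need it:
\begin{itemize}
\item If $\Loss(\vtheta_0)>\Loss(\vtheta^\star)$, your one-step inequality at $t=0$ together with $\Loss(\vtheta_1)\ge\Loss(\vtheta^\star)$ forces $1-2\eta\mu(1-\beta\eta/2)\ge 0$. Hence $1-\eta\mu\ge 0$ for $\eta\le 1/\beta$, and the induction goes through.
\item If $\Loss(\vtheta_0)=\Loss(\vtheta^\star)$, the one-step inequality keeps every gap at zero, and both sides of \eqref{eq:gd-linear-rate} vanish.
\end{itemize}
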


In \cref{thm:cond-number-bound} we proved that the FK-augmented loss enjoys a PL$^\ast$ inequality with a condition number that stays uniformly bounded as the number of collocation points increases. We translate this consequence for the empirical loss.

\begin{corollary}[Uniform PL$^\ast$ geometry for FK-PINNs]
	\label{cor:uniform-pl}
	Under the assumptions of \cref{thm:cond-number-bound} there exist a convex set $S\subset\R^{P}$, constants $\mu_{\FK}>0$, $\beta_{\FK}<\infty$ and an initialization ${\vtheta}_0\in S$ such that the empirical loss $\Loss(\vtheta) = \Riskemp(u_{\vtheta})$ satisfies the PL$^\ast$ condition \eqref{eq:pl-star} on $S$ with constant $\mu_{\FK}$, is $\beta_{\FK}$-smooth on $S$, and the associated condition number
	\[
	\kappa_{\FK} := \frac{\beta_{\FK}}{\mu_{\FK}}
	\]
	is bounded by a constant that does not depend on $(N_{\mathrm{int}},N_{\pOm},N_{\FK})$.
\end{corollary}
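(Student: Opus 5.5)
The plan is to obtain Corollary~\ref{cor:uniform-pl} as a direct translation of the detailed version of \cref{thm:cond-number-bound}, namely Theorem~\ref{thm:cond-number}, to the empirical risk $\Loss(\vtheta)=\Riskemp(u_{\vtheta})$. First, we identify $\Loss$ with the stacked least-squares objective $L_\lambda(\theta)=\tfrac12\|F_\lambda(\theta)\|_2^2$ of Appendix~\ref{app:operator-theory}. Up to the factors $\abs{\Om}$, $\abs{\pOm}$, the fixed weights $\lambda_{\mathrm{PDE}},\lambda_{\pOm},\lambda_{\FK}$ and the overall factor $2$, the two objectives coincide. We absorb these constants into the feature map by rescaling each block of $F_\lambda$ by $\sqrt{2\lambda_j\abs{\cdot}}$. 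This rescaling multiplies the Gauss--Newton blocks by positive constants independent of $(N_{\mathrm{int}},N_{\pOm},N_{\FK})$. Consequently, the quantities $\mu_{\mathrm{data}}$, $\mu_\perp$, $\rho$ in Assumption~\ref{assump:fk-coverage}, and hence $\gamma(\lambda)$, change only by such constants.

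Next, we take the set on which Theorem~\ref{thm:cond-number} operates, $\mathcal{S}_\lambda=\mathcal{S}\cap B(\theta_\lambda^\star,r_\lambda)$, and shrink it to a convex set $S$ in order to meet the convexity hypothesis of Theorem~\ref{thm:gd-pl}. We choose $\mathcal{S}$ itself convex, for instance a ball $B(\theta_0,R)$, so that Proposition~\ref{prop:liu-plstar} and Assumption~\ref{assump:fk-coverage} hold on it. Then $S:=\mathcal{S}\cap B(\theta_\lambda^\star,r_\lambda)$ is an intersection of two balls and therefore convex. On this event, Theorem~\ref{thm:cond-number} gives the PL$^\ast$ constant $\mu_{\FK}:=\gamma(\lambda)$ and the smoothness constant $\beta_{\FK}:=B_J^2+L_JB_F$. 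For the initialization we take any $\vtheta_0\in S$, for example a point near $\theta_\lambda^\star$ inside the ball. The corollary only asserts existence, so this choice suffices.

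The remaining and main step is to show that $\beta_{\FK}/\mu_{\FK}$ is independent of the sample sizes. For the lower bound, $\gamma(\lambda)$ depends only on $\mu_{\mathrm{data}},\mu_\perp,\rho,\lambda$, and by Proposition~\ref{prop:liu-plstar} and Assumption~\ref{assump:fk-coverage} none of these depend on $n_{\mathrm{res}}$. We must also check independence from $N_{\FK}$: Proposition~\ref{prop:liu-plstar} states that $M_{\mathrm{data}}$ is independent of $n_{\mathrm{d}}$, and $\mu_{\mathrm{data}}$ is a fixed target, so the lower bound is uniform. For the upper bound, $B_J$ and $B_F$ are controlled uniformly because the $1/\sqrt{n}$ normalization makes each block of $J_{F_\lambda}^\top J_{F_\lambda}$ an empirical average. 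Its norm is therefore bounded by sup-norms of $\nabla_\theta u$ and $\cL\nabla_\theta u$ over the compact ball times $\overline\Om$, and these are finite since $\tanh$ networks are smooth. The same argument bounds $\|F_\lambda\|$ by a sup of residuals and MC labels. The labels require care, since the sub-exponential noise is unbounded. We handle this on a high-probability event, using a union bound with Proposition~\ref{prop:fk-subexp} at cost $\log N_{\FK}$, or with the truncated bounded labels when $T_{\max}<\infty$. The Lipschitz constant $L_J$ is bounded in the same way by the third derivatives of the network on the ball.

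The main obstacle I expect is this uniform control of $B_F$ and $L_J$, together with the requirement that the radius $r_\lambda=s_\lambda/(2L_J)$ not collapse as $N$ grows. I would argue that $s_\lambda^2=\lambda_{\min}(K_{F_\lambda})=\lambda^+_{\min}(G_\lambda)\ge\gamma(\lambda)$ by Lemma~\ref{lem:gn-kernel-spectra}, which keeps $r_\lambda$ bounded below uniformly. If the logarithmic growth of $B_F$ from the noisy labels cannot be avoided, I would absorb it into the ``up to logarithmic factors'' convention used in \cref{thm:fk-pinn-error}.
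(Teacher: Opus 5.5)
\textbf{Gap: the sample-size uniformity does not follow.} Your route is exactly what the paper intends: rescale the blocks of $F_\lambda$, intersect two balls to get a convex $S$, and read $\mu_{\FK}=\gamma(\lambda)$ and $\beta_{\FK}=B_J^2+L_JB_F$ off Theorem~\ref{thm:cond-number}. The paper states the corollary without proof, as a translation of Theorem~\ref{thm:cond-number-bound}. But the uniformity in $(N_{\mathrm{int}},N_{\pOm},N_{\FK})$ is the entire content of the corollary, and your argument for it does not hold.

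\emph{Your step for $N_{\FK}$ fails.} Proposition~\ref{prop:liu-plstar} only allows $\mu_{\mathrm{data}}\in(0,\lambda_0)$ with $\lambda_0=\lambda_{\min}(K_{\mathrm{data}}(\theta_0))$. Because of the $1/\sqrt{n_{\mathrm d}}$ normalization,
$\lambda_0\le \operatorname{tr}K_{\mathrm{data}}(\theta_0)/n_{\mathrm d}\le n_{\mathrm d}^{-1}\sup_{x\in\overline{\Om}}\|\nabla_\theta u(x;\theta_0)\|^2$.
The supremum does not grow with $n_{\mathrm d}$ for a given architecture; it is $O(1)$ in the NTK scaling under which that proposition is stated. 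So $\mu_{\mathrm{data}}$ must shrink at least like $1/N_{\FK}$, and the width condition \eqref{eq:liu-width-condition} then grows at least like $n_{\mathrm d}^3\log n_{\mathrm d}$. The $n_{\mathrm d}$-independence of $M_{\mathrm{data}}$ is an upper bound and is irrelevant here. The paper does not close this either, since Theorem~\ref{thm:cond-number-bound} only claims independence from the number of collocation points.

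\emph{The same mechanism defeats your argument that $r_\lambda$ stays bounded below.}
\begin{itemize}
\item Since $F_\lambda(\theta^\star_\lambda)=0$, the Hessian at $\theta^\star_\lambda$ equals $G_\lambda(\theta^\star_\lambda)$. Hence any smoothness constant satisfies $\beta\ge\lambda_{\max}(K_{F_\lambda}(\theta^\star_\lambda))$.
\item Since $J_{F_\lambda}(\theta^\star_\lambda)$ has full row rank, take $e$ a bottom unit eigenvector of $K_{F_\lambda}(\theta^\star_\lambda)$ and $w$ with $J_{F_\lambda}(\theta^\star_\lambda)w=e$. Moving from $\theta^\star_\lambda$ along $w$ gives $\|\nabla L_\lambda\|^2/(2L_\lambda)\to\lambda_{\min}(K_{F_\lambda}(\theta^\star_\lambda))$. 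So every PL$^\ast$ constant on a neighborhood of $\theta^\star_\lambda$ is at most this eigenvalue.
\item By Cauchy interlacing, $\lambda_{\min}(K_{F_\lambda}(\theta^\star_\lambda))\le\lambda_{\min}(J_{\mathrm{res}}J_{\mathrm{res}}^\top)\le n_{\mathrm{res}}^{-1}\sup_x\|\cL[\nabla_\theta u(x;\theta^\star_\lambda)]\|^2$, while $\lambda_{\max}(K_{F_\lambda})\ge\lambda_{\max}(J_{\mathrm{res}}J_{\mathrm{res}}^\top)$.
\end{itemize}
The preconditioning intuition lives in parameter space, but Lemma~\ref{lem:gn-to-pl} needs $\lambda_{\min}(K_{F_\lambda})=\lambda_{\min}^+(G_\lambda)$. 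Appending supervision rows can only lower that quantity relative to the residual block.

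In particular, your chain $\gamma(\lambda)\le s_\lambda^2=\lambda_{\min}(K_{F_\lambda}(\theta^\star_\lambda))$ forces $\gamma(\lambda)\lesssim 1/n_{\mathrm{res}}$. Directly: under full row rank on $\mathcal S_\lambda$, the matrix $(I-P_\theta)G_{\mathrm{PDE}}(I-P_\theta)$ has rank $n_{\mathrm{res}}+n_{\mathrm{bc}}$, which gives $\mu_\perp\le\operatorname{tr}G_{\mathrm{PDE}}(\theta)/(n_{\mathrm{res}}+n_{\mathrm{bc}})$. So Assumption~\ref{assump:fk-coverage} cannot hold with sample-independent constants together with the full-row-rank hypothesis, and $r_\lambda=s_\lambda/(2L_J)$ does collapse. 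The uniform bound therefore cannot be extracted from Theorem~\ref{thm:cond-number} by any bookkeeping of $B_J$, $B_F$, $L_J$.

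\textbf{Minor point on $B_F$.} You need no control of the Monte Carlo labels: on your convex $S$, $\|F_\lambda(\theta)\|\le B_J\|\theta-\theta^\star_\lambda\|\le B_Jr_\lambda$. Absorbing a $\log N_{\FK}$ factor would not in any case prove a statement that asserts a constant bound.
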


Under \cref{cor:uniform-pl}, gradient descent with a suitable step size converges linearly to an empirical minimizer. We quantify the corresponding optimization error in terms of the number of iterations.

\begin{proposition}[Optimization error bound]
	\label{prop:optimization-error}
	Let $\{{\vtheta}_t\}_{t\ge 0}$ be the gradient descent iterates for $\Loss(\vtheta) = \Riskemp(u_{\vtheta})$ with step size $\eta \in (0,2/\beta_{\FK}]$, initialized at ${\vtheta}_0$ as in \cref{cor:uniform-pl}. Let
	\[
	\uHat \in \argmin_{u\in\cU} \Riskemp(u)
	\]
	be an empirical minimizer and write ${\vtheta}^\star$ for a parameter vector with $u_{{\vtheta}^\star} = \uHat$. Define the optimization error
	\begin{equation}
		\Erropt(t)
		:=
		\Riskemp(u_{{\vtheta}_t}) - \Riskemp(\uHat).
		\label{eq:def-Erropt}
	\end{equation}
	Then, under Corollary~\ref{cor:uniform-pl}, there exist constants $C_{\mathrm{opt}}, c_{\mathrm{opt}}>0$ independent of the sample sizes and of $t$ such that
	\begin{equation}
		\Erropt(t)
		\le
		C_{\mathrm{opt}} \exp(-c_{\mathrm{opt}} t).
		\label{eq:Erropt-exp}
	\end{equation}
	In particular, for every $t\ge 0$ the network $u_{{\vtheta}_t}$ is an $\Erropt(t)$-approximate empirical minimizer of $\Riskemp$ in the sense that
	\[
	\Riskemp(u_{{\vtheta}_t})
	\le
	\inf_{u\in\cU} \Riskemp(u) + \Erropt(t).
	\]
\end{proposition}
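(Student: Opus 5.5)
The plan is to apply \cref{thm:gd-pl} directly on the set $S$ provided by \cref{cor:uniform-pl}, and then to identify the minimum value appearing in the PL$^\ast$ inequality with $\Riskemp(\uHat)$.

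\textbf{Step 1: linear convergence.} Fix the realisation of all samples and labels, so that $\Loss(\vtheta)=\Riskemp(u_{\vtheta})$ is a deterministic function of $\vtheta$. By \cref{cor:uniform-pl}, $\Loss$ is $\beta_{\FK}$-smooth on the convex set $S$ and satisfies PL$^\ast$ there with constant $\mu_{\FK}$, relative to a global minimizer $\vtheta^\star\in S$. Take $\eta\in(0,2/\beta_{\FK}]$ and work under the standing requirement that the iterates stay in $S$, which is part of the framework of \citet{liu2022loss}. Then \cref{thm:gd-pl} gives
\[
\Loss(\vtheta_t)-\Loss(\vtheta^\star)\le(1-\eta\mu_{\FK})^t\bigl(\Loss(\vtheta_0)-\Loss(\vtheta^\star)\bigr)\le \exp(-\eta\mu_{\FK}t)\bigl(\Loss(\vtheta_0)-\Loss(\vtheta^\star)\bigr).
\]
Accordingly, I set $c_{\mathrm{opt}}:=\eta\mu_{\FK}$. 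For concreteness, choose $\eta=1/\beta_{\FK}$; then $c_{\mathrm{opt}}=1/\kappa_{\FK}$, and this is bounded below independently of the sample sizes because $\kappa_{\FK}$ is uniformly bounded by \cref{cor:uniform-pl}. (If $\eta\mu_{\FK}>1$, the factor $(1-\eta\mu_{\FK})^t$ would be problematic; but PL$^\ast$ together with $\beta$-smoothness forces $\mu_{\FK}\le\beta_{\FK}$, so with $\eta=1/\beta_{\FK}$ the factor lies in $[0,1)$.)

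\textbf{Step 2: identifying the minimum.} Because $\vtheta^\star$ is a global minimizer of $\Loss$ over parameters, $\Loss(\vtheta^\star)\le\Riskemp(\uHat)$. Conversely, if $u_{\vtheta^\star}\in\cU$, then $\Riskemp(\uHat)\le\Loss(\vtheta^\star)$. I therefore take $\vtheta^\star$ to be the parameter realizing $\uHat$, as the statement permits, and assume that $S\subset\Theta$ is chosen so that $u_{\vtheta}\in\cU$ for $\vtheta\in S$. This is a choice of $B$ large enough that the $C^2$ bound holds on the bounded set $S$, which is possible since tanh networks with parameters in a bounded set have uniformly bounded $C^2$ norms on $\overline\Om$. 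With this choice, $\Erropt(t)=\Loss(\vtheta_t)-\Loss(\vtheta^\star)$.

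\textbf{Step 3: a uniform constant.} It remains to bound $\Loss(\vtheta_0)-\Loss(\vtheta^\star)\le\Loss(\vtheta_0)$ by a constant $C_{\mathrm{opt}}$ that does not depend on the sample sizes. Each empirical term is an average, so it is bounded by a sup norm. With $\norm{u_{\vtheta_0}}_{C^2}\le B$, we have $\Riskemp_{\mathrm{PDE}}(u_{\vtheta_0})\le\abs{\Om}(C_{\Lcal}B+\norm{f}_\infty)^2$ and $\Riskemp_{\pOm}(u_{\vtheta_0})\le\abs{\pOm}(B+\norm{g}_\infty)^2$. The FK term is $\frac1{N_{\FK}}\sum_k(u_{\vtheta_0}(x_k)-\uMC(x_k))^2$, which involves random labels. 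This is the main obstacle, because the labels are sub-exponential rather than bounded. In the finite-horizon case it is easy: the labels are deterministically bounded by $C_fT_{\max}+C_g$ (Step 1 of the proof of Proposition~\ref{prop:fk-subexp}). In the case $T_{\max}=\infty$, I would bound the FK term on a high-probability event. Using Proposition~\ref{prop:fk-subexp}, $\frac1{N_{\FK}}\sum_k\zeta_k^2$ concentrates around its mean by Bernstein's inequality applied to the squares (\cref{thm:fk-Bernstein}), and that mean is bounded uniformly. This gives a constant independent of the sample sizes, with an arbitrarily small failure probability absorbed into $\delta$ in \cref{thm:fk-pinn-error}. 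Finally, the approximate-minimizer statement follows from $\Riskemp(u_{\vtheta_t})=\Riskemp(\uHat)+\Erropt(t)=\inf_{\cU}\Riskemp+\Erropt(t)$.
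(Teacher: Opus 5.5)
This is the paper's route. Its proof is the one-line observation that \cref{thm:gd-pl} applies with $\mu=\mu_{\FK}$, $\beta=\beta_{\FK}$ and $\vtheta^\star$ any parameter realizing $\uHat$. Your Steps 2 and 3 make explicit what that line takes for granted. First, $\min\Loss=\Riskemp(\uHat)$ once $S$ maps into $\cU$. Second, $C_{\mathrm{opt}}=\Loss(\vtheta_0)-\Loss(\vtheta^\star)$ is bounded independently of the sample sizes; for $T_{\max}=\infty$ this can only hold on a high-probability event, since the labels are unbounded. Your choice $\eta=1/\beta_{\FK}$, which gives $c_{\mathrm{opt}}=1/\kappa_{\FK}$, also uses exactly the quantity that \cref{cor:uniform-pl} controls.

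Two small repairs are needed.

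\textbf{Bernstein step.} \cref{thm:fk-Bernstein} cannot be applied to the $\zeta_k^2$. Squares of sub-exponential variables have tails of order $e^{-c\sqrt{t}}$ and in general no finite moment generating function at any $\lambda>0$. You only need an upper bound, however. Markov's inequality with $\E[\zeta_k^2]\le C$ (uniform in $N_{\FK}$ and $\Nmc$) gives $\frac{1}{N_{\FK}}\sum_k\zeta_k^2\le C/\delta$ with probability at least $1-\delta$, which is enough.

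\textbf{Step-size range.} Your argument covers only $\eta\le 1/\beta_{\FK}$. For $\eta\in(1/\beta_{\FK},2/\beta_{\FK})$, combine the descent lemma $\Loss(\vtheta_{t+1})\le\Loss(\vtheta_t)-\eta(1-\eta\beta_{\FK}/2)\norm{\nabla\Loss(\vtheta_t)}^2$ with PL$^\ast$. This gives the contraction factor $1-\eta\mu_{\FK}(2-\eta\beta_{\FK})\in[0,1)$.

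At $\eta=2/\beta_{\FK}$ the claim is actually false. Take $\Loss(\theta)=\frac{\beta}{2}\theta^2$, which is $\beta$-smooth and PL$^\ast$ with $\mu=\beta$; then $\theta_{t+1}=-\theta_t$ and the loss never decays. The factor $(1-\eta\mu)^t$ in \cref{thm:gd-pl} is also invalid whenever $\eta\mu>1$. So your restriction of the step size corrects an overstated range that the paper's one-line proof inherits; it does not lose generality.
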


\begin{proof}
	Inequality \eqref{eq:Erropt-exp} follows directly from \cref{thm:gd-pl} with $\mu=\mu_{\FK}$ and $\beta=\beta_{\FK}$, taking ${\vtheta}^\star$ to be any parameter vector that realizes an empirical minimizer $\uHat$ of $\Riskemp$.
\end{proof}

\subsection{Statistical Error Bound}

We now turn to the statistical error $\Errstat$ in \eqref{eq:def-app-stat}, which captures the deviation between the population risk and its empirical approximation. Our goal is to obtain non-asymptotic high probability bounds on
\[
\Errstat
=
\sup_{u\in\cU} \bigl|\Risk(u) - \Riskemp(u)\bigr|
\]
in terms of the number of collocation points and the complexity of the hypothesis class. The analysis follows the now standard Rademacher complexity approach in statistical learning theory, but it has two distinctive features.

First, we operate in a data scarce regime for the FK term: $N_{\FK}$ is allowed to remain bounded while $N_{\mathrm{int}}$ and $N_{\pOm}$ grow. Second, we must control the complexity not only of the network outputs $u_{\vtheta}$ but also of their first and second order derivatives, since the PDE residual involves $\pd^\alpha u_{\vtheta}$ for $|\alpha|\le 2$. To the best of our knowledge, existing results of this kind only cover piecewise polynomial activations (see e.g. \citet{jiao2021rate,lu2022machine,lei2025solving}), which does not include $\tanh$. We therefore derive new pseudo-dimension bounds for $\tanh$ networks and their derivatives, building on classical results of \citet{karpinski1997bounding} and \citet{anthony2009neural}. These bounds are of independent interest.

\subsubsection{Preliminary results}

We briefly recall the main tools from empirical process theory that we will use. Detailed expositions can be found, e.g., in \citet{anthony2009neural} and \citet{wainwright2019high}.

\paragraph{Rademacher complexity.}
Let $(Z_i)_{i=1}^n$ be i.i.d.\ random variables taking values in a space $\mathcal{Z}$ and let $\Fclass$ be a class of measurable functions $f:\mathcal{Z}\to\R$. The empirical Rademacher complexity of $\Fclass$ with respect to a fixed sample $Z_1,\dots,Z_n$ is
\[
\Rad_n(\Fclass \mid Z_{1:n})
:=
\E_{\sigma}\Biggl[
\sup_{f\in\Fclass}
\frac{1}{n}\sum_{i=1}^n \sigma_i f(Z_i)
\Biggr],
\]
where $(\sigma_i)_{i=1}^n$ are i.i.d.\ Rademacher variables ($\Pr(\sigma_i=1)=\Pr(\sigma_i=-1)=1/2$). The expected Rademacher complexity is $\Rad_n(\Fclass) := \E[\Rad_n(\Fclass \mid Z_{1:n})]$.

Rademacher complexity controls uniform deviations between empirical and population averages.

\begin{lemma}[Symmetrization and contraction]
	\label{lem:rad-basic}
	Let $\Fclass$ be a class of functions $f:\mathcal{Z}\to\R$ with $\abs{f(z)}\le B$ for all $f\in\Fclass$ and $z\in\mathcal{Z}$. Then for any $n\in\N$ and any $\delta\in(0,1)$, with probability at least $1-\delta$,
	\[
	\sup_{f\in\Fclass}
	\Bigl|
	\E[f(Z)] - \frac{1}{n}\sum_{i=1}^n f(Z_i)
	\Bigr|
	\le
	2\,\Rad_n(\Fclass)
	+ 3B\sqrt{\frac{\log(2/\delta)}{2n}}.
	\]
	Moreover, if $\phi:\R\to\R$ is $L$-Lipschitz and $\phi(0)=0$, then
	\[
	\Rad_n(\phi\circ\Fclass)
	\le
	L \,\Rad_n(\Fclass),
	\]
	where $\phi\circ\Fclass := \{\phi\circ f : f\in\Fclass\}$.
\end{lemma}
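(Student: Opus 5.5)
The proof splits into two independent parts: a high-probability uniform deviation bound obtained from symmetrization plus a bounded-differences argument, and the contraction inequality, which I would take from the literature.

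For the deviation bound, I would write $\Phi(Z_{1:n}) := \sup_{f\in\Fclass}\bigl|\E[f(Z)] - \frac1n\sum_{i=1}^n f(Z_i)\bigr|$. The argument then has two steps.

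\textbf{Step 1: concentration of $\Phi$.} Replacing a single $Z_i$ by an independent copy $Z_i'$ changes each empirical average by at most $2B/n$, since $\abs{f}\le B$. Hence $\Phi$ has bounded differences with constants $c_i = 2B/n$. McDiarmid's inequality then gives, with probability at least $1-\delta$,
\[
\Phi \le \E\Phi + 2B\sqrt{\frac{\log(1/\delta)}{2n}} .
\]
This already fits inside the stated slack $3B\sqrt{\log(2/\delta)/(2n)}$. The extra room (the factor $2/\delta$ and the constant $3$) is what one would use if the two one-sided suprema of $\E f - \frac1n\sum f$ were handled separately with a union bound, or if one passed to the empirical Rademacher complexity via a second McDiarmid step. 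Either route is absorbed by the stated constant, so I would not need to optimize it.

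\textbf{Step 2: symmetrization.} Introduce a ghost sample $Z'_{1:n}$, i.i.d.\ and independent of $Z_{1:n}$. Write $\E[f(Z)] = \E'\bigl[\frac1n\sum_i f(Z_i')\bigr]$ and pull the supremum outside $\E'$ by Jensen's inequality, which gives
\[
\E\Phi \le \E \sup_{f\in\Fclass}\Bigl|\frac1n\sum_{i=1}^n \bigl(f(Z_i')-f(Z_i)\bigr)\Bigr| .
\]
Each difference $f(Z_i')-f(Z_i)$ has a symmetric distribution, so it can be multiplied by an independent Rademacher sign $\sigma_i$ without changing the law. Splitting into the two samples then yields $\E\Phi \le 2\,\E\sup_f \bigl|\frac1n\sum_i\sigma_i f(Z_i)\bigr|$.

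\textbf{Main obstacle: the absolute value.} The definition of $\Rad_n$ in the paper carries no absolute value, whereas symmetrization produces one. I would deal with this in one of two ways:
\begin{itemize}
\item Apply the argument separately to $\sup_f\bigl(\E f - \frac1n\sum_i f(Z_i)\bigr)$ and to the same quantity for $-\Fclass$. Each one-sided expectation is bounded by $2\Rad_n(\Fclass)$, because $-\sigma$ has the same law as $\sigma$. Then take a union bound with $\delta/2$ for each side, which is exactly why $\log(2/\delta)$ appears.
\item Alternatively, assume the classes used later (losses built from networks) are symmetric or contain $0$. In that case the absolute value is harmless up to the constants already allowed.
\end{itemize}
I would go with the first option, since it matches the $\log(2/\delta)$ in the statement.

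\textbf{Contraction.} The second claim is the Ledoux--Talagrand contraction principle for real-valued Lipschitz $\phi$ with $\phi(0)=0$. I would cite it, for example from \citet{wainwright2019high}. To sketch the proof, condition on the sample and replace the coordinates one at a time. For a fixed $i$, averaging over $\sigma_i=\pm1$ gives
\[
\tfrac12\Bigl[\sup_{f,g}\bigl(A_f+A_g+\phi(f(Z_i))-\phi(g(Z_i))\bigr)\Bigr] \le \tfrac12\Bigl[\sup_{f,g}\bigl(A_f+A_g+L\abs{f(Z_i)-g(Z_i)}\bigr)\Bigr].
\]
By the symmetry between $f$ and $g$, the absolute value can be dropped, so the term $\phi\circ f$ is replaced by $Lf$ in coordinate $i$. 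Iterating over $i=1,\dots,n$ and then taking the expectation over the sample gives $\Rad_n(\phi\circ\Fclass)\le L\,\Rad_n(\Fclass)$.

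Because the paper's $\Rad_n$ is defined without an absolute value, this one-sided form of contraction is the correct one. The hypothesis $\phi(0)=0$ is needed only for the absolute-value version, and it is harmless to keep.
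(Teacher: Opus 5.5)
Your proof is correct. The paper gives no argument of its own here: it cites Wainwright for the deviation bound and Ledoux--Talagrand for the contraction. Your proposal reconstructs the standard proofs behind those citations, namely McDiarmid plus ghost-sample symmetrization, and coordinate-by-coordinate peeling.

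The one place you go beyond the citation is worth keeping. In its standard form (e.g.\ in Wainwright) the deviation bound uses the Rademacher complexity with an absolute value inside the supremum. The paper defines $\Rad_n$ without one, so the citation does not literally give the statement as written. You instead bound $\sup_f(\E f-\frac1n\sum_i f(Z_i))$ and $\sup_f(\frac1n\sum_i f(Z_i)-\E f)$ separately, each by $2\Rad_n(\Fclass)+2B\sqrt{\log(2/\delta)/(2n)}$, using one-sided symmetrization and McDiarmid at level $\delta/2$, and then take a union bound. This is exactly what closes the mismatch, with room to spare in the constant $3$. Likewise, your one-sided contraction is the version that matches the paper's definition, and you are right that $\phi(0)=0$ is then superfluous.

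Two small remarks. First, the sign-insertion step should be justified by exchangeability of $Z_i$ and $Z_i'$, which flips the sign of the $i$-th coordinate simultaneously for every $f$. Marginal symmetry of each difference $f(Z_i')-f(Z_i)$ alone is not enough. Second, the ``contains $0$'' variant of your fallback would only give $\E\sup_f|\frac1n\sum_i\sigma_i f(Z_i)|\le 2\Rad_n(\Fclass)$, hence $4\Rad_n(\Fclass)$ in the final bound, which the statement does not allow. Since you commit to the union-bound route, this does not affect your proof.
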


A proof of the deviation bound is given in \citep[Chapter~3]{wainwright2019high}. The Lipschitz mapping bound follows from the contraction principle \citep{ledoux1991probability}.

\paragraph{Covering numbers and pseudo-dimension.}
Let $A\subset\R^n$ and consider the metric
\[
d_\infty(a,a') := \max_{1\le i\le n} \abs{a_i - a'_i}.
\]
An $\varepsilon$-cover of $A$ with respect to $d_\infty$ is a finite subset $V\subset\R^n$ such that for every $a\in A$ there exists $v\in V$ with $d_\infty(a,v)\le\varepsilon$. The covering number $\mathcal{C}(\varepsilon,A,d_\infty)$ is the smallest cardinality of an $\varepsilon$-cover of $A$.

Given a function class $\Fclass$ and a sample $Z_{1:n}=(Z_1,\dots,Z_n)$, the evaluation set
\[
\Fclass|_{Z_{1:n}}
:=
\bigl\{(f(Z_1),\dots,f(Z_n)) : f\in\Fclass\bigr\}
\subset\R^n
\]
induces a covering number
\[
\mathcal{C}_\infty(\varepsilon,\Fclass,n)
:=
\sup_{Z_{1:n}}
\mathcal{C}\bigl(\varepsilon,\Fclass|_{Z_{1:n}},d_\infty\bigr).
\]
The pseudo-dimension $\mathrm{Pdim}(\Fclass)$ is the standard extension of VC dimension to real-valued function classes \citep{anthony2009neural}. The next two results relate Rademacher complexity, covering numbers and pseudo-dimension.

\begin{lemma}[Dudley's entropy integral]
	\label{lem:dudley}
	Let $\Fclass$ be a class of functions $f:\mathcal{Z}\to\R$ with $\abs{f}\le B$. Then for every $n\in\N$,
	\[
	\Rad_n(\Fclass)
	\le
	\inf_{0<\delta<B}
	\Biggl(
	4\delta
	+
	\frac{12}{\sqrt{n}}
	\int_{\delta}^{B}
	\sqrt{\log\bigl(2\,\mathcal{C}_\infty(\varepsilon,\Fclass,n)\bigr)}\,d\varepsilon
	\Biggr).
	\]
\end{lemma}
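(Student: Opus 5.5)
We will prove the bound first for the empirical Rademacher complexity $\Rad_n(\Fclass \mid Z_{1:n})$ at a fixed sample, using the standard chaining argument. Since $\mathcal{C}_\infty(\varepsilon,\Fclass,n)$ is a supremum over samples, the resulting bound is uniform in $Z_{1:n}$. Taking expectations then gives the claim for $\Rad_n(\Fclass)$. At a fixed sample we identify $\Fclass$ with $A := \Fclass|_{Z_{1:n}}\subset[-B,B]^n$. We use the normalized Euclidean norm $\|a\|_n := (\frac1n\sum_i a_i^2)^{1/2}$, and note that $\|a\|_n \le d_\infty(a,0)$. Hence every $d_\infty$-cover is also a cover in $\|\cdot\|_n$ at the same scale, with cardinality at most $\mathcal{C}_\infty(\varepsilon,\Fclass,n)$.

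The chaining uses dyadic scales $\varepsilon_j := B2^{-j}$ for $j\ge 0$. For each $j$, let $V_j$ be a minimal $\varepsilon_j$-cover of $A$ in $d_\infty$. For $j=0$ we take $V_0=\{0\}$, which is legitimate because $A\subset[-B,B]^n$. For $a\in A$, let $\pi_j a\in V_j$ be a closest point. Given $\delta\in(0,B)$, let $K$ be the largest integer with $\varepsilon_K>2\delta$; if no such $K$ exists the bound is trivial. Then $\varepsilon_{K}\le 4\delta$. We telescope
\[
a=(a-\pi_K a)+\sum_{j=1}^{K}(\pi_j a-\pi_{j-1}a)+\pi_0 a, \qquad \pi_0 a = 0 .
\]
The remainder term is bounded directly by Cauchy--Schwarz:
\[
\E_\sigma \sup_a \frac1n\sum_i\sigma_i(a-\pi_Ka)_i\le \varepsilon_K\le 4\delta .
\]
For each link, the set $\{\pi_ja-\pi_{j-1}a\}$ is finite, with at most $|V_j||V_{j-1}|\le |V_j|^2$ elements. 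Each element has $\|\cdot\|_n\le \varepsilon_j+\varepsilon_{j-1}=3\varepsilon_j$. Massart's finite class lemma, which requires only finiteness and not membership in $A$, bounds the $j$-th link by
\[
3\varepsilon_j\sqrt{\frac{2\log(|V_j|^2)}{n}} \;\le\; \frac{6\varepsilon_j}{\sqrt n}\sqrt{\log\bigl(2\,\mathcal{C}_\infty(\varepsilon_j,\Fclass,n)\bigr)} .
\]
The factor $2$ inside the logarithm absorbs the case $|V_j|=1$ and the constants.

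Finally, we convert the sum into an integral. We have $\varepsilon_j=2(\varepsilon_j-\varepsilon_{j+1})$, and $\varepsilon\mapsto\log \mathcal{C}_\infty(\varepsilon,\cdot)$ is nonincreasing. Therefore
\[
\varepsilon_j\sqrt{\log 2\mathcal{C}_\infty(\varepsilon_j)}\le 2\int_{\varepsilon_{j+1}}^{\varepsilon_j}\sqrt{\log 2\mathcal{C}_\infty(\varepsilon)}\,d\varepsilon .
\]
Summing over $j\le K$ gives an integral over $[\varepsilon_{K+1},B]\subset[\delta,B]$, where the inclusion uses $\varepsilon_{K+1}>\delta$. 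This produces exactly $\frac{12}{\sqrt n}\int_\delta^B\sqrt{\log(2\mathcal{C}_\infty)}\,d\varepsilon$. Adding the $4\delta$ remainder and taking the infimum over $\delta$ completes the proof.

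The only delicate point is the bookkeeping that yields precisely the constants $4$ and $12$. This depends on the choice of $K$ relative to $\delta$ and on how the link cardinality $|V_j|^2$ is absorbed into the logarithm. We would first try the choice $\varepsilon_{K+1}\le 2\delta<\varepsilon_K$ described above; any residual constant slack can be accepted, since only the order of the bound is used later.
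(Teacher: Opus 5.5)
Your proof is correct. It is the standard dyadic chaining argument: Cauchy--Schwarz on the remainder and Massart's lemma on each link. The paper gives no proof of this lemma and simply relies on the standard expositions it cites, where the argument is exactly this.

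Your bookkeeping also yields the stated constants exactly, so no slack needs to be conceded:
\begin{itemize}
\item the remainder is at most $\varepsilon_K\le 4\delta$;
\item each link contributes at most $6\varepsilon_j\sqrt{\log|V_j|}/\sqrt{n}$;
\item the identity $\varepsilon_j=2(\varepsilon_j-\varepsilon_{j+1})$ converts the sum into $\frac{12}{\sqrt{n}}\int_{\varepsilon_{K+1}}^{B/2}$, which is contained in $\frac{12}{\sqrt{n}}\int_{\delta}^{B}$;
\item the case $\delta\ge B/2$ follows from $\Rad_n(\Fclass)\le B$.
\end{itemize}
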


\begin{proposition}[Covering numbers from pseudo-dimension]
	\label{prop:covering-pdim}
	Let $\Fclass$ be a class of functions from a domain $\mathcal{Z}$ to $[0,B]$ with pseudo-dimension $d_{\mathrm{p}}:=\mathrm{Pdim}(\Fclass)$. Then for every $n\ge d_{\mathrm{p}}$ and every $\varepsilon\in(0,B)$,
	\[
	\mathcal{C}_\infty(\varepsilon,\Fclass,n)
	\le
	\biggl(
	\frac{eBn}{\varepsilon d_{\mathrm{p}}}
	\biggr)^{d_{\mathrm{p}}}.
	\]
\end{proposition}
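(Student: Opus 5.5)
The plan is to prove the bound by quantizing function values at scale $\varepsilon$ and then counting the resulting value patterns with the Sauer--Shelah lemma, applied to the binary threshold class whose VC dimension is, by definition, $\mathrm{Pdim}(\Fclass)$. This is essentially the argument behind the corresponding result in \citet{anthony2009neural}. Throughout, fix an arbitrary sample $Z_{1:n}$, write $d_{\mathrm{p}}=\mathrm{Pdim}(\Fclass)$, and set $K:=\lfloor B/\varepsilon\rfloor$. Since $\varepsilon<B$ we have $K\ge 1$ and $K\le B/\varepsilon$.

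\textbf{Step 1: construct the cover.} For $f\in\Fclass$, define the quantized vector
\[
q(f) := \bigl(\varepsilon\lfloor f(Z_i)/\varepsilon\rfloor\bigr)_{i=1}^n .
\]
Since $0\le f(Z_i)\le B$, every coordinate of $q(f)$ lies in $\{0,\varepsilon,\dots,K\varepsilon\}$. Moreover $d_\infty\bigl((f(Z_i))_i,q(f)\bigr)\le\varepsilon$. Hence $V:=\{q(f):f\in\Fclass\}$ is an $\varepsilon$-cover of $\Fclass|_{Z_{1:n}}$. It therefore suffices to bound $|V|$ by the claimed quantity, uniformly in the sample.

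\textbf{Step 2: encode $q(f)$ by threshold bits.} The coordinate $\lfloor f(Z_i)/\varepsilon\rfloor$ equals the number of indices $j\in\{1,\dots,K\}$ with $f(Z_i)\ge j\varepsilon$. So $q(f)$ is determined by the binary vector
\[
\bigl(\mathbf 1[f(Z_i)-j\varepsilon\ge 0]\bigr)_{1\le i\le n,\,1\le j\le K}\in\{0,1\}^{nK}.
\]
This is the restriction, to the $nK$ points $(Z_i,j\varepsilon)$, of the binary class
\[
\mathcal H:=\bigl\{(z,r)\mapsto \mathbf 1[f(z)-r\ge 0]: f\in\Fclass\bigr\}.
\]
By the definition of pseudo-dimension, $\mathrm{VC}(\mathcal H)=d_{\mathrm{p}}$. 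One should check that the non-strict inequality used here matches the convention in the chosen definition; this is harmless, since shattering with $>$ and with $\ge$ are interchangeable by perturbing the witnesses. It follows that $|V|$ is at most the number of distinct restrictions of $\mathcal H$ to these $nK$ points.

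\textbf{Step 3: count with Sauer--Shelah.} The Sauer--Shelah lemma, together with the standard estimate $\sum_{i\le d}\binom{N}{i}\le (eN/d)^d$ for $N\ge d$, applies with $N=nK\ge n\ge d_{\mathrm{p}}$. It gives
\[
|V|\le\Bigl(\frac{e\,nK}{d_{\mathrm{p}}}\Bigr)^{d_{\mathrm{p}}}\le\Bigl(\frac{eBn}{\varepsilon d_{\mathrm{p}}}\Bigr)^{d_{\mathrm{p}}}.
\]
Taking the supremum over samples yields the proposition.

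The main point needing care is Step 2: getting the threshold count down to $K\le B/\varepsilon$ rather than $K+1$, so that the constant matches the statement exactly. Using $\lfloor\cdot\rfloor$ levels starting at $\varepsilon$, and noting that the level $0$ threshold is always satisfied and can be omitted, achieves this. The degenerate case $d_{\mathrm{p}}=0$ should be read with the usual convention that the bound equals $1$. In that case $\mathcal H$ shatters no point, so every $f$ induces the same bit vector and $|V|=1$.
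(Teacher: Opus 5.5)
Your proof is correct, but it takes a different route from the one the paper relies on. The paper gives no proof of this proposition and simply attributes it to \citet{anthony2009neural}.

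The textbook argument behind that citation (Theorem~12.2 there, as I recall it) also starts from the quantization $f\mapsto\varepsilon\lfloor f/\varepsilon\rfloor$. It then counts the resulting $\{0,\dots,K\}$-valued patterns on the $n$ sample points directly, using a Haussler--Long type generalization of Sauer's lemma to finitely-valued classes of pseudo-dimension $d_{\mathrm p}$. That gives the intermediate count $\sum_{i\le d_{\mathrm p}}\binom{n}{i}K^i$ before relaxing it to $(eBn/(\varepsilon d_{\mathrm p}))^{d_{\mathrm p}}$.

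You instead unfold the $K$ levels into the $nK$ lifted points $(Z_i,j\varepsilon)$. You then apply only the classical binary Sauer--Shelah lemma to the subgraph class, whose VC dimension is $d_{\mathrm p}$ (this is the paper's Lemma~\ref{lem:pdim-is-vc-subgraph}). This yields $\sum_{i\le d_{\mathrm p}}\binom{nK}{i}$. That count is termwise larger, since $\binom{n}{i}K^i\le\binom{nK}{i}$ for $K\ge1$, but it relaxes to the same closed form. So nothing is lost for the stated bound, and you avoid the multi-valued Sauer lemma entirely; what the textbook route buys is only the sharper intermediate sum.

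Your side checks also hold up:
\begin{itemize}
\item The $\ge$ and $>$ conventions agree, because shattering a finite set involves only finitely many witness functions, so thresholds can be perturbed.
\item Repeated sample points only reduce the number of distinct lifted points.
\item The identity $\lfloor y/\varepsilon\rfloor=\#\{1\le j\le K: y\ge j\varepsilon\}$ for $y\in[0,B]$ is exactly what lets $K$ levels suffice rather than $K+1$.
\item The Sauer estimate only needs $nK\ge d_{\mathrm p}$, which is implied by $n\ge d_{\mathrm p}$.
\end{itemize}
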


Proposition~\ref{prop:covering-pdim} goes back to \citet{anthony2009neural}. Combining Lemma~\ref{lem:dudley} with Proposition~\ref{prop:covering-pdim} yields Rademacher complexity bounds of order $\sqrt{\mathrm{Pdim}(\Fclass)/n}$ up to logarithmic factors.

\subsubsection{Pseudo-dimension bounds for derivatives of \texorpdfstring{$\tanh$}{tanh} networks}
\label{subsubsec:pdim-tanh-derivatives}

We bound the pseudo-dimension of the classes generated by $u_{\vtheta}$ and its first and second order derivatives by proceeding in two steps. First, each derivative $D^\alpha u_{\vtheta}$ can be realized by an analytic network that reuses the original weights and biases, at the cost of a controlled increase in the number of computation nodes. Second, we invoke the VC-dimension bound for Pfaffian networks from \citep{karpinski1997bounding}, which applies to $\tanh$ since it is an analytic Pfaffian function.

\begin{lemma}[Pseudo-dimension as VC dimension of subgraph thresholds]
	\label{lem:pdim-is-vc-subgraph}
	Let $\Fclass\subset\{f:X\to\R\}$ and define the associated threshold (subgraph) class
	\[
	\thresh(\Fclass)
	:=
	\Bigl\{\, (x,t)\mapsto \mathbf{1}\{f(x)-t>0\}\;:\; f\in\Fclass \Bigr\}
	\subset \{0,1\}^{X\times\R}.
	\]
	Then
	\[
	\mathrm{Pdim}(\Fclass)=\mathrm{VCdim}\bigl(\thresh(\Fclass)\bigr).
	\]
\end{lemma}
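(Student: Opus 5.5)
This identity follows by unwinding the two definitions and observing that the shattering conditions coincide. I will use the definition of pseudo-dimension from \citet{anthony2009neural}. A finite set $\{x_1,\dots,x_n\}\subset X$ is \emph{pseudo-shattered} by $\Fclass$ if there exist witnesses $t_1,\dots,t_n\in\R$ such that for every sign pattern $b\in\{0,1\}^n$ some $f_b\in\Fclass$ satisfies
\[
\mathbf{1}\{f_b(x_i)-t_i>0\}=b_i \quad\text{for all } i.
\]
Then $\mathrm{Pdim}(\Fclass)$ is the largest such $n$. The proof is two inequalities, each obtained by translating a shattered set in one sense into a shattered set in the other.

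For $\mathrm{Pdim}(\Fclass)\le\mathrm{VCdim}(\thresh(\Fclass))$, I will start from points $x_1,\dots,x_n$ pseudo-shattered with witnesses $t_i$. Set $z_i:=(x_i,t_i)\in X\times\R$. For each pattern $b$, the threshold function associated with $f_b$ takes value $\mathbf{1}\{f_b(x_i)-t_i>0\}=b_i$ at $z_i$. Hence $\{z_1,\dots,z_n\}$ is shattered by $\thresh(\Fclass)$.

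For the reverse inequality, I will take points $z_i=(x_i,t_i)$ shattered by $\thresh(\Fclass)$. The main subtlety is that the $x_i$ could in principle repeat, so they might not form an $n$-element subset of $X$. Suppose $x_i=x_j$ with $i\ne j$. If $t_i=t_j$, then $z_i=z_j$, contradicting that the $z_i$ are $n$ distinct points. If, say, $t_i<t_j$, then $f(x_i)>t_j$ implies $f(x_i)>t_i$, so the pattern with $b_j=1$ and $b_i=0$ cannot be realized, again a contradiction. Therefore the $x_i$ are distinct, and with the $t_i$ as witnesses they are pseudo-shattered by $\Fclass$.

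The two inequalities together give equality, including the case where both sides are infinite. The only step needing care is the distinctness argument above; everything else is direct substitution of the definitions.
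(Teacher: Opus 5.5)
Your proof is correct and follows the same route as the paper, which only says that the equivalence comes from unwinding the definitions of pseudo-shattering and shattering and defers the details to Anthony--Bartlett, Chapter~11; your check that the $x_i$ must be distinct (via $t_i<t_j$ ruling out the pattern $b_j=1,b_i=0$) is exactly the detail the paper leaves implicit. One small point: Anthony--Bartlett state pseudo-shattering with the non-strict convention $f_b(x_i)\ge r_i$, so if you cite them you should add a line noting that, because only the finitely many functions $f_b$ are involved, the witnesses can be shifted (e.g.\ $r_i' := \max\{f_b(x_i): b_i=0\}$) to pass between $\ge$ and $>$ without changing the pseudo-dimension.
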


\begin{proof}
	The equivalence follows directly from the definitions of pseudo-shattering and shattering after mapping $f$ to the subgraph threshold map $(x,t)\mapsto \mathbf{1}\{f(x)>t\}$. A detailed argument is given in \citep[Chapter~11]{anthony2009neural}.
\end{proof}

We will use the VC-dimension upper bound proved for Pfaffian networks\footnote{The notions of Pfaffian networks and the associated complexity parameters $(l,N,q,D,d_{\poly},s)$ are taken from \citep{karpinski1997bounding}. As the required background is quite involved, we refer the reader to said paper for detailed definitions and explanations.} in \citep[Section~4.2]{karpinski1997bounding}. We restate it in the parameterization needed below.

\begin{theorem}[Pseudo-dimension bound via VC bounds for Pfaffian networks {\citep{karpinski1997bounding}}]
	\label{thm:pdim-pfaffian-km}
	Consider a feedforward \emph{Pfaffian} network architecture $A$ in the sense of
	\citet[Section~4.2]{karpinski1997bounding}, with the following complexity parameters:
	\begin{itemize}
		\item $l\in\N$: number of trainable real parameters (weights/biases);
		\item $N\in\N$: number of computation nodes (non-input nodes);
		\item $d_{\poly}\in\N$: an upper bound on the degree of each node polynomial;
		\item $s\in\N$: an upper bound on the number of monomials in each node polynomial;
		\item $q\in\N$ and $D\in\N$: Pfaffian chain length and degree controlling all activation functions
		used by the architecture (as in \citet[Section~4.2]{karpinski1997bounding}).
	\end{itemize}
	Let $\Fclass(A)\subset\{f:\R^{d}\to\R\}$ be the resulting real-valued function class computed by $A$
	(as the $l$ parameters vary freely in $\R^l$).
	
	Define
	\[
	\Psi(l,N,q,D,d_{\poly},s)
	:=
	lNq(lNq-1)
	+
	(16+2\log_2 d_{\poly}+2\log_2 s)\,l
	+
	2(l+2lNq)\log_2 l
	+
	2(l+2lNq)\log_2(d_{\poly}+D).
	\]
	Then
	\[
	\mathrm{Pdim}\bigl(\Fclass(A)\bigr)
	\;\le\;
	\Psi\bigl(l,\,N+1,\,q,\,D,\,d_{\poly},\,s+1\bigr).
	\]
\end{theorem}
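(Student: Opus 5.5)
The plan is to reduce the pseudo-dimension statement to the VC-dimension bound for Pfaffian networks in \citet[Section~4.2]{karpinski1997bounding}, going through the subgraph characterization of Lemma~\ref{lem:pdim-is-vc-subgraph}. By that lemma, $\mathrm{Pdim}(\Fclass(A))=\mathrm{VCdim}(\thresh(\Fclass(A)))$. So it is enough to bound the VC dimension of the binary class $(x,t)\mapsto \mathbf{1}\{f_w(x)-t>0\}$, where $w\in\R^l$ ranges over the parameters.

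\textbf{Step 1: build the augmented architecture.} From $A$ I will construct an architecture $\tilde A$ with input $(x,t)\in\R^{d+1}$. It keeps all $N$ computation nodes of $A$ unchanged and adds one new output node computing the polynomial $y-t$, where $y$ is the output of $A$, followed by a sign threshold.
\begin{itemize}
\item The new node polynomial has degree $1\le d_{\poly}$.
\item Its monomials are the old output's monomials plus the single extra monomial $t$, so the monomial count is at most $s+1$. Alternatively, one absorbs $-t$ into the last node, which also raises its monomial count by one.
\item The number of computation nodes becomes $N+1$.
\item No new trainable parameters appear, so the parameter count stays $l$.
\item No new activation is used: the new node is a pure polynomial node, so the Pfaffian chain parameters $(q,D)$ are unchanged.
\end{itemize}
Then $\thresh(\Fclass(A))$ is exactly the class of $\{0,1\}$-valued functions computed by $\tilde A$.

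\textbf{Step 2: apply the Karpinski--Macintyre bound.} Their VC bound for Pfaffian networks is stated in terms of the parameters $(l,N,q,D,d_{\poly},s)$. Applied to $\tilde A$ with $(l,N+1,q,D,d_{\poly},s+1)$, it gives $\mathrm{VCdim}\le\Psi(l,N+1,q,D,d_{\poly},s+1)$ with $\Psi$ as displayed. Combining this with Step 1 proves the claim.

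\textbf{Main obstacle.} The delicate step is checking that $\tilde A$ genuinely fits their framework. Their theorem counts the threshold output, counts monomials per node including coefficient-free terms, and treats inputs versus parameters in a particular way; $t$ must enter as an input variable, not a parameter. I would first re-derive their bound from its source: the Khovanskii-type bound on the number of connected components of Pfaffian sets, combined with Warren/Goldberg--Jerrum style counting over sign conditions. In that re-derivation I would check that the bound depends only on $l$ and on the Pfaffian format of the defining functions, and that the extra input $t$ enters only polynomially. Then the $N\mapsto N+1$ and $s\mapsto s+1$ substitutions are exactly the bookkeeping changes, and the constant $16+2\log_2 d_{\poly}+2\log_2 s$ in $\Psi$ carries over as stated.
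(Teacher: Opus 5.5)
Your proposal is correct and follows the paper's proof essentially step for step: you pass to $\mathrm{VCdim}(\thresh(\Fclass(A)))$ via the subgraph lemma, and you realize the threshold class by feeding $t$ as an extra input and appending the term $-t$. This adds one node and one monomial, introduces no new parameters, and leaves the Pfaffian chain $(q,D)$ unchanged, after which you invoke the Karpinski--Macintyre VC bound with parameters $(l,N+1,q,D,d_{\poly},s+1)$.

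The paper simply cites that bound rather than re-deriving it from Khovanskii's component count. Your planned re-derivation is therefore an extra sanity check on the bookkeeping, not a step the argument needs.
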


\begin{proof}
	By Lemma~\ref{lem:pdim-is-vc-subgraph},
	\[
	\mathrm{Pdim}\bigl(\Fclass(A)\bigr)
	=
	\mathrm{VCdim}\bigl(\thresh(\Fclass(A))\bigr).
	\]
	Now consider the augmented architecture $A^{\mathrm{sub}}$ with one additional input $t\in\R$
	and output gate computing the Boolean predicate
	\[
	(x,t)\mapsto \mathbf{1}\{f(x)-t>0\}, \ \text{ where } f\in \Fclass(A).
	\]
	Concretely, $A^{\mathrm{sub}}$ is obtained by taking the original output polynomial and adding the
	extra term $(-t)$. This adds no trainable parameters, increases the number of computation nodes
	by at most $1$ (if the subtraction is implemented as a separate final node), and increases the
	monomial count bound by at most $1$.
	Thus $\thresh(\Fclass(A))$ is realized by a Pfaffian network with parameters bounded by
	$(l,\,N+1,\,d_{\poly},\,s+1,\,q,\,D)$.
	
	Applying the VC-dimension upper bound for Pfaffian networks from
	\citet[Section~4.2, end of section]{karpinski1997bounding} to $A^{\mathrm{sub}}$ yields
	\[
	\mathrm{VCdim}\bigl(\thresh(\Fclass(A))\bigr)
	\le
	\Psi\bigl(l,\,N+1,\,q,\,D,\,d_{\poly},\,s+1\bigr),
	\]
	and the result follows.
\end{proof}

\begin{corollary}[Implication of Theorem~\ref{thm:pdim-pfaffian-km} for fixed activations]
	\label{cor:pdim-scaling}
	Assume $q,D,d_{\poly}$ are absolute constants and $s\le \mathrm{poly}(N)$.
	Then there exists a constant $C>0$ such that, for all $l,N\ge 2$,
	\[
	\mathrm{Pdim}\bigl(\Fclass(A)\bigr)
	\le
	C\,(lN)^2\log(lN).
	\]
\end{corollary}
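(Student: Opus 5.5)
The corollary follows from Theorem~\ref{thm:pdim-pfaffian-km} by bounding the terms of $\Psi(l,N+1,q,D,d_{\poly},s+1)$ one at a time. I will treat $q,D,d_{\poly}$ as absolute constants, use $s\le C_0 N^{k}$ for some fixed $k$, and show that each term is $O((lN)^2\log(lN))$ once $l,N\ge 2$.

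\textbf{Step 1: the quadratic term.} Set $N' := N+1\le 2N$. The term $lN'q(lN'q-1)$ is at most $q^2 (lN')^2\le 4q^2 (lN)^2$. This is the only term that is genuinely quadratic in $lN$. Since $\log(lN)\ge \log 4>0$ when $l,N\ge 2$, it is bounded by a constant times $(lN)^2\log(lN)$.

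\textbf{Step 2: the linear-in-$l$ term.} With $s' := s+1\le 2C_0N^k$, we have $\log_2 s'\le C(1+\log N)$. Hence
\[
(16+2\log_2 d_{\poly}+2\log_2 s')\,l\le C\,l(1+\log N)\le C' (lN)^2\log(lN).
\]

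\textbf{Step 3: the logarithmic terms.} First, $l+2lN'q\le C\,lN$. Then, since $\log_2 l\le C\log(lN)$, we get $2(l+2lN'q)\log_2 l \le C\,lN\log(lN)$. Likewise, $\log_2(d_{\poly}+D)$ is a constant, so $2(l+2lN'q)\log_2(d_{\poly}+D) \le C\,lN$. Both of these are dominated by $(lN)^2\log(lN)$ because $lN\ge 4$.

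\textbf{Step 4: summing.} Adding the four bounds and absorbing all constants (which depend only on $q,D,d_{\poly},C_0,k$) gives $\mathrm{Pdim}(\Fclass(A))\le C(lN)^2\log(lN)$.

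The only step that needs any care is Step 2, where the hypothesis $s\le\poly(N)$ is what keeps $\log s$ logarithmic in $N$. The rest is routine bookkeeping.
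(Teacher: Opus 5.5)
Your proposal is correct and is the intended argument. The paper states this corollary without a written proof, as a direct consequence of substituting $(l,N+1,q,D,d_{\poly},s+1)$ into $\Psi$ from Theorem~\ref{thm:pdim-pfaffian-km}, and your term-by-term bookkeeping makes that substitution explicit: the quadratic term is $\le 4q^2(lN)^2$, the $l\log_2(s+1)$ term is controlled via $s\le\poly(N)$, and the remaining two terms are $O(lN\log(lN))$.

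As a minor aside, your bounds show the $\log(lN)$ factor is slack. Every term other than the quadratic one is $O(lN\log(lN))\subseteq O((lN)^2)$, so the stated bound holds with room to spare.
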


We next need a lemma on the representation of products with analytic networks.

\begin{lemma}
	\label{lem:product-network}
	Let $\varphi_{\mathrm{sq}}(t)=t^2$. In a feedforward architecture allowing $\tanh$ and $\varphi_{\mathrm{sq}}$
	hidden units and a linear output, there exists a fixed constant-size subnetwork that maps two scalar inputs
	$(a,b)\in\R^2$ to the product $ab$ exactly.
\end{lemma}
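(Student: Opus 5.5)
The natural route is the polarization identity
\[
ab \;=\; \tfrac14\bigl((a+b)^2-(a-b)^2\bigr),
\]
which expresses the product through two squares and linear operations. The plan is to build a one-hidden-layer gadget. Its two hidden units both use the activation $\varphi_{\mathrm{sq}}$. The first receives the affine pre-activation $a+b$, with weights $(1,1)$ and bias $0$. The second receives $a-b$, with weights $(1,-1)$ and bias $0$. A linear output node with weights $(1/4,-1/4)$ and bias $0$ then returns exactly $ab$. The $\tanh$ units are not needed here; the lemma only requires that the architecture \emph{allows} them. The gadget therefore has two computation nodes plus one output node and a fixed set of six weights, so it is of constant size independent of everything else.

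The key steps, in order, are as follows. First, state the identity above and verify it by expanding both squares. Second, specify the gadget's weights explicitly, as just described. Third, check that the construction fits the feedforward format used earlier: every hidden node applies an admissible activation to an affine combination of its inputs, and the output is linear. Fourth, record for later use in the Pfaffian framework of Theorem~\ref{thm:pdim-pfaffian-km} that $t\mapsto t^2$ is a polynomial. It can therefore be absorbed into a node polynomial of degree $2$ with a bounded number of monomials, and it adds nothing to the Pfaffian chain length $q$ or degree $D$. The relevant constants $d_{\poly}$, $s$ and $N$ thus grow only by absolute amounts each time a gadget is inserted.

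In the intended application, the gadget has fixed weights that are not trained. Some care is then needed so that these constants are not counted as trainable parameters in $l$. They can instead be treated as fixed coefficients of the node polynomials, which the framework of \citet{karpinski1997bounding} allows. This bookkeeping is the only point needing attention. The algebraic content itself is immediate, so I do not expect a genuine obstacle.
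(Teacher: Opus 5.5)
Your proof is correct and takes essentially the paper's route: write $ab$ through squares of fixed affine combinations of $(a,b)$ and finish with a linear node. The only difference is the identity used. The paper uses $ab=\tfrac12\bigl((a+b)^2-a^2-b^2\bigr)$, which needs three squaring units, whereas your polarization identity needs only two; this is immaterial for a constant-size gadget.
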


\begin{proof}
	The identity $ab=\tfrac12\bigl((a+b)^2-a^2-b^2\bigr)$ expresses multiplication using one addition, three squaring operations, and one final affine combination. These operations can be implemented by a constant number of units with fixed coefficients, independent of the surrounding network.
\end{proof}

The next lemma shows that derivatives of $u_{\vtheta}$ with respect to the input variables can be represented by analytic networks with the same depth and with at most a constant factor more parameters.

\begin{lemma}
	\label{lem:derivative-network}
	Fix an architecture $(m_\ell)_{\ell=0}^L$ with $m_0=d$ and $m_L=1$, and write
	\[
	P:=\sum_{\ell=1}^L (m_\ell m_{\ell-1}+m_\ell),
	\ \
	N_{\mathrm{hid}}:=\sum_{\ell=1}^{L-1} m_\ell.
	\]
	Let $u_\theta$ be the corresponding fully connected $\tanh$ network (with linear output) on $\Om$.
	
	For every multi-index $\alpha\in\N_0^d$ with $1\le |\alpha|\le 2$, there exists a feedforward
	\emph{Pfaffian network architecture} $A_\alpha$ (in the sense of \citet[Section~4.2]{karpinski1997bounding})
	with the following properties:
	\begin{enumerate}
		\item (\emph{Parameter sharing}) The set of trainable parameters of $A_\alpha$ is exactly the original
		parameter vector $\theta\in\R^P$ (no additional free parameters).
		\item (\emph{Exact derivative}) For every $\theta$ and every $x\in\Om$,
		\[
		A_\alpha(x;\theta)=D^\alpha u_\theta(x).
		\]
		\item (\emph{Computation nodes}) The number $N_\alpha$ of computation nodes of $A_\alpha$ satisfies
		\[
		N_\alpha \le C_{d,|\alpha|}\,N_{\mathrm{hid}} + 3,
		\]
		where one may take
		\[
		C_{d,1}:=3+5d, \ \ 
		C_{d,2}:=8+5d+\frac{13}{2}d(d+1).
		\]
	\end{enumerate}
\end{lemma}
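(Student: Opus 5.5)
We construct $A_\alpha$ by forward-mode differentiation of the original network: at every layer, carry the pre-activations together with their first (and, for $|\alpha|=2$, second) input-derivatives, computed from the same weights $W_\ell,b_\ell$. Write $a_\ell=W_\ell z_{\ell-1}+b_\ell$ and $z_\ell=\tanh(a_\ell)$, so that $z_0=x$, $\partial_i z_0=e_i$ and $\partial_{ij}z_0=0$. The chain rule then gives the recursions
\[
\partial_i a_\ell = W_\ell\,\partial_i z_{\ell-1},\qquad
\partial_i z_\ell = \tanh'(a_\ell)\odot \partial_i a_\ell ,
\]
\[
\partial_{ij} z_\ell = \tanh''(a_\ell)\odot \partial_i a_\ell\odot\partial_j a_\ell + \tanh'(a_\ell)\odot W_\ell\,\partial_{ij} z_{\ell-1}.
\]
We use $\tanh'=1-\tanh^2$ and $\tanh''=-2\tanh\,(1-\tanh^2)$. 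Both derivatives are polynomials in the value $\tanh(a_\ell)$, which the original network already computes, so no new activation types are needed beyond $\tanh$ and polynomial gates.

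The key points are that every affine map in the recursion uses only the entries of $W_\ell$ (and $b_\ell$ for $a_\ell$), so the trainable parameter vector is exactly $\theta$, and the output $D^\alpha u_\theta = W_L\,\partial^\alpha z_{L-1}$ is again linear in $W_L$. This gives items 1 and 2 directly, the latter by induction over layers via the chain rule.

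\textbf{Step 1: realize each operation as Pfaffian nodes.} In the Karpinski--Macintyre framework, a node computes a polynomial in its inputs and the trainable parameters composed with a Pfaffian activation. Products such as $\tanh'(a)\cdot v$ can be taken either as degree-bounded node polynomials or, if nodes are restricted to affine-plus-activation, via the constant-size squaring gadget of Lemma~\ref{lem:product-network}. The product $\tanh'(a)\cdot v$ needs one squaring node for $\tanh^2$ and a product gadget. $\tanh$ and $t\mapsto t^2$ share a Pfaffian chain of bounded length and degree, so $q$ and $D$ remain absolute constants.

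\textbf{Step 2: count computation nodes per hidden neuron.} For each of the $N_{\mathrm{hid}}$ hidden neurons we count the following.
\begin{enumerate}
\item One $\tanh$ node, plus a few nodes for $\tanh'$ (and $\tanh''$).
\item For $|\alpha|=1$, the $d$ directional derivatives $\partial_i a$ and $\partial_i z$, each requiring a bounded number of product-gadget nodes (about $5$). This yields the constant $3+5d$.
\item For $|\alpha|=2$, additionally the $d(d+1)/2$ symmetric pairs $(i,j)$. Each pair needs two products and a sum, about $13$ nodes per pair, hence the term $\tfrac{13}{2}d(d+1)$, while the first-order part and the $\tanh''$ computation contribute $8+5d$.
\end{enumerate}
The output layer contributes a bounded number of nodes, giving the additive $+3$. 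One could restrict to the single $\partial_i$ and $\partial_{ij}$ actually needed for $\alpha$, but counting all of them is harmless and matches the stated constants.

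\textbf{Main obstacle.} The hardest part is keeping the bookkeeping faithful to the formal definition of Pfaffian architecture in \citet{karpinski1997bounding}. Node polynomials must be polynomial in the inputs and parameters with bounded degree $d_{\poly}$ and monomial count $s\le\mathrm{poly}(N)$, as required for Corollary~\ref{cor:pdim-scaling}. In particular, the affine map $W_\ell\,\partial_i z_{\ell-1}$ is a degree-2 polynomial in (parameter, input) with $m_{\ell-1}$ monomials, which is admissible. The product gadgets must not introduce new parameters, so all their coefficients are fixed constants. I would verify these degree and monomial bounds node by node, accepting slightly larger constants if a gadget needs more units.
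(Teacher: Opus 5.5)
Your proposal is correct and follows essentially the same route as the paper. You propagate $\partial_i a,\partial_i z,\partial_{ij}z$ forward using the original weights, and write $\tanh'=1-z^2$ and $\tanh''=-2z+2z^3$ in terms of the already-computed $z=\tanh(a)$. Products are realized with the constant-size gadget of Lemma~\ref{lem:product-network}, bounded by $4$ nodes. The count is then $3+5d$ nodes per hidden neuron at first order, plus $5$ nodes for $z^3$ and $\tanh''$ and $13$ nodes per pair $i\le j$ at second order.

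The one slip is in your per-pair count. The update $\partial_{ij}z=\tanh''(a)\,\partial_i a\,\partial_j a+\tanh'(a)\,\partial_{ij}a$ needs three multiplications: two for the triple product and one for the second term. Adding one affine node for $\partial_{ij}a$ gives exactly $1+3\cdot 4=13$, whereas ``two products and a sum'' would not reach that number.
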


\begin{proof}
	Write the base $\tanh$ network as a feedforward computation graph with affine maps, $\tanh$ nonlinearities, and a final
	linear output. This fits the Pfaffian network framework of \citep{karpinski1997bounding} because affine maps are
	polynomials and $\tanh$ is Pfaffian.
	
	Fix a hidden neuron with pre-activation $a$ and activation $z=\tanh(a)$. Using only $\tanh$, squaring, affine
	combinations, and the exact multiplication subnetwork from Lemma~\ref{lem:product-network}, we can compute
	$z^2$, $s:=1-z^2$, and $t:=-2z+2z^3$. The identities $\tanh'(a)=s$ and $\tanh''(a)=t$ then hold exactly.
	
	We construct $A_\alpha$ by propagating first and second input derivatives through the graph using the chain rule.
	For each input coordinate $i\in\{1,\dots,d\}$,
	$\partial_i a$ is an affine combination of derivatives from the previous layer with coefficients given by the
	original weights, and $\partial_i z=s\,\partial_i a$ is obtained using one multiplication subnetwork. Similarly, for
	$1\le i\le j\le d$,
	$\partial_{ij} a$ is affine in previous-layer derivatives and
	\[
	\partial_{ij} z=t\,(\partial_i a)(\partial_j a)+s\,\partial_{ij} a
	\]
	is implemented using three multiplications and one affine combination. All trainable coefficients are reused from
	$\theta$, while any additional coefficients introduced by squaring, addition, and the product construction are fixed
	constants. This gives parameter sharing and the exact derivative identity.
	
	It remains to bound the number of computation nodes. Using the concrete multiplication construction of
	Lemma~\ref{lem:product-network}, one multiplication can be implemented with a constant number of computation nodes, and
	we may upper bound this constant by $4$. For each hidden neuron, computing $z$, $z^2$, and $s=1-z^2$ costs
	$3$ nodes. For each $i$, computing $\partial_i a$ costs one affine node and computing $\partial_i z=s\,\partial_i a$
	costs at most $4$ nodes, for a total of $5d$ nodes. For second derivatives, computing $z^3$ and
	$t=-2z+2z^3$ costs at most $5$ additional nodes. For each pair $1\le i\le j\le d$, the update for
	$\partial_{ij} z$ uses one affine node for $\partial_{ij} a$ and at most $3$ multiplications, hence at most
	$1+3\cdot 4=13$ nodes per pair. Since there are $d(d+1)/2$ such pairs, the per-neuron cost is bounded by
	$C_{d,1}=3+5d$ for first derivatives and by $C_{d,2}=8+5d+\tfrac{13}{2}d(d+1)$ for derivatives up to order $2$.
	Multiplying by $N_{\mathrm{hid}}$ and adding a constant number of output nodes yields the claimed bound on $N_\alpha$.
\end{proof}

We now define, for $k\in\{0,1,2\}$, the derivative classes
\begin{equation}
	\label{eq:Fk-classes-tanh}
	\Fclass^{(k)}
	:=
	\Bigl\{
	D^\alpha u_{\vtheta}
	:
	u_{\vtheta}\in\cU_{L,m,B},\
	\alpha\in\N_0^d,\
	|\alpha|=k
	\Bigr\},
\end{equation}
with the convention that $D^0 u_{\vtheta}=u_{\vtheta}$. Lemma~\ref{lem:derivative-network} and Corollary~\ref{cor:pdim-scaling} together give the following quantitative bound.

\begin{proposition}[Pseudo-dimension of $\tanh$ networks and their up-to-order-two derivatives]
	\label{prop:pdim-tanh-derivatives}
	Let $\cU_{L,m,B}$ be as in \eqref{eq:hypothesis-class} with architecture $(m_\ell)_{\ell=0}^L$,
	and define
	\[
	P:=\sum_{\ell=1}^L (m_\ell m_{\ell-1}+m_\ell),
	\ \
	N_{\mathrm{hid}}:=\sum_{\ell=1}^{L-1} m_\ell, \
	\text{ and }
	N_k:=C_{d,k}\,N_{\mathrm{hid}}+3,
	\]
	with
	\[
	C_{d,0}:=1,\ \
	C_{d,1}:=3+5d, \ \text{ and }
	C_{d,2}:=8+5d+\frac{13}{2}d(d+1).
	\]
	Then there exists a constant $C_{\mathrm{pdim}}>0$ depending only on $d$ and on the activation family
	(ultimately $\tanh$) such that for each $k\in\{0,1,2\}$,
	\[
	\mathrm{Pdim}\bigl(\Fclass^{(k)}\bigr)
	\le
	C_{\mathrm{pdim}}\,(P N_k)^2 \log(P N_k).
	\]
\end{proposition}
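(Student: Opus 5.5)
The plan is to handle each $k\in\{0,1,2\}$ separately. For each $k$ we realize the whole class $\Fclass^{(k)}$ as a subclass of the function class of a single Pfaffian architecture. We then apply Corollary~\ref{cor:pdim-scaling} to that architecture.

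\textbf{Case $k=0$.} Here $\Fclass^{(0)}\subset\{u_{\vtheta}:\vtheta\in\Theta_{L,m}\}$. The $C^2$-norm constraint $\norm{u_{\vtheta}}_{C^2}\le B$ only restricts the parameter set. Pseudo-dimension is monotone under inclusion, so we may drop the constraint. The unconstrained class is computed by a Pfaffian network with the following data:
\begin{itemize}
\item $l=P$ parameters and $N_{\mathrm{hid}}+1\le N_0$ computation nodes;
\item affine node polynomials, so $d_{\poly}=2$ (the products $w\cdot z$ are degree-two in parameters and inputs);
\item at most $m+1$ monomials per node, which is $\le\poly(N_0)$;
\item Pfaffian chain data $q=1$, $D=2$, since $\tanh'=1-\tanh^2$.
\end{itemize}
Corollary~\ref{cor:pdim-scaling} then gives $C(PN_0)^2\log(PN_0)$.

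\textbf{Cases $k=1,2$.} A single class mixes several multi-indices $\alpha$ with $|\alpha|=k$. There are at most $d^2$ of them, so a union over finitely many subclasses is involved. I would handle this in one of two ways.
\begin{enumerate}
\item Use the standard fact that $\mathrm{Pdim}(\bigcup_{j\le r}\Fclass_j)\le C\max_j\mathrm{Pdim}(\Fclass_j)+C\log r$. Since $r\le d^2$, the extra term is absorbed into a constant depending on $d$.
\item Alternatively, build one architecture that computes all $D^\alpha u_{\vtheta}$ with $|\alpha|=k$ simultaneously. This is exactly the construction in the proof of Lemma~\ref{lem:derivative-network}, which propagates every $\partial_i$ and $\partial_{ij}$. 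It then selects $\alpha$ by treating a one-hot vector as extra inputs and using an output node $\sum_\alpha e_\alpha\, D^\alpha u$. This keeps $l=P$, adds $O(d^2)$ nodes, and raises the degree by one.
\end{enumerate}
For each fixed $\alpha$, Lemma~\ref{lem:derivative-network} supplies an architecture $A_\alpha$ with the following properties:
\begin{itemize}
\item exactly $P$ shared parameters;
\item $N_\alpha\le N_k$ nodes;
\item node polynomials of bounded degree: the product subnetwork of Lemma~\ref{lem:product-network} uses squaring units, so $d_{\poly}\le 3$ after absorbing the weights;
\item monomial count $\le m+O(1)$;
\item the same Pfaffian chain $q=1$, $D=2$, where squaring is polynomial and needs no chain.
\end{itemize}
Applying Theorem~\ref{thm:pdim-pfaffian-km} through Corollary~\ref{cor:pdim-scaling} yields $C(PN_k)^2\log(PN_k)$. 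Taking $C_{\mathrm{pdim}}$ as the maximum over the three cases, including the $d$-dependent union constant, finishes the proof.

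\textbf{Main obstacle.} The delicate step is checking that $A_\alpha$ genuinely fits the framework of \citet{karpinski1997bounding}. That framework requires $q,D,d_{\poly}$ to be absolute constants. Squaring units are not $\tanh$ units, so I would treat them as polynomial node computations rather than as activations. I would also have to verify that the Pfaffian chain stays that of $\tanh$ alone, so that $q$ does not grow with $N$. If that fails and $q$ grows linearly with the number of $\tanh$ nodes, I would fall back on the full $\Psi$ expression. That fallback still gives a polynomial bound, but with a worse exponent.
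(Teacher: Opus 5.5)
Your option (1) is the paper's proof: realize each $\{D^\alpha u_{\vtheta}\}$ by the architecture $A_\alpha$ of Lemma~\ref{lem:derivative-network} (with $l=P$ and at most $N_k$ nodes), apply Corollary~\ref{cor:pdim-scaling}, and absorb the union over the $\binom{d+k-1}{k}$ multi-indices with the finite-union bound for pseudo-dimension; your worry about $q$ is unnecessary, because the factor $lNq$ in $\Psi$ already charges one length-$q$ chain per node. Drop option (2), though: feeding the one-hot selector as an \emph{input} does not control the union, since in a pseudo-shattering each hypothesis must be free to pick its own $\alpha$ at the same sample points (e.g.\ $\{\mathbf{1}\{x>t\}\}\cup\{\mathbf{1}\{x<t\}\}$ has VC dimension $2$, while the corresponding shared-$t$ selector class has VC dimension $1$), so that option only works if the selector weights are made $O(d^2)$ extra trainable parameters, or if you add the same counting argument as in option (1).
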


\begin{proof}
	Fix $k\in\{0,1,2\}$ and a multi-index $\alpha$ with $|\alpha|=k$.
	By Lemma~\ref{lem:derivative-network}, the class $\Fclass^{(\alpha)}:=\{D^\alpha u_\theta:\theta\in\Theta_{L,m}\}$
	is contained in the Pfaffian network class $\Fclass(A_\alpha)$ of an architecture with
	$l=P$ trainable parameters and at most $N_k$ computation nodes. Hence
	\[
	\mathrm{Pdim}(\Fclass^{(\alpha)}) \le \mathrm{Pdim}(\Fclass(A_\alpha)).
	\]
	Applying Corollary~\ref{cor:pdim-scaling} (a consequence of Theorem~\ref{thm:pdim-pfaffian-km}) gives
	\[
	\mathrm{Pdim}(\Fclass^{(\alpha)})
	\le
	C (P N_k)^2 \log(P N_k)
	\]
	for a constant $C$ depending only on the fixed activation family.
	
	Finally, $\Fclass^{(k)}=\bigcup_{|\alpha|=k}\Fclass^{(\alpha)}$ is a finite union over
	$\binom{d+k-1}{k}$ multi-indices. Using the standard finite-union bound for pseudo-dimension
	\citep[Section~11.2]{anthony2009neural} and absorbing $\binom{d+k-1}{k}$ into the constant yields the claim.
\end{proof}

Proposition~\ref{prop:pdim-tanh-derivatives} yields, to the best of our knowledge, the first rigorous bounds for first and second order derivatives of $\tanh$ networks. Together with Lemma~\ref{lem:dudley} and Proposition~\ref{prop:covering-pdim}, it yields the Rademacher complexity bounds for the PDE, boundary and FK losses stated later in this appendix.

\subsubsection{Finite-sample bound on the statistical error}
We now bound $\Errstat$ using the pseudo-dimension estimates from Proposition~\ref{prop:pdim-tanh-derivatives}. Recall that
\[
\Risk(u)
=
\lambda_{\mathrm{PDE}}\Risk_{\mathrm{PDE}}(u)
+
\lambda_{\pOm}\Risk_{\pOm}(u)
+
\lambda_{\FK}\Risk_{\FK}(u),
\]
and that $\Riskemp$ is defined in \eqref{eq:emp-risk-total}. We first separate the contributions of the three components.

\begin{lemma}[Decomposition of the statistical error]
	\label{lem:stat-decomp}
	For every hypothesis class $\cU$,
	\[
	\Errstat
	\le
	\lambda_{\mathrm{PDE}}\,\Errstat^{\mathrm{PDE}}
	+
	\lambda_{\pOm}\,\Errstat^{\pOm}
	+
	\lambda_{\FK}\,\Errstat^{\FK},
	\]
	where
	\begin{align*}
		\Errstat^{\mathrm{PDE}}
		&:=
		\sup_{u\in\cU}
		\Bigl|\Risk_{\mathrm{PDE}}(u) - \Riskemp_{\mathrm{PDE}}(u)\Bigr|,\\
		\Errstat^{\pOm}
		&:=
		\sup_{u\in\cU}
		\Bigl|\Risk_{\pOm}(u) - \Riskemp_{\pOm}(u)\Bigr|,\\
		\Errstat^{\FK}
		&:=
		\sup_{u\in\cU}
		\Bigl|\Risk_{\FK}(u) - \Riskemp_{\FK}(u)\Bigr|.
	\end{align*}
\end{lemma}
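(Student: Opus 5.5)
The plan is to prove the decomposition directly from the definitions, using only linearity of the risks in their components and the triangle inequality inside the supremum. No probabilistic input is needed: the statement is deterministic and holds for every realization of the samples and every hypothesis class $\cU$.

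First, fix an arbitrary $u\in\cU$. By \eqref{eq:pop-risk-total} and \eqref{eq:emp-risk-total}, both $\Risk$ and $\Riskemp$ are the same nonnegative linear combination of their three components, with identical weights $(\lambda_{\mathrm{PDE}},\lambda_{\pOm},\lambda_{\FK})$. Hence
\[
\Risk(u)-\Riskemp(u)=\lambda_{\mathrm{PDE}}\bigl(\Risk_{\mathrm{PDE}}(u)-\Riskemp_{\mathrm{PDE}}(u)\bigr)+\lambda_{\pOm}\bigl(\Risk_{\pOm}(u)-\Riskemp_{\pOm}(u)\bigr)+\lambda_{\FK}\bigl(\Risk_{\FK}(u)-\Riskemp_{\FK}(u)\bigr).
\]

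Second, take absolute values and apply the triangle inequality. Since all weights are nonnegative, $|\lambda_j a_j|=\lambda_j|a_j|$, so each component term is bounded by the corresponding supremum $\lambda_j\,\Errstat^{j}$. The right-hand side then no longer depends on $u$, and taking the supremum over $u\in\cU$ on the left yields the claim via the definition \eqref{eq:def-app-stat}.

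The only point requiring care is the nonnegativity of the weights, which the setting assumes. A secondary point is finiteness of the suprema; if one of them is infinite, the inequality holds trivially. Thus there is no real obstacle, and the substantive work lies in the subsequent component-wise bounds.
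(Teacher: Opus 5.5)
Your proposal is correct and follows the same approach as the paper, which also proves the lemma as an immediate consequence of the linearity of $\Risk$ and $\Riskemp$ in their three components together with the triangle inequality. Your explicit remark that the nonnegativity of the weights is needed to write $|\lambda_j a_j| = \lambda_j |a_j|$ is a useful detail that the paper leaves implicit.
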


\begin{proof}
	This is an immediate consequence of the triangle inequality and the linearity of $\Risk$ and $\Riskemp$ in their three components.
\end{proof}
\begin{lemma}[Pseudo-dimension and fixed multipliers / finite sums]
	\label{lem:pdim-mult-sum}
	Let $\Fclass\subset\{f:\mathcal{Z}\to\R\}$ be a function class with $\mathrm{Pdim}(\Fclass)=d<\infty$ and let $g:\mathcal{Z}\to\R$ be a fixed (non-random) function. Then
	\[
	\mathrm{Pdim}\bigl(g\Fclass\bigr)
	:=
	\mathrm{Pdim}\bigl(\{z\mapsto g(z)f(z): f\in\Fclass\}\bigr)
	\le d.
	\]
	Moreover, if $\Fclass_1,\dots,\Fclass_m$ are function classes with pseudo-dimensions $d_j:=\mathrm{Pdim}(\Fclass_j)$, and
	\[
	\Fclass_{\mathrm{sum}}
	:=
	\Bigl\{\,z\mapsto \sum_{j=1}^m f_j(z)\;:\; f_j\in\Fclass_j\,\Bigr\},
	\]
	then there exists a constant $C_m>0$, depending only on $m$, such that
	\[
	\mathrm{Pdim}(\Fclass_{\mathrm{sum}})
	\le
	C_m \sum_{j=1}^m d_j.
	\]
\end{lemma}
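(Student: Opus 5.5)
For the first claim (fixed multipliers) I would argue directly from the definition of pseudo-shattering via Lemma~\ref{lem:pdim-is-vc-subgraph}. Suppose $g\Fclass$ pseudo-shatters $z_1,\dots,z_n$ with witnesses $t_1,\dots,t_n$. No $z_i$ can have $g(z_i)=0$: there $g(z_i)f(z_i)\equiv 0$, so the bit $\mathbf{1}\{g f(z_i)>t_i\}$ is constant and the point cannot be shattered. Hence every $g(z_i)\neq 0$. Set $s_i:=t_i/g(z_i)$. If $g(z_i)>0$, then $g f(z_i)>t_i\iff f(z_i)>s_i$. If $g(z_i)<0$, then $g f(z_i)>t_i\iff f(z_i)<s_i$.

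So the $2^n$ patterns realized by $g\Fclass$ map, after flipping the bits with $g(z_i)<0$, bijectively onto $2^n$ patterns of the predicates $f(z_i)>s_i$ or $f(z_i)\ge s_i$. The only nuisance is strict versus non-strict inequality. To handle it, I would fix a finite family $f_1,\dots,f_{2^n}\in\Fclass$ realizing all patterns. I would then replace each $s_i$ with $g(z_i)<0$ by $s_i-\varepsilon$, where $\varepsilon>0$ is smaller than every positive gap $s_i-f_k(z_i)$. This is possible since there are finitely many such gaps. After this shift, $\{f_k(z_i)<s_i\}$ coincides with $\{f_k(z_i)\le s_i-\varepsilon\}$, which is the complement of $\{f_k(z_i)>s_i-\varepsilon\}$. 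Therefore $\Fclass$ pseudo-shatters $z_{1:n}$ with witnesses $s_i$ or $s_i-\varepsilon$, giving $n\le d$.

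For the finite-sum claim I would reduce to counting sign patterns. For $n$ points and thresholds, the number of patterns of $\mathbf{1}\{\sum_j f_j(z_i)>t_i\}$ would be bounded by a product of Sauer--Shelah-type growth bounds for each $\Fclass_j$, namely $\prod_j (en/d_j)^{d_j}$. Comparing this with $2^n$ then forces $n\le C_m\sum_j d_j$, by the usual inequality $2^n\le (en/D)^{D}\Rightarrow n\lesssim D\log$. To bound the patterns I would try a discretization: fix a scale, cover each $\Fclass_j|_{z_{1:n}}$ in $d_\infty$ using Proposition~\ref{prop:covering-pdim}, and note that the sum of covers $\varepsilon$-covers the sum class at scale $m\varepsilon$. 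One then chooses $\varepsilon$ below the minimal margin of the finitely many functions realizing the patterns.

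The hard step is exactly this last one, and I expect it to be the main obstacle. The margins depend on the shattering family itself, while the covering bound carries a factor $\log(B/\varepsilon)$. For general real classes, the sign of $f_1+f_2-t$ is not determined by the individual threshold patterns of $f_1$ and $f_2$, so the product-counting bound does not follow from pseudo-dimension alone. If the direct argument fails, I would fall back on the structure actually used in the paper. All $\Fclass_j$ there are Pfaffian network classes as in Lemma~\ref{lem:derivative-network}, multiplied by fixed bounded coefficients of $\Lcal$. Their sum is then computed by a single Pfaffian architecture: parameters concatenated, nodes added, one extra output node. Applying Theorem~\ref{thm:pdim-pfaffian-km} and Corollary~\ref{cor:pdim-scaling} to this architecture gives a bound in terms of the summed parameter and node counts, which is all that is used downstream.
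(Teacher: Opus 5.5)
Your argument for the first claim is correct and is the paper's own idea: rescale the threshold to $t_i/g(z_i)$ and discard points with $g(z_i)=0$. You carry it out more carefully than the paper does. The paper ignores that for $g(z_i)<0$ the predicate reverses to $f(z_i)<s_i$, whose complement $f(z_i)\ge s_i$ is non-strict. Your margin shift $s_i\mapsto s_i-\varepsilon$ over a finite realizing family is exactly what is needed to get back to the definition of pseudo-shattering.

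For the finite-sum claim your diagnosis is right, and no argument can close the gap, because the claim is false. The paper's proof only invokes a ``general pseudo-dimension bound for linear combinations'' from Anthony--Bartlett, and no bound of this kind can exist. Here is a counterexample. Take $\mathcal{Z}=\N$ and enumerate the finite subsets of $\N$ as $S_1,S_2,\dots$. Set $k(a):=\max\{1,\lfloor a\rfloor\}$ and $\varphi_z(a):=k(a)+\tfrac12\,\mathbf{1}\{z\in S_{k(a)}\}$.
\begin{itemize}
\item The bump $\tfrac12$ is smaller than the unit step, so each $a\mapsto\varphi_z(a)$ is nondecreasing. Hence for $\Fclass_1:=\{z\mapsto\varphi_z(a):a\in\R\}$ the sets $\{a:\varphi_{z_i}(a)>t_i\}$ are nested up-sets, and $\mathrm{Pdim}(\Fclass_1)\le 1$.
\item The constants $\Fclass_2:=\{z\mapsto b:b\in\R\}$ satisfy $\mathrm{Pdim}(\Fclass_2)=1$.
\item With $a=k$ and $b=-(k+\tfrac14)$ one has $\varphi_z(k)+b>0$ if and only if $z\in S_k$. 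So $\Fclass_1+\Fclass_2$ pseudo-shatters every finite subset of $\N$ with zero thresholds, and its pseudo-dimension is infinite.
\end{itemize}
Replacing the staircase $k$ by $1-2^{-k}$, with bumps $2^{-k-2}$ and $b=-(1-2^{-k}+2^{-k-3})$, makes both classes uniformly bounded. So boundedness does not rescue the statement either.

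Your covering idea survives if you stop before trying to return to shattering. Sums of $\varepsilon$-covers are $m\varepsilon$-covers, so $\mathcal{C}_\infty(m\varepsilon,\Fclass_{\mathrm{sum}},n)\le\prod_j\mathcal{C}_\infty(\varepsilon,\Fclass_j,n)$. For bounded classes, Proposition~\ref{prop:covering-pdim} then bounds the log-covering number by $\sum_j d_j\log\bigl(eBn/(\varepsilon d_j)\bigr)$. That is all Dudley's integral (Lemma~\ref{lem:dudley}) needs, so the $\sum_j d_j$ dependence holds at the level of metric entropy, not pseudo-dimension. Equivalently, Rademacher complexity is subadditive, which is what the paper actually uses for $\Gclass_{\mathrm{PDE}}$.

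Your Pfaffian fallback is also sound, but it proves a different, architecture-specific bound of the form $C(lN)^2\log(lN)$ rather than $C_m\sum_j d_j$. Two points need fixing for the class $\{\Lcal u_\theta-f\}$:
\begin{itemize}
\item The parameters are shared, not concatenated; concatenation is valid for a superset but looser.
\item The coefficients $c_\alpha$ and $f$ are only bounded measurable, so they cannot be Pfaffian nodes. Feed the values $c_\alpha(x)$ and $f(x)$ in as extra input coordinates instead. Precomposing with the fixed map $x\mapsto(x,c(x),f(x))$ cannot increase pseudo-dimension.
\end{itemize}
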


\begin{proof}
	For the first claim, fix a finite sample $z_1,\dots,z_n$ and thresholds $r_1,\dots,r_n\in\R$. If we replace the class $\Fclass$ by $g\Fclass$, the sign of $g(z_i)f(z_i)-r_i$ at each point $z_i$ is determined by the sign of $f(z_i)-r_i/g(z_i)$ whenever $g(z_i)\neq0$, while points with $g(z_i)=0$ contribute no additional sign variability. Thus the number of sign patterns achievable by $g\Fclass$ on any finite sample is at most the number achievable by $\Fclass$, so $\mathrm{Pdim}(g\Fclass)\le \mathrm{Pdim}(\Fclass)$.
	
	The bound for finite sums is a special case of the general pseudo-dimension bound for linear combinations of real-valued function classes as shown by \citet[Chapter 11]{anthony2009neural}. Since the number $m$ of summands is fixed in our setting, its contribution can be absorbed into the constant $C_m$.
\end{proof}

We treat each term in Lemma~\ref{lem:stat-decomp} with Lemma~\ref{lem:rad-basic} applied to an appropriate function class. We illustrate the argument for $\Errstat^{\mathrm{PDE}}$; the remaining terms follow in the same way.

Define
\[
\Fclass_{\mathrm{PDE}}
:=
\Bigl\{
x\mapsto
\abs{\Om}\,
\bigl(\Lcal u(x) - f(x)\bigr)^2
:
u\in\cU
\Bigr\}.
\]

Define the (unsquared) residual class
\[
{\mathcal{G}}_{\mathrm{PDE}}
:=
\Bigl\{
x\mapsto \Lcal u(x) - f(x) : u\in\cU
\Bigr\}.
\]
By the bounds above we have
\[
\sup_{g\in{\mathcal{G}}_{\mathrm{PDE}}}\sup_{x\in\Om} \abs{g(x)}
\le
C_{\mathrm{PDE}}.
\]

The mapping $u\mapsto\Lcal u$ is linear in $u$ and involves only first and second order derivatives. In particular, every $g\in{\mathcal{G}}_{\mathrm{PDE}}$ can be written as a finite linear combination (with fixed coefficient functions given by the coefficients of $\Lcal$ and by $-f$) of functions from the derivative classes $\Fclass^{(0)},\Fclass^{(1)},\Fclass^{(2)}$ defined in~\eqref{eq:Fk-classes-tanh}. Note that for every $g\in\Gclass_{\mathrm{PDE}}$ the function
\[
x \mapsto \abs{\Om}\,g(x)^2
\]
belongs to $\Fclass_{\mathrm{PDE}}$, and the mapping $t\mapsto \abs{\Om}\,t^2$ is $2\abs{\Om}C_{\mathrm{PDE}}$-Lipschitz on $[-C_{\mathrm{PDE}},C_{\mathrm{PDE}}]$. Hence, by the contraction inequality in Lemma~\ref{lem:rad-basic},
\[
\Rad_{N_{\mathrm{int}}}(\Fclass_{\mathrm{PDE}})
\le
2\abs{\Om}C_{\mathrm{PDE}}\,
\Rad_{N_{\mathrm{int}}}({\mathcal{G}}_{\mathrm{PDE}}).
\]
To bound $\Rad_{N_{\mathrm{int}}}(\Gclass_{\mathrm{PDE}})$ it suffices to control the Rademacher
complexities of the derivative classes $\Fclass^{(0)},\Fclass^{(1)},\Fclass^{(2)}$.
Indeed, writing $\Lcal$ in coordinates as
\[
\Lcal u(x) = \sum_{|\alpha|\le 2} c_\alpha(x)\,D^\alpha u(x),
\]
with bounded coefficient functions $c_\alpha$, we have for any fixed sample $x_1,\dots,x_{N_{\mathrm{int}}}$,
\[
\Rad_{N_{\mathrm{int}}}(\Gclass_{\mathrm{PDE}}\mid x_{1:N_{\mathrm{int}}})
\le
\sum_{|\alpha|\le 2} \|c_\alpha\|_{L^\infty(\Om)}\,
\Rad_{N_{\mathrm{int}}}(\Fclass^{(|\alpha|)}\mid x_{1:N_{\mathrm{int}}}),
\]
and the constant shift $-f$ does not affect Rademacher complexity.
By Proposition~\ref{prop:pdim-tanh-derivatives} (with $k\le 2$) and the Dudley/covering-number bounds,
this yields
\[
\Rad_{N_{\mathrm{int}}}(\Gclass_{\mathrm{PDE}})
\;\lesssim\;
(PN_2)\sqrt{\frac{\log(PN_2)}{N_{\mathrm{int}}}},
\]
up to logarithmic factors and constants depending only on $(d,L)$ and the coefficients of $\Lcal$.

Applying Lemma~\ref{lem:rad-basic}, Lemma~\ref{lem:dudley} and Proposition~\ref{prop:covering-pdim}
to $\Fclass_{\mathrm{PDE}}$, together with the bound
$\Rad_{N_{\mathrm{int}}}(\Gclass_{\mathrm{PDE}})\lesssim (PN_2)\sqrt{\log(PN_2)/N_{\mathrm{int}}}$,
yields the following bound.

\begin{lemma}[Statistical error for the PDE residual]
	\label{lem:stat-pde}
	There exist constants $C_{\mathrm{PDE},1},C_{\mathrm{PDE},2}>0$, depending only on $(d,L,m,B)$ and on the coefficients of $\Lcal$, such that for every $\delta\in(0,1)$, with probability at least $1-\delta$,
	\[
	\Errstat^{\mathrm{PDE}}
	\le
	C_{\mathrm{PDE},1} \abs{\Om} C_{\mathrm{PDE}}^2
	\sqrt{\frac{(P N_2)^2 \log(P N_2) + \log(2/\delta)}{N_{\mathrm{int}}}}
	+
	C_{\mathrm{PDE},2} \abs{\Om} C_{\mathrm{PDE}}^2
	\frac{(P N_2)^2 \log(P N_2) + \log(2/\delta)}{N_{\mathrm{int}}}.
	\]
\end{lemma}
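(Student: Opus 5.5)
The plan is to apply the uniform deviation bound of Lemma~\ref{lem:rad-basic} to the squared-residual class $\Fclass_{\mathrm{PDE}}$, whose elements are bounded by $\abs{\Om}C_{\mathrm{PDE}}^2$. The Rademacher term will be handled through a covering-number argument that uses the pseudo-dimension of Proposition~\ref{prop:pdim-tanh-derivatives} directly; this avoids the lossy contraction route. Concretely, since $\Risk_{\mathrm{PDE}}(u)=\E[h(X^{\mathrm{int}})]$ and $\Riskemp_{\mathrm{PDE}}(u)$ is the empirical mean of $h$ over the $N_{\mathrm{int}}$ i.i.d.\ uniform points, with $h\in\Fclass_{\mathrm{PDE}}$, Lemma~\ref{lem:rad-basic} gives, with probability at least $1-\delta$,
\[
\Errstat^{\mathrm{PDE}}\le 2\Rad_{N_{\mathrm{int}}}(\Fclass_{\mathrm{PDE}})+3\abs{\Om}C_{\mathrm{PDE}}^2\sqrt{\log(2/\delta)/(2N_{\mathrm{int}})}.
\]
The second term is already of the claimed first-type form.

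The next step is to bound $\mathrm{Pdim}(\Fclass_{\mathrm{PDE}})$. Write $\Lcal u-f=\sum_{|\alpha|\le2}c_\alpha D^\alpha u - f$. By Lemma~\ref{lem:pdim-mult-sum}, each term $c_\alpha\Fclass^{(|\alpha|)}$ has pseudo-dimension at most $\mathrm{Pdim}(\Fclass^{(|\alpha|)})$. The sum of the finitely many terms, with the number of terms depending only on $d$, then has pseudo-dimension at most a constant times $\sum_k \mathrm{Pdim}(\Fclass^{(k)})\lesssim (PN_2)^2\log(PN_2)$, since $N_k\le N_2$. Shifting by the fixed function $-f$ does not change the pseudo-dimension: the threshold $t$ in Lemma~\ref{lem:pdim-is-vc-subgraph} is simply replaced by $t+f(x)$.

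Squaring is the delicate point. The map $t\mapsto\abs{\Om}t^2$ is not monotone, so the pseudo-dimension is not preserved verbatim. I would first try to work at the level of covering numbers instead. On $[-C_{\mathrm{PDE}},C_{\mathrm{PDE}}]$ the square is $2\abs{\Om}C_{\mathrm{PDE}}$-Lipschitz, so an $\varepsilon/(2\abs{\Om}C_{\mathrm{PDE}})$-cover of $\Gclass_{\mathrm{PDE}}|_{x_{1:n}}$ in $d_\infty$ yields an $\varepsilon$-cover of $\Fclass_{\mathrm{PDE}}|_{x_{1:n}}$. The former is bounded by Proposition~\ref{prop:covering-pdim}, after shifting to $[0,2C_{\mathrm{PDE}}]$, with $d_{\mathrm p}:=\mathrm{Pdim}(\Gclass_{\mathrm{PDE}})$. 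This requires $N_{\mathrm{int}}\ge d_{\mathrm p}$; if that fails, the bound is trivial anyway, because the right-hand side then exceeds the sup-bound $\abs{\Om}C_{\mathrm{PDE}}^2$. The latter fallback is exactly what the linear second term of the claim absorbs.

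Finally, I plug the covering bound into Dudley's integral (Lemma~\ref{lem:dudley}) with $B=\abs{\Om}C_{\mathrm{PDE}}^2$ and cutoff $\delta_0=B/\sqrt{N_{\mathrm{int}}}$. The integrand is of order $\sqrt{d_{\mathrm p}\log(eBN_{\mathrm{int}}/(\varepsilon d_{\mathrm p}))}$, which yields
\[
\Rad_{N_{\mathrm{int}}}(\Fclass_{\mathrm{PDE}})\lesssim \abs{\Om}C_{\mathrm{PDE}}^2\sqrt{d_{\mathrm p}\log N_{\mathrm{int}}/N_{\mathrm{int}}}.
\]
The logarithm in $N_{\mathrm{int}}$ is absorbed into the constants or logarithmic factors, as the statement permits. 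Inserting $d_{\mathrm p}\lesssim(PN_2)^2\log(PN_2)$ and combining with the concentration term via $\sqrt a+\sqrt b\le\sqrt{2(a+b)}$ gives the first term. The second, linear term either covers the regime $N_{\mathrm{int}}<d_{\mathrm p}$ or comes from using a Bernstein-type version of the deviation bound. I expect the main obstacle to be this bookkeeping: keeping the squaring step and the small-sample regime consistent with the exact form of the stated bound, with constants depending only on $(d,L,m,B)$ and the coefficients of $\Lcal$.
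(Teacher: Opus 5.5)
Your argument is correct in outline, but it takes a different route from the paper's.

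\textbf{How the two routes differ.} The paper works at the level of Rademacher complexity throughout:
\begin{enumerate}
\item it contracts from $\Fclass_{\mathrm{PDE}}$ to the unsquared class $\Gclass_{\mathrm{PDE}}$, at a cost of $2\abs{\Om}C_{\mathrm{PDE}}$;
\item it splits by subadditivity, $\Rad_{N_{\mathrm{int}}}(\Gclass_{\mathrm{PDE}}\mid x_{1:N_{\mathrm{int}}})\le\sum_{|\alpha|\le 2}\norm{c_\alpha}_{L^\infty}\Rad_{N_{\mathrm{int}}}(\Fclass^{(|\alpha|)}\mid x_{1:N_{\mathrm{int}}})$;
\item only then does it run pseudo-dimension $\to$ covering $\to$ Dudley on each derivative class $\Fclass^{(k)}$ separately.
\end{enumerate}
You instead bound one pseudo-dimension for the whole residual class, push $d_\infty$-covers through the square map, and apply Dudley once. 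Transferring covers through a $2\abs{\Om}C_{\mathrm{PDE}}$-Lipschitz map costs exactly the same factor as contraction, so contraction is not ``lossy''. The real difference is the combinatorial input each route needs. The paper needs only $\mathrm{Pdim}(\Fclass^{(k)})$, which is what \cref{prop:pdim-tanh-derivatives} provides, and subadditivity of $\Rad$ holds for arbitrary classes.

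\textbf{The step that needs repair.} Your route needs $\mathrm{Pdim}(\Gclass_{\mathrm{PDE}})$, and the finite-sum clause of \cref{lem:pdim-mult-sum} that you invoke is false for general classes: pseudo-dimension is not subadditive under sums. Here is a counterexample on $n$ points. Let $v_1,\dots,v_{2^n}$ realize all sign patterns and take $M>2\max_k\norm{v_k}_\infty$. The family $\{v_k+kM\mathbf{1}\}_k$ is pointwise increasing, and the constants $\{-kM\mathbf{1}\}_k$ form a one-dimensional family, so each has pseudo-dimension at most $1$. Yet their sum contains every $v_k$ and therefore pseudo-shatters all $n$ points.

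In the present setting the bound you want is still true, by a direct argument. The construction in \cref{lem:derivative-network} computes all $D^\alpha u_{\vtheta}$ with $|\alpha|\le 2$ in a single Pfaffian network with $O(N_2)$ nodes and no parameters beyond $\vtheta$. Treat $(c_\alpha(x))_\alpha$ and $f(x)$ as auxiliary inputs, and append one output node computing the degree-two polynomial $\sum_\alpha c_\alpha D^\alpha u_{\vtheta}-f$. Then \cref{cor:pdim-scaling} gives $\mathrm{Pdim}(\Gclass_{\mathrm{PDE}})\lesssim (PN_2)^2\log(PN_2)$ directly. With that replacement, your chain goes through, provided you also clip the cover centers to $[-C_{\mathrm{PDE}},C_{\mathrm{PDE}}]^{N_{\mathrm{int}}}$ before squaring so the Lipschitz bound applies.

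\textbf{The logarithm.} The lemma's constants depend only on $(d,L,m,B)$ and the coefficients of $\Lcal$. So the $\sqrt{\log N_{\mathrm{int}}}$ coming from \cref{prop:covering-pdim} and \cref{lem:dudley} cannot literally be absorbed, contrary to your ``as the statement permits''. The paper's own proof has the same slack, since it writes ``up to logarithmic factors''. This is harmless downstream, because \cref{thm:fk-pinn-error} is stated up to logarithmic factors in the sample sizes. If you want the lemma exactly as written, replace the $n$-dependent $d_\infty$ covering bound in the chaining step with Haussler's $n$-independent $L^2(P_n)$ covering bound, of the form $(CB/\varepsilon)^{Cd_{\mathrm p}}$. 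This gives $\Rad_{N_{\mathrm{int}}}(\Fclass_{\mathrm{PDE}})\lesssim\abs{\Om}C_{\mathrm{PDE}}^2\sqrt{d_{\mathrm p}/N_{\mathrm{int}}}$.

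Your handling of the regime $N_{\mathrm{int}}<d_{\mathrm p}$ is fine: both risks lie in $[0,\abs{\Om}C_{\mathrm{PDE}}^2]$, so the trivial bound is absorbed by either term. Your treatment of the extra linear term is also fine.
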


The boundary term $\Errstat^{\pOm}$ is handled analogously, using that the boundary loss involves only $u_{\vtheta}$ itself and no derivatives. In this case the relevant function class is generated by $\Fclass^{(0)}$ and the same argument shows that
\begin{equation}
	\Errstat^{\pOm}
	\lesssim
	\abs{\pOm} C_{\pOm}^2
	(PN_0) \sqrt{\frac{\log(PN_0) + \log(1/\delta)}{N_{\pOm}}}.
	\label{eq:stat-bc-rate}
\end{equation}
for a suitable constant $C_{\pOm}$.

The FK term requires a small modification. Recall that $\Risk_{\FK}(u)$ is defined using the true solution $\uTrue$ in \eqref{eq:pop-risk-fk}, while $\Riskemp_{\FK}(u)$ is defined with the noisy labels $\uMC(x_k^{\FK})$. Writing
\[
\uMC(x_k^{\FK})
=
\uTrue(x_k^{\FK}) + b(x_k^{\FK}) + \zeta_k,
\]
with $b$ and $\zeta_k$ as in Proposition~\ref{prop:fk-subexp}, we deduce
\begin{align*}
	\Errstat^{\FK}
	&=
	\sup_{u\in\cU}
	\Biggl|
	\E\bigl[(u(X^{\FK})-\uTrue(X^{\FK}))^2\bigr]
	-
	\frac{1}{N_{\FK}}\sum_{k=1}^{N_{\FK}}
	\bigl(u(x_k^{\FK})-\uMC(x_k^{\FK})\bigr)^2
	\Biggr|\\
	&\le
	\sup_{u\in\cU}
	\Biggl|
	\E\bigl[(u(X^{\FK})-\uTrue(X^{\FK}))^2\bigr]
	-
	\frac{1}{N_{\FK}}\sum_{k=1}^{N_{\FK}}
	\bigl(u(x_k^{\FK})-\uTrue(x_k^{\FK})\bigr)^2
	\Biggr|\\
	&\quad+
	\sup_{u\in\cU}
	\Biggl|
	\frac{1}{N_{\FK}}\sum_{k=1}^{N_{\FK}}
	\bigl[
	(u(x_k^{\FK})-\uTrue(x_k^{\FK}))^2
	-
	(u(x_k^{\FK})-\uMC(x_k^{\FK}))^2
	\bigr]
	\Biggr|\\
	&=:
	A_{\FK} + B_{\FK}.
\end{align*}
The term $A_{\FK}$ can be bounded exactly as in Lemma~\ref{lem:stat-pde}, now using only the class $\Fclass^{(0)}$ on $\Om$, since it is an empirical-process deviation for the squared error with respect to $\uTrue$. For $B_{\FK}$ we expand the difference of squares and use that
\[
(u-\uTrue)^2 - (u-\uMC)^2
=
2\,(u-\uTrue)\bigl(b(x_k^{\FK})+\zeta_k\bigr)
-
\bigl(b(x_k^{\FK})+\zeta_k\bigr)^2.
\]
Using $\norm{u-\uTrue}_{C(\overline{\Om})}\le 2B$, the bias bound in Proposition~\ref{prop:fk-subexp}, and the inequality
$\bigl|\frac{1}{N_{\FK}}\sum_{k=1}^{N_{\FK}} a_k\bigr|\le \sup_{1\le k\le N_{\FK}} |a_k|$,
we obtain the deterministic estimate
\[
\sup_{u\in\cU}
\Biggl|
\frac{1}{N_{\FK}}\sum_{k=1}^{N_{\FK}}
\Bigl(
2\,(u(x_k^{\FK})-\uTrue(x_k^{\FK}))\,b(x_k^{\FK})
-
b(x_k^{\FK})^2
\Bigr)
\Biggr|
\le
4B\,\norm{b}_{L^\infty(\Om)} + \norm{b}_{L^\infty(\Om)}^2.
\]
Define the Monte Carlo bias level
\[
\varepsilon_{\mathrm{bias}}
:=
C_{\mathrm{bias}} \sqrt{\dt} + C_T e^{-\kappa T_{\max}},
\]
so that $\norm{b}_{L^\infty(\Om)}\le \varepsilon_{\mathrm{bias}}$.
We keep the contribution $4B\,\varepsilon_{\mathrm{bias}}+\varepsilon_{\mathrm{bias}}^2$ explicit in the final bound.

The remaining stochastic contribution in $B_{\FK}$ involves the mean-zero sub-exponential fluctuations $\zeta_k$.
A Bernstein-type bound yields a term of order $N_{\FK}^{-1/2}$ up to logarithmic factors, as shown in Appendix~\ref{app:MC-error-analysis}. The genuinely stochastic part, involving the $\zeta_k$, is mean-zero and sub-exponential, and a standard Bernstein-type argument yields a contribution of order $\mathcal{O}(N_{\FK}^{-1/2})$ (up to logarithmic factors), as we show in Appendix~\ref{app:MC-error-analysis}.

Combining these ingredients we obtain the following finite-sample bound on the full statistical error.

\begin{proposition}[Finite-sample statistical error bound]
	\label{prop:statistical-error}
	Under \cref{assump:pde-well-posed} and the standing assumptions on $\cU$, there exist constants $C_{\mathrm{stat}},C_{\FK}>0$ such that for every $\delta\in(0,1)$, with probability at least $1-\delta$,
	\begin{align}
		\Errstat
		&\le
		C_{\mathrm{stat}}
		\Biggl[
		(PN_2)\sqrt{
			\frac{\log(PN_2) + \log(6/\delta)}{N_{\mathrm{int}}}
		}
		+
		(PN_0)\sqrt{
			\frac{\log(PN_0) + \log(6/\delta)}{N_{\pOm}}
		}
		\Biggr]\nonumber\\
		&\quad+
		C_{\FK}
		\Biggl[
		(PN_0)\sqrt{
			\frac{\log(PN_0) + \log(6/\delta)}{N_{\FK}}
		}
		+
		\sqrt{
			\frac{\log(6/\delta)}{N_{\FK}}
		}
		+
		\frac{\log(6/\delta)}{N_{\FK}}
		\Biggr]
		+
		C_{\mathrm{bias}}
		\bigl(
		\varepsilon_{\mathrm{bias}} + \varepsilon_{\mathrm{bias}}^2
		\bigr),
		\label{eq:Errstat-final}
	\end{align}
	where $\varepsilon_{\mathrm{bias}} := C_{\mathrm{bias}} \sqrt{\dt} + C_T e^{-\kappa T_{\max}}$ is the FK Monte Carlo bias level from Proposition~\ref{prop:fk-subexp}. In particular, for fixed network size $P$ and fixed confidence level, the first bracket in \eqref{eq:Errstat-final} tends to zero as $N_{\mathrm{int}},N_{\pOm}\to\infty$, while the second bracket decays only as $N_{\FK}^{-1/2}$ (up to logarithmic factors). Thus, when $N_{\FK}$ is small or remains bounded, the FK contribution controls the asymptotic order of $\Errstat$ and induces a nonvanishing statistical floor of order
	\[
	\Errstat \gtrsim (PN_2)\sqrt{\frac{\log(PN_2)}{N_{\FK}}}.
	\]
\end{proposition}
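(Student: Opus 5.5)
The plan is to split $\Errstat$ via Lemma~\ref{lem:stat-decomp} into the three weighted pieces $\lambda_{\mathrm{PDE}}\Errstat^{\mathrm{PDE}}$, $\lambda_{\pOm}\Errstat^{\pOm}$ and $\lambda_{\FK}\Errstat^{\FK}$. We allocate confidence $\delta/6$ to each of the (at most six) high-probability events involved and conclude by a union bound. This budget accounts for the $\log(6/\delta)$ factors.

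For the PDE term we invoke Lemma~\ref{lem:stat-pde} at level $\delta/6$. Under the scaling $N_{\mathrm{int}}\gtrsim (PN_2)^2\log(PN_2)$, the quadratic remainder is dominated by the square-root term. Writing $\sqrt{(PN_2)^2\log(PN_2)+\log(6/\delta)}\le (PN_2)\sqrt{\log(PN_2)+\log(6/\delta)}$ (valid since $PN_2\ge1$) then gives the first summand. The boundary term is identical with $\Fclass^{(0)}$ in place of the second-order classes: the Rademacher complexity of $\Fclass^{(0)}$ is bounded through Dudley (Lemma~\ref{lem:dudley}), the covering bound of Proposition~\ref{prop:covering-pdim}, and the pseudo-dimension bound of Proposition~\ref{prop:pdim-tanh-derivatives} with $k=0$. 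The squaring is handled by contraction, since $t\mapsto t^2$ is Lipschitz on $[-2B,2B]$ and $\norm{u}_{C^2}\le B$. Finally, Lemma~\ref{lem:rad-basic} supplies the deviation bound, producing \eqref{eq:stat-bc-rate}.

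For the FK term we use the split $\Errstat^{\FK}\le A_{\FK}+B_{\FK}$ already performed in the text. We treat $A_{\FK}$ exactly like the boundary term, with sample size $N_{\FK}$ and class $\{(u-\uTrue)^2\}$. Since $\uTrue$ is a fixed function, Lemma~\ref{lem:pdim-mult-sum} keeps the pseudo-dimension controlled, and contraction applies because $\abs{u-\uTrue}\le 2B$ for $B\ge\norm{\uTrue}_{C}$. This yields the $(PN_0)\sqrt{\cdot/N_{\FK}}$ term. For $B_{\FK}$ we expand the difference of squares, writing $e_k:=b(x_k^{\FK})+\zeta_k$. The purely bias parts were already bounded deterministically by $4B\varepsilon_{\mathrm{bias}}+\varepsilon_{\mathrm{bias}}^2$. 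The remaining parts are
\[
\sup_{u}\Bigl|\tfrac{2}{N_{\FK}}\textstyle\sum_k (u-\uTrue)(x_k^{\FK})\zeta_k\Bigr|,
\qquad
\tfrac{2}{N_{\FK}}\textstyle\sum_k \abs{b(x_k^{\FK})\zeta_k},
\qquad
\tfrac{1}{N_{\FK}}\textstyle\sum_k \zeta_k^2 .
\]
For the cross term with the supremum, the crude route avoids a multiplier process. We bound it by $4B\cdot\frac1{N_{\FK}}\sum_k\abs{\zeta_k}$ and control $\frac1{N_{\FK}}\sum_k(\abs{\zeta_k}-\E\abs{\zeta_k})$ with Theorem~\ref{thm:fk-Bernstein}. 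The centered absolute values remain sub-exponential, with parameters independent of $N_{\FK}$ and $\Nmc$ by Proposition~\ref{prop:fk-subexp}, so this yields $\sqrt{\log(6/\delta)/N_{\FK}}+\log(6/\delta)/N_{\FK}$. The quadratic term $\frac1{N_{\FK}}\sum\zeta_k^2$ is handled by the same Bernstein inequality for $\zeta_k^2-\E\zeta_k^2$: squares of sub-exponential variables are sub-Weibull of order $1/2$, so we either accept an extra logarithmic factor or truncate $\zeta_k$ at level $\log(N_{\FK}/\delta)$. The mixed bias–noise term is at most $\varepsilon_{\mathrm{bias}}$ times the same average of $\abs{\zeta_k}$.

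The main obstacle is that this crude route leaves nonvanishing means $4B\,\E\abs{\zeta}$ and $\E\zeta^2$, which do not decay in $N_{\FK}$. Removing them properly requires exploiting that $\zeta_k$ is conditionally mean-zero. The first attempt would be a multiplier (Rademacher-type) symmetrization of $\sup_u\frac1{N_{\FK}}\sum_k(u-\uTrue)(x_k)\zeta_k$, conditional on the $x_k$, using the sub-exponential tail of $\zeta$ and the $\Fclass^{(0)}$ covering bound. This gives $(PN_0)\sqrt{\log/N_{\FK}}$ up to logarithms. The term $\E\zeta^2$ is independent of $u$ and therefore does not affect minimizers; in the risk comparison it can be absorbed by recentering $\Riskemp_{\FK}$ by $\frac1{N_{\FK}}\sum\zeta_k^2$, or alternatively noted to be $\cO(1/\Nmc)$ and merged into the constant $C_{\FK}$. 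Collecting all pieces and renaming constants gives \eqref{eq:Errstat-final}. The closing floor remark follows by observing that the bracket multiplying $C_{\FK}$ does not depend on $N_{\mathrm{int}}$ or $N_{\pOm}$.
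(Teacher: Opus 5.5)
Your outline follows the paper's own route almost step for step: Lemma~\ref{lem:stat-decomp}, Lemma~\ref{lem:stat-pde} and its boundary analogue, the split $\Errstat^{\FK}\le A_{\FK}+B_{\FK}$, Bernstein for the label noise, and a union bound. You also correctly locate the place where that route breaks. The paper's sketch asserts that the $\zeta$-dependent part of $B_{\FK}$ is mean-zero, but the piece $-\frac{1}{N_{\FK}}\sum_k\zeta_k^2$ is not. The gap is your final step, ``collecting all pieces gives \eqref{eq:Errstat-final}'', because neither of your fixes delivers that inequality. Recentering $\Riskemp_{\FK}$ by $\frac{1}{N_{\FK}}\sum_k\zeta_k^2$ changes the quantity $\Errstat$ that the proposition is about. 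And $\E[\zeta^2]$, although of order $1/\Nmc$, cannot be merged into $C_{\FK}$: every term in the bracket multiplying $C_{\FK}$ tends to zero as $N_{\FK}\to\infty$, while $\E[\zeta^2]$ does not.

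In fact no argument can close this step, because \eqref{eq:Errstat-final} is false for $\Errstat$ as defined when $\lambda_{\FK}>0$. For a fixed $u\in\cU$ and $e_k:=b(x_k^{\FK})+\zeta_k$,
\[
\Riskemp_{\FK}(u)-\Risk_{\FK}(u)
=\Bigl[\frac{1}{N_{\FK}}\sum_{k}\bigl(u-\uTrue\bigr)^2(x_k^{\FK})-\Risk_{\FK}(u)\Bigr]
-\frac{2}{N_{\FK}}\sum_{k}\bigl(u-\uTrue\bigr)(x_k^{\FK})\,e_k
+\frac{1}{N_{\FK}}\sum_{k}e_k^2 .
\]
By the law of large numbers and $\E[\zeta\mid X^{\FK}]=0$, this converges to $-2\E[(u-\uTrue)b]+\E[b^2]+\E[\zeta^2]\ge\E[\zeta^2]-4B\varepsilon_{\mathrm{bias}}$. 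Since $\Errstat\ge\lambda_{\FK}\abs{\Risk_{\FK}(u)-\Riskemp_{\FK}(u)}-\lambda_{\mathrm{PDE}}\Errstat^{\mathrm{PDE}}-\lambda_{\pOm}\Errstat^{\pOm}$, $\Errstat$ stays above roughly $\lambda_{\FK}(\E[\zeta^2]-4B\varepsilon_{\mathrm{bias}})$ as all sample sizes grow. The right-hand side of \eqref{eq:Errstat-final}, by contrast, tends to $C_{\mathrm{bias}}(\varepsilon_{\mathrm{bias}}+\varepsilon_{\mathrm{bias}}^2)$. Taking $\dt\to0$, $T_{\max}=\infty$ and $\Nmc$ fixed sends $\varepsilon_{\mathrm{bias}}\to0$. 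Meanwhile $\E[\zeta^2]$, the average conditional payoff variance divided by $\Nmc$, stays bounded away from zero whenever the Feynman--Kac payoff is genuinely random, e.g.\ for the mean exit time.

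The statement can be repaired in one of two ways.
\begin{itemize}
\item Add an additive term $\lambda_{\FK}\E[\zeta^2]=\cO(1/\Nmc)$ to the right-hand side. After that, your sub-Weibull/truncation bound for $\frac{1}{N_{\FK}}\sum_k(\zeta_k^2-\E[\zeta_k^2])$ and the multiplier-process bound for the cross term go through up to logarithms.
\item Redefine $\Errstat$ as $\sup_{u\in\cU}\abs{(\Risk-\Riskemp)(u)-c}$ for a $u$-independent constant $c$. This is all the proof of Theorem~\ref{thm:fk-pinn-error-appendix} uses, since there $\Risk-\Riskemp$ only enters through differences at two hypotheses, which such a shift leaves unchanged.
\end{itemize}

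Two smaller remarks. First, your scaling assumption $N_{\mathrm{int}}\gtrsim(PN_2)^2\log(PN_2)$ is unnecessary: when it fails, the square-root term exceeds one and the trivial bound $\Errstat^{\mathrm{PDE}}\le 2\abs{\Om}C_{\mathrm{PDE}}^2$ suffices. Second, the closing ``$\gtrsim$'' is a lower bound on $\Errstat$, which no upper-bound argument (yours or the paper's) can establish. The floor that genuinely exists is the $\lambda_{\FK}\E[\zeta^2]$ term you found.
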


\noindent
The dependence on $\log(6/\delta)$ in \eqref{eq:Errstat-final} comes from a simple union bound. We first apply Lemma~\ref{lem:stat-pde} to control $\Errstat^{\mathrm{PDE}}$ with failure probability at most $\delta/3$ by replacing $\log(2/\delta)$ there with $\log(6/\delta)$. The same argument applied to the boundary term $\Errstat^{\pOm}$ yields an analogous bound, again with failure probability at most $\delta/3$. Finally, the FK contribution $\Errstat^{\FK}$ is controlled in the same way, with its own failure probability bounded by $\delta/3$ and with the same replacement of $\log(2/\delta)$ by $\log(6/\delta)$. A union bound over these three events shows that, with probability at least $1-\delta$, all three component bounds hold simultaneously, which implies \eqref{eq:Errstat-final} via Lemma~\ref{lem:stat-decomp}.

\subsection{Putting it all together: non-asymptotic error bounds for FK-PINNs}

We are now ready to combine the approximation, statistical and optimization bounds to prove the non-asymptotic error estimate announced in \cref{thm:fk-pinn-error}.

Let $\uHat\in\cU$ be an empirical minimizer of $\Riskemp$ and let $u_{{\vtheta}_T}$ be the output of $T$ steps of gradient descent on $\Loss(\vtheta)=\Riskemp(u_{\vtheta})$ starting from $\vtheta_0$ as in \cref{cor:uniform-pl}. We write
\[
\Errtot(T)
:=
\norm{u_{\vtheta_T} - \uTrue}_{L^2(\Om)}.
\]

\begin{theorem}[Non-asymptotic error bound for FK-PINNs]
	\label{thm:fk-pinn-error-appendix}
	Suppose that \cref{assump:pde-well-posed} holds and that the assumptions of Corollary~\ref{cor:uniform-pl} are satisfied, and that $\uTrue\in W^{s,\infty}(\Om)$ for some integer $s\ge 3$. Fix confidence level $\delta\in(0,1)$. Then there exist constants $C_{\mathrm{app}},C_{\mathrm{stat}},C_{\FK},C_{\mathrm{opt}},c_{\mathrm{opt}}>0$, independent of $(N_{\mathrm{int}},N_{\pOm},N_{\FK})$ and of $T$, such that with probability at least $1-\delta$ over the sampling of the collocation and FK points,
	\begin{equation}
		\label{eq:fk-pinn-master-bound}
		\begin{aligned}
			\Errtot(T)
			&\le
			C_{\mathrm{app}} P^{-\gamma}
			+
			C_{\mathrm{stat}}
			\Biggl[
			(PN_2)^{1/2}
			\Biggl(
			\frac{\log(PN_2) + \log(6/\delta)}{N_{\mathrm{int}}}
			\Biggr)^{1/4}
			+
			(PN_0)^{1/2}
			\Biggl(
			\frac{\log(PN_0) + \log(6/\delta)}{N_{\pOm}}
			\Biggr)^{1/4}
			\Biggr]\\
			&\quad+
			C_{\FK}
			\Biggl[
			(PN_0)^{1/2}
			\Biggl(
			\frac{\log(PN_0) + \log(6/\delta)}{N_{\FK}}
			\Biggr)^{1/4}
			+
			\Biggl(
			\frac{\log(6/\delta)}{N_{\FK}}
			\Biggr)^{1/4}
			+
			\Biggl(
			\frac{\log(6/\delta)}{N_{\FK}}
			\Biggr)^{1/2}
			\Biggr]\\
			&\quad+
			C_{\mathrm{bias}}
			\bigl(
			\varepsilon_{\mathrm{bias}} + \varepsilon_{\mathrm{bias}}^2
			\bigr)^{1/2}
			+
			C_{\mathrm{opt}} \exp(-c_{\mathrm{opt}} T/2).
		\end{aligned}
	\end{equation}
	where $\gamma = (s-2)/d$ and $P$ is the maximal number of parameters of networks in $\cU$. The constants $C_{\mathrm{app}},C_{\mathrm{stat}},C_{\FK},C_{\mathrm{opt}},c_{\mathrm{opt}}$ may depend on the task weights $(\lambda_{\mathrm{PDE}},\lambda_{\pOm},\lambda_{\FK})$ and on the PDE data, but not on the sample sizes $(N_{\mathrm{int}},N_{\pOm},N_{\FK})$ nor on $T$.
\end{theorem}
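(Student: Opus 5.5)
The argument combines four ingredients: the PDE stability estimate (Lemma~\ref{lem:pde-stability}), the risk decomposition (Lemma~\ref{lem:risk-decomposition}) adapted to approximate minimizers, the optimization bound (Proposition~\ref{prop:optimization-error}), and the statistical bound (Proposition~\ref{prop:statistical-error}).

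\textbf{Step 1: reduce the $L^2$ error to a risk.} Since $u_{\vtheta_T}\in\cU\subset C^2(\overline{\Om})$, Lemma~\ref{lem:pde-stability} applies and gives
\[
\Errtot(T)^2\le C_{\mathrm{stab}}\bigl(\Risk_{\mathrm{PDE}}+\Risk_{\pOm}\bigr)(u_{\vtheta_T})\le \frac{C_{\mathrm{stab}}}{\lambda_{\min}}\,\Risk(u_{\vtheta_T}),
\]
because $\lambda_{\FK}\Risk_{\FK}\ge 0$. Moreover $\Risk(\uTrue)=0$: $\uTrue$ solves the PDE exactly, matches $g$ on $\pOm$, and $\Risk_{\FK}$ is defined with the exact target $\uTrue$. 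So it suffices to bound $\Risk(u_{\vtheta_T})-\Risk(\uTrue)$.

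\textbf{Step 2: decompose the excess risk.} I repeat the telescoping argument of Lemma~\ref{lem:risk-decomposition}, replacing $\uHat$ by $u_{\vtheta_T}$. The middle term is now no longer nonpositive, but Proposition~\ref{prop:optimization-error} gives
\[
\Riskemp(u_{\vtheta_T})\le\Riskemp(u)+\Erropt(T)\quad\text{for every } u\in\cU.
\]
Taking $u=u^{\mathrm{approx}}$ from Corollary~\ref{cor:approx-error} yields
\[
\Risk(u_{\vtheta_T})\le 2\Errstat+\Risk(u^{\mathrm{approx}})+\Erropt(T).
\]
Lemma~\ref{lem:risk-from-C2} bounds $\Risk(u^{\mathrm{approx}})\le\widetilde C_{\mathrm{risk}}P^{-2\gamma}$, and Proposition~\ref{prop:optimization-error} bounds $\Erropt(T)\le C_{\mathrm{opt}}e^{-c_{\mathrm{opt}}T}$. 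For $\Errstat$ I insert \eqref{eq:Errstat-final} with confidence $\delta$; the $\log(6/\delta)$ factors are already the union bound over the three sample sets. The initialization and PL$^\ast$ event of Corollary~\ref{cor:uniform-pl} is taken as given, or absorbed by shrinking $\delta$.

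\textbf{Step 3: take square roots.} Using $\sqrt{\sum_i a_i}\le\sum_i\sqrt{a_i}$ term by term:
\begin{itemize}
\item $P^{-2\gamma}$ becomes $P^{-\gamma}$;
\item each $(PN_k)\sqrt{(\cdots)/N}$ becomes $(PN_k)^{1/2}\bigl((\cdots)/N\bigr)^{1/4}$;
\item $\sqrt{\log(6/\delta)/N_{\FK}}$ becomes a fourth root, and $\log(6/\delta)/N_{\FK}$ becomes a square root;
\item the bias term becomes $(\varepsilon_{\mathrm{bias}}+\varepsilon_{\mathrm{bias}}^2)^{1/2}$;
\item $e^{-c_{\mathrm{opt}}T}$ becomes $e^{-c_{\mathrm{opt}}T/2}$.
\end{itemize}
The factor $\sqrt{C_{\mathrm{stab}}/\lambda_{\min}}$ and the factor $2$ from $2\Errstat$ are absorbed into the constants. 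None of these constants depends on the sample sizes or on $T$, because those in Proposition~\ref{prop:statistical-error}, Corollary~\ref{cor:approx-error} and Corollary~\ref{cor:uniform-pl} do not.

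\textbf{Main obstacle.} Everything above is bookkeeping except one point of consistency. Corollary~\ref{cor:uniform-pl} supplies PL$^\ast$ with respect to some global minimizer $\vtheta^\star$, and I need $u_{\vtheta^\star}$ to be an empirical minimizer over all of $\cU$. I also need the gradient descent iterates to remain inside $\cU$ (the $C^2$ ball of radius $B$) and inside the PL region $S$; otherwise Lemma~\ref{lem:pde-stability} and the uniform statistical bound do not apply to $u_{\vtheta_T}$. I would handle this by choosing $S$ to be contained in the parameter ball $B(\theta_0,R)$ and taking $B$ large enough that every network with parameters in that ball lies in $\cU$. The standard step-size argument of \citet{liu2022loss} then keeps the iterates in $S$. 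Finally, interpolation (Assumption~\ref{assumption:interpolation}) makes $\Riskemp(u_{\vtheta^\star})=0=\inf_{\cU}\Riskemp$, which gives the required minimality.
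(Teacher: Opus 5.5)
Your proposal is correct and follows the paper's route: PDE stability with $\Risk(\uTrue)=0$, then a telescoping excess-risk decomposition using the approximating network, then the optimization and statistical bounds, then a term-by-term square root. The only difference is minor: you compare $u_{\vtheta_T}$ directly with $u^{\mathrm{approx}}$ via the approximate-minimizer property and get $2\Errstat$, whereas the paper routes through $\uHat$ and Lemma~\ref{lem:risk-decomposition} and gets $4\Errstat$; your remarks on keeping the iterates inside $\cU$ and the PL$^\ast$ region make explicit what the paper's proof takes for granted via Corollary~\ref{cor:uniform-pl}.
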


\begin{proof}
	Fix $\delta\in(0,1)$ and work on the event where the statistical bound \eqref{eq:Errstat-final} of Proposition~\ref{prop:statistical-error} holds. Set
	\[
	u_T := u_{{\vtheta}_T},
	\]
	let $\uHat$ be an empirical minimizer of $\Riskemp$ as in Lemma~\ref{lem:risk-decomposition}, and let $u^{\mathrm{approx}}\in\cU$ be the approximating network from Corollary~\ref{cor:approx-error}.
	
	First we control the population risk of $u_T$. Adding and subtracting empirical risks gives
	\begin{align*}
		\Risk(u_T) - \Risk(\uTrue)
		&=
		\bigl[\Risk(u_T)-\Riskemp(u_T)\bigr]
		+
		\bigl[\Riskemp(u_T)-\Riskemp(\uHat)\bigr]\\
		&\quad+
		\bigl[\Riskemp(\uHat)-\Risk(\uHat)\bigr]
		+
		\bigl[\Risk(\uHat)-\Risk(\uTrue)\bigr].
	\end{align*}
	On the high probability event, the first and third brackets are bounded in absolute value by $\Errstat$. The second bracket is exactly $\Erropt(T)$ by \eqref{eq:def-Erropt}. For the last bracket we apply Lemma~\ref{lem:risk-decomposition} with comparison function $u^{\mathrm{approx}}$, which yields
	\[
	\Risk(\uHat)-\Risk(\uTrue)
	\le
	2\,\Errstat
	+
	2\,\bigl[\Risk(u^{\mathrm{approx}})-\Risk(\uTrue)\bigr].
	\]
	Since $\Risk(\uTrue)=0$ and $\Risk$ is nonnegative, this last term is at most $2\Risk(u^{\mathrm{approx}})$. Combining these bounds we arrive at
	\begin{equation}
		\Risk(u_T) - \Risk(\uTrue)
		\le
		4\,\Errstat
		+
		\Erropt(T)
		+
		2\,\Risk(u^{\mathrm{approx}}).
		\label{eq:master-risk-bound}
	\end{equation}
	By Lemma~\ref{lem:risk-from-C2} we have $\Risk(u^{\mathrm{approx}})\le \widetilde C_{\mathrm{risk}} P^{-2\gamma}$, so
	\begin{equation}
		\Risk(u_T) - \Risk(\uTrue)
		\le
		4\,\Errstat
		+
		\Erropt(T)
		+
		C_1 P^{-2\gamma}
		\label{eq:master-risk-bound-2}
	\end{equation}
	for a constant $C_1>0$ independent of the sample sizes and of $P$.
	
	Next we use PDE stability. From \eqref{eq:pop-risk-total} and the nonnegativity of $\Risk_{\FK}$ we have, for every $u$,
	\[
	\lambda_{\min}
	\bigl(
	\Risk_{\mathrm{PDE}}(u)
	+
	\Risk_{\pOm}(u)
	\bigr)
	\le
	\Risk(u) - \Risk(\uTrue),
	\]
	with $\lambda_{\min}=\min\{\lambda_{\mathrm{PDE}},\lambda_{\pOm}\}$. Applying this with $u=u_T$ in \eqref{eq:pde-stability} gives
	\begin{equation}
		\norm{u_T - \uTrue}_{L^2(\Om)}^2
		\le
		\frac{C_{\mathrm{stab}}}{\lambda_{\min}}\,
		\bigl(\Risk(u_T) - \Risk(\uTrue)\bigr).
		\label{eq:stab-risk-diff-uT}
	\end{equation}
	Substituting \eqref{eq:master-risk-bound-2} into \eqref{eq:stab-risk-diff-uT} we obtain
	\[
	\norm{u_T - \uTrue}_{L^2(\Om)}^2
	\le
	C_2\bigl(\Errstat + P^{-2\gamma} + \Erropt(T)\bigr),
	\]
	for a constant $C_2>0$ independent of $(N_{\mathrm{int}},N_{\pOm},N_{\FK})$ and of $T$. Taking square roots and using $\sqrt{a+b+c}\le \sqrt{a}+\sqrt{b}+\sqrt{c}$ gives
	\[
	\Errtot(T)
	\le
	C_3
	\Bigl(
	\sqrt{\Errstat}
	+
	P^{-\gamma}
	+
	\sqrt{\Erropt(T)}
	\Bigr).
	\]
	We now insert the statistical bound \eqref{eq:Errstat-final} and the optimization bound \eqref{eq:Erropt-exp}, which yields $\sqrt{\Erropt(T)}\le \sqrt{C_{\mathrm{opt}}}\,\exp(-c_{\mathrm{opt}} T/2)$, and then absorb constants into $C_{\mathrm{app}},C_{\mathrm{stat}},C_{\FK},C_{\mathrm{opt}},c_{\mathrm{opt}}$.
\end{proof}

\begin{remark}[Error bound in terms of depth and width]
	\label{rem:fk-pinn-architecture}
	Recall that for networks in $\cU_{L,m,B}$ the number of parameters satisfies
	\[
	P(L,m)
	\le
	C_{\mathrm{arch}}(d)\,L\,m^2.
	\]
	Writing $\widetilde P := C_{\mathrm{arch}}(d)\,L\,m^2$ and using $\log P \lesssim \log \widetilde P \lesssim \log(Lm)$, the bound \eqref{eq:fk-pinn-master-bound} can be rewritten (up to a change of the constants) as
	\begin{align*}
		\Errtot(T)
		\;\lesssim\;&
		(L m^2)^{-\gamma}
		+
		L^2 m^3
		\sqrt{\frac{\log(Lm) + \log(6/\delta)}{N_{\mathrm{int}}}}
		+
		L^2 m^3
		\sqrt{\frac{\log(Lm) + \log(6/\delta)}{N_{\pOm}}}
		\\
		&+
		L^2 m^3
		\sqrt{\frac{\log(Lm) + \log(6/\delta)}{N_{\FK}}}
		+
		\exp(-c_{\mathrm{opt}} T),
	\end{align*}
	with $\gamma=(s-2)/d$. Moreover $N_{\mathrm{hid}}=\sum_{\ell=1}^{L-1}m_\ell \le (L-1)m$, hence
	$N_2 \le (C_{d,2}(L-1)m+3)\lesssim_{d} Lm$.
	Therefore $(PN_2)\lesssim_{d} L^2 m^3$.
\end{remark}

\begin{corollary}[Depth-$3$ FK-PINN: optimal width and rate]
	\label{cor:balanced-rate}
	Assume the setting of Theorem~\ref{thm:fk-pinn-error-appendix} and Corollary~\ref{cor:approx-error}, with $s\ge3$ and $\gamma=(s-2)/d$. Fix the depth to $L=3$. Using $P(L,m)\lesssim m^2$ in the master bound and suppressing logarithmic factors in the sample sizes and in $m$, we may write, for any training time $T\ge0$,
	\[
	\Errtot(T)\;\lesssim\;
	m^{-2\gamma}
	+
	m^{3/2} N_{\mathrm{int}}^{-1/4}
	+
	m\,N_{\pOm}^{-1/4}
	+
	m\,N_{\FK}^{-1/4}
	+
	\exp(-c_{\mathrm{opt}} T/2)
	+
	\bigl(
	\varepsilon_{\mathrm{bias}} + \varepsilon_{\mathrm{bias}}^2
	\bigr)^{1/2},
	\]
	
	where $\varepsilon_{\mathrm{bias}}:= C_{\mathrm{bias}} \sqrt{\dt} + C_T e^{-\kappa T_{\mathrm{max}}}$. In the data-scarce FK regime (with $N_{\mathrm{int}}$ and $N_{\pOm}$ large compared to $N_{\FK}$), balance the approximation term $m^{-2\gamma}$ with the FK term $m\,N_{\FK}^{-1/4}$.
	This gives the width
	\[
	m \;\asymp\; N_{\FK}^{\frac{1}{4(1+2\gamma)}}
	=
	N_{\FK}^{\frac{d}{4d+8(s-2)}}.
	\]
	For this architecture, apart from logarithmic factors and the optimization and bias terms, the bound yields the rate
	\[
	\Errtot(T)=\cO\!\left(N_{\FK}^{-\frac{\gamma}{2+4\gamma}}\right)
	=
	\cO\!\left(N_{\FK}^{-\frac{s-2}{2d+4(s-2)}}\right)
	\text{ (up to logs)}.
	\]
	once $N_{\mathrm{int}}$ and $N_{\pOm}$ are large enough for the terms $m^{3/2} N_{\mathrm{int}}^{-1/4}$ and $m N_{\pOm}^{-1/4}$ to be negligible.
\end{corollary}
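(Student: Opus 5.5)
The plan is to specialize the master bound of Theorem~\ref{thm:fk-pinn-error-appendix} to depth $L=3$, using the parameter count recorded in Remark~\ref{rem:fk-pinn-architecture}. With $L$ fixed, $P(3,m)\lesssim_d m^2$ and $N_{\mathrm{hid}}\le 2m$. Hence $N_0,N_2\lesssim_d m$, which gives $PN_0\lesssim m^3$ and $PN_2\lesssim m^3$. Substituting into \eqref{eq:fk-pinn-master-bound} and suppressing logarithms in $m$, the sample sizes and $\delta$ should give the displayed inequality:
\begin{itemize}
\item the approximation term $C_{\mathrm{app}}P^{-\gamma}$ becomes $m^{-2\gamma}$;
\item the interior statistical term $(PN_2)^{1/2}N_{\mathrm{int}}^{-1/4}$ becomes $m^{3/2}N_{\mathrm{int}}^{-1/4}$;
\item the boundary and FK terms come out the same way;
\item the pure noise terms $(\log(6/\delta)/N_{\FK})^{1/4}$ and $(\log(6/\delta)/N_{\FK})^{1/2}$ are dominated by $m\,N_{\FK}^{-1/4}$, since $m\ge1$;
\item the bias and optimization terms are copied verbatim.
\end{itemize}
If a factor looks like $m^{3/2}$ rather than $m$ in the boundary or FK term, I would either track $N_0$ more carefully ($N_0=N_{\mathrm{hid}}+3$, with $C_{d,0}=1$) or simply state the weaker power. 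This does not affect the balancing step, because only the FK term is balanced and its exponent enters the width choice.

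Next, I would optimize over $m$ in the regime $N_{\mathrm{int}},N_{\pOm}\gtrsim N_{\FK}$. With the convention that the FK term is $m\,N_{\FK}^{-1/4}$, setting $m^{-2\gamma}=m\,N_{\FK}^{-1/4}$ gives $m^{1+2\gamma}=N_{\FK}^{1/4}$. This yields the stated width $m\asymp N_{\FK}^{1/(4(1+2\gamma))}=N_{\FK}^{d/(4d+8(s-2))}$, after substituting $\gamma=(s-2)/d$. Plugging this width back in, both balanced terms are of order $N_{\FK}^{-2\gamma/(4(1+2\gamma))}=N_{\FK}^{-\gamma/(2+4\gamma)}=N_{\FK}^{-(s-2)/(2d+4(s-2))}$, which is exactly $\beta$.

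It remains to check that the interior and boundary terms are negligible. With this $m$, the interior term $m^{3/2}N_{\mathrm{int}}^{-1/4}$ is bounded by $N_{\FK}^{-\beta}$ as soon as $N_{\mathrm{int}}\gtrsim m^{6}N_{\FK}^{4\beta}$. This is a polynomial in $N_{\FK}$, and the statement says as much ("once large enough"). Finally, I would verify that $m\ge m_0$ from Corollary~\ref{cor:approx-error} holds for $N_{\FK}$ large, so the approximating network lies in $\cU_{3,m,B}$.

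The main obstacle is bookkeeping rather than ideas: making the exponents of $m$ in each statistical term consistent with the master bound. In particular, \eqref{eq:fk-pinn-master-bound} carries $(PN_k)^{1/2}$ from taking the square root of $\Errstat$. I would first recompute $(PN_0)^{1/2}\lesssim m^{3/2}$ honestly. If the FK term then scales as $m^{3/2}N_{\FK}^{-1/4}$, the balancing would change the exponent. In that case I would state the corollary with the exponent that actually follows, and flag that the displayed width presumes the sharper factor $m$.
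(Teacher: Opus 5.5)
Your plan follows the paper's own route: specialize \eqref{eq:fk-pinn-master-bound} to $L=3$ with $P\lesssim m^2$, then balance $m^{-2\gamma}$ against the FK term. Your balancing algebra is correct \emph{provided} the FK term really is $m\,N_{\FK}^{-1/4}$. However, the bookkeeping issue you flag is a genuine obstruction, and your first two remedies do not resolve it.

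Tracking $N_0$ more carefully cannot help. We have $N_0=N_{\mathrm{hid}}+3$, and at depth $3$ one has $P\lesssim_d N_{\mathrm{hid}}^2$. So any architecture with $P\asymp m^2$, which you need for the approximation term $m^{-2\gamma}$, has $N_0\gtrsim m$ and hence $(PN_0)^{1/2}\asymp m^{3/2}$. The boundary and FK terms therefore come out exactly like the interior term, as $m^{3/2}N_{\pOm}^{-1/4}$ and $m^{3/2}N_{\FK}^{-1/4}$. Your substitution list does not produce the displayed inequality. The claim that a weaker power ``does not affect the balancing step'' is also false: the FK term is precisely the one being balanced.

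The paper's derivation has the same hole. It makes the same substitution and simply writes $m\,N_{\pOm}^{-1/4}$ and $m\,N_{\FK}^{-1/4}$, in effect replacing $(PN_0)^{1/2}$ by $P^{1/2}$. Nothing in \cref{prop:pdim-tanh-derivatives} supports this: the Karpinski--Macintyre route gives $\mathrm{Pdim}(\Fclass^{(0)})\lesssim (PN_0)^2\log(PN_0)$, not $P^2$ up to logarithms. So the stated width and the rate $N_{\FK}^{-(s-2)/(2d+4(s-2))}$ are not consequences of the master bound.

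Since $(PN_0)^{1/2}\gtrsim P^{3/4}$ at depth $3$, the best \eqref{eq:fk-pinn-master-bound} can deliver over all width allocations comes from balancing $P^{-\gamma}$ with $P^{3/4}N_{\FK}^{-1/4}$. This gives $m\asymp N_{\FK}^{d/(6d+8(s-2))}$ and the rate $N_{\FK}^{-\gamma/(3+4\gamma)}=N_{\FK}^{-(s-2)/(3d+4(s-2))}$, up to logarithms. In this version all three statistical terms carry the same factor $m^{3/2}$. The interior and boundary terms are then dominated as soon as $N_{\mathrm{int}},N_{\pOm}\gtrsim N_{\FK}$, which matches the hypothesis of \cref{thm:fk-pinn-error}.

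Your closing fallback (state the exponent that actually follows) is the right resolution. Commit to it instead of first asserting the displayed bound. Recovering the claimed exponent would require a genuinely new ingredient, such as a pseudo-dimension bound for $\Fclass^{(0)}$ of order $P^2$ up to logarithms.

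One further point neither you nor the paper addresses. The constants in the master bound are only asserted to be independent of the sample sizes and of $T$, and the appendix explicitly allows them to depend on $m$. That dependence must be controlled before optimizing over $m$.
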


Corollary~\ref{cor:balanced-rate} shows that, \emph{if} the number of FK trajectories $N_{\FK}$ were allowed to grow, a depth-$3$ FK-PINN with optimally chosen width would achieve the rate $N_{\FK}^{-(s-2)/(2d + 4(s-2))}$ up to logarithmic factors. This matches the asymptotic behaviour proved by \citet{doumeche2025convergence} for PINNs with dense physics sampling: in that ideal regime, FK supervision does not worsen the leading statistical exponent.
	
Our focus, however, is the genuinely sparse FK regime, where $N_{\FK}$ is fixed while $N_{\mathrm{int}}$ and $N_{\pOm}$ may grow. In this setting the FK term behaves as an irreducible noise floor: even as $N_{\mathrm{int}},N_{\pOm}\to\infty$ and $T\to\infty$, the excess risk cannot be made uniformly small, because it is pinned down by the few noisy FK labels. This loss of uniform guarantees for the excess risk is the real ``price'' of FK supervision. We accept it in exchange for the PL$^\ast$ geometry and uniform conditioning provided by the FK term (Corollary~\ref{cor:uniform-pl}), which yield fast optimization (Proposition~\ref{prop:optimization-error}). How to exploit a fixed FK budget (placement of FK points, choice of $\lambda_{\FK}$, etc.) so as to mitigate this statistical saturation while retaining the optimization benefits is an interesting avenue for future work.
	\clearpage
	\section{Numerical Experiments}
\label{app:experiments}

We now illustrate our theoretical results by a series of numerical experiments.

\subsection{Equations Considered}

We begin by specifying the concrete equations used to benchmark the proposed FK-PINNs framework. As discussed earlier, our focus is on regimes where traditional PINNs often become inaccurate or unstable, due to difficulties such multiscale coupling, singular perturbations, high frequencies etc. in the PDE data, leading to extremely stiff loss landscapes. Besides the Schrödinger equation introduced in the main text, we also consider the three following representative classes of problems:

\subsubsection{Poisson Equation}

We consider a prototypical elliptic bounded value problem on a bounded domain
\(\Omega \subset \mathbb{R}^d\) with boundary \(\partial \Omega\):
\begin{equation}
	- \Delta u(x) = f(x) \qquad x \in \Omega,
\end{equation}
with Dirichlet boundary conditions
\[
u(x) = g(x) \qquad x \in \partial \Omega.
\]
Here \(u(x)\) denotes the stationary potential or steady-state field generated by a source term \(f(x)\).
In multiscale settings, \(f(x)\) and/or the solution \(u(x)\) can exhibit sharp layers and small-scale structures, which pose challenges for standard PINNs due to spectral bias.

\subsubsection{Mean Escape Time}

Consider an overdamped diffusion process in \(\mathbb{R}^d\),
\begin{equation}
	\mathrm{d}X_t = b(X_t)\,\mathrm{d}t + \sigma(X_t)\,\mathrm{d}W_t ,
\end{equation}

and a bounded domain \(\Omega \subset \mathbb{R}^d\) with boundary \(\partial \Omega\).
The mean first exit time (or mean escape time) \(\tau(x)\) from \(\Omega\) solves the elliptic PDE
\begin{equation}
	\mathcal{L}\,\tau(x) = -1, \qquad x \in \Omega,
\end{equation}

with boundary condition
\[
\tau(x) = 0, \qquad x \in \partial \Omega,
\]
where \(\mathcal{L}\) is the infinitesimal generator
\begin{equation}
	\mathcal{L} \phi=-\nabla V\cdot \nabla \phi+\beta^{-1} \Delta \phi
\end{equation}

where $\beta$ is inverse temperature. The function \(\tau(x)\) quantifies how long, on average, trajectories starting from \(x\) remain inside \(\Omega\).
In metastable or small-noise regimes, \(\tau(x)\) can vary over exponentially large scales, leading to stiff elliptic problems characterized by sharp boundary layers and high-contrast magnitudes that are challenging for standard PINNs.

\subsubsection{Committor Function}

For the same diffusion process
\begin{equation}
	\mathrm{d}X_t = b(X_t)\,\mathrm{d}t + \sigma(X_t)\,\mathrm{d}W_t,
\end{equation}

let \(A,B \subset \mathbb{R}^d\) be two disjoint sets, and define
\(\Omega = \mathbb{R}^d \setminus (A \cup B)\).
The committor function \(q(x)\) is the probability that a trajectory starting at \(x\) reaches \(B\) before \(A\).
It is characterized as the solution of the elliptic boundary value problem
\begin{equation}
	\mathcal{L}\,q(x) = 0\ \ \ \ \ x \in \Omega,
\end{equation}

with boundary conditions
\[
q(x) = 0 \ \ \ \ x \in A,\ \ \ \ \ \ \ q(x) = 1 \ \ \ \ x \in B,
\]
where \(\mathcal{L}\) is the same generator as above.
The committor encodes the effective reaction coordinate in transition path theory and large deviation theory, typically exhibiting sharp interfaces between metastable basins.
Such boundary layers and rare-event structures are difficult to resolve with conventional PINNs, especially under small-noise conditions.

\subsection{Evaluation of FK-PINNs against baseline PINNs}
\label{subsec:FK-PINN better}

In this section, we evaluate our proposed FK-PINNs by testing them on the aforementioned PDEs, and comparing how they perform against standard PINNs as baseline. This choice is to isolate the effect of the Feynman-Kac supervision terms on the training dynamics.

\subsubsection{Implementation details}

\textbf{Neural Network Architectures.} All neural networks (baseline and FK-PINNs) used for these experiments are fully connected $\tanh$ networks with 4 hidden layers, and 128 neurons per layer.

\textbf{Training Schedule.} All neural networks (baseline and FK-PINNs) used for these experiments are trained using the Adam optimizer for $30,000$ iterations, followed by $15,000$ iterations of L-BFGS to fine-tune the solution. For the Adam phase, we implement a step-based exponential decay schedule, where the initial learning rate of $10^{-3}$ is decayed every $k$ iterations by a factor of $\gamma = \exp(\ln(10^{-3}) / (N/k))$, reaching a final learning rate of $10^{-6}$.

\textbf{Feynman-Kac data supervision.} To generate the FK ``data points", we use \cref{alg:fk-mc} and set $\dt=10^{-3}$, $N_\MC = 500$, and $T_{\max} = \infty$ (we simulate the trajectories until escape). As for the number of FK points, we first choose a number $N_\mathrm{coll}$ of collocation points, and fix a proportion $p_{\mathrm{data}}\in(0,1)$ of randomly selected points for which we compute the FK label. We thus define 
\begin{equation}
	N_{\FK} := \big\lfloor p_{\mathrm{data}}\,
	N_{\mathrm{coll}} \big\rfloor, \quad N_\mathrm{int}:= N_\mathrm{coll} - N_\FK.
	\label{eq:n_data}
\end{equation}

We fix $p_{\mathrm{data}}=0.02$ for all experiments in \ref{subsec:FK-PINN better}. We will further explore the effects of different $N_\MC$, $\dt$, and $p_{\mathrm{data}}$ on the experimental results in \ref{subsec:mc_budget}.

\textbf{Metrics.} To evaluate the performance of the models, we use the $L^2$ and $H^1$ error metrics in both absolute and relative forms.
{\allowdisplaybreaks
	\begin{align}
		\text{$L^2$ absolute error:} \quad & L_{\text{abs}}^2(f_{\theta}) = \sqrt{\frac{1}{N} \sum_{n=1}^N \left| f_{\theta}(x_n) - u_{\text{ref}}(x_n) \right|^2} \label{eq:l2_abs} \\
		\text{$L^2$ relative error:} \quad & L_{\text{rel}}^2(f_{\theta}) = \frac{\sqrt{\frac{1}{N} \sum_{n=1}^N \left| f_{\theta}(x_n) - u_{\text{ref}}(x_n) \right|^2}}{\sqrt{\frac{1}{N} \sum_{n=1}^N \left| u_{\text{ref}}(x_n) \right|^2}} \label{eq:l2_rel} \\
		\text{$H^1$ absolute error:} \quad & H_{\text{abs}}^1(f_{\theta}) = \sqrt{\frac{1}{N} \sum_{n=1}^N \left| f_{\theta}(x_n) - u_{\text{ref}}(x_n) \right|^2 + \left\| \nabla f_{\theta}(x) - \nabla u_{\text{ref}}(x) \right\|^2} \label{eq:h1_abs} \\
		\text{$H^1$ relative error:} \quad & H_{\text{rel}}^1(f_{\theta}) = \frac{\sqrt{\frac{1}{N} \sum_{n=1}^N \left| f_{\theta}(x_n) - u_{\text{ref}}(x_n) \right|^2 + \left\| \nabla f_{\theta}(x) - \nabla u_{\text{ref}}(x) \right\|^2}}{\sqrt{\frac{1}{N} \sum_{n=1}^N \left| u_{\text{ref}}(x_n) \right|^2 + \left\| \nabla u_{\text{ref}}(x) \right\|^2}} \label{eq:h1_rel}
\end{align}}

\subsubsection{Poisson Equation}
\label{poisson}
In this example, we solve the Poisson equation on the square $\Omega=[0,1]\times[0,1]$, with source term given by
\begin{equation}
	f(x_1,x_2) = \pi^2[51\sin(\pi x_1)\sin(4\pi x_2) + 156\sin(5\pi x_1)\sin(\pi x_2) + 24\sin(2\pi x_1)\sin(2\pi x_2)+ 244\sin(5\pi x_1)\sin(6\pi x_2)],
	\nonumber
\end{equation}
And exact solution $u^*$:
\begin{equation}
	u^*(x_1,x_2)=3\sin(\pi x_1)\sin(4\pi x_2) + 6\sin(5\pi x_1)\sin(\pi x_2) + 3\sin(2\pi x_1)\sin(2\pi x_2)+4\sin(5\pi x_1)\sin(6\pi x_2).
\end{equation}
We set $N_{\mathrm{coll}}=10000$ interior collocation points, and ($N_{\mathrm{bc}}=400$) boundary collocation points.

\begin{figure*}[h]
	\centering
	\includegraphics[width=\textwidth]{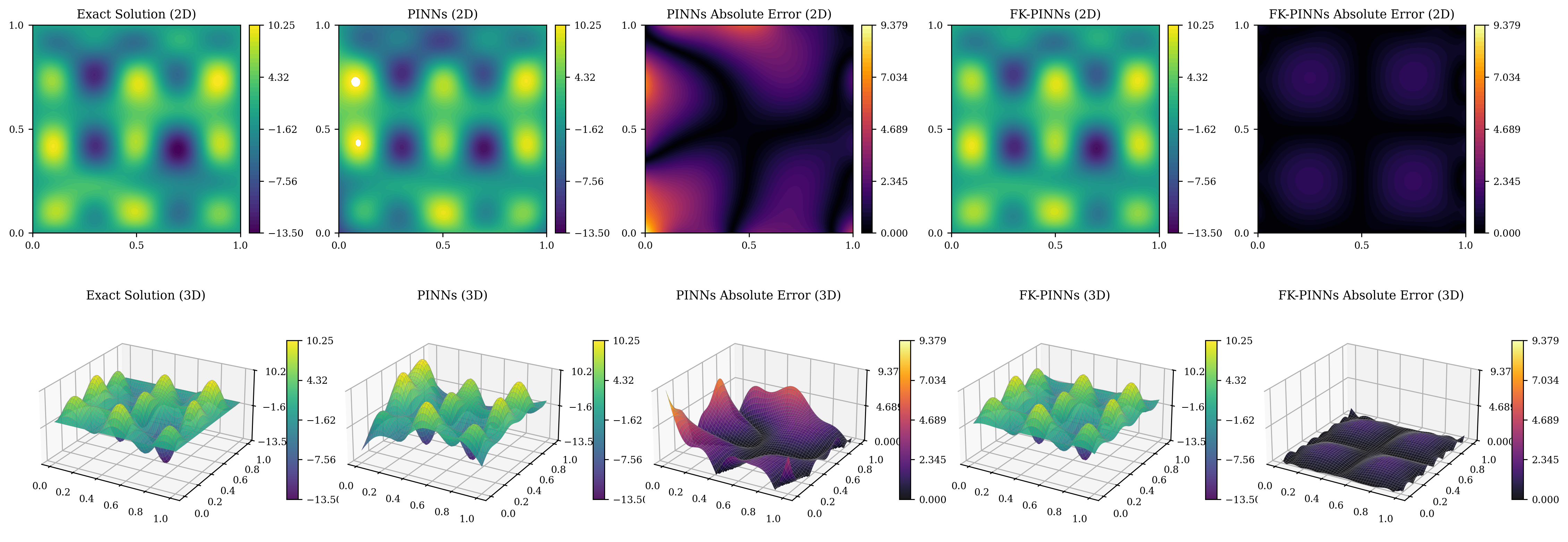}
	\caption{The ground truth solution ($Col. 1$), predicted 2D,3D results by PINNs ($Col. 2$), 2D,3D absolute error by PINNs ($Col. 3$), predicted 2D,3D results by FK-PINNs ($Col. 4$), 2D,3D absolute error by FK-PINNs ($Col. 5$) on Poisson equations.}
	\label{fig:poisson}
\end{figure*}

Figure~\ref{fig:poisson} compares the performance of PINNs and FK-PINNs on this Poisson problem with high-frequency oscillations. While both models capture the global solution profile, PINNs exhibit significant absolute errors (up to 9.0), particularly near the oscillatory peaks and boundaries. In contrast, by leveraging the ``extra data", FK-PINNs effectively mitigate spectral bias, achieving a much finer reconstruction with absolute errors nearly an order of magnitude lower.

Figure~\ref{fig:poisson_loss} likewise demonstrates that FK-PINNs achieve superior convergence stability and a significantly lower PDE loss compared to standard PINNs, effectively mitigating the optimization stagnation observed in the vanilla baseline.
\begin{figure*}[h]
	\centering
	\includegraphics[width=0.55\textwidth]{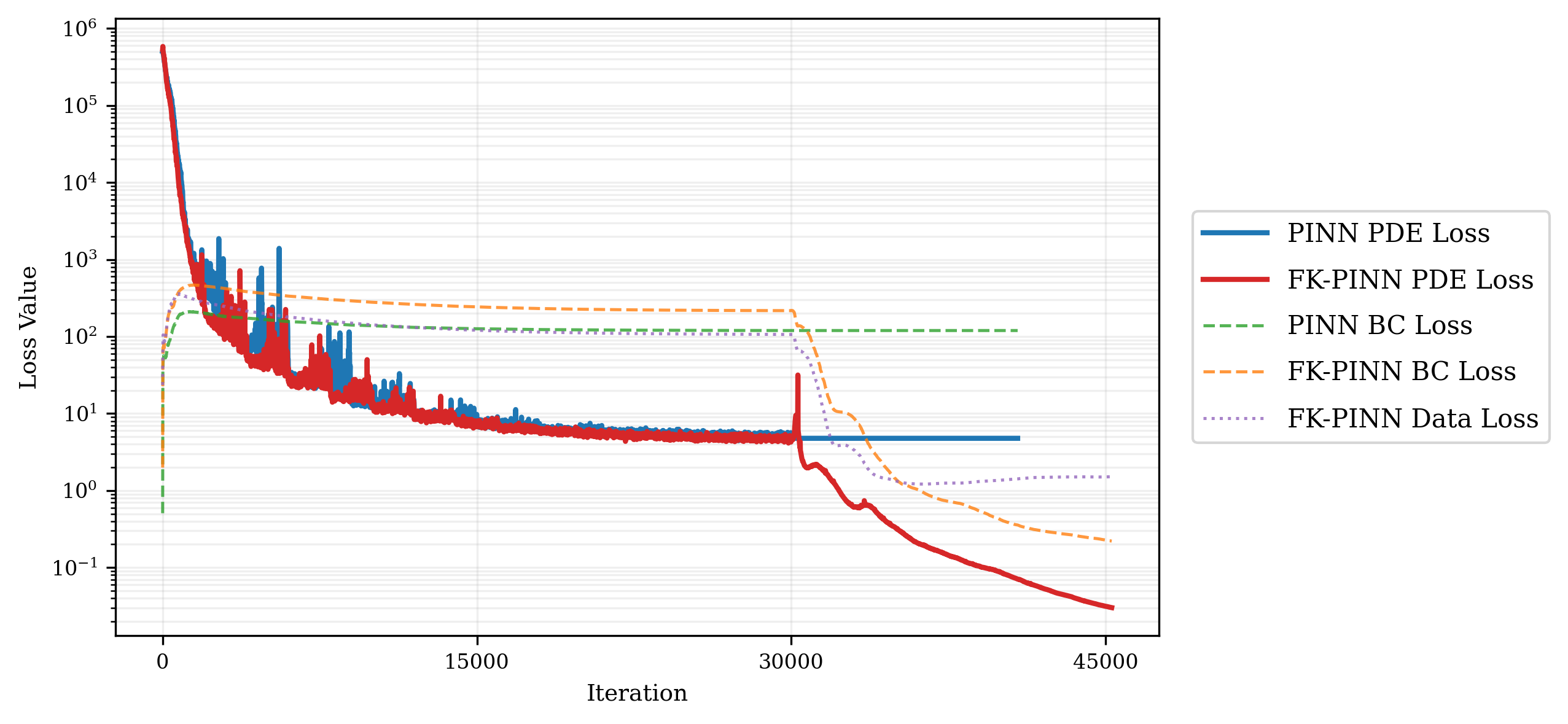}
	\caption{Comparison of PDE loss, BC loss of PINNs versus PDE loss, BC loss, Data loss of FK-PINNs for Poisson equation}
	\label{fig:poisson_loss}
\end{figure*}

\subsubsection{Mean Escape Time}
\label{MET}
We set the domain $\Omega$ as the regular hexagon centered at the origin with circumradius $R=2$. We set $V$ as a double-well potential function:
\begin{equation}
	V\left( x_{1},x_{2} \right) =\frac{1}{4} (x_{1}^{2}-1)^{2}+\frac{\alpha}{2} x_{2}^{2} ,
\end{equation}
where $\alpha=1$. The corresponding Mean Escape Time PDE is then given by:
\begin{equation}
	-\nabla V\cdot \nabla \tau +\beta^{-1} \Delta \tau =-1, 
\end{equation}
with inverse temperature $\beta=5$ and zero Dirichlet boundary condition.
\begin{figure*}[h]
	\centering
	\includegraphics[width=\textwidth]{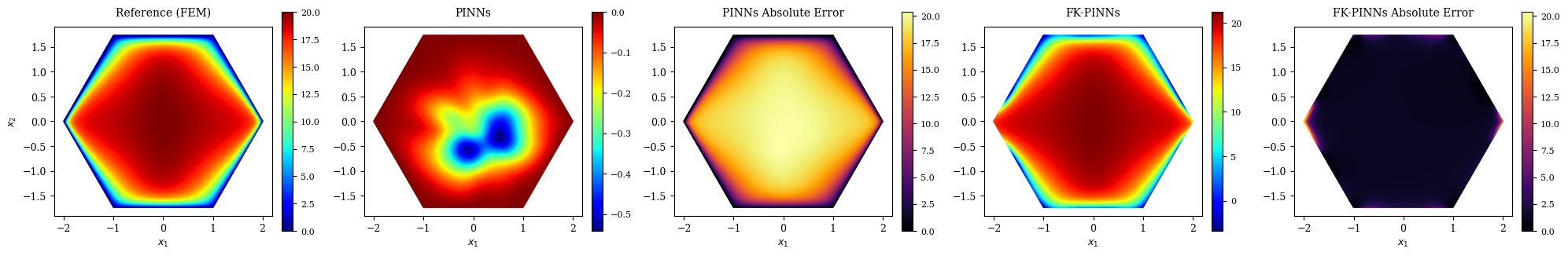}
	\caption{Comparison of Mean Escape Time PDE solutions learned by PINNs (left) and FK-PINNs(right)}
	\label{fig: mean escape time}
\end{figure*}

Figure~\ref{fig: mean escape time} shows the solutions learned both by PINNs and FK-PINNs. We see that PINNs (Cols. 2--3) exhibit \emph{catastrophic failure} on this task, and completely fail to produce a meaningful solution. FK-PINNs (Cols. 4--5), on the other hand, successfully learn the solution and manage to capture the transition layers near the boundary, despite the corner singularities.

\begin{figure*}[h]
	\centering
	\includegraphics[width=0.55\textwidth]{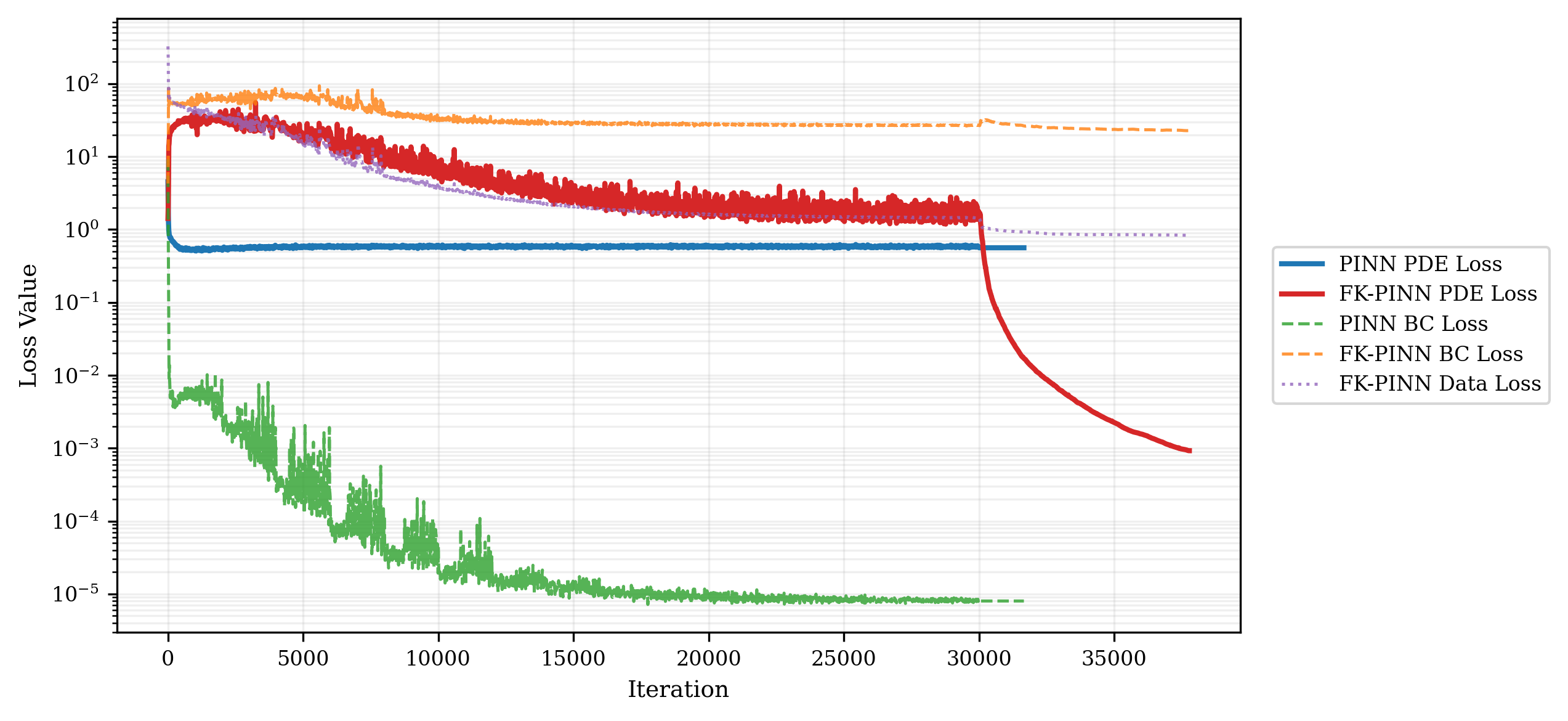}
	\caption{Comparing the evolution of loss components for solving the Mean Escape Time problem between PINNs and FK-PINNs}
	\label{fig: mean escape time loss}  
\end{figure*}

The training loss evolutions in Figure~\ref{fig: mean escape time loss} reveal that standard PINNs encounter significant optimization stagnation after the initial phase, seemingly getting stuck around a spurious local minimum for the PDE loss. We see on the other hand that FK-PINNs manage to break through the poorly conditioned loss landscape and converge to the true solution.

\subsubsection{Committor Function}
\label{subsubsec:committor}

We now report the performance of FK-PINNs on the committor function problem on the domain $\Omega=[-1.5,1.2] \times [-0.2, 2]$ with Müller-Brown potential, which is a standard benchmark in transition path studies \citep{weinan2006towards,vanden2010transition}. It is defined as the sum of four exponentials:
\begin{equation}
	V(x, y) = \sum_{k=1}^4 A_k \exp \left[ a_k(x-x_k^0)^2 + b_k(x-x_k^0)(y-y_k^0) + c_k(y-y_k^0)^2 \right]
\end{equation}

The specific coefficients used to define the potential energy surface in this work are provided in Table \ref{tab:muller_params}. These parameters ensure the presence of the characteristic ``three-well" structure often used to benchmark transition path sampling and committor function approximations.

We further define two metastable states, $A$ and $B$, which correspond to the reactant and product regions, respectively. These states are represented as circular boundary regions $\bar{\Omega}_{A}$ and $\bar{\Omega}_{B}$ with a fixed radius of $r = 0.2$. The centers of these disks, $\mathbf{c}_A$ and $\mathbf{c}_B$, are positioned at the primary local minima of the potential:
\begin{itemize}
	\item \textbf{State $A$ (Reactant):} $\mathbf{c}_A = (-0.5582, 1.4417)$, satisfying the boundary condition $q(\mathbf{x}) = 0$.
	\item \textbf{State $B$ (Product):} $\mathbf{c}_B = (0.6235, 0.0281)$, satisfying the boundary condition $q(\mathbf{x}) = 1$.
\end{itemize}
The domain for the committor equation is thus defined as $\Omega':= \Omega \setminus (\bar{\Omega}_A \cup \bar{\Omega}_B)$

\begin{table}[htbp]
	\centering
	\small
	\caption{Parameters for the Müller-Brown Potential Energy Surface.}
	\label{tab:muller_params}
	\begin{tabular}{lrrrrrr}
		\toprule
		Index ($k$) & $A_k$ & $a_k$ & $b_k$ & $c_k$ & $x_k^0$ & $y_k^0$ \\
		\midrule
		1 & $-200$ & $-1$   & $0$   & $-10$  & $1.0$  & $0.0$ \\
		2 & $-100$ & $-1$   & $0$   & $-10$  & $0.0$  & $0.5$ \\
		3 & $-170$ & $-6.5$ & $11$  & $-6.5$ & $-0.5$ & $1.5$ \\
		4 & $15$   & $0.7$  & $0.6$ & $0.7$  & $-1.0$ & $1.0$ \\
		\bottomrule
	\end{tabular}
\end{table}

\begin{figure*}
	\centering
	\includegraphics[width=\textwidth]{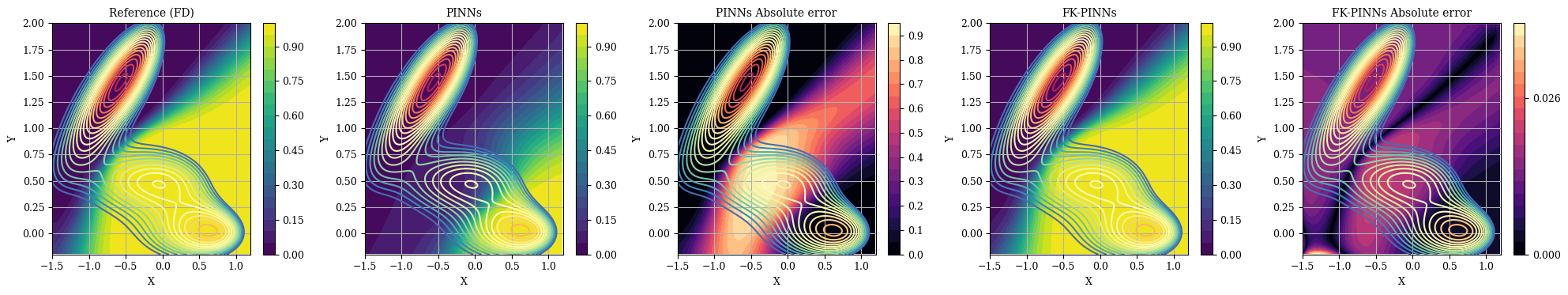}
	\caption{Comparison of Committor Function under Müller-Brown potential learned by PINNs (left) and FK-PINNs (right)}
	\label{fig: committor function}  
\end{figure*}

\begin{figure*}
	\centering
	\includegraphics[width=0.75\textwidth]{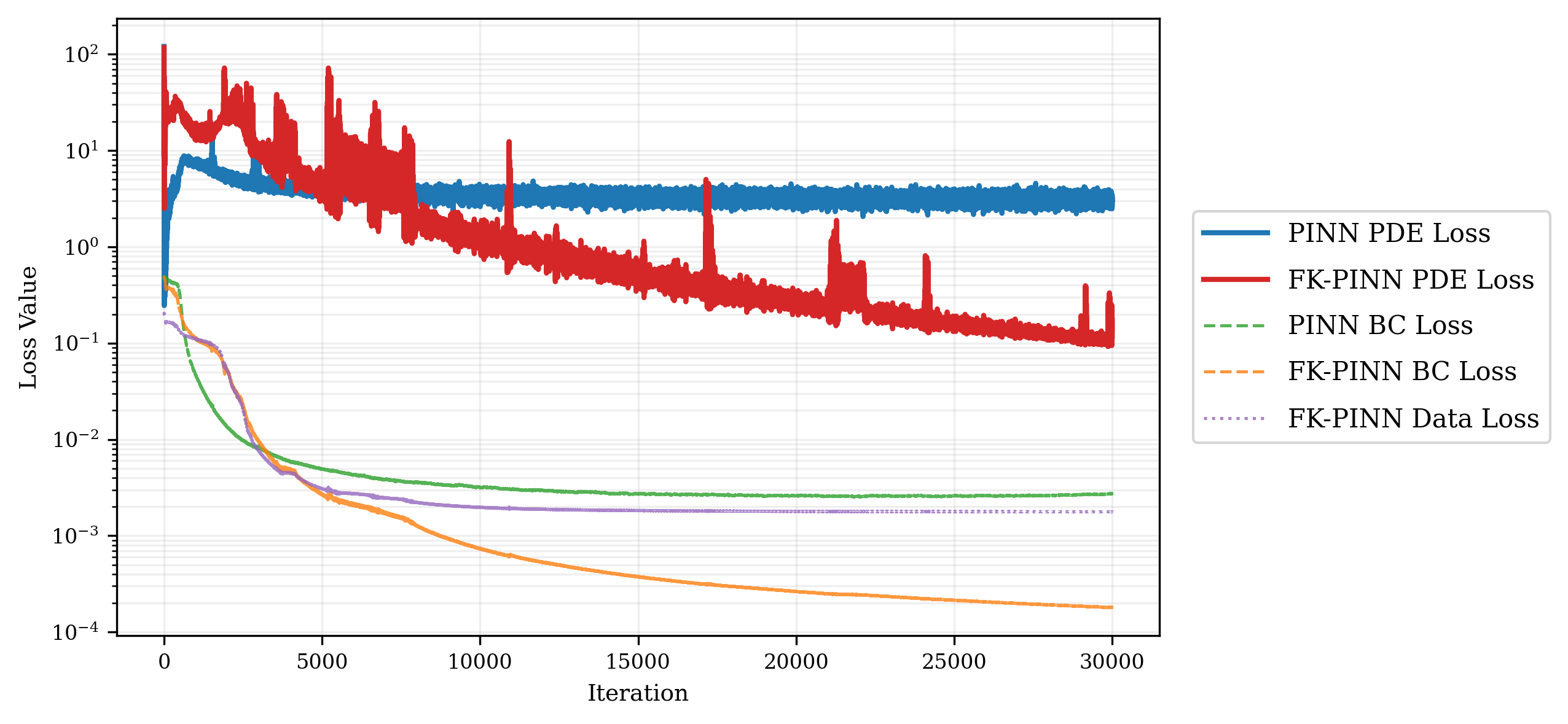}
	\caption{Comparison of the convergence behavior of individual loss components in PINNs and FK-PINNs trained with Adam optimizer only}
	\label{fig: committor loss}  
\end{figure*}

As highlighted in Figure~\ref{fig: committor function}, the PINN solution fails to meaningfully capture the transition regions and the dynamics induced by the loss landscape, unlike FK-PINNs which recover a fully satisfactory solution. The improved training dynamics are also apparent in Figure~\ref{fig: committor loss}.

\begin{table}[htbp!]
	\centering
	\begin{tabular}{@{}lccccc@{}}
		\toprule
		\textbf{Model} & \textbf{Metric} & \textbf{Poisson \eqref{poisson}} & \textbf{Schrödinger-type \eqref{sec:main-experiments}} & \textbf{Mean Escape Time \eqref{MET}} & \textbf{Committor \eqref{subsubsec:committor}} \\
		\midrule
		{PINNs} 
		& $L^{2}$ Abs Err & $1.333 \pm 0.616$ & $0.475\pm0.148$ & $16.56\pm0.055$  & $ 1.028\pm0.283$ \\
		& $L^{2}$ Rel Err & $0.322 \pm 0.149$ &  $0.624\pm 0.195$ & $1.007\pm0.003$ & $0.839\pm0.661$ \\
		& $H^{1}$ Abs Err & $12.42 \pm 4.345$ & $2.893\pm0.415$ & $43.55\pm0.039$ & $3.154\pm0.794$ \\
		& $H^{1}$ Rel Err & $0.171 \pm 0.061$ & $0.512\pm0.073$ & $1.002\pm0.001$ & $0.547\pm0.074$ \\
		\midrule
		{FK-PINNs} 
		& $L^{2}$ Abs Err & $0.761 \pm 0.011$ & $0.073\pm0.008$ & $1.755\pm0.093$ & $0.791\pm0.039$ \\
		& $L^{2}$ Rel Err & $0.1184 \pm 0.003$ & $0.096\pm0.010$ & $0.107\pm0.006$ & $0.030\pm0.008$ \\
		& $H^{1}$ Abs Err & $7.822 \pm 0.412$ & $1.043\pm0.083$ & $1.755\pm0.093$ & $1.495\pm0.105$ \\
		& $H^{1}$ Rel Err & $0.108 \pm 0.006$ & $ 0.185 \pm0.015$ & $0.395\pm0.051$ & $0.153\pm0.049$ \\
		\bottomrule
	\end{tabular}
	\caption{Performance comparison between standard PINNs and FK-PINNs on various problems. All results are averaged over 5 independent runs with different random seeds.}
	\label{tab:comparison_results_reformatted}
\end{table}

\subsection{FK supervision improves the loss landscape}

We now collect both qualitative and quantitative evidence supporting the improvement of the loss landscape as predicted by \cref{thm:cond-number-bound}.

\subsubsection{Hessian Condition Number}

To provide quantitative justification for the superior convergence of FK-Enhanced PINNs, we analyze the local curvature of the loss landscape near a minimizer through the lens of the empirical Hessian matrix, $H = \nabla^2_{\theta} \mathcal{L}(\theta)$. As discussed in \cref{sec:operator-theory}, the conditioning of the Hessian governs the stability and speed of gradient-based optimization.

\begin{figure*}[htbp!]
	\centering
	\includegraphics[width=0.85\textwidth]{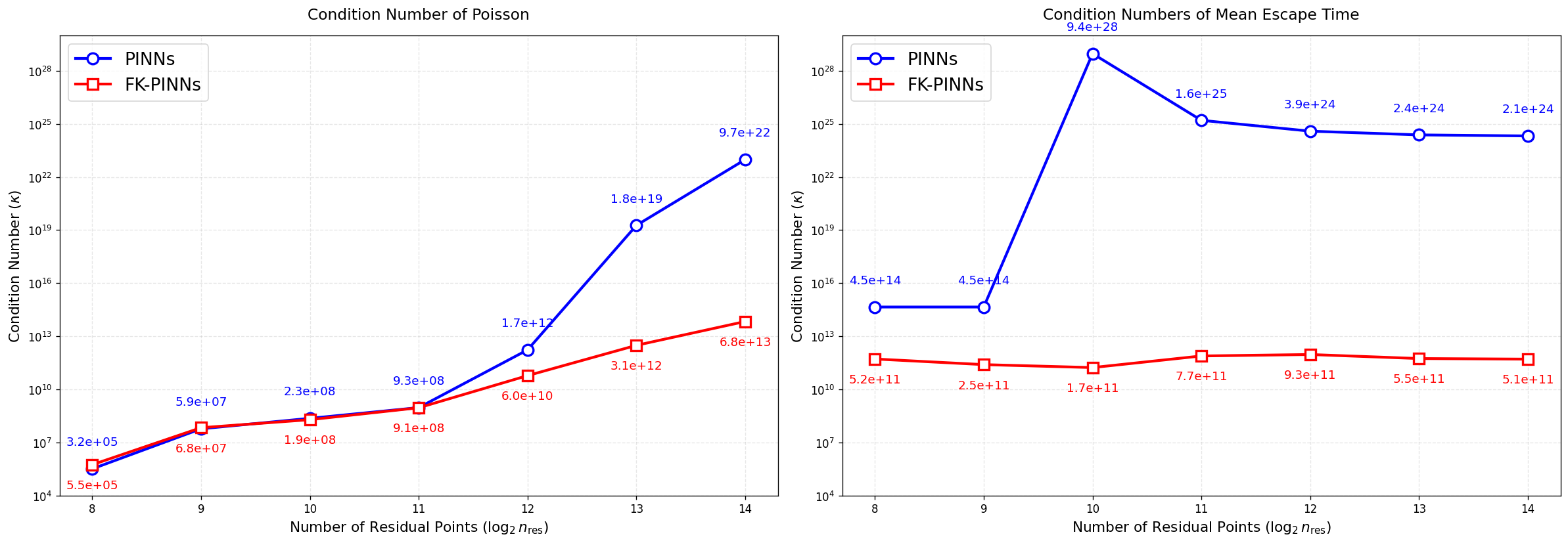}
	\caption{Condition numbers near a local minimum for PINNs and FK-PINNs for Poisson equation (left) and the Mean Escape Time Problem (right).}
	\label{fig: condition numbers}  
\end{figure*}

The evolution of the Hessian condition number as the number of collocation points increases plotted in Figure~\ref{fig: condition numbers}, suggests, as predicted by \cref{thm:rathore}, that for the standard PINN, the condition number grows polynomially fast with sampling density. The condition number of FK-PINNs however, does not appear to follow such a growth rate, and remains for most sample sizes orders of magnitude below that of the standard PINN, in accordance with \cref{thm:cond-number-bound}.

\subsubsection{Visualizing the Loss Landscape Geometry}

We now turn to a more qualitative understanding of the training difficulty for standard PINNs and FK-PINNs. To do so, we decide to directly visualize the loss landscape itself around a minimizer, by projecting the loss onto a 2D parameter subspace, and visually observe the result \citep{li2018visualizing, krishnapriyan2021characterizing,zhao2024pinnsformer}. The resulting landscapes for the Mean Escape Time PDE, as seen in Figure~\ref{fig: loss landscape}, confirm what the theory and the other numerical results so far suggest: the PINN loss landscape exhibits a highly non-convex, jagged geometry, with many oscillations. This explains the failure of PINNs on this challenging PDE. On the other hand, the loss landscape of FK-PINNs appears much more regular and smooth, with a unique, well localized minimizer. This thus provides a vivid illustration of our theory. 

\begin{figure*}[htbp!]
	\centering
	\includegraphics[width=0.85\textwidth]{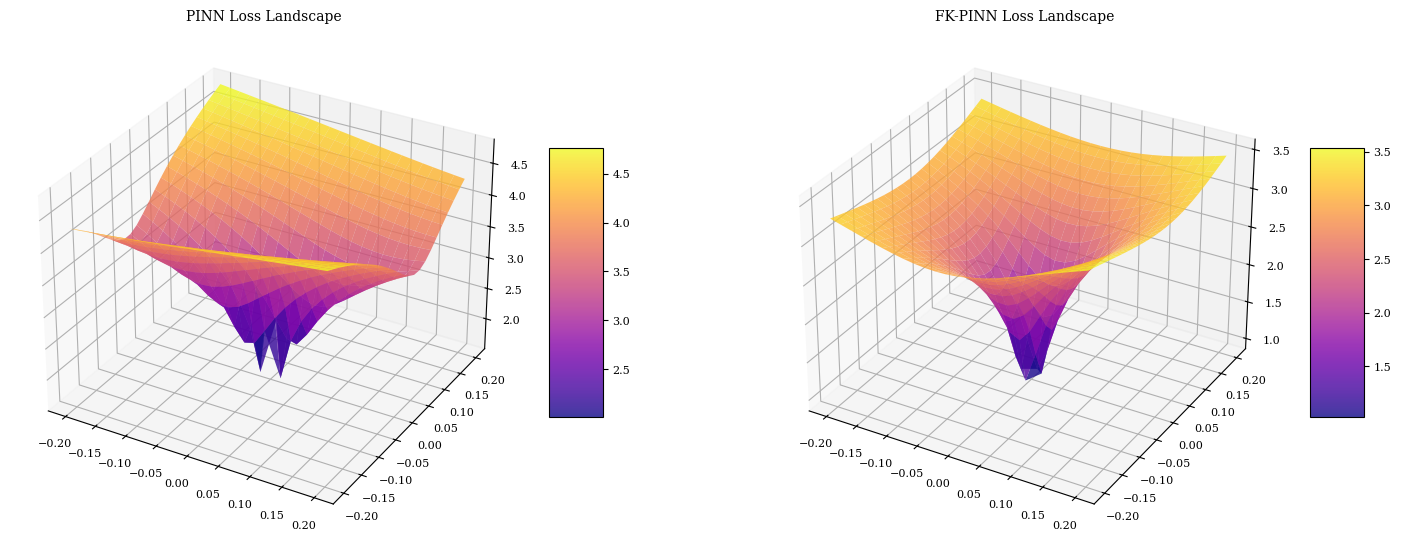}
	\caption{Loss landscape of Standard PINN (left) and FK-PINN (right) trained on the Mean Escape Time PDE, next to a minimizer.}
	\label{fig: loss landscape}  
\end{figure*}

\subsection{Comparison with other operator preconditioning techniques}

Now that the improvements of FK-PINNs over the baseline PINNs is clear, we compare the FK-PINNs with two other ``pre-conditioning inspired" approaches to train them. The first one we consider is the Energy Natural Gradient Descent algorithm (ENGD), proposed by \citet{muller2023achieving}, and which consists in performing gradient descent in the function space induced by the PDE. The second one is the NysNewtown-Conjugate Gradient Descent Algorithm (NNCG), proposed by \citet{rathore2024challenges}, and which we run after Adam and L-BFGS, as the authors of said paper do to save computational cost.

\begin{figure*}[htbp!]
	\centering
	\includegraphics[width=0.85\textwidth]{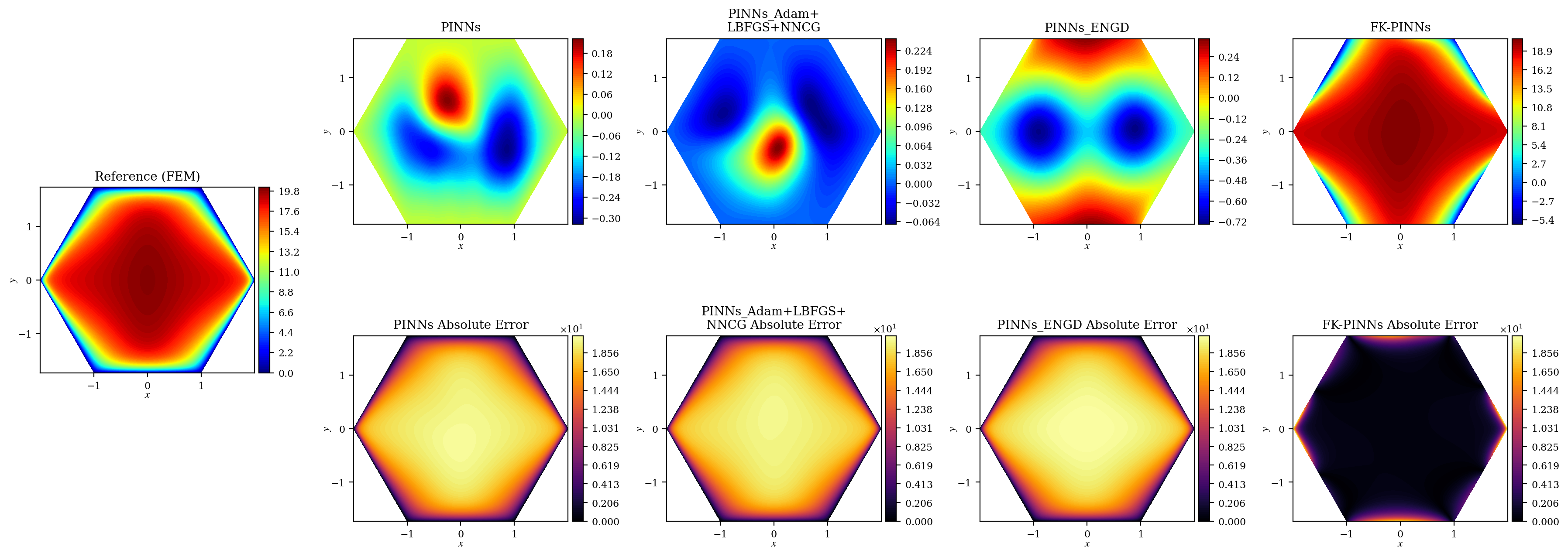}
	\caption{\textbf{Comparisons of different PINNs solving the Mean Escape Time PDE} trained with (from left to right) Standard PINN (Adam + L-BFGS), Adam + L-BFGS + NNCG, ENGD, FK-PINN method}
	\label{fig: precondition}  
\end{figure*}

\begin{table}[htbp!]
	\vskip 0.15in
	\begin{center}
		\begin{small}
			\begin{sc}
				\setlength{\tabcolsep}{6pt}
				\begin{tabular}{lcccr}
					\toprule
					Model & L2 Error & H1 Error \\
					\midrule
					PINNs (Adam+L-BFGS)  & $1.002\pm 0.002$ & $1.001\pm 0.001$    \\
					PINNs (Adam+L-BFGS+NNCG)  & $0.986\pm 0.011$ & $1.006\pm 0.001$ \\
					PINNs (ENGD) & $1.016\pm 0.001$ & $1.006\pm 0.014$   \\
					FK-PINNs (Adam+L-BFGS)  & $\mathbf{0.174\pm 0.004}$ & $\mathbf{0.699\pm 0.008}$ \\
					\bottomrule
				\end{tabular}
			\end{sc}
		\end{small}
	\end{center}
	\caption{Relative $L^2$ and $H^1$ errors for each of the methods in Figure~\ref{fig: precondition}}
	\label{tab:preconditioning_others}
\end{table}

As per the results reported in Figure~\ref{fig: precondition}
and Table~\ref{tab:preconditioning_others}, the other methods can not handle the stiff loss landscape of the Mean Escape Time PDE and fail just as much as the standard PINN does, despite their non-negligible computational overhead.

\subsection{Influence of the Monte Carlo budget}
\label{subsec:mc_budget}
Finally, we report how the ``Monte Carlo budget", i.e., the number $N_{\FK}$ of FK data points we allow ourselves to use relative to the number of collocation points, the number $N_{\MC}$ of paths we use to compute the Monte Carlo average, and the timestep $\dt$ we use to discretize the SDE influence the performance of FK-PINNs. We take $T_{\max} = \infty$ in all of the experiments presented below.

\begin{figure}[ht]
	\centering
	\begin{subfigure}{0.48\textwidth}
		\centering
		\includegraphics[width=\textwidth]{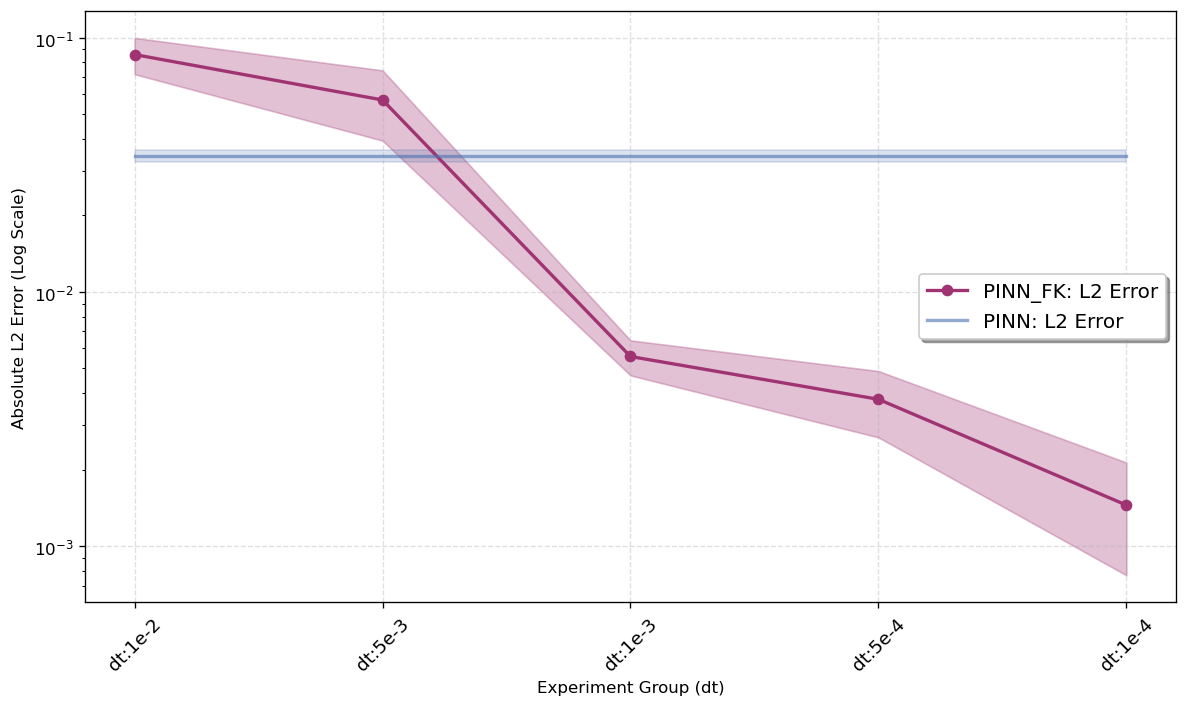}
		\caption{}
		\label{fig:budget_poisson}
	\end{subfigure}
	\hfill
	\begin{subfigure}{0.48\textwidth}
		\centering
		\includegraphics[width=\textwidth]{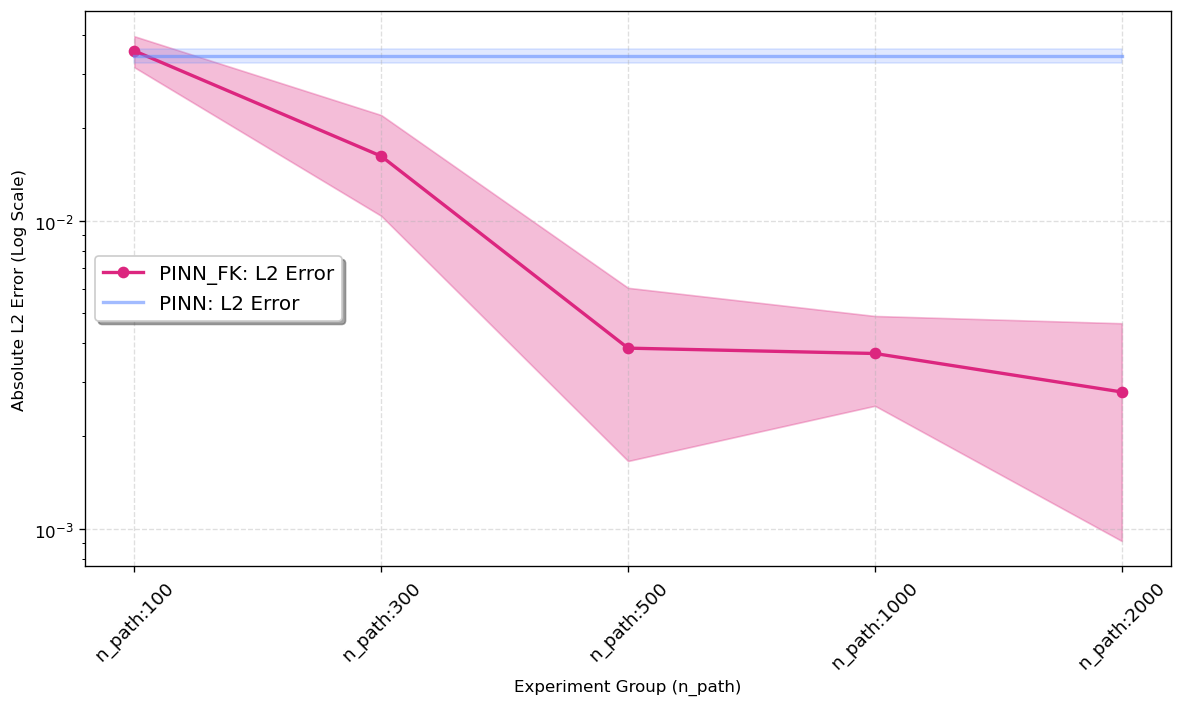}
		\caption{}
		\label{fig:budget_met}
	\end{subfigure}
	\caption{Sensitivity Analysis of Key Parameters in Feynman-Kac Monte Carlo Supervision. Figure (a): Fixed $N_\MC$=500, influence of $\dt$ on solution accuracy. Figure (b): Fixed $\dt=1e-3$, influence of $N_\MC$ on solution accuracy}
	\label{fig:comparison}
\end{figure}

As shown in Figure~\ref{fig:comparison}, and as one could expect, too coarse discretization $\dt$ of the SDE leads to solutions which perform worse than the baseline PINN, which could be explained by the large amount of noise in the labels shifting the objective away from the true PDE solution. However we observe clear solution improvement for any discretization finer than $\dt=10^{-3}$, which is well within the standard discretization sizes used to simulate SDE solutions. We likewise observe that for $\dt=10^{-3}$, as even as little as $N_\MC=300$ provides clear improvement over the baseline, suggesting that even modest Monte Carlo computational budgets are already sufficient to ``lift" the poorly conditioned PINN landscape with our method. 

\begin{table}[ht]
	\begin{center}
		\begin{small}
			\begin{sc}
				\setlength{\tabcolsep}{1.3pt}
				\begin{tabular}{lcccr}
					\toprule
					P\_data & L2 Error & H1 Error & Time (s) \\
					\midrule
					p\_data=0     & $1.003\pm 0.004$ & $1.001\pm 0.001$ & $319.7\pm5.5$   \\
					p\_data=0.001 & $0.189\pm 0.037$ & $0.594\pm 0.069$ & $456.3\pm93.1$  \\
					p\_data=0.01  & $0.120\pm 0.025$ & $0.431\pm 0.143$ & $665.7\pm 14.1$ \\
					p\_data=0.02  & $0.107\pm 0.007$ & $0.353\pm 0.086$ & $807.4\pm 22.1$ \\
					p\_data=0.05  & $0.111\pm 0.011$ & $\mathbf{0.224\pm 0.072}$ & $1309.7\pm 19.6$ \\
					p\_data=0.1   & $\mathbf{0.098\pm 0.007}$ & $0.299\pm 0.081$ & $2198.2\pm 7.7$ \\
					p\_data=0.2   & $0.106\pm 0.010$ & $0.382\pm 0.081$ & $3968.9\pm 25.1$  \\
					\bottomrule
				\end{tabular}
				\caption{Influence of the choice of $p_{\mathrm{data}}$ on the time overhead and the relative $L^2$ and $H^1$ error metrics. The PINNs here are trained to solve the Mean Escape Time PDE, and we've taken $\dt=10^{-3}, N_\MC=500$}.
				\label{tab:pdata_table}
			\end{sc}
		\end{small}
	\end{center}
\end{table}

We finally show in Table~\ref{tab:pdata_table} how the proportion $p_{\mathrm{data}}$ of points used as FK supervision influences the solution quality. Perhaps contrary as to what one would expect, we find that, for the Mean Escape Time at least, there are diminishing returns when adding FK supervision data: while the solution quality improves as $p_{\mathrm{data}}$ slowly increases from zero, we quickly find that there is a threshold where there is no clear improvement in the solution quality, while the time needed to simulate all the $N_\MC \times N_\FK$ paths until escape becomes prohibitively large, as the mean escape time is by design costly to estimate by Monte Carlo averages.

These observations suggest that it would be sensible, depending on the PDE being solved, to first fix $\dt$, and then preemptively simulate some Euler-Maruyama trajectories of the underlying stochastic process $X_t$ until escape, in order to estimate roughly the amount of time needed to fully simulate a path. One can then tune $p_\mathrm{data}$, $N_\MC$, and possibly $T_{\max}$ according to both the time and computational budget one has at disposition.
	
\end{document}